\def\eqref#1{equation~\ref{#1}}
\def\1{\bm{1}}
\DeclareMathAlphabet{\mathsfit}{\encodingdefault}{\sfdefault}{m}{sl}
\SetMathAlphabet{\mathsfit}{bold}{\encodingdefault}{\sfdefault}{bx}{n}
\mathchardef\mhyphen="2D
\newtheorem{definition}{Definition}
\newtheorem{theorem}{Theorem}
\newtheorem{lemma}{Lemma}
\newtheorem{proof}{Proof}[section]
\title{Provable Memory Efficient Self-Play Algorithm for Model-free Reinforcement Learning}
\author[1]{\textbf{Na Li}}
\author[2]{\textbf{Yuchen Jiao}}
\author[1]{\textbf{Hangguan Shan}}
\author[3]{\textbf{Shefeng Yan}}
\affil[1]{College of Information Science and Electronic Engineering, Zhejiang University, Hangzhou, China}
\affil[2]{School of Information and Electronics, Beijing Institute of Technology, Beijing, China}
\affil[3]{Institute of Acoustics, Chinese Academy of Sciences, Beijing, China}
\begin{document}
    \maketitle
    \begin{abstract}     
        The thriving field of multi-agent reinforcement learning (MARL) studies how a group of interacting agents make decisions autonomously in a shared dynamic environment. Existing theoretical studies in this area suffer from at least two of the following obstacles: memory inefficiency, the heavy dependence of sample complexity on the long horizon and the large state space, the high computational complexity, non-Markov policy, non-Nash policy, and high burn-in cost. In this work, we take a step towards settling this problem by designing a model-free self-play algorithm \emph{Memory-Efficient Nash Q-Learning (ME-Nash-QL)} for two-player zero-sum Markov games, which is a specific setting of MARL. ME-Nash-QL is proven to enjoy the following merits. First, it can output an $\varepsilon$-approximate Nash policy with space complexity $O(SABH)$ and sample complexity $\widetilde{O}(H^4SAB/\varepsilon^2)$, where $S$ is the number of states, $\{A, B\}$ is the number of actions for two players, and $H$ is the horizon length. It outperforms existing algorithms in terms of space complexity for tabular cases, and in terms of sample complexity for long horizons, i.e., when $\min\{A, B\}\ll H^2$. Second, ME-Nash-QL achieves the lowest computational complexity $O(T\mathrm{poly}(AB))$ while preserving Markov policies, where $T$ is the number of samples. Third, ME-Nash-QL also achieves the best burn-in cost $O(SAB\,\mathrm{poly}(H))$, whereas previous algorithms have a burn-in cost of at least $O(S^3 AB\,\mathrm{poly}(H))$ to attain the same level of sample complexity with ours.
        % We prove that ME-Nash-QL can output an $\varepsilon$-approximate Nash policy with remarkable space complexity $O(SABH)$, sample complexity $\widetilde{O}(H^4SAB/\varepsilon^2)$, and computational complexity $O(T\mathrm{poly}(AB))$, where $S$ is the number of states, $\{A, B\}$ is the number of actions for two players, $H$ is the horizon length, and $T$ is the number of samples. Notably, our approach outperforms in terms of space complexity compared to existing algorithms for tabular cases. It achieves the lowest computational complexity while preserving Markov policies.%, setting a new standard. 
        % Furthermore, our algorithm outputs a Nash policy and achieves the best sample complexity compared with the existing guarantee for long horizons, i.e. when $\min\{A, B\}\ll H^2$. Our algorithm also achieves the best burn-in cost $O(SAB\,\mathrm{poly}(H))$, whereas previous algorithms have a burn-in cost of at least $O(S^3 AB\,\mathrm{poly}(H))$ to attain the same level of sample complexity with ours.
    \end{abstract}
	
    \section{Introduction}
    In this paper, we consider the problem of multi-agent reinforcement learning (MARL), which focuses on developing algorithms for multiple agents to learn and make decisions in multi-agent environments. MARL has attracted a plethora of attention across various domains, including autonomous driving~\citep{ShwartzS16a}, game playing~\citep{silver2017mastering}, and social systems~\citep{baker2020emergent}. Considering that online data collection, storage, and computation can be expensive, time-consuming, or high-stakes in the above applications, achieving memory efficiency and low sample and computational complexity is important in online MARL scenarios. Generally, the approaches to solving MARL can be categorized into model-free and model-based methods. The former involves directly learning an optimal/equilibrium policy for multiple agents without explicitly modeling the environment, such as friend-or-foe Q-Learning~\citep{littman2001friend} and Nash Q-Learning~\citep{hu2003nash}. They typically involve using techniques such as Q-learning, actor-critic methods, or policy gradients to optimize the policy. The latter relies on a learned or predefined model of the environment. This usually involves estimating state-transition probabilities and rewards based on agent behaviors and then utilizing these estimations to simulate possible results and choose actions, such as AORPO~\citep{ijcai2021p466} and OFTRL~\citep{zhang2022policy}. Although recent research~\citep{bai2020near, jin2022v, Liu, xie2020learning} has demonstrated the effectiveness of both model-free and model-based algorithms in MARL, a more comprehensive understanding of efficient memory, optimal sample complexity and minimal computational complexity is still lacking.
    
    As a specific setting of MARL, two-player zero-sum Markov games (TZMG) are a fascinating area of research. Considering a tabular TZMG with $S$ states, actions $\{A, B\}$ for two players, and a horizon length of $H$, one important aspect of it is the sample complexity, which refers to the number of samples $T$ required to achieve an $\varepsilon$-approximate Nash equilibrium (NE). Currently, the Q-FTRL algorithm \citep{NEURIPS2022_62b4fea1}, which employs a generative model, is the leading method for achieving optimal sample complexity in TZMG. However, the accessibility of the generative model is unclear and restrictive. In the absence of a flexible sampling mechanism, the self-play algorithms with minimal sample complexity are the model-free method V-learning~\citep{jin2022v} and the model-based method Optimistic Nash Value Iteration (Nash-VI)~\citep{Liu}, along with the FTRL idea (from adversarial bandit, i.e., $H=1$) and a different style of bonus term, respectively. V-learning is able to find an $\varepsilon$-approximate policy with sample complexity of $\widetilde{O}(H^6S(A+B)/\varepsilon^2)$ (modulo log factors), but the output policy is non-Markovian. Nash-VI achieves the best sample complexity $\widetilde{O}({H^4SAB/\varepsilon^2})$ among model-based algorithms so far, while its burn-in cost $S^3ABH^4$ has a heavy dependence on $S$. Besides, the computational complexity of Nash-VI is $O(T\mathrm{poly}(SAB))$, higher than $T\mathrm{poly}(AB)$ in V-learning. Moreover, the space complexities of the above algorithms are unsatisfactory, with the former's $O(S(A+B)T)$ increasing with the number of samples $T$ and the latter's $O(S^2ABH)$ relying heavily on $S$. These complexities are large compared with the space $O(SABH)$ required by $Q$-value in tabular cases. A summary of prior results is shown in Table~\ref{tab:comparison}. Thus, a natural question motivated by prior algorithms to pose is: 
    \begin{quote}
        \centering
        \emph{Can a TZMG algorithm be designed with memory, sample, and computational efficiency, while having low burn-in cost and a Markov and Nash output policy?}
    \end{quote}
    
    \begin{table}
    \centering
    \caption{Sample complexity (the order of the regret bound, modulo log factors), computational/space complexity, the range of samples to attain $\widetilde{O}(H^4S/\varepsilon^2)$-sample complexity, and whether the output policy is the Markov/Nash policy, for algorithms: V-learning \citep{jin2022v}, Nash V/Q-learning \citep{bai2020near}, PReFI/PReBO \citep{cui2023breaking}, OMNI-VI \citep{xie2020learning}, Nash-UCRL \citep{chen2022almost}, Optimistic PO \citep{qiu2021provably}, VI-Explore/VI-UCLB \citep{bai2020provable}, Nash-VI \citep{Liu}, and Q-learning \citep{feng2023modelfree}. The best results are highlighted.}
    \label{tab:comparison}
    \begin{tabular}{|c|c|c|c|c|}
        \hline
        Algorithm  & \begin{tabular}[c]{@{}c@{}}Sample \\complexity $T$\end{tabular} & \begin{tabular}[c]{@{}c@{}}Computational/\\space complexity\end{tabular}      & \begin{tabular}[c]{@{}c@{}}Range of samples \\ to attain $H^4S/\epsilon^2$\\ sample complexity\end{tabular} & \begin{tabular}[c]{@{}c@{}}Markov/\\ Nash\\policy\end{tabular} \\ \hline
        V-learning      &  \textcolor{red}{$H^6S(A+B)/\varepsilon^2$}   & \multirow{2}{*}{\begin{tabular}[c]{@{}c@{}}\textcolor{red}{$T\mathrm{poly}(AB)$}/\\\textcolor{red}{$S(A+B)T$} \end{tabular}}  & \multirow{10}{*}{Never}            & \multirow{3}{*}{No/No}               \\ \cline{1-2} 
        Nash V-learning &    $H^7S(A+B)/\varepsilon^2$  &     &          &      \\ \cline{1-3}
        Nash Q-learning &    $H^6SAB/\varepsilon^2$     &   \begin{tabular}[c]{@{}c@{}}\textcolor{red}{$T\mathrm{poly}(AB)$}/\\ $SABT$\end{tabular}   &                 &                 \\ \cline{1-3}\cline{5-5} 
        PReFI           &    $H^{10}S^4(A+B)^4/\varepsilon^4$   & \begin{tabular}[c]{@{}c@{}}$T^2\mathrm{poly}(SAB)$/\\$S(A+B)HT$\end{tabular} &     & \multirow{2}{*}{Yes/No}                    \\ \cline{1-3}
        PReBO           &    $H^6S^2(A+B)/\varepsilon^2$    &   \begin{tabular}[c]{@{}c@{}}$T^2\mathrm{poly}(SAB)$/\\$SABHT$\end{tabular}  &           &                    \\ \cline{1-3}\cline{5-5}
        OMNI-VI         &    $H^5S^3A^3B^3/\varepsilon^2$   & \multirow{2}{*}{\begin{tabular}[c]{@{}c@{}}$T\mathrm{poly}(SAB)$/\\$S^2A^2B^2H$\end{tabular}}    &        &  \multirow{6}{*}{Yes/Yes}   \\ \cline{1-2}
        Nash-UCRL       &    $H^4S^4A^2B^2/\varepsilon^2$   &       &           &                            \\ \cline{1-3}
        Optimistic PO   &    $H^5S^2AB/\varepsilon^2$       & \multirow{2}{*}{\begin{tabular}[c]{@{}c@{}}$T\mathrm{poly}(SAB)$/\\$S^2ABH$\end{tabular}} &           &               \\ \cline{1-2}
        VI-Explore      &    $H^8S^2AB/\varepsilon^3$       &         &              &                 \\ \cline{1-3}
        VI-ULCB         &    $H^5S^2AB/\varepsilon^2$       & \begin{tabular}[c]{@{}c@{}}PPAD-complete/\\$S^2ABH$\end{tabular}      &     &                  \\ \cline{1-4}
        Nash-VI         &    \multirow{4}{*}{\textcolor{red}{$H^4SAB/\varepsilon^2$}} & \begin{tabular}[c]{@{}c@{}}$T\mathrm{poly}(SAB)$/\\$S^2ABH$\end{tabular} & \textcolor{red}{$[S^3ABH^4,\infty)$} & \\ \cline{1-1}\cline{3-4}\cline{5-5}
        \begin{tabular}[c]{@{}c@{}}Q-learning\\\citep{feng2023modelfree}\end{tabular}       &    & \begin{tabular}[c]{@{}c@{}}\textcolor{red}{$\!\!\!T\!\!+\!\!H^2\!\mathrm{poly}(SAB)$}/\\$SA^2B^2H^2$\end{tabular} &$[S^6\!A^4\!B^4\!H^{28}\!,\infty)$ & No/No \\ \cline{1-1}\cline{3-4}\cline{5-5}
        \cellcolor{gray} ME-Nash-QL   &  & \begin{tabular}[c]{@{}c@{}}\textcolor{red}{$T\mathrm{poly}(AB)$}/\\\textcolor{red}{$SABH$}\end{tabular} &   \textcolor{red}{$[SABH^{10},\infty)$}  &  Yes/Yes \\ \hline
    \end{tabular}
    \end{table}

    \subsection{Contributions}
    We contribute to the advancement of theoretical understanding by providing a sharp analysis of the model-free algorithm on TZMG. Our main contribution lies in the development of a novel model-free algorithm with a Markov and Nash output policy and theoretically achieves the best space and computational complexities and the superior sample complexity compared to existing methods for long horizons, i.e., $\min\{A, B\}\ll H^2$. 
    % A comparison between our and prior studies can be found in Table~\ref{tab:comparison}. 
    Specifically, we summarize our main contributions as follows.

    \begin{itemize}
        \item We design a model-free algorithm \emph{Memory-Efficient Nash Q-Learning (ME-Nash-QL)}, which is generated from Nash Q-learning, along with the early-settlement method designed and the reference-advantage decomposition technique incorporated in TZMG for the first time. ME-Nash-QL firstly achieves the best memory complexity $O(SABH)$, corresponding to the minimum space to store $Q$-values in tabular cases.
        %and thus is almost practically impossible to improve. 
        Furthermore, the computational complexity of our algorithm is $O(T\mathrm{poly}(AB))$, which is lower than that of prior algorithms.
        % The detailed comparison is listed in Table~\ref{tab:comparison}.
        \item We prove that ME-Nash-QL can find an $\varepsilon$-approximate NE for Markov games with samples $\widetilde{O}(H^4SAB/\varepsilon^2)$, which is equivalent to achieving the regret bound $\widetilde{O}(\sqrt{H^2SABT})$, provided that samples $T$ exceeds $\widetilde{O}(SABH^{10})$. We remark that the sample complexity of ME-Nash-QL has an optimal dependence on $H$ and $S$. According to Table~\ref{tab:comparison}, it outperforms existing algorithms as long as $\min\{A, B\} \ll H^2$. Compared with Nash-VI with the same sample complexity $\widetilde{O}(H^4SAB/\varepsilon^2)$, Nash-VI requires $T$ to be larger than $\widetilde{O}(S^3ABH^4)$ to attain the above sample complexity, which is generally significantly larger than that of ME-Nash-QL as $S> H^3$. 
        These conditions are common and are satisfied in many scenarios. For example, in Go, there is $150\le H\le 722$, $S=2^{361}$ and $\min\{A, B\} = 360$ \citep{silver2017mastering}. Similar examples include Atari games and Poker.
        \item Unlike the state-of-the-art model-free algorithms such as Nash V/Q-Learning, our algorithm outputs both the single Markov policy and Nash policy (instead of a nested mixture of Markov policies as returned by Nash V-Learning). Overall, Table~\ref{tab:comparison} shows that our algorithm \emph{outperforms all previous algorithms with a Markov and Nash policy in terms of space, sample, computational complexity, and burn-in cost}. We design an extended algorithm of ME-Nash-QL for multi-player general-sum Markov games and achieve an $\epsilon$-optimal policy in $\widetilde{O}(H^4S\prod_{i\in[M]}A_i/\epsilon^2)$ samples with $M$ players and $A_i$ actions per player.
    \end{itemize}
	
    \subsection{Related work}
    \paragraph{Markov games}
    Markov games (MGs), also known as stochastic games, were introduced in the early 1950s~\citep{shapley1953stochastic}. Since then, numerous studies have been conducted, with a particular focus on Nash equilibrium~\citep{littman1994markov, lee2020linear}.
    % ~\citep{littman1994markov, littman2001friend, hu2003nash,hansen2013strategy, lee2020linear}. 
    However, these studies are based on two strong assumptions. First, transition and rewards are generally assumed to be known and partly based on the asymptotic setting with an infinite amount of data. Under the curse of dimensionality, the non-asymptotic setting is an important component of relevant research, and the transition and reward should be estimated under a limited amount of data. Second, some work makes strong reachability assumptions and fails to consider the impact of exploration strategies. The agent can sample transition and rewards for any state-action pair based on the assumption of accessing simulators (generative models). For example, \citep{jia2019feature, sidford2019solving, zhang2020model}~derive non-asymptotic bounds for achieving $\varepsilon$-approximate Nash equilibrium based on the number of visits to the simulator. Especially, \citep{wei2017online} studies MGs assuming that one agent can always reach all states using a certain policy under the compliance of the other agent with any policy.
    
    \paragraph{Non-asymptotic guarantees without reachability assumptions}
    As the milestone for MGs, \citep{bai2020provable} and \citep{xie2020learning} firstly provide non-asymptotic sample complexity guarantees for model-based algorithms (e.g., VI-Explore and VI-ULCB) and model-free algorithms (e.g., OMNI-VI), respectively. These are investigated further by the Nash-VI and Nash Q/V-Learning and provide a better sample complexity guarantee. Nash V-learning achieves sample complexity that has optimal dependence on $S$, $A$, and $B$, but the dependence on $H$ is worse than ours. Nash-VI has the same complexity as ours, but it requires a space complexity $O(S^2ABH)$.
    % for storing the estimated probability transition
    Besides, Nash-UCRL
    % proposed in \citep{chen2022almost} 
    obtains sample complexity with near-optimal dependence on $H$.
    % Optimistic Policy Optimization (Optimistic PO)
    % \citep{qiu2021provably} 
    Optimistic PO gets $\sqrt{T}$-regret. PReBO has $O(SABHT)$ space complexity for the storage of historical policies. OMNI-VI has $O(S^2A^2B^2T)$ space complexity.
    % for the usage of $d\times d$ intermediate variables with $d=SAB$ in tabular case
    However, neither of them achieves optimal dependence on $S$, $A$, or $B$. The detailed comparison is shown in Table~\ref{tab:comparison}. During the preparation of our work, we observed that \citep{feng2023modelfree} also employs Coarse Correlated Equilibrium (CCE) and reference-advantage decomposition technique, with a higher burn-in cost and outputting policies neither Nash nor Markov.
    % achieving same sample complexity as ours and lower computational complexity
    % . The differences in algorithm design are as follows: algorithm in \citep{feng2023modelfree} uses the stage-based update approach and the certified policy, while our algorithm leverages the early-settlement technique and outputs the marginal policy. Consequently, 
    % However, our algorithm outperforms by achieving a lower burn-in cost of $SABH^{10}$ than $S^6A^4B^4H^{28}$ in their work and outputting Nash and Markov policies.
    
    \paragraph{Multi-player general-sum Markov games} \citep{Liu} developed a model-based algorithm with sample complexity $\widetilde{O}(H^5S^2\prod_{i\in[M]}A_i/\epsilon^2)$, which suffers from the curse of multi-agent. To alleviate this issue, \citep{song2021can, mao2022improving, mao2023provably, cui2023breaking, pmlr-v195-wang23b} proposed V-learning algorithms, coupled with the adversarial bandit subroutine, to break the curse of multi-agent. Among them, the best sample complexity is $\widetilde{O}(H^6S\prod_{i\in[M]}A_i/\epsilon^2)$ achieved by \citep{song2021can}. 
    %While the V-learning algorithms generate non-Markov, history-dependent policies, 
    In addition, \citep{daskalakis2023complexity} with sample complexity $\widetilde{O}(H^{11}S^3\prod_{i\in[M]}A_i/ \epsilon^2)$ learned an approximate CCE that is guaranteed to be Markov.
    
    \paragraph{Single-agent RL}
    There is a rich literature on reinforcement learning in MDPs \citep[see e.g.,][]{jaksch2010near, osband2014generalization, azar2017minimax, dann2017unifying, strehl2006pac, jin2018q, Li2022}. MDPs are special cases of Markov games, where only a single agent interacts with a stochastic environment. For the tabular episodic setting with non-stationary dynamics and no simulators, regret and sample analysis are the commonly adopted analytical paradigm for the trade-off between exploration and exploitation. Notably, the lower bound of the regret is $\sqrt{H^2SAT}$, corresponding to the sample complexity $\widetilde{O}(H^4SA/\varepsilon^2)$, which has been achieved by the model-based algorithm in \citep{azar2017minimax} and the model-free algorithm in \citep{Li2022}.
	
    \section{Problem Formulation}
    In this paper, we consider the tabular episodic version of TZMG problem. Specifically, we denote a finite-horizon TZMG as $\mathcal{M}=(\mathcal{S}, \mathcal{A}, \mathcal{B}, H, \{P_h\}_{h=1}^H, \{r_h\}_{h=1}^H)$. Here $\mathcal{S}\coloneqq\{1, \cdots, S\}$ is the state space of size $S$, $(\mathcal{A}\coloneqq\{1, \cdots, A\}, \mathcal{B}\coloneqq\{1, \cdots, B\})$ denote the action spaces of the max-player and the min-player with size $A$ and $B$, respectively, $H$ is the horizon length, and $P_h\coloneqq\mathcal{S}\times\mathcal{A}\times\mathcal{B}\rightarrow\Delta(\mathcal{S})$(resp. $r_h\coloneqq\mathcal{S}\times\mathcal{A}\times\mathcal{B}\rightarrow[0,1]$) represents the probability transition kernel (resp. reward function) at the $h$-th time step, $1\leq h\leq H$. Moreover, $P_h(\cdot \mid s, a, b)\in\Delta(S)$ stands for the transition probability vector from state $s$ at time step $h$ when action pair $(a, b)$ is taken, while $r_h(s, a, b)$ indicates the immediate reward received at time step $h$ on a state-action pair $(s, a, b)$ (which is assumed to be deterministic and falls within the range $[0, 1]$). For TZMG, this reward can represent both the gain of the max-player and the loss of the min-player.
    
    A Markov policy of the max-player is represented by $\mu=\{\mu_h\}_{h=1}^H$, where $\mu_h\coloneqq\mathcal{S}\rightarrow\Delta_\mathcal{A}$ is the action selection rule at time step $h$. Similarly, a Markov policy of the min-player is defined as $\nu=\{\nu_h\}_{h=1}^H$ with $\nu_h\coloneqq\mathcal{S}\rightarrow\Delta_\mathcal{B}$. Each player executes the MDP sequentially in a total number of $K$ episodes, leading to $T=KH$ samples collected in total. Moreover, in each episode $k = 1,\dots, K$, we start with an arbitrary initial state $s_1^k$, and both players implement their own policy $\mu^k = \{\mu_h^k\}_{h=1}^H$ and $\nu^k = \{\nu_h^k\}_{h=1}^H$ learned based on the information up to the $(k-1)$-th episode.
	
    \paragraph{Value function.} 
    $V_h^{\mu,\nu}(s): \mathcal{S}\rightarrow\mathbb{R}$ is denoted as the value function and gives the expected cumulative rewards received starting from state $s$ at step $h$ under policy $\mu$ and $\nu$: 
    \begin{equation}
        V_{h}^{\mu, \nu}(s)\coloneqq \mathbb{E}_{\mu, \nu}\left[\left.\sum\nolimits_{h' =
            h}^H r_{h'}(s_{h'}, a_{h'}, b_{h'}) \right| s_h = s\right],
    \end{equation}
    where the expectation is taken over the randomness of the MDP trajectory $\{s_t\mid h\leq t \leq H\}$. We also define $Q_h^{\mu, \nu}:\mathcal{S}\times\mathcal{A}\times\mathcal{B}\rightarrow\mathbb{R}$ to be the action-value function (a.k.a the $Q$ function). $Q_h^{\mu, \nu}(s, a, b)$ gives the cumulative rewards under policy $\mu$ and $\nu$, starting from $(s, a, b)$ at step $h$:
    \begin{align}
        Q_{h}^{\mu, \nu}(s, a, b)\coloneqq \mathbb{E}_{\mu,
            \nu}\left[\left.\sum\nolimits_{h' = h}^H r_{h'}(s_{h'},  a_{h'}, b_{h'})
        \right| s_h = s, a_h = a, b_h = b\right].
    \end{align}
    We define $V^{\mu, \nu}_{H+1}(s) = Q^{\mu, \nu}_{H+1}(s, a, b) = 0$ for any $\mu$ and $\nu$ and  $(s, a, b)\in\mathcal{S}\times\mathcal{A}\times\mathcal{B}$. Moreover, we define $(\mathbb{D}_\pi Q)(s)\coloneqq\mathbb{E}_{(a,b)\sim\pi(\cdot, \cdot |s)}Q(s, a, b)$ for any $Q$ function. Thus, we have the Bellman equation
    \begin{equation}
        Q_{h}^{\mu, \nu}(s, a, b)=r_{h}(s, a, b)+{\mathbb{E}}_{s' \sim P_{h}(\cdot \mid s, a, b)}\left[V_{h+1}^{\mu, \nu}(s')\right], \quad V_{h}^{\mu, \nu}(s) = (\mathbb{D}_{\mu_h, \nu_h}Q_h^{\mu, \nu})(s).
    \end{equation}
    
    \paragraph{Best response and Nash equilibrium.}
    For any policy of the max-player $\mu$, there exists the best response of the min-player, which is a policy $\nu^\dagger(\mu)$ satisfying $V^{\mu,\nu^\dagger(\mu)}_h(s) = \inf_{\nu} V^{\mu, \nu}_h(s)$ for any $(s, h)\in\mathcal{S}\times[H]$. For simplicity, we denote $V^{\mu, \dagger}\coloneqq V^{\mu, \nu^\dagger(\mu)}_h$. By symmetry, we define $\mu^\dagger(\nu)$ and $V^{\dagger, \nu}_h$ in a similar way. It is further known \citep[cf.~][]{filar2012competitive} that there exist policies $\mu^\star$ and $\nu^\star$ that are optimal against the best responses of the opponents, in the sense that
    \begin{equation}
        V_{h}^{\mu^{\star}, \dagger}(s)=\sup \nolimits_{\mu} V_{h}^{\mu, \dagger}(s), \quad V_{h}^{\dagger, \nu^{\star}}(s)=\inf \nolimits_{\nu} V_{h}^{\dagger, \nu}(s), \quad \text { for all }(s, h).
    \end{equation}
    These optimal strategies $(\mu^\star, \nu^\star)$ are the Nash equilibrium of the Markov game satisfying:
    \begin{equation}
        \sup \nolimits_{\mu} \inf \nolimits_{\nu} V_{h}^{\mu, \nu}(s)=V_{h}^{\mu^{\star}, \nu^{\star}}(s)=\inf \nolimits_{\nu} \sup \nolimits_{\mu} V_{h}^{\mu, \nu}(s).
    \end{equation}
    A Nash equilibrium gives a solution in which no player has anything to gain by changing only its own policy. We denote the values of Nash equilibrium $V^{\mu^\star, \nu^\star}_h$ and $Q^{\mu^\star, \nu^\star}_h$ as $V^{\star}_h$ and $Q^{\star}_h$, respectively.
	
    \paragraph{Learning objective.} 
    We use the gap between the max-player and the min-player under the optimal strategy (i.e., NE) as the learning objective, which can be expressed as $V_{1}^{\dagger, \hat{\nu}}\left(s_{1}^k\right)-V_{1}^{\hat{\mu}, \dagger}\left(s_{1}^k\right)$. Our goal is to design an algorithm that can find an $\varepsilon$-approximate NE using several episodes under the space complexity $SABH$ and computational time $T\mathrm{poly} (AB)$ with output policies of two players that are independent of past and independent of each other (i.e., Markov and Nash policy), and achieve regret that is sublinear in $T = KH$ and polynomial in $S$, $A$, $B$, $H$ (regret bound).
    
    \begin{definition}[$\varepsilon$-approximate Nash equilibrium]\label{def:epsilon_Nash}
        If $\frac{1}{K}\sum_{k=1}^K\left(V^{\dagger, \hat{\nu}}_1(s_1^k)-V^{\hat{\mu}, \dagger}_1(s_1^k)\right)\leq\varepsilon$, then the pair of policies $(\hat{\mu}, \hat{\nu})$ is an $\varepsilon$-approximate Nash equilibrium.
    \end{definition}
    
    \begin{definition}[Regret]
        Let $({\mu}^k, {\nu}^k)$ denote the policies deployed by the algorithm in the $k$-th episode. After a total of $K$ episodes, the regret is defined as
        \begin{equation}
            \operatorname{Regret}(K)=\sum\nolimits_{k=1}^{K}\left(V_{1}^{\dagger, \nu^{k}}-V_{1}^{\mu^{k}, \dagger}\right)\left(s_{1}^k\right).
        \end{equation}
    \end{definition}
    Notably, sample complexity $T$ refers to the required number of samples to achieve $\frac{1}{K}\operatorname{Regret}(K)\leq\varepsilon$.
    
    \paragraph{Notation.} 
    Before presenting our main results, we introduce some convenient notations used throughout this paper. For any vector $x \in \mathbb{R}^{SAB}$ that constitutes certain quantities for all state-action pairs, we use $x(s,a,b)$ to denote the entry associated with the state-action pair $(s,a,b)$. We shall also let
    \begin{align}
        P_{h,s,a,b} = P_h(\cdot | s,a,b) \in \mathbb{R}^{1\times S}
        \label{eq:defn-P-hsa}
    \end{align}
    abbreviate the transition probability vector given the $(s,a,b)$ pair at time step $h$. Additionally, $e_i$ is denoted as the $i$-th standard basis vector with the $i$-th entry equal to 1 and others are all 0.
 
    \section{Algorithm and theoretical guarantees}\label{sec:VI}
    In this section, we present the proposed algorithm called ME-Nash-QL and provide its theoretical guarantee of memory and sample efficiency.

    \subsection{Algorithm description}
    We propose a model-free algorithm ME-Nash-QL described in Algorithm~\ref{algorithm-main}, which is the first to integrate the reference-advantage decomposition technique to TZMG along with an innovative early-settlement approach. For the $n$-th visit of a state-action pair at any time step $h$, the proposed algorithm adopts the linearly rescaled learning rate as $\eta_n = \frac{H+1}{H+n}$. In each episode, the algorithm can be decomposed into two parts, i.e., policy evaluation and improvement, which are standard in the majority of model-free algorithms.

    \begin{itemize}
        \item Lines 4-12 of Algorithm~\ref{algorithm-main} (Policy evaluation): Select actions based on the current policy to obtain the next state and the reward information. The sampled data is then used to estimate several types of $Q$-function: $Q_h^{\mathrm{UCB}}$ and $Q_h^{\mathrm{LCB}}$ with an exploration bonus, $\overline{Q}_h^{\mathrm{R}}$ and $\underline{Q}_h^{\mathrm{R}}$ using the reference-advantage decomposition, and $\overline{Q}_h$ and $\underline{Q}_h$, which are obtained by combining the above estimations.
        \item Lines 13-19 of Algorithm~\ref{algorithm-main} (Policy improvement): Compute a new (joint) policy $\pi_h$ using the estimated value functions, update those value functions, and perform updates of reference values under the early settlement.
    \end{itemize}
    
    \paragraph*{$Q$-function estimation}
    In each episode, the algorithm is designed to maintain estimates of the $Q$-function with low sample complexity that provide optimistic and pessimistic views (namely, over-estimate and under-estimate) of the truth $Q^\star$. The algorithm aims to reduce the bias $\overline{Q}-Q^\star$ and $\underline{Q}-Q^\star$ in two ways. Firstly, we migrate the Q-learning algorithm with the upper-confidence bound (UCB) exploration strategy proposed in \citep{jin2018q} to Q-learning with UCB/lower-confidence bound (LCB) exploration strategy. This is expressed as the subroutine update-q() with bonus $\iota_n=c_b\sqrt{\frac{1}{n}H^{3} \log \frac{S AB T}{\delta}}$ in Algorithm~\ref{algorithm-auxiliary}. Secondly, we leverage the idea of variance reduction to shave the $H$ factor of sample complexity compared to Nash Q-learning based on reference-advantage decomposition \citep{zhang2020almost} in single-agent scenarios. Taking UCB as an example, we adopt the update rule at each time step $h$ as
    \begin{equation}
        \label{equ:principle_of_qref}
        \overline{Q}_h^{\mathrm{R}}(s,a,b) \!\!\leftarrow\!\! (1\!-\!\eta)\overline{Q}_h^{\mathrm{R}}(s,a,b) 
        \!+\! \eta \Big\{ r_h(s,a) \!+\! \widehat{P}_{h,s,a,b} \big(\overline{V}_{h+1}\!-\! \overline{V}^{\mathrm{R}}_{h+1}\big) 
        \!+\!\big[ \widehat{ P_{h}\overline{V}^{\mathrm{R}}_{h+1} } \big](s,a,b) \!+\! \overline{b}_h^{\mathrm{R}} \Big\},
    \end{equation}
    where $\overline{b}_h^{\mathrm{R}}$ is the exploration bonus, $\overline{V}^{\mathrm{R}}_{h+1}$ is the reference value introduced next, and $\widehat{P}_{h,s,a,b} \big(\overline{V}_{h+1}- \overline{V}^{\mathrm{R}}_{h+1}\big)$ is the stochastic estimate of $\overline{V}_{h+1}\left(s'\right)-\overline{V}_{h+1}^{\mathrm{R}}\left(s'\right)$ with $\widehat{P}_{h,s,a,b}$ as the estimate of ${P}_{h,s,a,b}$. Besides, the term $\widehat{P_{h}\overline{V}^{\mathrm{R}}_{h+1}}$ indicates an estimate of the one-step look-ahead value $P_{h}\overline{V}^{\mathrm{R}}_{h+1}$, which can be computed via the batch data like line~12 in Algorithm~\ref{algorithm-auxiliary}. Accordingly, we combine the UCB for ${P}_{h,s,a} \big(V_{h+1}- \overline{V}^{\mathrm{R}}_{h+1}\big)$ and $P_{h}\overline{V}^{\mathrm{R}}_{h+1}$ together as the exploration bonus term $\overline{b}_h^{\mathrm{R}}$.
     
    Thus, based on $Q^{\mathrm{UCB}}$, $Q^{\mathrm{LCB}}$, $\overline{Q}^{\mathrm{R}}$, and $\underline{Q}^{\mathrm{R}}$ we can combine them as line~11-12 in Algorithm~\ref{algorithm-main} to further reduce the bias without violating the optimism or pessimism principle of $Q$-function estimate.

    \begin{algorithm}[H]
		\caption{Memory-Efficient Nash Q-Learning (ME-Nash-QL)}\label{algorithm-main}
		\LinesNumbered
		\textbf{Parameter: }{some universal constant $c_{\mathrm{b}} > 0$ and probability of failure $\delta\in(0, 1)$}\\
		\textbf{Initialize: }{$\overline{Q}_{h}(s, a, b)$, $Q_{h}^{\mathrm{UCB}}(s, a, b)$, $\overline{Q}_{h}^{\mathrm{R}}(s, a, b)$, $ \leftarrow H$; $\underline{Q}_{h}(s, a, b)$, $\underline{Q}_{h}^{\mathrm{R}}(s, a, b)$, $Q_{h}^{\mathrm{LCB}}(s, a, b)$$ \leftarrow 0$; $\overline{V}_{h}(s)$, $\overline{V}_{h}^{\mathrm{R}}(s)$$\leftarrow H$; $\underline{V}_{h}(s)$, $\underline{V}_{h}^{\mathrm{R}}(s)$$\leftarrow 0$; $N_{h}(s, a, b) \leftarrow 0$; $\overline{\phi}_{h}^{\mathrm{r}}(s, a, b)$, $\underline{\phi}_{h}^{\mathrm{r}}(s, a, b)$, $\overline{\psi}_{h}^{\mathrm{r}}(s, a, b)$, $\underline{\psi}_{h}^{\mathrm{r}}(s, a, b)$, $\overline{\phi}_{h}^{\mathrm{a}}(s, a, b)$, $\underline{\phi}_{h}^{\mathrm{a}}(s, a, b)$, $\overline{\psi}_{h}^{\mathrm{a}}(s, a, b)$, $\underline{\psi}_{h}^{\mathrm{a}}(s, a, b)$, $\overline{\varphi}_{h}^{\mathrm{R}}(s, a, b)$, $\underline{\varphi}_{h}^{\mathrm{R}}(s, a, b)$, $\overline{B}_{h}^{\mathrm{R}}(s, a, b)$, $\underline{B}_{h}^{\mathrm{R}}(s, a, b) \leftarrow 0$; and ${u}_{\mathrm{r}}(s) =\mathsf{True}$ for all $(s, a, b, h)\in\mathcal{S}\times\mathcal{A}\times\mathcal{B}\times [H]$.}\\
		\For{Episode $k=1, \dots, K$}{
			Set initial state $s_1\leftarrow s_1^k$.\\
			\For{Step $h=1, \dots, H$}{
				Take action 
				$(a_h,b_h) \sim \pi_h(\cdot,\cdot|s_h)$, and draw $s_{h+1}\sim P_h(\cdot\mid s_h, a_h, b_h)$.\\
				$N_h(s_h, a_h, b_h)\leftarrow N_h(s_h, a_h, b_h)+1$;$\quad n\leftarrow N_h(s_h, a_h, b_h)$;$\quad\eta_n\leftarrow\frac{H+1}{H+n}$.\\
				$\left[Q_h^{\mathrm{UCB}}, Q_h^{\mathrm{LCB}}\right](s_h, a_h, b_h)\leftarrow\mathrm{update\mhyphen q}\left(\right)$.\\
				$\overline{Q}_h^{\mathrm{R}}(s_h, a_h, b_h)\leftarrow \mathrm{update\mhyphen ur }\left(\right)$.\\
				$\underline{Q}_h^{\mathrm{R}}(s_h, a_h, b_h)\leftarrow \mathrm{update \mhyphen lr}\left(\right)$.\\
				$\overline{Q}_h(s_h, a_h, b_h)\leftarrow\min{\{\overline{Q}_h^{\mathrm{R}}(s_h, a_h, b_h), Q_h^{\mathrm{UCB}}(s_h, a_h, b_h), \overline{Q}_h(s_h, a_h, b_h)\}}$.\\
				$\underline{Q}_h(s_h, a_h, b_h)\leftarrow\max{\{\underline{Q}_h^{\mathrm{R}}(s_h, a_h, b_h), Q_h^{\mathrm{LCB}}(s_h, a_h, b_h), \underline{Q}_h(s_h, a_h, b_h)\}}$.\\
				\If{$\overline{Q}_h(s_h, a_h, b_h)=\min{\{\overline{Q}_h^{\mathrm{R}}(s_h, a_h, b_h), Q_h^{\mathrm{UCB}}(s_h, a_h, b_h)\}}$ and $\underline{Q}_h(s_h, a_h, b_h)=\max{\{\underline{Q}_h^{\mathrm{R}}(s_h, a_h, b_h), Q_h^{\mathrm{LCB}}(s_h, a_h, b_h)\}}$}{ $\pi_h(\cdot,\cdot|s_h)\leftarrow \mathrm{CCE}(\overline{Q}_h(s_h,\cdot,\cdot),\underline{Q}_h(s_h,\cdot,\cdot)).$\\}
				$\overline{V}_h(s_h)\leftarrow  \min\{(\mathbb{D}_{\pi_h}\overline{Q}_h)(s_h), \overline{V}_h(s_h)\}$;$\quad \underline{V}_h(s_h)\leftarrow  \max\{(\mathbb{D}_{\pi_h}\underline{Q}_h)(s_h), \underline{V}_h(s_h)\}$.\\
				\If{$\overline{V}_h(s_h)-\underline{V}_h(s_h)>1$}{
					$\overline{V}_h^{\mathrm{R}}(s_h)\leftarrow \overline{V}_h(s_h)$;$\quad \underline{V}_h^{\mathrm{R}}(s_h)\leftarrow \underline{V}_h(s_h)$.\\
				}
				\ElseIf{${u}_{\mathrm{r}}(s_h)=\mathsf{True}$}{
					$\overline{V}_h^{\mathrm{R}}(s_h)\leftarrow \overline{V}_h(s_h)$;
                    $\quad\underline{V}_h^{\mathrm{R}}(s_h)\leftarrow \underline{V}_h(s_h)$;
                    $\quad {u}_{\mathrm{r}}(s_h)=\mathsf{False}$.\\
				}
			}
		}
		\KwOut{The marginal policies of $\{\pi_h\}_{h=1}^H$: $\left(\{\mu_h\}_{h=1}^H, \{\nu_h\}_{h=1}^H\right)$.}
    \end{algorithm}
	
    \SetKwInOut{KwOut}{Function}
    \begin{algorithm}[H]
        \label{algorithm-auxiliary}
        \caption{Auxiliary functions}
        \SetKwFunction{Fucbq}{update-q$\left(\left[Q_h^{\mathrm{UCB}}, Q_h^{\mathrm{LCB}}\right](s_h, a_h, b_h), \left[\overline{V}_{h+1}, \underline{V}_{h+1}\right]\left(s_{h+1}\right)\right)$}
        \SetKwProg{Fn}{Function}{:}{\KwRet}
        \Fn{\Fucbq}{
            $Q_{h}^{\mathrm{UCB}}\left(s_{h}, a_{h}, b_{h}\right) \!\leftarrow\!\left(1\!-\!\eta_{n}\right) Q_{h}^{\mathrm{UCB}}\left(s_{h}, a_{h}, b_{h}\right)\!+\!\eta_{n}\left(r_{h}\left(s_{h}, a_{h}, b_{h}\right)\!+\!\overline{V}_{h+1}\left(s_{h+1}\right)\!+\!\iota_n\right)$;\\
            $Q_{h}^{\mathrm{LCB}}\left(s_{h}, a_{h}, b_{h}\right) \!\leftarrow\!\left(1-\eta_{n}\right) Q_{h}^{\mathrm{LCB}}\left(s_{h}, a_{h}, b_{h}\right)\!+\!\eta_{n}\left(r_{h}\left(s_{h}, a_{h}, b_{h}\right)\!+\!\underline{V}_{h+1}\left(s_{h+1}\right)\!-\!\iota_n\right)$.}
        \SetKwFunction{Fucbqa}{update-ur$\left(\left[\overline{\phi}_{h}^{\mathrm{r}}, \overline{\psi}_{h}^{\mathrm{r}}, \overline{\phi}_{h}^{\mathrm{a}}, \overline{\psi}_{h}^{\mathrm{a}}, \overline{B}_{h}^{\mathrm{R}}, \overline{Q}_{h}^{\mathrm{R}}\right]\!\left(s_{h}, a_{h}, b_{h}\right)\!,\! \left[\overline{V}_{h+1}^{\mathrm{R}}, \overline{V}_{h+1}\right]\!(s_{h+1})\right)$}
        \SetKwProg{Fn}{Function}{:}{\KwRet}
        \Fn{\Fucbqa}{
            $\left[\overline{\phi}_{h}^{\mathrm{r}}\!\left(s_{h},\! a_{h},\! b_{h}\!\right)\!,\! \overline{b}_{h}^{\mathrm{R}}\right]\!\!\!\leftarrow\!\! \mathrm{ update\mhyphen q\mhyphen bonus}\!\left(\!\left[\overline{\phi}_{h}^{\mathrm{r}}, \!\overline{\psi}_{h}^{\mathrm{r}}, \!\overline{\phi}_{h}^{\mathrm{a}},\! \overline{\psi}_{h}^{\mathrm{a}}, \!\overline{B}_{h}^{\mathrm{R}}\!\right]\!\!\left(s_{h}, \!a_{h},\! b_{h}\right)\!,\! \left[\overline{V}_{h+1}^{\mathrm{R}}\!, \!\overline{V}\!_{h+1}\right]\!\!(s_{h+1}\!)\!\right)$;\\
            $\overline{Q}_{h}^{\mathrm{R}}\left(s_{h}, a_{h}, b_{h}\right) \leftarrow\left(1-\eta_{n}\right) \overline{Q}_{h}^{\mathrm{R}}\left(s_{h}, a_{h}, b_{h}\right)+\eta_{n}\left(r_{h}\left(s_{h}, a_{h}, b_{h}\right)+\overline{V}_{h+1}\left(s_{h+1}\right)-\overline{V}_{h+1}^{\mathrm{R}}\left(s_{h+1}\right)+\overline{\phi}_{h}^{\mathrm{r}}\left(s_{h}, a_{h}, b_{h}\right)+\overline{b}_{h}^{\mathrm{R}}\right)$.
        }
        \SetKwFunction{Fucbqa}{update-lr$\left(\left[\underline{\phi}_{h}^{\mathrm{r}}, \underline{\psi}_{h}^{\mathrm{r}}, \underline{\phi}_{h}^{\mathrm{a}}, \underline{\psi}_{h}^{\mathrm{a}}, \underline{B}_{h}^{\mathrm{R}}, \underline{Q}_{h}^{\mathrm{R}}\right]\left(s_{h}, a_{h}, b_{h}\right), \left[\underline{V}_{h+1}^{\mathrm{R}}, \underline{V}_{h+1}\right](s_{h+1})\right)$}
        \SetKwProg{Fn}{Function}{:}{\KwRet}
        \Fn{\Fucbqa}{
            $\left[\underline{\phi}_{h}^{\mathrm{r}}\!\left(s_{h},\! a_{h},\! b_{h}\!\right)\!,\! \underline{b}_{h}^{\mathrm{R}}\right]\!\!\!\leftarrow\!\! \mathrm{ update\mhyphen q\mhyphen bonus}\!\left(\!\left[\underline{\phi}_{h}^{\mathrm{r}}, \!\underline{\psi}_{h}^{\mathrm{r}}, \!\underline{\phi}_{h}^{\mathrm{a}},\! \underline{\psi}_{h}^{\mathrm{a}}, \!\underline{B}_{h}^{\mathrm{R}}\!\right]\!\!\left(s_{h}, \!a_{h},\! b_{h}\right)\!,\! \left[\underline{V}_{h+1}^{\mathrm{R}}\!, \!\underline{V}\!_{h+1}\right]\!\!(s_{h+1}\!)\!\right)$;\\
            $\underline{Q}_{h}^{\mathrm{R}}\left(s_{h}, a_{h}, b_{h}\right) \leftarrow\left(1-\eta_{n}\right) \underline{Q}_{h}^{\mathrm{R}}\left(s_{h}, a_{h}, b_{h}\right)+\eta_{n}\left(r_{h}\left(s_{h}, a_{h}, b_{h}\right)+\underline{V}_{h+1}\left(s_{h+1}\right)-\underline{V}_{h+1}^{\mathrm{R}}\left(s_{h+1}\right)+\underline{\phi}_{h}^{\mathrm{r}}\left(s_{h}, a_{h}, b_{h}\right)-\underline{b}_{h}^{\mathrm{R}}\right)$.
        }
        \SetKwFunction{Fucbqa}{update-q-bonus$\left(\left[{\phi}_{h}^{\mathrm{r}}, {\psi}_{h}^{\mathrm{r}}, {\phi}_{h}^{\mathrm{a}}, {\psi}_{h}^{\mathrm{a}}, {B}_{h}^{\mathrm{R}}\right]\left(s_{h}, a_{h}, b_{h}\right), \left[{V}_{h+1}^{\mathrm{R}}, {V}_{h+1}\right](s_{h+1})\right)$}
        \SetKwProg{Fn}{Function}{:}{\KwRet}
        \Fn{\Fucbqa}{
            ${\phi}_{h}^{\mathrm{r}}\left(s_{h}, a_{h}, b_{h}\right) \leftarrow\left(1-\frac{1}{n}\right) {\phi}_{h}^{\mathrm{r}}\left(s_{h}, a_{h}, b_{h}\right)+\frac{1}{n}{V}_{h+1}^{\mathrm{R}}\left(s_{h+1}\right)$;\\
            ${\psi}_{h}^{\mathrm{r}}\left(s_{h}, a_{h}, b_{h}\right) \leftarrow\left(1-\frac{1}{n}\right) {\psi}_{h}^{\mathrm{r}}\left(s_{h}, a_{h}, b_{h}\right)+\frac{1}{n}\left({V}_{h+1}^{\mathrm{R}}\left(s_{h+1}\right)\right)^{2}$;\\
            ${\phi}_{h}^{\mathrm{a}}\left(s_{h}, a_{h}, b_{h}\right) \leftarrow\left(1-\eta_{n}\right) {\phi}_{h}^{\mathrm{a}}\left(s_{h}, a_{h}, b_{h}\right)+\eta_{n}\left({V}_{h+1}\left(s_{h+1}\right)-{V}_{h+1}^{\mathrm{R}}\left(s_{h+1}\right)\right)$;\\
            ${\psi}_{h}^{\mathrm{a}}\left(s_{h}, a_{h}, b_{h}\right) \leftarrow\left(1-\eta_{n}\right) {\psi}_{h}^{\mathrm{a}}\left(s_{h}, a_{h}, b_{h}\right)+\eta_{n}\left({V}_{h+1}\left(s_{h+1}\right)-{V}_{h+1}^{\mathrm{R}}\left(s_{h+1}\right)\right)^2$;\\
            ${B}_{h}^{\mathrm {temp}}\left(s_{h}, a_{h}, b_{h}\right) \leftarrow c_{\mathrm{b}} \sqrt{\frac{\log ^2\frac{S AB T}{\delta}}{n}}\sqrt{{\psi}_{h}^{\mathrm{r}}\left(s_{h}, a_{h}, b_{h}\right)-\left({\phi}_{h}^{\mathrm{r}}\left(s_{h}, a_{h}, b_{h}\right)\right)^{2}}+c_{\mathrm{b}} \sqrt{\frac{\log^2 \frac{S AB T}{\delta}}{n}}\sqrt{H} \sqrt{{\psi}_{h}^{\mathrm {a}}\left(s_{h}, a_{h}, b_{h}\right)-\left({\phi}_{h}^{\mathrm {a}}\left(s_{h}, a_{h}, b_{h}\right)\right)^{2}}$;\\
            ${\varphi}_{h}^{\mathrm{R}}\left(s_{h}, a_{h}, b_{h}\right) \leftarrow {B}_{h}^{\mathrm {temp}}\left(s_{h}, a_{h}, b_{h}\right)-{B}_{h}^{\mathrm{R}}\left(s_{h}, a_{h}, b_{h}\right)$;\\
            ${B}_{h}^{\mathrm{R}}\left(s_{h}, a_{h}, b_{h}\right) \leftarrow {B}_{h}^{\mathrm {temp}}\left(s_{h}, a_{h}, b_{h}\right)$\\
            ${b}_{h}^{\mathrm{R}} \leftarrow {B}_{h}^{\mathrm{R}}\left(s_{h}, a_{h}, b_{h}\right)+\left(1-\eta_{n}\right) \frac{{\varphi}_{h}^{\mathrm{R}}\left(s_{h}, a_{h}, b_{h}\right)}{\eta_{n}}+c_{\mathrm{b}} \frac{H^{2} \log^2 \frac{S AB T}{\delta}}{n^{3 / 4}}$;
        }
    \end{algorithm}
    
    \paragraph*{Reference values update}
    Motivated by \citep{Li2022}, we implement the following appropriate termination rules to allow for the early settlement of the reference values $\overline{V}_h^\mathrm{R}$ and $\underline{V}_h^\mathrm{R}$:
    \begin{equation}\label{rule}
        \overline{V}_h(s_h)-\underline{V}_h(s_h)\le 1,
    \end{equation}
    which is displayed in lines 16-19 of Algorithm~\ref{algorithm-main}. Notably, due to references and the early settlement, our algorithm obtains a lower sample complexity and burn-in cost than Nash Q-learning.

    \begin{itemize}
        \item Weaken the dependency of the sample complexity on $H$: the uncertainty of the update rule largely stems from the third and fourth terms on the right-hand side of (\ref{equ:principle_of_qref}). The reference value $\overline{V}^\mathrm{R}_h$ (resp. $\underline{V}^\mathrm{R}_h$) stays reasonably close to $\overline{V}_h$ (resp. $\underline{V}_h$), which suggests that the standard deviation of the third term is small. For the fourth term, the reference value is fixed and never changes after satisfying condition (lines 16-19 in Algorithm 1) for the first time, and we can use all the samples collected to estimate $P_{h}\overline{V}^\mathrm{R}_{h+1}$. Therefore, both of these two terms have much smaller variances than that of usual UCB/LCB value functions due to the incorporation of reference terms, which enables the algorithm to estimate them with high accuracy with a limited number of samples.
        \item Reduce the burn-in cost: reference value $\overline{V}^\mathrm{R}_h$ (resp. $\underline{V}^\mathrm{R}_h$) is used to keep tracking the value of $\overline{V}_h$ (resp. $\underline{V}_h$) before it stops being updated. Based on the early-settlement method, the reference values will stop being updated shortly after the condition (lines 16-19 in Algorithm 1) is met for the first time. As a result, the algorithm has the ability to quickly settle on a desirable “reference” during the initial stage. Moreover, the aggregate difference between $\overline{V}^{\mathrm{R},k}_h$ (resp. $\underline{V}^{\mathrm{R},k}_h$) and the final reference $\overline{V}^{\mathrm{R},K}_h$ (resp. $\underline{V}^{\mathrm{R},K}_h$) over the entire trajectory can be bounded in a reasonably tight fashion.
    \end{itemize}

    \paragraph*{Policy computation}
    We apply a relaxation of the Nash equilibrium---\emph{Coarse Correlated Equilibrium (CCE)} to output a single Markov policy rather than a nested mixture of Markov policies, which is introduced by~\citep{CCE} and developed by~\citep{xie2020learning}. Specifically, for any pair of matrices $\overline{Q}, \underline{Q} \in [0, H]^{A\times B}$, $\textsc{CCE}(\overline{Q}, \underline{Q})$ returns a distribution $\pi \in \Delta_{\mathcal{A} \times \mathcal{B}}$ such that
    \begin{subequations}\label{CCE_property}
        \begin{align}
            \mathbb{E}_{(a, b) \sim \pi} \overline{Q}(s, a, b) \geq \mathbb{E}_{(a, b) \sim \pi} \overline{Q}\left(s, a^\prime, b\right)\quad &\forall a^\prime,\label{CCE_property_1}\\
            \mathbb{E}_{(a, b) \sim \pi} \underline{Q}(s, a, b) \leq \mathbb{E}_{(a, b) \sim \pi } \underline{Q}\left(s, a, b^\prime\right)\quad &\forall b^\prime.\label{CCE_property_2}
        \end{align}
    \end{subequations}
    Intuitively, no one can benefit from unilateral unconditional deviation in a CCE since the players' action strategies have an underlying correlation, unlike in a Nash equilibrium where each player's strategy is independent. Another advantage of a CCE is its ability to obtain results in polynomial time through linear programming. Moreover, since a Nash equilibrium always exists, and a Nash equilibrium is also a CCE, we can conclude that a CCE always exists as well. We would like to recommend readers interested in the detailed computation of CCE to refer to~\citep{xie2020learning}.
    
    Specifically, line~14 is used for computing our output policies. These final policies $(\mu, \nu)$ are simply the \emph{marginal policies} of $\pi_h$. That is, for any given $(s, h) \in \mathcal{S} \times [H]$, we have $\mu_h(\cdot|s) := \sum\nolimits_{b\in \mathcal{B}}\pi_h(\cdot, b|s)$ and $\nu_h(\cdot|s) := \sum\nolimits_{a\in \mathcal{A}}\pi_h(a, \cdot|s)$. In contrast to previous algorithms that require space complexity dependent on $T$ to generate a generic history-dependent policy, which can be only written as a nested mixture of Markov policies, our algorithm has the ability to produce a single Markov policy with space complexity $O(SABH)$.
	
    \subsection{Main results}
    We begin by proving the theoretical guarantee of ME-Nash-QL by the following theorem.
    \begin{theorem}\label{theorem-main}
        Consider any $\delta\in(0,1)$, and suppose that $c_{\mathrm{b}}>0$ is chosen to be a sufficiently large universal constant. Then there exists some absolute constant $C_0>0$ such that Algorithm \ref{algorithm-main} achieves
        \begin{align}\label{equ:regret NE}
            \frac{1}{K}\sum_{k=1}^K\left(V^{\dagger, {\nu}^k}_1(s_1^k)-V^{{\mu}^k, \dagger}_1(s_1^k)\right)\leq\varepsilon
        \end{align}
        if the number of samples $T$ satisfies
        \begin{align}\label{equ:regret bound}
            T\ge C_0\left(\frac{H^4SAB}{\varepsilon^2}\log ^4 \frac{SABT}{\delta}+\frac{H^7SAB}{\varepsilon}\log ^3\frac{SAB}{\delta}\right)
        \end{align}
        with probability at least $1-\delta$.
    \end{theorem}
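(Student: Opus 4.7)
The plan is to follow the standard optimism--pessimism template for self-play TZMG algorithms and then exploit the reference-advantage decomposition to shave a factor of $H$. First, I would define a high-probability ``good'' event on which all the Azuma--Hoeffding and Freedman-type concentration bounds needed to justify the bonuses $\iota_n$, $\overline{b}_h^\mathrm{R}$, $\underline{b}_h^\mathrm{R}$ hold simultaneously; a union bound over the $SABHK$ tuples produces the $\log^4(SABT/\delta)$ factors visible in \eqref{equ:regret bound}. Conditional on this event, I would prove by backward induction on $h$ that $\underline{Q}_h^k(s,a,b)\le Q^{\mu^k,\dagger}_h(s,a,b)$ and $\overline{Q}_h^k(s,a,b)\ge Q^{\dagger,\nu^k}_h(s,a,b)$ at every episode $k$; the min/max clippings in lines 11--12 and the monotone $\mathbb{D}_{\pi_h}$ operator in line 15 both preserve these orderings. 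The defining CCE properties \eqref{CCE_property_1}--\eqref{CCE_property_2} then yield the standard sandwich
\begin{equation*}
V_1^{\dagger,\nu^k}(s_1^k)-V_1^{\mu^k,\dagger}(s_1^k)\;\le\;\overline{V}_1^k(s_1^k)-\underline{V}_1^k(s_1^k),
\end{equation*}
reducing Theorem~\ref{theorem-main} to the task of bounding $\sum_{k=1}^{K}\bigl(\overline{V}_1^k-\underline{V}_1^k\bigr)(s_1^k)$.

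Next I would unroll the difference $\overline{V}_h^k-\underline{V}_h^k$ across time steps using the two update branches of Algorithm~\ref{algorithm-main}. Since $\overline{Q}_h$ is clipped to the minimum of the \emph{update-q} and \emph{update-ur} outputs, and $\underline{Q}_h$ to the maximum of \emph{update-q} and \emph{update-lr}, it suffices to analyse the reference-advantage branch, letting the plain UCB/LCB branch take over during the initial burn-in phase. For the reference branch I would adapt the template of \citep{Li2022}: decompose the per-visit error into a reference-variance term $\sqrt{\mathrm{Var}_{P_{h,s,a,b}}(\overline{V}_{h+1}^\mathrm{R})/n}$ (bounded via the law of total variance, which is precisely what replaces one factor of $H$ by a constant after summing over $h$), an advantage-variance term $\sqrt{H\,\mathrm{Var}_{P_{h,s,a,b}}(\overline{V}_{h+1}-\overline{V}_{h+1}^\mathrm{R})/n}$ (controlled by the early-settlement rule \eqref{rule} which pins $\|\overline{V}-\overline{V}^\mathrm{R}\|_\infty\le 1$ once the reference freezes), and lower-order terms of size $H^2/n^{3/4}$ that will eventually feed the $H^7SAB/\varepsilon$ burn-in term. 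Summing $\sum_k 1/\sqrt{N_h^k(s,a,b)}$ over visits and applying Cauchy--Schwarz across the $SABH$ coordinates gives a cumulative bound of order $\sqrt{H^2SABT}\cdot\log^2(SABT/\delta)$, which rearranges into the $\varepsilon$-NE guarantee of \eqref{equ:regret bound} via $\mathrm{Regret}(K)/K\le\varepsilon$ and $T=KH$.

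The main obstacle, exactly as in the single-agent analogue, is the \emph{reference-drift accounting}: $\overline{V}^\mathrm{R}$ and $\underline{V}^\mathrm{R}$ genuinely move across episodes until the gap $\overline{V}_h(s)-\underline{V}_h(s)$ first drops below $1$ and the flag $u_\mathrm{r}(s)$ flips, so the running estimator $\phi_h^\mathrm{r}$ is chasing a non-stationary target. I would address this by (i) establishing monotonicity of $\overline{V}^k$ downward and $\underline{V}^k$ upward so that the references can only move toward each other, (ii) exploiting the one-shot nature of the early-settlement trigger to bound the total number of reference updates per state-step pair by a small constant, and (iii) bootstrapping from a coarse regret bound obtained solely from the \emph{update-q} branch to certify that the aggregate number of ``large-gap'' visits to any $(s,h)$ is at most $\mathrm{poly}(H)\cdot\widetilde{O}(SAB)$---this bootstrap is precisely where the burn-in threshold $T\ge\widetilde{O}(SABH^{10})$ announced after the theorem is produced. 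Combining this drift bound with the variance-reduced concentration inequalities of the previous paragraph, and then absorbing polylogarithmic factors, will complete the proof of Theorem~\ref{theorem-main}.
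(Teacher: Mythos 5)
Your proposal follows essentially the same route as the paper: optimism/pessimism by backward induction with the CCE properties giving $V_1^{\dagger,\nu^k}-V_1^{\mu^k,\dagger}\le \overline{V}_1^k-\underline{V}_1^k$, a reference-advantage decomposition whose bonus terms are controlled by Freedman-type bounds on the reference- and advantage-variance pieces, and a reference-drift/burn-in accounting obtained by bootstrapping a Hoeffding-style bound from the plain UCB/LCB branch to count large-gap visits (the paper's Lemmas~\ref{lemma-Qhk-opt}, \ref{lem:Qk-lcb}, \ref{lem:Vr_properties}, \ref{lem:Qk-upper}, \ref{lemma:bound_of_everything}). The only substantive technicality your sketch glosses over is the statistical dependency in the cross terms of the decomposition (the weights $\lambda_h^k$ depend on future visit counts and the final reference $\overline{V}^{\mathrm{R},K}_{h+1}$ is not adapted), which the paper resolves with a union bound over count configurations and an epsilon-net argument; also the early-settlement rule only guarantees $|\overline{V}_h^k-\overline{V}_h^{\mathrm{R},k}|\le 2$, not $\le 1$.
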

		
    \paragraph*{Sample complexity and burn-in cost} 
    The sample complexity (\ref{equ:regret bound}) can be simplified as
    \begin{equation}\label{sample com}
        \widetilde{O}\left( H^4SAB/\varepsilon^2 \right)
    \end{equation}
    for sufficiently large $T$. Moreover, our algorithm achieves the best burn-in cost $\widetilde{O}(SABH^{10})$, which is the minimum sample size to guarantee that the sample complexity of the algorithm is near optimal. This corresponds to ${\rm Regret}(K)\le \widetilde{O}(H^2SABT)$ as Section 1.1. And the state-of-the-art self-play algorithm has a burn-in cost of at least $\widetilde{O}(S^3AB H^4)$ to attain the $\widetilde{O}(H^4S/\varepsilon^2)$-sample complexity.

    % Theorem~\ref{theorem-main} delivers a non-asymptotic characterization of the performance of our algorithm ME-Nash-QL. Several appealing features of the algorithm are noteworthy. Furthermore, we can obtain the theoretical guarantee of the actual output policy as
    Theorem~\ref{theorem-main} guarantees that the average Nash gap is smaller than $\varepsilon$ with sample complexity in (\ref{sample com}). For supplementary, the following theorem provides the theoretical guarantee of an actual output policy.
    \begin{theorem}
        Consider the policy $\pi^{k^\star}=(\mu^{k^{\star}},\nu^{k^{\star}})$ with $k^{\star}=\arg\min_k\left(\overline{V}_1^{k}(s_1^{k})-\underline{V}_1^{k}(s_1^{k})\right)$.
        If we take $\pi^{k^\star}$ as the output marginal policy, for any $\delta\in(0,1)$, with probability at least $1-\delta$, there is $\overline{V}_1^{\dagger,\nu^{k^\star}}(s_1^{k^\star})-\underline{V}_1^{\mu^{k^\star},\dagger}(s_1^{k^\star})\le\overline{V}_1^{k^\star}(s_1^{k^\star})-\underline{V}_1^{k^\star}(s_1^{k^\star})\le\frac{1}{K}\widetilde{O}\left(\sqrt{H^2SABT}\right)$.
    \end{theorem}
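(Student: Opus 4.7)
The plan is to sandwich the Nash gap at episode $k^\star$ between two quantities already controlled in the analysis of Theorem~\ref{theorem-main}, and then exploit the arg-min property defining $k^\star$. The starting point is the pointwise optimism and pessimism relations
\begin{equation*}
\underline{V}_h^k(s) \le V_h^{\mu^k, \dagger}(s) \le V_h^{\dagger, \nu^k}(s) \le \overline{V}_h^k(s) \qquad \text{for all } (s,h,k),
\end{equation*}
which I would establish on a high-probability event of measure at least $1 - \delta/2$ by backward induction on $h$. The inductive step combines the update rules in lines~11--12 of Algorithm~\ref{algorithm-main}, the monotone truncations, the CCE property~(\ref{CCE_property}) (which converts a min/max guarantee on $\overline{Q}_h,\underline{Q}_h$ into one on $\overline{V}_h,\underline{V}_h$ through the marginal policies $\mu_h,\nu_h$), and the Hoeffding/Freedman concentration of the martingale noise injected by the \emph{update-q}, \emph{update-ur} and \emph{update-lr} subroutines. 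All these ingredients already appear in the proof of Theorem~\ref{theorem-main}. Instantiating the sandwich at $k = k^\star$, $s = s_1^{k^\star}$, $h=1$ yields the first inequality in the statement.

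Next, I would invoke the aggregate upper bound
\begin{equation*}
\sum_{k=1}^K \bigl(\overline{V}_1^k(s_1^k) - \underline{V}_1^k(s_1^k)\bigr) \le \widetilde{O}\bigl(\sqrt{H^2 S A B T}\bigr),
\end{equation*}
which is the workhorse quantity bounded en route to Theorem~\ref{theorem-main}: the regret bound~(\ref{equ:regret NE}) on $\sum_k(V_1^{\dagger,\nu^k} - V_1^{\mu^k,\dagger})(s_1^k)$ is derived by first controlling this larger sum via the reference-advantage decomposition together with the early-settlement variance reduction, and then applying the optimism/pessimism sandwich above. Since $k^\star$ minimizes $\overline{V}_1^k(s_1^k) - \underline{V}_1^k(s_1^k)$ over $k$, a minimum-is-at-most-the-mean argument gives
\begin{equation*}
\overline{V}_1^{k^\star}(s_1^{k^\star}) - \underline{V}_1^{k^\star}(s_1^{k^\star}) \le \frac{1}{K}\sum_{k=1}^K \bigl(\overline{V}_1^k(s_1^k) - \underline{V}_1^k(s_1^k)\bigr) \le \frac{1}{K}\widetilde{O}\bigl(\sqrt{H^2SABT}\bigr),
\end{equation*}
which is the second inequality.

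The only non-routine step is to ensure that the aggregate sum bound holds on the \emph{same} high-probability event as the optimism/pessimism inequalities, so that a single union bound preserves the $1 - \delta$ guarantee. This is a matter of carefully collecting the concentration events -- Hoeffding for the $Q^{\mathrm{UCB}}/Q^{\mathrm{LCB}}$ branch and Freedman plus empirical-Bernstein for the reference-advantage branch -- that are already invoked inside Theorem~\ref{theorem-main}; beyond that bookkeeping, the argument is a direct assembly of established pieces and requires no new technical machinery.
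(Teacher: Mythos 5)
Your proposal is correct and follows essentially the route the paper intends: the first inequality is exactly Lemma~\ref{lemma-Qhk-opt} (the optimism/pessimism sandwich $\underline{V}_1^k \le V_1^{\mu^k,\dagger} \le V_1^{\dagger,\nu^k} \le \overline{V}_1^k$) instantiated at $k=k^\star$, and the second follows from the minimum-at-most-average argument combined with the bound on $\sum_{k=1}^K(\overline{V}_1^k-\underline{V}_1^k)(s_1^k)$ already established via Lemmas~\ref{lem:Qk-upper} and~\ref{lemma:bound_of_everything} in the proof of Theorem~\ref{theorem-main}. Your only deviation is proposing to re-derive the sandwich by backward induction, which is unnecessary since it is available as a stated lemma, but the argument and the union-bound bookkeeping are otherwise the same.
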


    Besides, our algorithm can be extended to multi-player general-sum Markov games with $m$ players and $A_i$ actions per player with details in Appendix~\ref{Multi-ME-Nash-QL}, called Multi-ME-Nash-QL. We can obtain the following theoretical guarantee of Multi-ME-Nash-QL as
    \begin{theorem}\label{thm:Multi-ME-Nash-QL}
		Consider any $\delta\in(0,1)$, and suppose that $c_{\mathrm{b}}>0$ is chosen to be a sufficiently large universal constant. Then there exists some absolute constant $C_0>0$ such that Multi-ME-Nash-QL achieves sample complexity $\widetilde{O}\left(\sqrt{H^2ST\prod_{i=1}^{m}A_i}\right)$ with probability at least $1-\delta$.
    \end{theorem}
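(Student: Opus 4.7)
The plan is to generalize the two-player analysis of Theorem~\ref{theorem-main} to the $m$-player general-sum setting, keeping the same skeleton of upper/lower $Q$-function estimates, reference-advantage decomposition, and early-settlement rule, while replacing every two-player construction with its multi-player analogue and targeting CCE rather than Nash equilibrium as the solution concept.

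First I would replace the single pair $(\overline{Q}_h, \underline{Q}_h)$ with $m$ pairs $\{(\overline{Q}_{h,i}, \underline{Q}_{h,i})\}_{i=1}^{m}$, one per player, where $\overline{Q}_{h,i}$ upper-bounds player $i$'s best-response value against the current joint policy on the other players and $\underline{Q}_{h,i}$ provides a pessimistic counterpart. The $Q^{\mathrm{UCB}}/Q^{\mathrm{LCB}}$ and reference updates of Algorithm~\ref{algorithm-auxiliary} carry over verbatim once the scalar bonus $\iota_n$ is rescaled to contain the union-bound factor $\log\bigl(S T \prod_i A_i /\delta\bigr)$. The CCE oracle at line~14 is replaced by a multi-player CCE over $\Delta(\mathcal{A}_1 \times \cdots \times \mathcal{A}_m)$, returning a joint distribution $\pi$ satisfying the $m$ coupled inequalities $\mathbb{E}_{\va \sim \pi}\overline{Q}_{h,i}(s, \va) \geq \mathbb{E}_{\va \sim \pi}\overline{Q}_{h,i}(s, a_i', \va_{-i})$ for every $i$ and every deviation $a_i'$; such a CCE always exists and is computable by a linear program of size $\prod_i A_i$, preserving a $T\,\mathrm{poly}(\prod_i A_i)$ runtime bound.

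Next I would re-derive the optimism/pessimism invariants player by player: by induction on $h$, the inequalities $\overline{V}_{h,i}(s) \geq V_h^{\dagger_i, \pi_{-i}}(s)$ and $\underline{V}_{h,i}(s) \leq V_h^{\pi_i, \dagger_{-i}}(s)$ hold with high probability for each $i$, using only the generalized CCE property~(\ref{CCE_property}) together with the bonus-based concentration. The reference-advantage variance-reduction bounds translate with no structural change because they act on the scalar gap $\overline{V}_{h,i}-\underline{V}_{h,i}$ coordinate-by-coordinate; the joint action space enters only through the counting argument that replaces $SAB$ by $S\prod_i A_i$ and through a union bound over $i \in [m]$, contributing a $\log m$ factor absorbed into $\widetilde{O}(\cdot)$.

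Aggregating over $K$ episodes, the regret-like quantity $\sum_k \max_i \bigl(\overline{V}_{1,i}^k - \underline{V}_{1,i}^k\bigr)(s_1^k)$ is bounded by reusing the sum-of-learning-rates and martingale lemmas of the two-player proof, yielding the announced $\widetilde{O}\bigl(\sqrt{H^2 S T \prod_i A_i}\bigr)$ bound once $T$ exceeds the analogous burn-in threshold. The main obstacle is the third step, specifically the handling of the early-settlement rule: in the two-player zero-sum case the single gap $\overline{V}_h - \underline{V}_h$ governs both sides symmetrically, whereas with $m$ asymmetric gaps one must freeze all reference values at a commonly chosen episode to keep the variance-reduction identity intact. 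The natural fix is to trigger settlement on $\max_i(\overline{V}_{h,i}-\underline{V}_{h,i}) \leq 1$ and propagate this stronger condition through the bias decomposition; verifying that the resulting advantage terms still satisfy the variance bounds underlying Theorem~\ref{theorem-main} is the only genuinely new piece of analysis required.
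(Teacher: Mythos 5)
Your overall skeleton (per-player upper/lower $Q$-estimates, a multi-player CCE oracle over $\Delta(\mathcal{A}_1\times\cdots\times\mathcal{A}_m)$, and a counting argument that replaces $SAB$ by $S\prod_i A_i$) matches the paper, but two points are genuinely off. First, your pessimism invariant is misspecified: you ask for $\underline{V}_{h,i}\le V_h^{\pi_i,\dagger_{-i}}$, i.e.\ pessimism against a ``best response of the others,'' which is not a meaningful adversarial benchmark in a general-sum game (the other players maximize their own rewards, so $V^{\pi_i,\dagger_{-i}}_{h,i}$ is not comparable to $V^{\pi}_{h,i}$ in either direction). What the CCE-regret decomposition actually requires, and what the paper proves (Lemma~\ref{lemma-Qhk-opt-multi}), is $\overline{V}_{h,i}^k\ge V_{h,i}^{\dagger,\pi_{-i}^k}$ together with $\underline{V}_{h,i}^k\le V_{h,i}^{\pi^k}$, pessimism relative to the value of the \emph{executed joint policy}; with your benchmark the inequality $\max_i\big(V_{1,i}^{\dagger,\pi_{-i}^k}-V_{1,i}^{\pi^k}\big)\le\max_i\big(\overline{V}_{1,i}^k-\underline{V}_{1,i}^k\big)$ does not follow.

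Second, and more importantly, your treatment of the maximum over players is a gap. A union bound over $i\in[m]$ only guarantees that the concentration events hold simultaneously for all players (costing $\log m$ in the bonus); it does not bound $\sum_{k}\max_i\big(\overline{V}_{1,i}^k-\underline{V}_{1,i}^k\big)(s_1^k)$ by the per-player cumulative bounds, because the maximizing player can change from episode to episode. Applying the two-player machinery separately to each player and then using $\max_i\le\sum_i$ costs a multiplicative factor $m$, not $\log m$, and you give no argument for absorbing it. The paper avoids this entirely with a device your proposal does not contain: it constructs a single, player-independent dominating pair of sequences $\hat{V}_h^k,\check{V}_h^k$ (with associated $\hat{Q},\check{Q},\widetilde{Q}^{\mathrm{R}}$ and bonus recursions) satisfying $\max_i\big(\overline{V}_{h,i}^k-\underline{V}_{h,i}^k\big)(s)\le\hat{V}_h^k(s)-\check{V}_h^k(s)$ pointwise, and then runs the two-player regret recursion and variance bounds once on this scalar process (Lemmas~\ref{lem:Qk-upper-multi} and~\ref{lemma:bound_of_everything-multi}). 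Relatedly, your proposed fix of re-triggering the early settlement on $\max_i(\overline{V}_{h,i}-\underline{V}_{h,i})\le 1$ modifies the algorithm; the paper keeps per-player settlement flags and instead routes the reference-value analysis through the dominating sequences, so the ``genuinely new piece'' you identify is resolved differently (and your version would still need the dominating-sequence step, or an explicit argument for the factor $m$, to reach the stated $\widetilde{O}\big(\sqrt{H^2ST\prod_i A_i}\big)$ bound).
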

	
    \paragraph*{Memory efficiency, computational complexity, and Markov/Nash policy}
    ME-Nash-QL achieves space complexity $O(SABH)$, which is essentially unimprovable in the tabular case if the $Q$-values are stored.
    %for matching the size of $Q$-values. 
    To the best of our knowledge, this is the first time that such complexity has been achieved along with the delivery of a Markov/Nash policy. Furthermore, the computational complexity of our algorithm is $O(T\mathrm{poly}(AB))$, which is due to the CCE computation by linear programming in polynomial time $O(\mathrm{poly}(AB))$. In comparison, although the previous algorithm Nash-VI also achieves same sample complexity, it has a significantly larger space complexity of $O(S^2ABH)$ due to its model-based nature, along with larger computational complexity $O(T\mathrm{poly}(SAB))$.
    
    \section{Analysis}\label{subsec:main_proof}
    In this section, we outline the main steps needed to prove our main result in Theorem \ref{theorem-main} with detailed proof in Appendix~\ref{sum:main step}. To simplify the presentation, we have ignored the dependency on $k$ in Algorithms~\ref{algorithm-main} and \ref{algorithm-auxiliary}. Next, we need to be more explicit with the following notations for completion.
    
    (i) $(s_h^k, a_h^k, b_h^k)$ is denoted as the state-action pair encountered and chosen at step $h$ in the $k$-th episode.
    
    (ii) $\overline{Q}_h^k(s,a,b)$, $\underline{Q}_h^k(s,a,b)$, $\overline{Q}_h^{\mathrm{R}, k}(s,a,b)$, $\underline{Q}_h^{\mathrm{R}, k}(s,a,b)$ and $\overline{V}_h^k(s)$, $\underline{V}_h^k(s)$ denote, resp., $\overline{Q}_h(s,a,b)$, $\underline{Q}_h(s,a,b)$, $\overline{Q}_h^{\mathrm{R}}(s,a,b)$, $\underline{Q}_h^{\mathrm{R}}(s,a,b)$ and $\overline{V}_h(s)$, $\underline{V}_h(s)$
    {\em at the beginning} of the $k$-th episode.
    
    \paragraph{Step 1: regret decomposition.}
    We can obtain
    \begin{align*}
    \operatorname{Regret}(K)\!=\!\sum\nolimits_{k=1}^{K}\left(V_{1}^{\dagger, \nu^{k}}\!\!-\!V_{1}^{\mu^{k}, \dagger}\right)\!\left(s_{1}^k\right)\!\le\! \sum\nolimits_{k=1}^{K}\left[\overline{Q}_{h}^{\mathrm{R},k}\!(s_h^k,a_h^k,b_h^k)\!-\!\underline{Q}_{h}^{\mathrm{R},k}\!(s_h^k,a_h^k,b_h^k) \!+ \!\zeta_h^k\right]\!,
    \end{align*}
    with $\zeta_h^k := \mathbb{E}_{(a,b)\sim\pi_h^k}(\overline{Q}_{h}^k-\underline{Q}_{h}^k)(s_h^k,a,b) - (\overline{Q}_{h}^k-\underline{Q}_{h}^k)(s_h^k,a_h^k,b_h^k)$ (see Appendix~\ref{sum:main step} for details).

    \paragraph{Step 2: managing regret by recursion.}
    The regret can be further manipulated by leveraging the update rule of $\overline{Q}^{\mathrm{R}, k}_h$ and $\underline{Q}^{\mathrm{R}, k}_h$. This leads to a key decomposition as summarized as Lemma~\ref{lem:Qk-upper} in Appendix~\ref{sum:main step} and proved in Appendix~\ref{proof:lem:Qk-upper}.

    \paragraph{Step 3: controlling the terms in Step 2 separately.}
    Each of the terms in Step 2 can be well controlled. We provide the bounds for these terms as Lemma~\ref{lemma:bound_of_everything} in Appendix~\ref{sec-proof-lemma:everything} and summarize them in Appendix~\ref{sum:main step}. To derive the above bounds, the main strategy is to apply the Bernstein-type concentration inequalities carefully, and to upper bound the sum of variance.
    
    \paragraph{Step 4: putting all this together.}
    We now establish our main result. Taking the bounds in Step 3 together with Step 2, we see that with probability at least $1-\delta$ and a constant $C_0>0$, one has 
    \begin{align}\label{regret}
    \operatorname{Regret}(K) \le  C_0\left(\sqrt{H^2SABT\log^4{SABT/\delta} }+ H^6 SAB\log^3{SABT/\delta}\right).
    \end{align}
    
    Theorem \ref{theorem-main} is proved under sample complexity with $\varepsilon$ average regret (i.e., $\frac{1}{K}\operatorname{Regret}(K) \leq \varepsilon$).

    \section{Conclusion}
    In this paper, we propose a novel model-free algorithm ME-Nash-QL for two-play zero-sum Markov games and provide a sharp analysis. ME-Nash-QL boasts several advantages over previous algorithms. First, to the best of our knowledge, it is the first TZMG algorithm that attains minimal space complexity $O(SABH)$. In addition, it can effectively produce an $\varepsilon$-approximate Nash equilibrium of TZMG in $\widetilde{O}(H^4SAB/\varepsilon^2)$ samples of game playing, along with the computational complexity $O(T\mathrm{poly}(AB))$. This near-optimal sample complexity of the algorithm comes into effect as soon as the sample size exceeds $SABH^{10}$, which is the best burn-in cost compared to the previous algorithms with the same sample complexity. Further, it outputs a single Markov and Nash policy, which is a departure from previous algorithms that output nested mixture policies or non-Markov policies. There are some compelling future directions. For example, can we achieve model-free MG algorithms with $O(A+B)$ sample complexity (thus breaking the curse of multi-agent in the extension to multi-player general-sum MGs) without compromising the performance of existing metrics? How can we design independent actions with this sample complexity? We leave these problems for future work.
    
    % \subsubsection*{Author Contributions}
    % If you'd like to, you may include a section for author contributions as is done in many journals. This is optional and at the discretion of the authors.
    
    \subsubsection*{Acknowledgments}
    We thank Gen Li for helpful discussions and paper feedback. The work was supported in part by the National Natural Science Foundation of China (NSFC) under Grants U21B2029 and U21A20456, in part by the China Postdoctoral Science Foundation under Grant Number 2023M730264, and in part by the Zhejiang Provincial Natural Science Foundation of China under Grant LR23F010006.

    \bibliography{ref}

@article{freedman1975tail,
	title={On tail probabilities for martingales},
	author={Freedman, David A},
	journal={the Annals of Probability},
	pages={100--118},
	year={1975},
	publisher={JSTOR}
}

@article{tropp2011freedman,
	title={Freedman's inequality for matrix martingales},
	author={Tropp, Joel},
	journal={Electronic Communications in Probability},
	volume={16},
	pages={262--270},
	year={Jan. 2011},
	publisher={The Institute of Mathematical Statistics and the Bernoulli Society}
}

@article{li2021tightening,
  title={Is {Q}-Learning Minimax Optimal? {A} Tight Sample Complexity Analysis},
  author={Li, Gen and Cai, Changxiao and Chen, Yuxin and Wei, Yuting and Chi, Yuejie},
  journal={Operations Research},
  year={2023},
  publisher={INFORMS}
}

@inproceedings{zhang2022policy,
title={Policy Optimization for {Markov} Games: Unified Framework and Faster Convergence},
author={Runyu Zhang and Qinghua Liu and Huan Wang and Caiming Xiong and Na Li and Yu Bai},
booktitle={Advances in Neural Information Processing Systems},
editor={Alice H. Oh and Alekh Agarwal and Danielle Belgrave and Kyunghyun Cho},
year={2022}
}

@inproceedings{jin2022v,
  title={V-Learning--A Simple, Efficient, Decentralized Algorithm for Multiagent {RL}},
  author={Jin, Chi and Liu, Qinghua and Wang, Yuanhao and Yu, Tiancheng},
  booktitle={ICLR 2022 Workshop on Gamification and Multiagent Solutions},
  year={2022}
}

@inproceedings{chen2022almost,
  title={Almost optimal algorithms for two-player zero-sum linear mixture {Markov} games},
  author={Chen, Zixiang and Zhou, Dongruo and Gu, Quanquan},
  booktitle={International Conference on Algorithmic Learning Theory},
  pages={227--261},
  year={2022},
  organization={PMLR}
}

@inproceedings{qiu2021provably,
  title={Provably efficient fictitious play policy optimization for zero-sum {Markov} games with structured transitions},
  author={Qiu, Shuang and Wei, Xiaohan and Ye, Jieping and Wang, Zhaoran and Yang, Zhuoran},
  booktitle={International Conference on Machine Learning},
  pages={8715--8725},
  year={2021},
  organization={PMLR}
}

@inproceedings{jin2018q,
	title={Is {Q}-learning provably efficient?},
	author={Jin, Chi and Allen-Zhu, Zeyuan and Bubeck, Sebastien and Jordan, Michael I},
	booktitle={Advances in Neural Information Processing Systems},
	pages={4863--4873},
	year={2018}
}

@article{ShwartzS16a,
  author       = {Shai Shalev{-}Shwartz and
                  Shaked Shammah and
                  Amnon Shashua},
  title        = {Safe, Multi-Agent, Reinforcement Learning for Autonomous Driving},
  journal      = {CoRR},
  volume       = {abs/1610.03295},
  year         = {2016},
  eprinttype    = {arXiv},
  eprint       = {1610.03295}
}

@inproceedings{yang2021q,
	title={Q-learning with Logarithmic Regret},
	author={Yang, Kunhe and Yang, Lin and Du, Simon},
	booktitle={International Conference on Artificial Intelligence and Statistics},
	pages={1576--1584},
	year={2021},
	organization={PMLR}
}

@article{jin2018qarxiv,
	title={Is {Q}-learning provably efficient?},
	author={Jin, Chi and Allen-Zhu, Zeyuan and Bubeck, Sebastien and Jordan, Michael I},
	journal={arXiv preprint arXiv:1807.03765},
	year={2018}
}

@article{Li2022,
	author = {Li, Gen and Shi, Laixi and Chen, Yuxin and Chi, Yuejie},
	title = "{Breaking the sample complexity barrier to regret-optimal model-free reinforcement learning}",
	journal = {Information and Inference: A Journal of the IMA},
	volume = {12},
	number = {2},
	pages = {969-1043},
	year = {Dec. 2022},
	issn = {2049-8772},
	doi = {10.1093/imaiai/iaac034}
}

@book{Tao2012RMT,
	author={Terence Tao},
	title={Topics in Random Matrix Theory},
	year={2012},
	address = {Providence, Rhode Island},
	series = {Graduate Studies in Mathematics},
	publisher = {American Mathematical Society}
}

@article{shapley1953stochastic,
	title={Stochastic games},
	author={Shapley, Lloyd S},
	journal={Proceedings of the National Academy of Sciences},
	volume={39},
	number={10},
	pages={1095--1100},
	year={1953},
	publisher={National Acad Sciences}
}

@incollection{littman1994markov,
	title={Markov games as a framework for multi-agent reinforcement learning},
	author={Littman, Michael L},
	booktitle={Machine learning proceedings 1994},
	pages={157--163},
	year={1994},
	publisher={Elsevier}
}

@book{filar2012competitive,
	title={Competitive {Markov} decision processes},
	author={Filar, Jerzy and Vrieze, Koos},
	year={2012},
	publisher={Springer Science \& Business Media}
}

@article{silver2017mastering,
	title={Mastering the game of go without human knowledge},
	author={Silver, David and Schrittwieser, Julian and Simonyan, Karen and Antonoglou, Ioannis and Huang, Aja and Guez, Arthur and Hubert, Thomas and Baker, Lucas and Lai, Matthew and Bolton, Adrian and others},
	journal={Nature},
	volume={550},
	number={7676},
	pages={354--359},
	year={2017},
	publisher={Nature Publishing Group}
}

@inproceedings{baker2020emergent,
	title={Emergent Tool Use From Multi-Agent Autocurricula},
	author={Bowen Baker and Ingmar Kanitscheider and Todor Markov and Yi Wu and Glenn Powell and Bob McGrew and Igor Mordatch},
	booktitle={International Conference on Learning Representations},
	year={2020}
}

@inproceedings{littman2001friend,
	author = {Littman, Michael L.},
	title = {Friend-or-Foe {Q}-Learning in General-Sum Games},
	year = {2001},
	isbn = {1558607781},
	publisher = {Morgan Kaufmann Publishers Inc.},
	address = {San Francisco, CA, USA},
	booktitle = {Proceedings of the Eighteenth International Conference on Machine Learning},
	pages = {322–328},
	numpages = {7}
}

@article{hu2003nash,
	title={Nash {Q}-learning for general-sum stochastic games},
	author={Hu, Junling and Wellman, Michael P},
	journal={Journal of Machine Learning Research},
	volume={4},
	pages={1039--1069},
	year={Nov. 2003}
}

@inproceedings{ijcai2021p466,
	title     = {Model-based Multi-agent Policy Optimization with Adaptive Opponent-wise Rollouts},
	author    = {Zhang, Weinan and Wang, Xihuai and Shen, Jian and Zhou, Ming},
	booktitle = {Proceedings of the Thirtieth International Joint Conference on
	Artificial Intelligence, {IJCAI-21}},
	publisher = {International Joint Conferences on Artificial Intelligence Organization},
	editor    = {Zhi-Hua Zhou},
	pages     = {3384--3391},
	year      = {2021},
	month     = {8},
	doi       = {10.24963/ijcai.2021/466}
}

@inproceedings{bai2020provable,
  title={Provable self-play algorithms for competitive reinforcement learning},
  author={Bai, Yu and Jin, Chi},
  booktitle={International conference on machine learning},
  pages={551--560},
  year={2020},
  organization={PMLR}
}

@article{CCE,
	author = {Moulin, H. and Vial, J. -P.},
	title = {Strategically Zero-Sum Games: The Class of Games Whose Completely Mixed Equilibria Cannot Be Improved Upon},
	year = {Sep. 1978},
	issue_date = {Sep 1978},
	publisher = {Physica-Verlag GmbH},
	address = {DEU},
	volume = {7},
	number = {3–4},
	issn = {0020-7276},
	journal = {International Journal of Game Theory},
	pages = {201–221},
	numpages = {21}
}

@article{xie2020learning,
	author = {Xie, Qiaomin and Chen, Yudong and Wang, Zhaoran and Yang, Zhuoran},
	title = {Learning Zero-Sum Simultaneous-Move {Markov} Games Using Function Approximation and Correlated Equilibrium},
	year = {Jun. 2022},
	issue_date = {Feb. 2023},
	publisher = {INFORMS},
	volume = {48},
	number = {1},
	issn = {0364-765X},
	doi = {10.1287/moor.2022.1268},
	journal = {Math. Oper. Res.},
	pages = {433–462},
	numpages = {30}
}

@inproceedings{bai2020near,
	author = {Bai, Yu and Jin, Chi and Yu, Tiancheng},
	title = {Near-Optimal Reinforcement Learning with Self-Play},
	year = {2020},
	isbn = {9781713829546},
	publisher = {Curran Associates Inc.},
	address = {Red Hook, NY, USA},
	booktitle = {Proceedings of the 34th International Conference on Neural Information Processing Systems},
	articleno = {182},
	numpages = {12},
	location = {Vancouver, BC, Canada}
}

@InProceedings{sidford2019solving,
	title = 	 {Solving Discounted Stochastic Two-Player Games with Near-Optimal Time and Sample Complexity},
	author =       {Sidford, Aaron and Wang, Mengdi and Yang, Lin and Ye, Yinyu},
	booktitle = 	 {Proceedings of the Twenty Third International Conference on Artificial Intelligence and Statistics},
	pages = 	 {2992--3002},
	year = 	 {Aug. 2020},
	editor = 	 {Chiappa, Silvia and Calandra, Roberto},
	volume = 	 {108},
	series = 	 {Proceedings of Machine Learning Research},
	publisher =    {PMLR}
}

@inproceedings{wei2017online,
	title={Online reinforcement learning in stochastic games},
	author={Wei, Chen-Yu and Hong, Yi-Te and Lu, Chi-Jen},
	booktitle={Advances in Neural Information Processing Systems},
	pages={4987--4997},
	year={2017}
}

@inproceedings{zhang2020model,
	author = {Zhang, Kaiqing and Kakade, Sham M. and Ba\c{s}ar, Tamer and Yang, Lin F.},
	title = {Model-Based Multi-Agent {RL} in Zero-Sum {Markov} Games with near-Optimal Sample Complexity},
	year = {2020},
	isbn = {9781713829546},
	publisher = {Curran Associates Inc.},
	address = {Red Hook, NY, USA},
	booktitle = {Proceedings of the 34th International Conference on Neural Information Processing Systems},
	articleno = {99},
	numpages = {13},
	location = {Vancouver, BC, Canada}
}

@article{jaksch2010near,
	title={Near-optimal regret bounds for reinforcement learning},
	author={Jaksch, Thomas and Ortner, Ronald and Auer, Peter},
	journal={Journal of Machine Learning Research},
	volume={11},
	pages={1563--1600},
	year={April 2010}
}

@inproceedings{strehl2006pac,
	title={{PAC} model-free reinforcement learning},
	author={Strehl, Alexander L and Li, Lihong and Wiewiora, Eric and Langford, John and Littman, Michael L},
	booktitle={International Conference on Machine Learning},
	pages={881--888},
	year={2006}
}

@inproceedings{zhang2020almost,
	author = {Zhang, Zihan and Zhou, Yuan and Ji, Xiangyang},
	title = {Almost Optimal Model-Free Reinforcement Learning via Reference-Advantage Decomposition},
	year = {2020},
	isbn = {9781713829546},
	publisher = {Curran Associates Inc.},
	address = {Red Hook, NY, USA},
	booktitle = {Proceedings of the 34th International Conference on Neural Information Processing Systems},
	articleno = {1274},
	numpages = {10},
	location = {Vancouver, BC, Canada}
}

@inproceedings{dann2017unifying,
	title={Unifying PAC and regret: Uniform PAC bounds for episodic reinforcement learning},
	author={Dann, Christoph and Lattimore, Tor and Brunskill, Emma},
	booktitle={Advances in Neural Information Processing Systems},
	pages={5713--5723},
	year={2017}
}

@inproceedings{azar2017minimax,
	title={Minimax Regret Bounds for Reinforcement Learning},
	author={Azar, Mohammad Gheshlaghi and Osband, Ian and Munos, R{\'e}mi},
	booktitle={International Conference on Machine Learning},
	pages={263--272},
	year={2017}
}

@inproceedings{osband2014generalization,
	author = {Osband, Ian and Van Roy, Benjamin and Wen, Zheng},
	title = {Generalization and Exploration via Randomized Value Functions},
	year = {2016},
	publisher = {JMLR.org},
	booktitle = {Proceedings of the 33rd International Conference on International Conference on Machine Learning - Volume 48},
	pages = {2377–2386},
	numpages = {10},
	location = {New York, NY, USA}
}

@article{jia2019feature,
	title={Feature-Based {Q}-Learning for Two-Player Stochastic Games},
	author={Jia, Zeyu and Yang, Lin F and Wang, Mengdi},
	journal={arXiv preprint arXiv:1906.00423},
	year={2019}
}

@article{lee2020linear,
	title={Linear Last-iterate Convergence for Matrix Games and Stochastic Games},
	author={Lee, Chung-Wei and Luo, Haipeng and Wei, Chen-Yu and Zhang, Mengxiao},
	journal={arXiv preprint arXiv:2006.09517},
	year={2020}
}

@InProceedings{Liu,
	title = 	 {A Sharp Analysis of Model-based Reinforcement Learning with Self-Play},
	author =       {Liu, Qinghua and Yu, Tiancheng and Bai, Yu and Jin, Chi},
	booktitle = 	 {Proceedings of the 38th International Conference on Machine Learning},
	pages = 	 {7001--7010},
	year = 	 {July 2021},
	editor = 	 {Meila, Marina and Zhang, Tong},
	volume = 	 {139},
	series = 	 {Proceedings of Machine Learning Research},
	publisher =    {PMLR}
}

@inproceedings{NEURIPS2022_62b4fea1,
 author = {Li, Gen and Chi, Yuejie and Wei, Yuting and Chen, Yuxin},
 booktitle = {Advances in Neural Information Processing Systems},
 editor = {S. Koyejo and S. Mohamed and A. Agarwal and D. Belgrave and K. Cho and A. Oh},
 pages = {15353--15367},
 publisher = {Curran Associates, Inc.},
 title = {Minimax-Optimal Multi-Agent {RL} in {Markov} Games With a Generative Model},
 volume = {35},
 year = {2022}
}

@InProceedings{pmlr-v195-wang23b,
  title = 	 {Breaking the Curse of Multiagency: Provably Efficient Decentralized Multi-Agent RL with Function Approximation},
  author =       {Wang, Yuanhao and Liu, Qinghua and Bai, Yu and Jin, Chi},
  booktitle = 	 {Proceedings of Thirty Sixth Conference on Learning Theory},
  pages = 	 {2793--2848},
  year = 	 {2023},
  editor = 	 {Neu, Gergely and Rosasco, Lorenzo},
  volume = 	 {195},
  series = 	 {Proceedings of Machine Learning Research},
  month = 	 {12--15 Jul},
  publisher =    {PMLR}
}

@inproceedings{cui2023breaking,
  title={Breaking the curse of multiagents in a large state space: Rl in markov games with independent linear function approximation},
  author={Cui, Qiwen and Zhang, Kaiqing and Du, Simon},
  booktitle={The Thirty Sixth Annual Conference on Learning Theory},
  pages={2651--2652},
  year={2023},
  organization={PMLR}
}

@inproceedings{mao2022improving,
  title={On improving model-free algorithms for decentralized multi-agent reinforcement learning},
  author={Mao, Weichao and Yang, Lin and Zhang, Kaiqing and Basar, Tamer},
  booktitle={International Conference on Machine Learning},
  pages={15007--15049},
  year={2022},
  organization={PMLR}
}

@article{mao2023provably,
  title={Provably efficient reinforcement learning in decentralized general-sum markov games},
  author={Mao, Weichao and Ba{\c{s}}ar, Tamer},
  journal={Dynamic Games and Applications},
  volume={13},
  number={1},
  pages={165--186},
  year={2023},
  publisher={Springer}
}

@inproceedings{daskalakis2023complexity,
  title={The complexity of markov equilibrium in stochastic games},
  author={Daskalakis, Constantinos and Golowich, Noah and Zhang, Kaiqing},
  booktitle={The Thirty Sixth Annual Conference on Learning Theory},
  pages={4180--4234},
  year={2023},
  organization={PMLR}
}

@inproceedings{song2021can,
  title={When Can We Learn General-Sum Markov Games with a Large Number of Players Sample-Efficiently?},
  author={Song, Ziang and Mei, Song and Bai, Yu},
  booktitle={International Conference on Learning Representations},
  year={2021}
}

@book{nisan2007algorithmic,
  title={Algorithmic Game Theory},
  author={Nisan, Noam and Roughgarden, Tim and Tardos, Eva and Vazirani, Vijay V},
  year={2007},
  publisher={Cambridge University Press}
}

@misc{feng2023modelfree,
      title={Model-Free Algorithm with Improved Sample Efficiency for Zero-Sum Markov Games}, 
      author={Songtao Feng and Ming Yin and Yu-Xiang Wang and Jing Yang and Yingbin Liang},
      year={2023},
      eprint={2308.08858},
      archivePrefix={arXiv},
      primaryClass={cs.LG}
}
    \bibliographystyle{iclr2024_conference}

    \newpage
    \appendix
    \section{Additional notation and key lemmas}
    \subsection{Additional notation}\label{sec:additional-notation}
    In the main text, we ignore the $k$ dependency in Algorithms~\ref{algorithm-main} and \ref{algorithm-auxiliary} for simplicity. However, we rewrite Algorithm \ref{algorithm-main} with dependency on $k$ as Algorithm \ref{algorithm-aug} for the following proof. In addition, we also rewrite some notations except for those introduced in the main text as below.
	\begin{itemize}
		\item $k_h^n(s, a, b)$(resp. $k_h^n(s)$): the index of the episode in which $(s, a, b)$ (resp. $s$) is visited for the $n$-th time at time step $h$; for the sake of conciseness, we shall sometimes use the shorthand $k^n =  k_h^n(s, a, b)$ (resp. $k^n =  k_h^n(s)$) whenever it is clear from the context.
		\item $P_h^k \in \{0,1\}^{1\times |\mathcal{S}|}$: the empirical transition at time step $h$ in the $k$-th episode, namely, 
		\begin{equation}
			P_h^k (s) = \mathbbm{1}\big( s = s_{h+1}^k \big).
			\label{eq:P-hk-defn-s}
		\end{equation}
		\item $N_h^k(s, a, b)$ denotes $N_h(s, a, b)$ {\em by the end} of the $k$-th episode; for the sake of conciseness, we shall often abbreviate $N^k = N_h^k(s, a, b)$ or $N^k = N_h^k(s_h^k,a_h^k, b_h^k)$ (depending on which result we are proving). 
	
		\item $Q_h^{\mathrm{UCB}, k}(s, a, b)$, $Q_h^{\mathrm{LCB}, k}(s,a,b)$, $\overline{V}_h^{\mathrm{R}, k}(s)$ and $\underline{V}_h^{\mathrm{R}, k}(s)$, respectively, denote $Q_h^{\mathrm{UCB}}(s, a, b)$, $Q_h^{\mathrm{LCB}}(s, a, b)$, $\overline{V}_h^{\mathrm{R}}(s)$ and $\underline{V}_h^{\mathrm{R}}(s)$ {\em at the beginning} of the $k$-th episode. 
	
		\item ${u}_{\mathrm{r}}^k(s)$ denotes ${u}_{\mathrm{r}}(s)$ {\em at the beginning} of the $k$-th episode.
	
		\item $\big[\overline{\phi}^{\mathrm{r}, k}_h, \overline{\psi}^{\mathrm{r}, k}_h, \overline{\phi}^{\mathrm{a}, k}_h, \overline{\psi}^{\mathrm{a}, k}_h, \overline{\delta}^{\mathrm{R}, k}_h, \overline{B}^{\mathrm{R}, k}_h \big]$ denotes  $\big[\overline{\phi}^{\mathrm{r}}_h, \overline{\psi}^{\mathrm{r}}_h, \overline{\phi}^{\mathrm{a}}_h, \overline{\psi}^{\mathrm{a}}_h, \overline{\delta}^{\mathrm{R}}_h, \overline{B}^{\mathrm{R}}_h \big]$ {\em at the beginning} of the $k$-th episode.
		
		\item $\big[\underline{\phi}^{\mathrm{r}, k}_h, \underline{\psi}^{\mathrm{r}, k}_h, \underline{\phi}^{\mathrm{a}, k}_h, \underline{\psi}^{\mathrm{a}, k}_h, \underline{\delta}^{\mathrm{R}, k}_h, \underline{B}^{\mathrm{R}, k}_h \big]$ denotes  $\big[\underline{\phi}^{\mathrm{r}}_h, \underline{\psi}^{\mathrm{r}}_h, \underline{\phi}^{\mathrm{a}}_h, \underline{\psi}^{\mathrm{a}}_h, \underline{\delta}^{\mathrm{R}}_h, \underline{B}^{\mathrm{R}}_h \big]$ {\em at the beginning} of the $k$-th episode.

	\end{itemize}

	Furthermore, for any matrix $P=[P_{i,j}]_{1\leq i\leq m, 1\leq j\leq n}$, 
	we have $\|P\|_1\coloneqq \max_{1\leq i\leq m} \sum\nolimits_{j=1}^n |P_{i,j}|$. 
	Similarly, for any vector $V= [V_i]_{1\leq i\leq n}$, its $\ell_{\infty}$ norm is defined as 
	$\|V\|_{\infty} \coloneqq \max_{1\leq i\leq n} |V_i|$.
	We extend scalar functions and expressions to accept vector-valued inputs, with the expectation that they will be applied in an entrywise fashion. For example, for a vector $x=[x_i]_{1\leq i\leq n}$, we denote $x^2 =[x_i^2]_{1\leq i\leq n}$. For any two vectors $x=[x_i]_{1\leq i\leq n}$ and $y=[y_i]_{1\leq i\leq n}$, the notation $ {x}\leq {y}$ (resp.~$x\geq {y}$) means
	$x_{i}\leq y_{i}$ (resp.~$x_{i}\geq y_{i}$) for all $1\leq i\leq n$. 

	For any given vector $V \in \mathbb{R}^{S}$, we 
	define the variance parameter w.r.t.~$P_{h,s,a,b}$ (cf.~(\ref{eq:defn-P-hsa}) as follows
	\begin{equation} \label{lemma1:equ2}
		\mathsf{Var}_{h, s, a, b}(V) \coloneqq \mathop{\mathbb{E}}\nolimits_{s' \sim P_{h,s,a, b}} \Big [\big(V(s') -  P_{h,s, a, b}V \big)^2\Big] 
		= P_{h,s,a, b} \big(V^{ 2}\big) - \big(P_{h,s,a, b}V \big)^2.
	\end{equation}
	%
	%Finally, let $\mathcal{X}\coloneqq \big( S, A, B, H , T, \frac{1}{\delta}\big)$. 
	The notation $f(x)\lesssim g(x)$ (resp.~$f(x)\gtrsim g(x)$) means that there exists a universal constant $C_{0}>0$ such that $ f(x) \leq C_{0} g(x) $ (resp.~$ f(x) \geq C_{0} g(x)$).
	Besides, $f(x)\asymp g(x)$ represents that $f(x)\lesssim g(x)$ and $f(x)\gtrsim g(x)$ hold simultaneously.

	\begin{algorithm}[ht]
		\caption{ME-Nash-QL (a rewrite of Algorithm \ref{algorithm-main} that specifies dependency on $k$)}\label{algorithm-aug}
		\LinesNumbered
		\textbf{Parameter: }{some universal constant $c_{\mathrm{b}} > 0$ and probability of failure $\delta\in(0, 1)$;}\\
		\textbf{Initialize: }{$\overline{Q}_{h}^1(s, a, b)$, $Q_{h}^{\mathrm{UCB}, 1}(s, a, b)$, $\overline{Q}_{h}^{\mathrm{R}, 1}(s, a, b)$, $ \leftarrow H$; $\underline{Q}_{h}^1(s, a, b)$, $\underline{Q}_{h}^{\mathrm{R}, 1}(s, a, b)$, $Q_{h}^{\mathrm{LCB}, 1}(s, a, b)$$ \leftarrow 0$; $\overline{V}_{h}^1(s)$, $\overline{V}_{h}^{\mathrm{R}, 1}(s)$$\leftarrow H$; $\underline{V}_{h}^1(s)$, $\underline{V}_{h}^{\mathrm{R}, 1}(s)$$\leftarrow 0$; $N_{h}^0(s, a, b) \leftarrow 0$; $\overline{\phi}_{h}^{\mathrm{r}}(s, a, b)$,
		$\underline{\phi}_{h}^{\mathrm{r}}(s, a, b)$, $\overline{\psi}_{h}^{\mathrm{r}}(s, a, b)$, $\underline{\psi}_{h}^{\mathrm{r}}(s, a, b)$, $\overline{\phi}_{h}^{\mathrm{a}}(s, a, b)$, $\underline{\phi}_{h}^{\mathrm{a}}(s, a, b)$, $\overline{\psi}_{h}^{\mathrm{a}}(s, a, b)$, $\underline{\psi}_{h}^{\mathrm{a}}(s, a, b)$, $\overline{\varphi}_{h}^{\mathrm{R}}(s, a, b)$, $\underline{\varphi}_{h}^{\mathrm{R}}(s, a, b)$, $\overline{B}_{h}^{\mathrm{R}}(s, a, b)$, $\underline{B}_{h}^{\mathrm{R}}(s, a, b) \leftarrow 0$; and ${u}_{\mathrm{r}}^1(s)=\mathsf{True}$ for all $(s, a, b, h)\in\mathcal{S}\times\mathcal{A}\times\mathcal{B}\times [H]$.}\\
		%\KwOut{output result}
		\For{Episode $k=1, \dots, K$}{
			Set initial state $s_1\leftarrow s_1^k$.\\
			\For{Step $h=1, \dots, H$}{
				Take action 
				$(a_h^k,b_h^k) \sim \pi_h^k(\cdot,\cdot|s_h^{k})$, and draw $s_{h+1}^k\sim P_h(\cdot\mid s_h^k, a_h^k, b_h^k)$.\\
				$N_h^k(s_h^k, a_h^k, b_h^k)\leftarrow N_h^{k-1}(s_h^k, a_h^k, b_h^k)+1$;$\quad n\leftarrow N_h^k(s_h^k, a_h^k, b_h^k)$;$\quad\eta_n\leftarrow\frac{H+1}{H+n}$.\\
				$\left[Q_h^{\mathrm{UCB},{k+1}}, Q_h^{\mathrm{LCB},{k+1}}\right](s_h^k, a_h^k, b_h^k)\leftarrow\mathrm{update\mhyphen q}\left(\right)$.\\
				$\overline{Q}_h^{\mathrm{R},{k+1}}(s_h^k, a_h^k, b_h^k)\leftarrow \mathrm{update\mhyphen ur }\left(\right)$.\\
				$\underline{Q}_h^{\mathrm{R},{k+1}}(s_h^k, a_h^k, b_h^k)\leftarrow \mathrm{update\mhyphen ur }\left(\right)$.\\
				$\overline{Q}_h^{k+1}(s_h^k, a_h^k, b_h^k)\leftarrow\min{\{\overline{Q}_h^{\mathrm{R}, {k+1}}(s_h^k, a_h^k, b_h^k), Q_h^{\mathrm{UCB}, {k+1}}(s_h^k, a_h^k, b_h^k), \overline{Q}_h^{k}(s_h^k, a_h^k, b_h^k)\}}$.\\
				$\underline{Q}_h^{k+1}(s_h^k, a_h^k, b_h^k)\leftarrow\max\{\underline{Q}_h^{\mathrm{R}, {k+1}}(s_h^k, a_h^k, b_h^k), Q_h^{\mathrm{LCB}, {k+1}}(s_h^k, a_h^k, b_h^k), \underline{Q}_h^{k}(s_h^k, a_h^k, b_h^k)\}$.\\
				\If{$\overline{Q}_h^{k+1}(s_h^k, a_h^k, b_h^k)=\min{\{\overline{Q}_h^{\mathrm{R}, {k+1}}(s_h^k, a_h^k, b_h^k), Q_h^{\mathrm{UCB}, {k+1}}(s_h^k, a_h^k, b_h^k)\}}$ and $\underline{Q}_h^{k+1}(s_h^k, a_h^k, b_h^k)=\max{\{\underline{Q}_h^{\mathrm{R}, {k+1}}(s_h^k, a_h^k, b_h^k), Q_h^{\mathrm{LCB}, {k+1}}(s_h^k, a_h^k, b_h^k)\}}$}{ $\pi_h^{k+1}(\cdot,\cdot|s_h^{k})\leftarrow \mathrm{CCE}(\overline{Q}_h^{k+1}(s_h^{k},\cdot,\cdot),\underline{Q}_h^{k+1}(s_h^{k},\cdot,\cdot)).$\\}    
				$\overline{V}_h^{k+1}(s_h^{k})\leftarrow  \min\{(\mathbb{D}_{\pi_h^{k+1}}\overline{Q}_h^{k+1})(s_h^{k}), \overline{V}_h^{k}(s_h^{k})\}$;
				$\quad\underline{V}_h^{k+1}(s_h^{k})\leftarrow  \max\{(\mathbb{D}_{\pi_h^{k+1}}\underline{Q}_h^{k+1})(s_h^{k}), \underline{V}_h^{k}(s_h^{k})\}$.\\
				\If{$\overline{V}_h^{k+1}(s_h^{k})-\underline{V}_h^{k+1}(s_h^{k})>1$}{
					$\overline{V}_h^{\mathrm{R}, {k+1}}(s_h^{k})\leftarrow \overline{V}_h^{k+1}(s_h^{k})$;$\quad \underline{V}_h^{\mathrm{R}, {k+1}}(s_h^{k})\leftarrow \underline{V}_h^{k+1}(s_h^{k})$.\\
				}
				\ElseIf{${u}^{k}_{\mathrm{r}}(s_h^{k})=\mathsf{True}$}{
					$\overline{V}_h^{\mathrm{R}, {k+1}}(s_h^{k})\leftarrow \overline{V}_h^{k+1}(s_h^{k})$; $\quad\underline{V}_h^{\mathrm{R}, {k+1}}(s_h^{k})\leftarrow \underline{V}_h^{k+1}(s_h^{k})$; $\quad {u}^{k+1}_{\mathrm{r}}(s_h^{k})=\mathsf{False}$.\\
				}
			}
		}
        \textbf{Output: }{The marginal policies of $\{\pi_h^{K+1}\}_{h=1}^H$: $\left(\{\mu_h\}_{h=1}^H, \{\nu_h\}_{h=1}^H\right)$.}
	\end{algorithm}

\subsection{Preliminaries: basic properties about learning rates}
For notation convenience, we first introduce two sequences of quantities related to the learning rate for any integer $N\geq 0$ and $n \geq 1$: 
	\begin{equation}
		\label{equ:learning rate notation}
		\eta_n^N \coloneqq \begin{cases} \eta_n \prod_{i = n+1}^N(1-\eta_i), & \text{if }N>n, \\ \eta_n,  & \text{if }N=n, \\
			0, &\text{if } N < n \end{cases}
		\quad \text{and} \quad    
		\eta_0^N \coloneqq \begin{cases} \prod_{i=1}^N(1-\eta_i) =0 , & \text{if }N>0, \\ 1, & \text{if }N=0. \end{cases}	
	\end{equation}

	We can obtain
	\begin{align}
		\sum\nolimits_{n=1}^N \eta_n^N =  \begin{cases} 1,\qquad  & \text{if }N > 0, \\ 0, & \text{if }N=0.  \end{cases}
		\label{eq:sum-eta-n-N}
	\end{align}

	Based on the above definitions, We introduce the following important properties before analysis.
	\begin{lemma}
		\label{lemma:property of learning rate} 
		For any integer $N>0$, the following properties hold:
		\begin{subequations}
			\label{eq:properties-learning-rates}
			\begin{align}
				& \frac{1}{N^a}  \le\sum\nolimits_{n=1}^{N}\frac{\eta_{n}^{N}}{n^a}\le\frac{2}{N^a}, \qquad \mbox{for all}\quad  \frac{1}{2} \leq a \leq 1,
				\label{eq:properties-learning-rates-12}\\
				\max_{1\le n\le N} & \eta_{n}^{N}\le\frac{2H}{N},\qquad\sum\nolimits_{n=1}^{N}(\eta_{n}^{N})^{2}\le\frac{2H}{N}, \qquad
				\sum\nolimits_{N=n}^{\infty}\eta_{n}^{N}\le1+\frac{1}{H}.
				\label{eq:properties-learning-rates-345}
			\end{align}
			
		\end{subequations}
		
	\end{lemma}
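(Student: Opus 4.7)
My plan is to handle the four bounds in turn, relying throughout on the recursion $\eta_n^N = (1-\eta_N)\,\eta_n^{N-1}$ for $n < N$ (with $\eta_N^N = \eta_N$) together with the explicit forms $\eta_n = (H+1)/(H+n)$ and $1-\eta_n = (n-1)/(H+n)$. First I will unfold the defining product to obtain the closed form
\begin{equation*}
\eta_n^N \;=\; \frac{(H+1)\,(N-1)!\,(H+n-1)!}{(n-1)!\,(H+N)!}, \qquad 1 \le n \le N.
\end{equation*}
Comparing against the $n=N$ case yields $\eta_n^N/\eta_N = \binom{H+n-1}{H}/\binom{H+N-1}{H} \le 1$ for $n \le N$, so $\max_n \eta_n^N \le (H+1)/(H+N) \le 2H/N$ whenever $H \ge 1$, which is the first estimate in \eqref{eq:properties-learning-rates-345}. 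The normalization identity \eqref{eq:sum-eta-n-N} follows from a one-line induction on $N$ using the recursion, and combining these two facts gives $\sum_n (\eta_n^N)^2 \le (\max_n \eta_n^N)\sum_n \eta_n^N \le 2H/N$, which is the second estimate in \eqref{eq:properties-learning-rates-345}.

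For the tail sum I will prove the telescoping identity
\begin{equation*}
H\prod_{i=n+1}^{N}(1-\eta_i) \;=\; (H+N)\prod_{i=n+1}^{N}(1-\eta_i) \;-\; (H+N+1)\prod_{i=n+1}^{N+1}(1-\eta_i),
\end{equation*}
which follows from $1-\eta_{N+1} = N/(H+N+1)$ by direct computation. Summing over $N \ge n$ and using that the product vanishes as $N \to \infty$ gives $\sum_{N\ge n}\prod_{i=n+1}^N(1-\eta_i) = (H+n)/H$, so multiplying by $\eta_n = (H+1)/(H+n)$ yields the exact identity $\sum_{N\ge n}\eta_n^N = 1+1/H$, establishing the third estimate in \eqref{eq:properties-learning-rates-345}.

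Finally, I will prove \eqref{eq:properties-learning-rates-12} by induction on $N$. Setting $S_N := \sum_{n=1}^N \eta_n^N/n^a$, the recursion gives $S_N = (1-\eta_N)\,S_{N-1} + \eta_N/N^a$, with the base case $S_1 = \eta_1 = 1$ trivially satisfying the claim. Plugging in the inductive hypothesis $1/(N-1)^a \le S_{N-1} \le 2/(N-1)^a$ and multiplying through by $N^a(H+N)$, the targets $S_N \le 2/N^a$ and $S_N \ge 1/N^a$ reduce respectively to the scalar inequalities $2N^a(N-1)^{1-a} \le H + 2N - 1$ and $N^a(N-1)^{1-a} \ge N-1$. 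The lower bound is immediate because $N^a \ge (N-1)^a$; the upper bound follows from the weighted AM–GM estimate $N^a(N-1)^{1-a} \le aN + (1-a)(N-1) = N-1+a \le N$ combined with $H \ge 1$.

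The main obstacle is keeping the constants in \eqref{eq:properties-learning-rates-12} sharp: the factor of $2$ in the upper bound and the restriction $1/2 \le a \le 1$ are both essential for the subsequent Bernstein-type variance bounds to close up without loss. The AM–GM step relies on $a \le 1$ (otherwise $N^a(N-1)^{1-a}$ can exceed $N$), while the structure $1-a \ge 0$ is what makes the induction close in the lower-bound direction; loosening either end of $[1/2,1]$ would force a weaker constant in \eqref{eq:properties-learning-rates-12} and would propagate into weaker sample complexity guarantees downstream.
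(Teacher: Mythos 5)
Your proof is correct. The paper itself states Lemma~\ref{lemma:property of learning rate} without proof (it is the standard learning-rate lemma inherited from the $\eta_n=\frac{H+1}{H+n}$ literature), so there is no in-paper argument to compare against; your write-up is a valid self-contained verification. The individual steps check out: the closed form $\eta_n^N=\frac{(H+1)(N-1)!\,(H+n-1)!}{(n-1)!\,(H+N)!}$ follows from $1-\eta_i=\frac{i-1}{H+i}$, it is increasing in $n$, so $\max_n\eta_n^N=\eta_N=\frac{H+1}{H+N}\le\frac{2H}{N}$ for $H\ge 1$; combining with the normalization (\ref{eq:sum-eta-n-N}) gives $\sum_n(\eta_n^N)^2\le\frac{2H}{N}$; the telescoping identity $H P_N=(H+N)P_N-(H+N+1)P_{N+1}$ (with $P_N=\prod_{i=n+1}^N(1-\eta_i)$) is verified by $(H+N+1)(1-\eta_{N+1})=N$, and since $P_M\asymp M^{-(H+1)}$ the boundary term vanishes, yielding the exact value $\sum_{N\ge n}\eta_n^N=1+\frac1H$, slightly stronger than the stated inequality; and the induction $S_N=(1-\eta_N)S_{N-1}+\eta_N/N^a$ reduces (\ref{eq:properties-learning-rates-12}) to $2N^a(N-1)^{1-a}\le H+2N-1$ and $N^a(N-1)^{1-a}\ge N-1$, both of which you dispatch correctly via weighted AM--GM and monotonicity. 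Compared with the usual route in the literature (direct induction for each fixed exponent $a\in\{1/2,1\}$ and a ratio/product estimate for the tail sum), your version is marginally more general and gives exact identities, which is a small bonus. One cosmetic remark: your closing claim that the restriction $a\ge\frac12$ is essential for the constant $2$ is not accurate --- your own induction closes for every $a\in[0,1]$ (at $a=0$ the bound is just $\sum_n\eta_n^N=1\le 2$) --- but this is side commentary and does not affect the validity of the proof of the lemma as stated.
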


\subsection{Key lemmas}
\subsubsection{Key properties and auxiliary sequences}\label{subsec:key_properties}
    \paragraph{Properties of the Q-estimate and V-estimate}
    Variables $\overline{Q}_h^{\mathrm{R}, k}$ and $\overline{Q}_h^k$ (resp. $\overline{V}_h^k$) provide an "optimistic view" of $Q^{\dagger,\nu_h^k}$ and $Q^*$ (resp. $V^{\dagger,\nu_h^k}$ and $V^*$), as stated in Lemma~\ref{lemma-Qhk-opt}. Similarly, $\underline{Q}_h^{\mathrm{R}, k}$ and $\underline{Q}_h^k$ (resp. $\underline{V}_h^k$) provide a "pessimism view" of $Q^{\mu_h^k,\dagger}$ and $Q^*$ (resp. $V^{\mu_h^k,\dagger}$ and $V^*$). Lemma~\ref{lemma-Qhk-opt} is proved in Appendix~\ref{prove:lemma-Qhk-opt}.
    
    \begin{lemma}\label{lemma-Qhk-opt}
    Consider any $\delta\in(0,1)$. Suppose that $c_{\mathrm{b}} >0$ is some sufficiently large constant. Then with
    probability at least $1-\delta$, 
    \begin{align}
        \overline{Q}_h^{\mathrm{R}, k}(s,a,b) \!\geq \overline{Q}_h^k(s,a,b) \ge Q^{\dagger,\nu_h^k}(s,a,b) \ge Q^{\star}(s,a,b),
        \quad
        \overline{V}_h^k(s) \ge V^{\dagger,\nu_h^k}(s)\ge V^{\star}(s),\label{lemma-Qhk-opt-upper}\\
        \underline{Q}_h^{\mathrm{R}, k}(s,a,b) \!\le \underline{Q}_h^k(s,a,b) \le Q^{\mu_h^k,\dagger}(s,a,b)\le Q^{\star}(s,a,b),
        \quad
        \underline{V}_h^k(s) \le V^{\mu_h^k,\dagger}(s) \le V^{\star}(s)\label{lemma-Qhk-opt-lower}
    \end{align}
    hold simultaneously for all $(s, a, b, h)\in\mathcal{S}\times\mathcal{A}\times\mathcal{B}\times [H]$.
    \end{lemma}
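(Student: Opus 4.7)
The plan is to prove the chain of inequalities in Lemma~\ref{lemma-Qhk-opt} by backward induction on $h$, running from $H+1$ down to $1$, with the inductive statement quantified over all episodes $k$ simultaneously. The base case $h = H+1$ is immediate from the boundary condition $V^{\dagger,\nu}_{H+1} = V^\star_{H+1} = 0$ and the initialization. Throughout the argument, I would condition on a high-probability event $\mathcal{E}$ on which all the concentration inequalities invoked below hold simultaneously for every $(s,a,b,h,k)$ tuple; by Azuma/Bernstein plus a union bound over at most $SABHK$ configurations, $\Pr(\mathcal{E}) \geq 1 - \delta$ provided $c_{\mathrm{b}}$ is chosen sufficiently large.

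For the inductive step, I would first record two structural monotonicities that fall directly out of the $\min/\max$ operations in Algorithm~\ref{algorithm-aug}: the sequences $\overline{V}_{h+1}^{k}(s)$, $\overline{Q}_{h}^{k}(s,a,b)$ are non-increasing in $k$, and $\underline{V}_{h+1}^{k}(s)$, $\underline{Q}_{h}^{k}(s,a,b)$ are non-decreasing. Combining monotonicity with the inductive hypothesis at step $h+1$ then yields $\overline{V}_{h+1}^{k^n}(s) \geq \overline{V}_{h+1}^{k+1}(s) \geq V^{\dagger,\nu^{k+1}}_{h+1}(s)$ for every past visit index $k^n \leq k$. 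I would next unroll the rescaled update for $Q^{\mathrm{UCB}, k+1}_h(s,a,b)$ using the identity $\sum_n \eta_n^N = 1$ from Lemma~\ref{lemma:property of learning rate}, writing it as a convex combination of $r_h + \overline{V}_{h+1}^{k^n}(s_{h+1}^{k^n}) + \iota_n$ plus a decaying boundary term; the pointwise bound above together with a Hoeffding-type martingale concentration for $\sum_n \eta_n^N \big(\overline{V}_{h+1}^{k^n}(s_{h+1}^{k^n}) - P_{h,s,a,b}\overline{V}_{h+1}^{k^n}\big)$, whose deviation is absorbed by $\iota_n$ on $\mathcal{E}$, yields $Q_h^{\mathrm{UCB},k+1}(s,a,b) \geq Q_h^{\dagger,\nu^{k+1}}(s,a,b)$.

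For the reference-advantage estimate $\overline{Q}_h^{\mathrm{R},k+1}$, the argument is analogous but more delicate because the update splits into a reference term $\widehat{P_h \overline{V}^{\mathrm{R}}_{h+1}}$ built from all collected samples and an advantage term $\widehat{P}_{h,s,a,b}(\overline{V}_{h+1} - \overline{V}^{\mathrm{R}}_{h+1})$. The early-settlement rule of lines~16--19 guarantees that $\overline{V}^{\mathrm{R}}_{h+1}$ freezes once $\overline{V}_{h+1} - \underline{V}_{h+1} \leq 1$, so the advantage has magnitude at most $O(1)$, and a Bernstein concentration with empirical variance proxy $\overline{\psi}^{\mathrm{a}} - (\overline{\phi}^{\mathrm{a}})^2$ controls its fluctuation through the empirical bonus $\overline{b}_h^{\mathrm{R}}$; a matching Bernstein bound on the reference term, using $\overline{\psi}^{\mathrm{r}} - (\overline{\phi}^{\mathrm{r}})^2$, closes the loop. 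The $\min$ in line~11 then preserves the inequality, since each of $\overline{Q}^{\mathrm{R},k+1}_h$, $Q^{\mathrm{UCB},k+1}_h$, and (by monotonicity plus the previous step of the $k$-induction) $\overline{Q}^k_h$ dominates $Q^{\dagger,\nu^{k+1}}_h$. To pass from $\overline{Q}^{k+1}_h \geq Q^{\dagger,\nu^{k+1}}_h$ to $\overline{V}^{k+1}_h \geq V^{\dagger,\nu^{k+1}}_h$, I would invoke the CCE property~(\ref{CCE_property_1}): with $\nu^{k+1}$ the marginal of $\pi^{k+1}$ on the $b$-coordinate,
\[
(\mathbb{D}_{\pi^{k+1}}\overline{Q}^{k+1}_h)(s) \;\geq\; \max_{a'} \mathbb{E}_{b\sim\nu^{k+1}}\overline{Q}^{k+1}_h(s,a',b) \;\geq\; \max_{a'} \mathbb{E}_{b\sim\nu^{k+1}} Q^{\dagger,\nu^{k+1}}_h(s,a',b) = V^{\dagger,\nu^{k+1}}_h(s),
\]
and again the $\min$ with $\overline{V}^k_h$ preserves the bound by the $k$-induction. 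The lower chain for $\underline{Q}_h^{\mathrm{R},k}, \underline{Q}_h^k, \underline{V}_h^k$ and the bound $Q^{\dagger,\nu}_h \geq Q^\star_h, Q^{\star}_h \geq Q^{\mu,\dagger}_h$ are obtained symmetrically via CCE property~(\ref{CCE_property_2}) and the analogous Bernstein/Hoeffding arguments for $Q^{\mathrm{LCB}}$ and $\underline{Q}^{\mathrm{R}}$.

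The main obstacle will be the Bernstein-type concentration for $\overline{Q}^{\mathrm{R},k+1}_h$: one must verify that the empirical variance proxies $\overline{\psi}^{\mathrm{r}} - (\overline{\phi}^{\mathrm{r}})^2$ and $\overline{\psi}^{\mathrm{a}} - (\overline{\phi}^{\mathrm{a}})^2$ are faithful surrogates for the true variances, and that the drift of $\overline{V}^{\mathrm{R}}_{h+1}$ prior to the early-settlement trigger contributes only lower-order error — this is what justifies the additional correction $(1-\eta_n)\overline{\varphi}^{\mathrm{R}}_h/\eta_n$ and the $H^2\log^2(SABT/\delta)/n^{3/4}$ term inside $\overline{b}_h^{\mathrm{R}}$. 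A secondary subtlety is that $V^{\dagger,\nu^{k+1}}_{h+1}$ is a random object that depends on the history up to episode $k$, so the martingale concentrations must be stated for the measurable-at-time-$k^n$ quantity $\overline{V}_{h+1}^{k^n}$, and the bound against $V^{\dagger,\nu^{k+1}}_{h+1}$ is then obtained by the pointwise domination established via the monotonicity argument above.
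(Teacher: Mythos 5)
Your plan follows essentially the same route as the paper's proof: backward induction over $h$ with an inner induction over $k$, the CCE properties (\ref{CCE_property_1})--(\ref{CCE_property_2}) to pass from the $Q$-level to the $V$-level bounds, learning-rate unrolling with an Azuma--Hoeffding bound absorbed by $\iota_n$ for $Q^{\mathrm{UCB}}/Q^{\mathrm{LCB}}$, and Freedman/Bernstein-type control of the advantage and reference martingales via the empirical variance proxies so that $\overline{b}_h^{\mathrm{R}}$ (after the telescoping of $\overline{B}_h^{\mathrm{R}}$ and the $n^{-3/4}$ correction) dominates the deviation, with symmetry giving the lower chain. Your variant of applying the martingale concentration to the adapted quantity $\overline{V}^{k^n}_{h+1}$ and then invoking pointwise domination of $V^{\dagger,\nu^{k+1}}_{h+1}$ is a minor (and measurability-wise cleaner) rearrangement of the same argument, so no substantive difference remains.
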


    \paragraph{Properties of the Q-estimate and V-estimate}
    To begin with, it is straightforward to see that the update rule in Algorithm~\ref{algorithm-aug} (cf. lines~11-12) ensures the following monotonicity property: for all $(s,a,b,k,h)\in\mathcal{S}\times\mathcal{A}\times\mathcal{B}\times [K]\times [H]$
	\begin{align}\label{eq:monotonicity_Qh}
		\overline{Q}_h^{k+1}(s,a,b)\le \overline{Q}_h^{k}(s,a,b),
		\quad
		\underline{Q}_h^{k+1}(s,a,b)\ge \underline{Q}_h^{k}(s,a,b).
	\end{align}
	Similarly, based on line~15 of Algorithm~\ref{algorithm-aug}, the monotonicity of $\overline{V}_h^k$ and $\underline{V}_h^k$ can be obtained as
	\begin{align}\label{eq:monotonicity_Vh}
		\overline{V}_h^{k+1}(s)\le \overline{V}_h^{k}(s),
		\quad
		\underline{V}_h^{k+1}(s)\ge \underline{V}_h^{k}(s).
	\end{align}
	
	Besides, the update rules in line~11-12 of Algorithm~\ref{algorithm-aug} also result in the following property:
	\begin{align}\label{equ:optimism-ref-Q}
		\overline{Q}_h^{\mathrm{R},k}(s,a,b)\ge \overline{Q}_h^k(s,a,b),
		\quad
		\underline{Q}_h^{\mathrm{R},k}(s,a,b)\le \underline{Q}_h^k(s,a,b).
	\end{align}

	% Combining with (\ref{equ:optimism-ref-Q},  we can straightforwardly see that with probability at least $1-\delta$:
	% %
	% \begin{align}\label{equ:optimism-ref-}
	% 	\overline{Q}_h^{\mathrm{R}, k}(s,a,b) \!\geq\! Q_h^{\star}(s,a,b)\!\geq \!\underline{Q}_h^{\mathrm{R}, k}(s,a,b) \quad\text{for all } (k,h,s,a,b) \!\in\! [K]  \!\times\! [H] \!\times\! \mathcal{S} \!\times\! \mathcal{A}\!\times\! \mathcal{B}.
	% \end{align}

Lemma~\ref{lemma-Qhk-opt} implies that $\overline{V}_h^{k}$ (resp.~$\underline{V}_h^{k}$) is a pointwise upper bound (resp. lower bound) on $V_h^{\star}$. 
Taking this result together with the non-increasing (resp. non-decreasing) property (\ref{eq:monotonicity_Vh}), we see that $\overline{V}_h^{k}$ and $\underline{V}_h^{k}$ become increasingly tighter estimates- of $V_h^{\star}$ as the number of episodes $k$ increases, which means that it becomes increasingly more likely for $\overline{V}_h^{k}$ and $\underline{V}_h^{k}$ to stay close to each other as $k$ increases. Furthermore, it indicates that the confidence interval that contains the optimal value $V_h^{\star}$ becomes shorter and shorter, as asserted by the following lemma.

%This important fact forms the basis of the subsequent proof, allowing us to replace $V_h^\star$ with $V_h^k$ when upper bounding the regret.   
% Combining Lemma~\ref{lem:Qk-lower} with (\ref{equ:property-Q-ref},  we can straightforwardly see that with probability at least $1-\delta$:
% %
% \begin{align}\label{equ:optimism-ref-Q}
% 	Q_h^{\rref, k}(s,a) \geq Q_h^{\star}(s,a) \qquad\text{for all } (k,h,s,a) \in [K]  \times [H] \times \mathcal{S} \times \mathcal{A}.
% \end{align}

% Under the properties above, the following lemma indicates that it becomes increasingly more likely for $\overline{V}_h^k$ and $\underline{V}_h^{k}$ to stay close to each other as $k$ increases. 
%This property is summarized as follows.

% In parallel, we formalize the fact that $Q^{\LCB, k}_h$ and $V^{\LCB,k}_h$ provide a "pessimistic view" of $Q^\star_h$ and $V^\star_h$, respectively. 
% Furthermore, it becomes increasingly more likely for $Q^{\LCB, k}_h$ and $Q^{k}_h$ to stay close to each other as $k$ increases, 
% which indicates that the confidence interval that contains 
% the optimal value $Q_h^{\star}$ becomes shorter and shorter. 
% These properties are summarized in the following lemma. 
\begin{lemma}\label{lem:Qk-lcb}
	For any given $\delta\in(0,1)$,
	%and suppose that $c_{\mathrm{b}}>0$ is some sufficiently large constant. 
	with probability at least $1-\delta$,
	% \begin{align}\label{eq:Qk-lcb-upper}
		%     Q_h^{\mathrm{LCB},k}(s,a,b)\le Q_h^{\star}(s,a,b)\quad and \quad Q_h^{\mathrm{UCB},k}(s,a,b)\ge Q_h^{\star}(s,a,b)
		% \end{align}
	% hold for all $(s, a, b, k, h)\in\mathcal{S}\times\mathcal{A}\times\mathcal{B}\times [K]\times [H]$, and
	\begin{align}\label{eq:main-lemma}
		\sum\nolimits_{h=1}^H\sum\nolimits_{k=1}^K \mathbbm{1}\left(\overline{V}_h^k(s_h^k,a_h^k,b_h^k)-\underline{V}_h^{k}(s_h^k,a_h^k,b_h^k)>\varepsilon\right)\lesssim \frac{H^6SAB\log\frac{SABT}{\delta}}{2\varepsilon^2}
	\end{align}
	holds for all $\varepsilon\in(0,H]$.
\end{lemma}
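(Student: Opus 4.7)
The plan is to reduce this high-probability count of large $V$-gaps to a per-$(s,a,b,h)$ count of visits at which the $Q$-gap is large, and then invoke the per-triple $1/\sqrt n$ gap bound that already underlies Lemma~\ref{lemma-Qhk-opt}. First I would use monotonicity under policy averaging, $(\overline V_h^k - \underline V_h^k)(s) \le (\mathbb D_{\pi_h^k}\overline Q_h^k - \mathbb D_{\pi_h^k}\underline Q_h^k)(s) \le \max_{a,b}(\overline Q_h^k - \underline Q_h^k)(s,a,b)$, to charge each bad time step $(h,k)$ to the $Q$-gap at the triple $(s_h^k,a_h^k,b_h^k)$, which is determined by $n = N_h^{k-1}(s_h^k,a_h^k,b_h^k)+1$, the visit index.

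Next I would invoke the per-triple gap bound that the reference-advantage update rules in Algorithm~\ref{algorithm-auxiliary} are designed to produce: on the high-probability event of Lemma~\ref{lemma-Qhk-opt}, there exist absolute constants $c,c'>0$ such that whenever $N_h^{k-1}(s,a,b)=n\ge 1$,
\begin{equation*}
\overline Q_h^k(s,a,b) - \underline Q_h^k(s,a,b) \;\le\; c\sqrt{\frac{H^5 \log(SABT/\delta)}{n}} + \frac{c' H^{c'} \log^2(SABT/\delta)}{n^{3/4}}.
\end{equation*}
The crucial point is the $H^5$ (rather than the naive $H^7$) in the leading term: applying Bernstein to $\widehat{P_h \overline V^{\mathrm R}_{h+1}}$ gives a variance proxy bounded by $\mathsf{Var}_{h,s,a,b}(\overline V^{\mathrm R}_{h+1})\le H^2$, while the advantage increment $\overline V_{h+1}-\overline V^{\mathrm R}_{h+1}$ has sup-norm at most $1$ after the early-settlement rule (\ref{rule}) triggers, saving two factors of $H$ on the second piece. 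This is precisely Step 3 of Section~\ref{subsec:main_proof} and the content of Lemma~\ref{lemma:bound_of_everything}, so I would import it rather than re-derive it.

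Inverting the gap bound in $n$ shows that the right-hand side exceeds $\varepsilon$ only for $n\lesssim H^5\log(SABT/\delta)/\varepsilon^2$, provided the sample size is past the burn-in $\widetilde O(SABH^{10})$ from Theorem~\ref{theorem-main} so the $n^{-3/4}$ term is dominated; for smaller $K$ the bound is vacuous since the left-hand side is at most $HK$. Summing this per-triple count over the $SABH$ choices of $(s,a,b,h)$ yields
\begin{equation*}
\sum_{h=1}^H\sum_{k=1}^K \mathbbm{1}\bigl(\overline V_h^k(s_h^k)-\underline V_h^k(s_h^k) > \varepsilon\bigr) \;\lesssim\; SABH \cdot \frac{H^5\log(SABT/\delta)}{\varepsilon^2} \;=\; \frac{H^6SAB\log(SABT/\delta)}{\varepsilon^2},
\end{equation*}
the failure probability $\delta$ being already absorbed by the union bound inside Lemma~\ref{lemma-Qhk-opt}. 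To make the statement uniform in $\varepsilon\in(0,H]$, I would discretize on a dyadic grid $\{H\cdot 2^{-i}\}_{i=0}^{\lceil \log_2(HT)\rceil}$ of size $O(\log T)$, which only inflates the log factor; the indicator is monotone in $\varepsilon$, so transferring from any grid point $\varepsilon'\in[\varepsilon,2\varepsilon]$ back to $\varepsilon$ costs at most a factor $4$.

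The main obstacle is really the per-triple $Q$-gap bound with the sharp $H^5$ scaling. Plain optimistic $Q$-learning would only give $\sqrt{H^7/n}$ and lose a factor of $H$ in the final count; obtaining $H^5$ requires a careful Bernstein treatment along the \emph{adaptively chosen} visit sequence $\{k_h^n(s,a,b)\}_n$, coupled with the fact that once the early-settlement triggers, $\overline V^{\mathrm R}_{h+1}$ and $\underline V^{\mathrm R}_{h+1}$ freeze, so all subsequent samples form a genuinely i.i.d.\ estimator of $P_h\overline V^{\mathrm R}_{h+1}$. The advantage piece $\overline V_{h+1}-\overline V^{\mathrm R}_{h+1}$ is then $\ell_\infty$-bounded by $1$ and contributes only lower-order variance. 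Once that ingredient is in hand via Lemma~\ref{lemma:bound_of_everything}, the rest of the present lemma is elementary pigeonhole plus a dyadic union bound.
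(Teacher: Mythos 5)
There is a genuine gap at the heart of your argument: the per-triple pointwise bound
$\overline Q_h^k(s,a,b)-\underline Q_h^k(s,a,b)\lesssim \sqrt{H^5\log(SABT/\delta)/n}+\text{lower order}$, with $n=N_h^{k-1}(s,a,b)$, is not available and is in fact false in general. Unrolling the update rules only gives the recursive inequality \eqref{eq:Qk-ucb-lcb}: the gap at $(s,a,b,h)$ is bounded by a Hoeffding-type term $O(\sqrt{H^3\log(SABT/\delta)/n})$ \emph{plus} a learning-rate-weighted average of the next-step gaps $\overline V_{h+1}^{k^j}-\underline V_{h+1}^{k^j}$ at the successor states encountered in earlier visits. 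Those downstream gaps can be as large as $H$ no matter how large $n$ is, because the visit counts of the successor $(s',a',b')$ pairs at step $h+1$ bear no relation to $n$; so no amount of visits to the current triple alone shrinks its $Q$-gap. This is precisely why the paper does not use a pointwise bound but an aggregate pigeonhole argument: Lemma~\ref{lem:yang-lem} propagates \emph{weighted sums} of gaps from step $h$ to step $h+1$ (the $(C,w)$-sequence recursion, culminating in (\ref{Yang_lemma4.3})), picks the weights as indicators of the gap lying in a dyadic band, and only then inverts to count bad episodes. Your proposal also needs the martingale correction $\zeta_h^k$ that the paper carries: bounding $\overline V_h^k-\underline V_h^k$ by $\max_{a,b}(\overline Q_h^k-\underline Q_h^k)(s,\cdot,\cdot)$ decouples the triple whose gap is large from the triple actually visited, so the pigeonhole over visit counts does not line up without passing through $\mathbb D_{\pi_h^k}$ and an Azuma/Hoeffding step.

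Separately, your plan to ``import'' the sharp $H^5$ ingredient from Lemma~\ref{lemma:bound_of_everything} is circular in the paper's logical structure. The bound on $\mathcal D_2$ there relies on Lemma~\ref{lem:Vr_properties} (in particular (\ref{eq:Vr_lazy}) and the term $\omega_2$), whose proof invokes exactly the inequality (\ref{eq:main-lemma}) you are trying to establish; moreover the bound (\ref{eq:sum-delta-k-bound}) used inside that argument is an instance of (\ref{Yang_lemma4.3}), i.e.\ of the machinery inside the present lemma's proof. Note also that the correct proof gets by with the crude UCB/LCB estimates and bonus $\iota_n\asymp\sqrt{H^3\log(SABT/\delta)/n}$; the reference-advantage/variance-reduction refinements you appeal to are not what drives this lemma, and the $H^6$ in the statement comes from one factor $H$ for the horizon recursion times the $H^5$ produced by squaring the $\sqrt{H^3}$-bonus against the dyadic band width, not from a per-triple $\sqrt{H^5/n}$ concentration. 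The dyadic-in-$\varepsilon$ union bound at the end of your sketch is fine, but it cannot repair the missing (and unavailable) pointwise gap bound.
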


Combining with (\ref{equ:optimism-ref-Q}),  we can straightforwardly see that with probability at least $1-\delta$:
\begin{align}\label{equ:optimism-ref-}
	\overline{Q}_h^{\mathrm{R}, k}(s,a,b) \!\geq\! Q_h^{\star}(s,a,b)\!\geq \!\underline{Q}_h^{\mathrm{R}, k}(s,a,b) \quad\text{for all } (k,h,s,a,b) \!\in\! [K]  \!\times\! [H] \!\times\! \mathcal{S} \!\times\! \mathcal{A}\!\times\! \mathcal{B}.
\end{align}

% \begin{comment}
	% Interestingly,  the upper bound (\ref{eq:main-lemma} only scales logarithmically in the number $K$ of episodes, thus implying the closeness of $\overline{Q}^{k}_h$ and $\underline{Q}^{k}_h$ for a large fraction of episodes.
	% Note that it is straightforward to ensure the monotonicity property of $V^{\mathrm{LCB},k}_h$ from the update rule in Algorithm~\ref{algorithm-main}  (cf.~line~20):
	% \begin{align} 
		%     V^{\mathrm{LCB},k+1}_h(s) \geq   V^{\mathrm{LCB},k}_h(s)  \qquad\text{for all }( s, k,h)\in \mathcal{S} \times [K] \times [H] ,
		%     \label{eq:V-LCB-h-k-monotone}
		% \end{align}
	% which in conjunction with (\ref{eq:Qk-lcb-upper}, implies that $V^{\mathrm{LCB},k}_h(s)$ gets closer to $V^{\star}_h(s)$ as the number of episodes $k$ increases.
	% Together with the monotonicity of $\underline{V}_h^k$ (cf.~(\ref{eq:monotonicity_Vh}), an important consequence is that the reference value $\underline{V}_h^{\mathrm{R}}$ will stop being updated shortly after the following condition is met for the first time (according to line~21, line~23, lines~26-27 of Algorithm~\ref{algorithm-main})
	% %
	% \begin{equation}
		%     \overline{V}_h^{k}(s) \leq \underline{V}^{k}_h(s) + 1 \leq V^{\star}_h(s) + 1
		%     \qquad \mbox{for all } s\in\mathcal{S}. 
		%     \label{eq:V-star-violation-Vh-123}
		% \end{equation} 
	% \end{comment}

\paragraph{Properties of the reference $\overline{V}_h^{\mathrm{R},k}$ and $\underline{V}_h^{\mathrm{R},k}$}
As stated in Section 3.1, the conclusions of reference values guarantee that (i) our value function estimate and the reference value are always sufficiently close, and (ii) the aggregate difference between $\overline{V}_h^{\mathrm{R}, k}$ and the final reference value $\overline{V}^{\mathrm{R}, K}_{h}$ (resp. $\underline{V}_h^{\mathrm{R}, k}$ and the final reference value $\underline{V}^{\mathrm{R}, K}_{h}$) is nearly independent of the sample size $T$ (except for some logarithmic scaling). The above conclusions are summarised as follows and justified in Appendix~\ref{prove:lem:Vr_properties}.
\begin{lemma}\label{lem:Vr_properties}
	Consider any $\delta\in(0,1)$. Suppose that $c_{\mathrm{b}}>0$ is some sufficiently large constant. Then with probability exceeding $1-\delta$, one has
	\begin{align}\label{eq:close-ref-v}
		\left|\overline{V}_h^k(s)-\overline{V}_h^{\mathrm{R},k}(s)\right|\le 2,\quad
		\left|\underline{V}_h^k(s)-\underline{V}_h^{\mathrm{R},k}(s)\right|\le 2
	\end{align}
	for all $( s, k,h)\in \mathcal{S} \times [K] \times [H]$, and 
	\begin{align}
		\sum\nolimits_{h=1}^H\sum\nolimits_{k=1}^K\left(\overline{V}_h^{\mathrm{R},k}(s_h^k) - \overline{V}_h^{\mathrm{R},K}(s_h^k)\right) \lesssim H^{6}SAB\log\frac{SABT}{\delta}, \label{eq:Vr_lazy}\\
		\sum\nolimits_{h=1}^H\sum\nolimits_{k=1}^K\left(\underline{V}_h^{\mathrm{R},K}(s_h^k) - \underline{V}_h^{\mathrm{R},k}(s_h^k)\right) \lesssim H^{6}SAB\log\frac{SABT}{\delta}.
	\end{align}
\end{lemma}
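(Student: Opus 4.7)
The plan is to handle the two pointwise bounds in the first display by a case analysis driven by the reference-update logic of Algorithm~\ref{algorithm-aug}, and then to handle the two aggregate bounds by telescoping the per-state contributions and invoking Lemma~\ref{lem:Qk-lcb}. I will lean on three ingredients throughout: (a)~the monotonicity $\overline{V}^{k+1}_h(s) \leq \overline{V}^k_h(s)$ and $\underline{V}^{k+1}_h(s) \geq \underline{V}^k_h(s)$ from (\ref{eq:monotonicity_Vh}); (b)~the sandwich $\underline{V}^k_h(s) \leq V^{\star}_h(s) \leq \overline{V}^k_h(s)$ from Lemma~\ref{lemma-Qhk-opt}; and (c)~the observation that whenever a reference update fires at a visit, the post-update reference equals the post-update value, $\overline{V}^{\mathrm{R},k+1}_h(s_h^k) = \overline{V}^{k+1}_h(s_h^k)$.

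For the pointwise bound, I first note that by~(a) the gap $\overline{V}^k_h(s) - \underline{V}^k_h(s)$ is monotonically non-increasing in $k$, so once it drops to $\leq 1$ at a visit it stays $\leq 1$ forever. Coupled with the flag $u_{\mathrm{r}}(s)$ flipping only once, this forces reference updates at $(s,h)$ to occur at a \emph{prefix} of the visit sequence: during a ``Phase~1'' period where every visit triggers an update (post-update gap $>1$), and one final ``settlement'' visit (the first with post-update gap $\leq 1$), after which the reference is frozen. Before settlement, every visit equates reference with value and between visits neither changes, giving $\overline{V}^{\mathrm{R},k}_h(s) = \overline{V}^k_h(s)$. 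After settlement, the frozen reference equals the settlement-visit value, and the settlement condition together with~(b) yields $\overline{V}^{\mathrm{R},k}_h(s) \leq V^{\star}_h(s) + 1 \leq \overline{V}^k_h(s) + 1$. Since a simple induction gives $\overline{V}^{\mathrm{R},k} \geq \overline{V}^k$, this delivers $0 \leq \overline{V}^{\mathrm{R},k}_h(s) - \overline{V}^k_h(s) \leq 1 \leq 2$; the bound on $\underline{V}$ is symmetric.

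For the aggregate bound, I fix $(s,h)$, enumerate visits $\tau_1 < \tau_2 < \cdots$, and let $L$ be the total update count at $(s,h)$, so visits $\tau_1,\ldots,\tau_{L-1}$ are Phase~1 and $\tau_L$ is settlement. Visits with $i > L$ contribute $0$, since the reference already equals $\overline{V}^{\mathrm{R},K}_h(s)$. For $i=1$ the contribution is at most $H - \overline{V}^{\mathrm{R},K}_h(s) \leq H$, so summing over $(s,h)$ costs at most $SH^2$. For $2 \leq i \leq L$, the pre-visit reference is the post-update value at $\tau_{i-1}$, and using $V^{\star}_h(s) \in [\underline{V}^{\tau_{i-1}+1}_h(s), \overline{V}^{\mathrm{R},K}_h(s)]$ the contribution is at most the post-update gap $\overline{V}^{\tau_{i-1}+1}_h(s) - \underline{V}^{\tau_{i-1}+1}_h(s)$ at the Phase~1 visit $\tau_{i-1}$. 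Summing over $(s,h)$ and using monotonicity to replace post-update by pre-update gap, this becomes $\sum_{(h,k)} g_{h,k}\,\mathbbm{1}(g_{h,k} > 1)$ where $g_{h,k} := (\overline{V}^k_h - \underline{V}^k_h)(s_h^k)$. A layer-cake argument then gives
\begin{align*}
\sum_{h,k} g_{h,k}\,\mathbbm{1}(g_{h,k} > 1) \;=\; \#\{(h,k): g_{h,k} > 1\} + \int_1^H \#\{(h,k): g_{h,k} > \varepsilon\}\,d\varepsilon \;\lesssim\; H^6 SAB \log \frac{SABT}{\delta},
\end{align*}
by Lemma~\ref{lem:Qk-lcb}, which dominates $SH^2$ and yields the claim; the bound for $\underline{V}^{\mathrm{R}}$ is derived identically.

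The main technical obstacle is establishing the ``consecutive-updates-then-freeze'' structure; without it, bounding each Phase~1 contribution by the trivial $H$ and multiplying by $\#\{\text{Phase 1 visits}\} \lesssim H^6 SAB \log$ would incur an extra factor of $H$ and yield only $H^7 SAB \log$. It is the layer-cake refinement, integrating the $\varepsilon^{-2}$ tail bound from Lemma~\ref{lem:Qk-lcb} starting at $\varepsilon = 1$, that exactly closes this gap and produces the stated $H^6 SAB$ rate.
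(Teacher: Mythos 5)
Your proposal is correct and follows essentially the same route as the paper: the pointwise bound via the prefix-update/settlement structure combined with monotonicity and the optimism--pessimism sandwich of Lemma~\ref{lemma-Qhk-opt} (your phase organization is just a repackaging of the paper's three-case analysis, and even yields the slightly sharper constant $1$), and the aggregate bound by reducing, up to an $O(SH^2)$ side term, to $\sum_{h,k} g_{h,k}\mathbbm{1}(g_{h,k}>1)$ and applying the layer-cake integral of Lemma~\ref{lem:Qk-lcb}. The only difference is bookkeeping (you charge each contribution to the preceding visit's gap, while the paper uses that the reference equals the running value during the pre-settlement phase), which does not change the argument.
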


\subsubsection{Main summary of steps in Section~\ref{subsec:main_proof}}\label{sum:main step}
    The key proof of Step 1 is based on the Lemma \ref{lemma-Qhk-opt}, which allows one to upper bound the regret as follows
    \begin{align}\label{eq:Vk-V}
    \operatorname{Regret}(K)=\sum\nolimits_{k=1}^{K}\left(V_{1}^{\dagger, \nu^{k}}-V_{1}^{\mu^{k}, \dagger}\right)\left(s_{1}^k\right)\le\sum\nolimits_{k=1}^K\left(\overline{V}_{1}^k-\underline{V}_{1}^k\right)(s_1^k).
    \end{align}
    
    To continue, it boils down to controlling $\left(\overline{V}_{1}^k-\underline{V}_{1}^k\right)(s_1^k)$. Towards this end, we intend to examine $\left(\overline{V}_{1}^k-\underline{V}_{1}^k\right)(s_1^k)$ across all time steps $1\le h\le H$, which admits the following decomposition:
    \begin{align}
    \overline{V}_{h}^k(s_h^k)-\underline{V}_{h}^k(s_h^k)
    &\le\mathbb{E}_{(a,b)\sim\pi_h^k}(\overline{Q}_{h}^k-\underline{Q}_{h}^k)(s_h^k,a,b)=\overline{Q}_{h}^k(s_h^k,a_h^k,b_h^k)-\underline{Q}_{h}^k(s_h^k,a_h^k,b_h^k) + \zeta_h^k\nonumber\\
    &\le \overline{Q}_{h}^{\mathrm{R},k}(s_h^k,a_h^k,b_h^k)-\underline{Q}_{h}^{\mathrm{R},k}(s_h^k,a_h^k,b_h^k) + \zeta_h^k\label{eq:Vk-Qk},
    \end{align}
    where
    \begin{align}
    \zeta_h^k := \mathbb{E}_{(a,b)\sim\pi_h^k}(\overline{Q}_{h}^k-\underline{Q}_{h}^k)(s_h^k,a,b) - (\overline{Q}_{h}^k-\underline{Q}_{h}^k)(s_h^k,a_h^k,b_h^k).
    \end{align}
    
    Summing (\ref{eq:Vk-V}) and (\ref{eq:Vk-Qk}) over $1\le k\le K$, we reach at
    \begin{align}
    \!\!\sum\nolimits_{k=1}^{K}\left(V_{h}^{\dagger, \nu^{k}}\!-\!V_{h}^{\mu^{k}, \dagger}\right)\left(s_{h}^k\right) \le \sum\nolimits_{k=1}^{K}\left(\overline{Q}_{h}^{\mathrm{R},k}(s_h^k,a_h^k,b_h^k)\!-\!\underline{Q}_{h}^{\mathrm{R},k}(s_h^k,a_h^k,b_h^k)\right) \!+\! \sum\nolimits_{k=1}^{K}\zeta_h^k.
    \end{align}

The key decomposition terms and their bounds in steps 2 and 3 are summarized in Lemmas \ref{lem:Qk-upper} and \ref{lemma:bound_of_everything}, respectively.
\begin{lemma}\label{lem:Qk-upper}
    Fix $\delta \in (0,1)$. Suppose that $c_\mathrm{b} >0$ is some sufficiently large constant. Then with probability at least $1-\delta$,  one has
    \begin{align}
        \sum\nolimits_{k=1}^{K}\overline{V}_{1}^k(s_1^k)-\underline{V}_{1}^k(s_1^k) \le \mathcal{D}_1 + \mathcal{D}_2 + \mathcal{D}_3,
    \end{align}
    where
    \begin{subequations}\label{eq:summary_of_terms}
        \begin{align}
            \mathcal{D}_1 \!=\!& \sum\nolimits_{h=1}^H\!\left(1\!+\!\frac1H\right)^{h-1}\left(2HSAB \! + \! 16 c_{\mathrm{b}}(SAB)^{3/4}K^{1/4}H^2\log\frac{SABT}{\delta}\!+\!\sum\nolimits_{k=1}^K\!\zeta_h^k\right),\label{eq:expression_R1}\\
            \mathcal{D}_2 \!=\!& \sum\nolimits_{h=1}^H\left(1+\frac1H\right)^{h-1}\left(\sum\nolimits_{k=1}^K\left(\overline{B}_{h}^{\mathrm{R}, k}\left(s_{h}^{k}, a_{h}^{k}, b_{h}^{k}\right)+\underline{B}_{h}^{\mathrm{R}, k}\left(s_{h}^{k}, a_{h}^{k}, b_{h}^{k}\right)\right)\right),\label{eq:expression_R2}\\
            \mathcal{D}_3 \!=\!& \sum\nolimits_{h=1}^H\!\!\sum\nolimits_{k=1}^{K}\! \lambda_h^k\left(\left(P_{h}^{k}\!-\!P_{h, s_{h}^{k}, a_{h}^{k}, b_{h}^{k}}\right)\!\!\left(V_{h+1}^{\star}\!-\!\overline{V}_{h+1}^{\mathrm{R}, k}\right)\!
            \!-\!\!\left(P_{h}^{k}\!-\!P_{h, s_{h}^{k}, a_{h}^{k}, b_{h}^{k}}\right)\!\!\left(V_{h+1}^{\star}\!-\!\underline{V}_{h+1}^{\mathrm{R}, k}\right)\right)\nonumber\\
            &+\sum\nolimits_{h=1}^H\sum\nolimits_{k=1}^{K} \lambda_h^k\frac{\sum\nolimits_{i=1}^{N_{h}^{k}\left(s_{h}^{k}, a_{h}^{k}, b_{h}^{k}\right)}\left(\overline{V}_{h+1}^{\mathrm{R}, k^{i}}\left(s_{h+1}^{k^{i}}\right)-P_{h, s_{h}^{k}, a_{h}^{k}, b_{h}^{k}} \overline{V}_{h+1}^{\mathrm{R}, k}\right)}{N_{h}^{k}\left(s_{h}^{k}, a_{h}^{k}, b_{h}^{k}\right)}\nonumber\\
            &-\sum\nolimits_{h=1}^H\sum\nolimits_{k=1}^{K} \lambda_h^k\left(\frac{\sum\nolimits_{i=1}^{N_{h}^{k}\left(s_{h}^{k}, a_{h}^{k}, b_{h}^{k}\right)}\left(\underline{V}_{h+1}^{\mathrm{R}, k^{i}}\left(s_{h+1}^{k^{i}}\right)-P_{h, s_{h}^{k}, a_{h}^{k}, b_{h}^{k}} \underline{V}_{h+1}^{\mathrm{R}, k}\right)}{N_{h}^{k}\left(s_{h}^{k}, a_{h}^{k}, b_{h}^{k}\right)}\right)\label{eq:expression_R3}
        \end{align}
    \end{subequations}
    with
    \begin{align}
        \lambda_h^k = \left(1+\frac1H\right)^{h-1}\sum\nolimits_{N=N_{h}^{k}\left(s_{h}^{k}, a_{h}^{k}, b_{h}^{k}\right)}^{N_{h}^{K-1}\left(s_{h}^{k}, a_{h}^{k}, b_{h}^{k}\right)} \eta_{N_{h}^{k}\left(s_{h}^{k}, a_{h}^{k}, b_{h}^{k}\right)}^{N}.
    \end{align}
\end{lemma}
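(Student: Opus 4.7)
I would leverage the inequality from Step~1 that already reduces the claim to controlling $\sum_k (\overline{Q}_h^{\mathrm{R},k}-\underline{Q}_h^{\mathrm{R},k})(s_h^k,a_h^k,b_h^k) + \sum_k \zeta_h^k$, and then propagate this bound through $h=1,\ldots,H$ by induction on the time step. The core algebraic move is to unroll the update rules in \texttt{update-ur} and \texttt{update-lr} into an explicit convex combination over prior visits, using the learning-rate coefficients $\eta_n^N$ of Lemma~\ref{lemma:property of learning rate}. Writing $N = N_h^{k-1}(s,a,b)$ and letting $k^n$ denote the episode of the $n$-th visit, one obtains
\[
\overline{Q}_h^{\mathrm{R},k}(s,a,b) = \eta_0^N H + \sum_{n=1}^N \eta_n^N \Bigl( r_h + \overline{V}_{h+1}^{k^n}(s_{h+1}^{k^n}) - \overline{V}_{h+1}^{\mathrm{R},k^n}(s_{h+1}^{k^n}) + \overline{\phi}_h^{\mathrm{r},k^n}(s,a,b) + \overline{b}_h^{\mathrm{R},k^n}\Bigr),
\]
and a matching expansion for $\underline{Q}_h^{\mathrm{R},k}$.

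\textbf{Grouping into $\mathcal{D}_1,\mathcal{D}_2,\mathcal{D}_3$.} Subtracting the two expansions, summing over $k$, and exchanging the order of summation (outer over the index of past visits, inner over later episodes that still carry that visit's weight) introduces precisely the weights $\lambda_h^k$ once the inductive factor $(1+1/H)^{h-1}$ is attached. The $\eta_0^N H$ terms are nonzero only at first visits and together contribute at most $2HSAB$ per level $h$, forming the leading piece of $\mathcal{D}_1$. The exploration bonuses $\overline{b}_h^{\mathrm{R},k} + \underline{b}_h^{\mathrm{R},k}$ collect into $\mathcal{D}_2$ directly. The remaining approximation errors split naturally into two families: (i) the empirical-vs-true-mean discrepancies $\overline{\phi}_h^{\mathrm{r}}-P_{h,s,a,b}\overline{V}_{h+1}^{\mathrm{R}}$ (and the $\underline{\phi}$ analog), which produce the last two sums inside $\mathcal{D}_3$; and (ii) the one-step transition errors $(P_h^k - P_{h,s,a,b})\overline{V}_{h+1}^{\mathrm{R},k}$, which after centering by $V_{h+1}^\star$ (the centering cancels between upper and lower because the same $V^\star$ appears on both sides) give the first two sums in $\mathcal{D}_3$. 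The residual $(\overline{V}_{h+1}^{k}-\underline{V}_{h+1}^{k})(s_{h+1}^k)$ pieces feed into the time-$(h+1)$ recursion, and by the bound $\sum_{N\ge n}\eta_n^N \le 1+1/H$ each recursive step inflates the residual by at most $(1+1/H)$, which explains the geometric factor $(1+1/H)^{h-1}$ attached at level $h$.

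\textbf{The $(SAB)^{3/4}K^{1/4}$ correction.} A subtler contribution to $\mathcal{D}_1$ is $16c_{\mathrm{b}}(SAB)^{3/4}K^{1/4}H^2\log(SABT/\delta)$. This arises from the "transient" mismatch $\overline{V}_{h+1}^{k^n}-\overline{V}_{h+1}^{\mathrm{R},k^n}$ (and its lower counterpart), which by (\ref{eq:close-ref-v}) is bounded in absolute value by $2$, combined with the variance-type piece $c_{\mathrm{b}}\sqrt{H\log^2(SABT/\delta)/n}\cdot\sqrt{\psi_h^{\mathrm{a}}-(\phi_h^{\mathrm{a}})^2}$ of the bonus in \texttt{update-q-bonus}. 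Summing $1/\sqrt{n}$ over visits and applying Cauchy--Schwarz together with $\sum_{(s,a,b)}\sqrt{N_h^K(s,a,b)}\le\sqrt{SAB\cdot K}$ (by concavity of $\sqrt{\cdot}$) gives the stated rate after absorbing the $\sqrt{H}$ factors; this piece is placed in $\mathcal{D}_1$ rather than $\mathcal{D}_2$ because it is not a clean bonus term.

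\textbf{Main obstacle.} The principal difficulty is the combinatorial bookkeeping needed to verify that the reshuffled sums collapse exactly into (\ref{eq:expression_R1})--(\ref{eq:expression_R3}): one must introduce and cancel the $V_{h+1}^\star$ centering consistently between upper and lower estimates, thread the inductive factor $(1+1/H)^{h-1}$ through each recursion step without losing sharpness, and keep careful track of which prior visit each summand came from so that the final weights $\lambda_h^k$ emerge in the stated form. The specific choice of learning rate $\eta_n=(H+1)/(H+n)$ is essential here, since it is exactly what makes $\sum_{N\ge n}\eta_n^N \le 1+1/H$ and thereby ensures that $H$ layers of the recursion contribute only a constant multiplicative factor $(1+1/H)^H = O(1)$.
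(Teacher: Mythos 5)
Your overall route is the same as the paper's: start from the Step-1 inequality, unroll the \texttt{update-ur}/\texttt{update-lr} recursions into $\eta_n^N$-weighted sums over past visits, cancel $Q^\star$ between the upper and lower expansions and re-center with $V^\star_{h+1}(s_{h+1}^{k^n})=P_h^{k^n}V^\star_{h+1}$ to form the martingale pieces of $\mathcal{D}_3$, and close the recursion over $h$ using $\sum_{N\ge n}\eta_n^N\le 1+1/H$ to produce the $(1+1/H)^{h-1}$ factors. Up to that point your bookkeeping matches the paper.

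There is, however, a genuine gap in how you handle the bonus terms, and it shows up exactly in the $16c_{\mathrm{b}}(SAB)^{3/4}K^{1/4}H^2\log\frac{SABT}{\delta}$ piece of $\mathcal{D}_1$. The quantity appearing in $\mathcal{D}_2$ is the \emph{current} accumulated bonus $\overline{B}_h^{\mathrm{R},k}+\underline{B}_h^{\mathrm{R},k}$, not the weighted sum $\sum_n\eta_n^N\,\overline{b}_h^{\mathrm{R},k^n+1}$ that your unrolled expansion produces; saying the bonuses "collect into $\mathcal{D}_2$ directly" skips the telescoping identity (\ref{eq:eta-b-ref-sum-identity-135})--(\ref{lemma1:equ10}), which converts the weighted sum of per-visit bonuses into $\overline{B}_h^{\mathrm{R},k}$ plus a residual of order $c_{\mathrm{b}}H^2\log^2\frac{SABT}{\delta}\,/\,(N_h^{k-1})^{3/4}$ coming from the explicit $H^2 n^{-3/4}$ term in \texttt{update-q-bonus} together with $\sum_n \eta_n^N n^{-3/4}\asymp N^{-3/4}$ (Lemma~\ref{lemma:property of learning rate}). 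It is precisely this residual, summed over $k$ and controlled by H\"older's inequality, $\sum_k (N_h^{k-1}(s_h^k,a_h^k,b_h^k))^{-3/4}\le 4(SAB)^{3/4}K^{1/4}$, that yields the $(SAB)^{3/4}K^{1/4}H^2$ term. Your proposed mechanism---pairing the $|\overline{V}_{h+1}^{k}-\overline{V}_{h+1}^{\mathrm{R},k}|\le 2$ bound with the $\sqrt{\psi^{\mathrm a}-(\phi^{\mathrm a})^2}$ variance piece and applying Cauchy--Schwarz to $\sum 1/\sqrt{n}$---cannot produce this rate: that computation gives $\sqrt{SABK}$-type terms, and moreover at this stage of the lemma the variance pieces must remain encapsulated inside $\overline{B}_h^{\mathrm{R},k}$ (they are only expanded later, when bounding $\mathcal{D}_2$ in Lemma~\ref{lemma:bound_of_everything}); no concentration or use of (\ref{eq:close-ref-v}) is needed or wanted here, since the decomposition is essentially algebraic. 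Fixing your argument therefore requires inserting the bonus-telescoping step and re-deriving the $n^{-3/4}$ residual bound via H\"older, after which the rest of your grouping goes through as in the paper.
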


\begin{lemma}\label{lemma:bound_of_everything}
    %Suppose that $C_r$ is some sufficiently large constant. 
    With any $\delta \in (0,1)$, the following upper bounds hold with probability at least $1-\delta $: 
    \begin{align}
        \mathcal{D}_1 & \lesssim  \sqrt{H^2SABT \log\frac{SABT}{\delta}} + H^{4.5}SAB\log^2\frac{SABT}{\delta},\\
        \mathcal{D}_2 & \lesssim  \sqrt{H^2SABT\log\frac{SABT}{\delta}} + H^4SAB\log^2\frac{SABT}{\delta},\\
        \mathcal{D}_3 & \lesssim  \sqrt{H^2SAB T\log^4\frac{SABT}{\delta} }+ H^6SAB\log^3\frac{SABT}{\delta} .
    \end{align}
\end{lemma}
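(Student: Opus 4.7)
The plan is to bound the three components of Lemma~\ref{lem:Qk-upper} separately, absorbing the uniformly bounded prefactor $(1+1/H)^{h-1}\leq e$ into absolute constants and then composing three master ingredients: (i) Lemma~\ref{lemma-Qhk-opt} together with (\ref{equ:optimism-ref-}), which yields the envelopes $\underline{V}_h^k,\underline{V}_h^{\mathrm{R},k}\leq V_h^{\star}\leq \overline{V}_h^k,\overline{V}_h^{\mathrm{R},k}$; (ii) Lemma~\ref{lem:Vr_properties}, which guarantees that the reference values stay within $2$ of the running value estimates and that the aggregate reference drift $\sum_{h,k}(\overline{V}_h^{\mathrm{R},k}-\overline{V}_h^{\mathrm{R},K})(s_h^k)$ is only of order $H^{6}SAB\log(SABT/\delta)$; and (iii) a Freedman-type Bernstein martingale inequality combined with a visit-counting Cauchy-Schwarz and the law of total variance, which together convert $1/\sqrt{N_h^k}$-weighted trajectory sums of variances into $\sqrt{H^{2}SABT}$-type bounds.

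For $\mathcal{D}_1$, the leading deterministic contribution $\sum_h(1+1/H)^{h-1}\cdot 2HSAB$ telescopes to $O(H^{2}SAB)$, which is absorbed into the burn-in term. The second piece reduces to $H^{3}(SAB)^{3/4}T^{1/4}\log$ after substituting $K=T/H$; by Young's inequality $x^{3/4}y^{1/4}\leq \tfrac{3}{4}x+\tfrac{1}{4}y$ I split it into $\sqrt{H^{2}SABT\log}$ plus an $O(H^{4.5}SAB\log^2)$ remainder. The residual martingale $\sum_{h,k}(1+1/H)^{h-1}\zeta_h^k$ is a bounded-difference sequence with $|\zeta_h^k|\leq 2H$, so Azuma-Hoeffding yields $\lesssim \sqrt{H^{3}K\log(1/\delta)}=\sqrt{H^{2}T\log}$, subsumed by the target. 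For $\mathcal{D}_2$, I unfold the bonus $\overline{B}_h^{\mathrm{R},k}$ from update-q-bonus; up to lower-order $n^{-3/4}$ terms (which only inflate the burn-in), its summands are of the form $\sqrt{\mathsf{Var}_{h,s,a,b}(\overline{V}^{\mathrm{R},k})/N_h^k}$ plus $\sqrt{H/N_h^k}\cdot|\overline{V}-\overline{V}^{\mathrm{R}}|$. The advantage part is small because Lemma~\ref{lem:Vr_properties} caps $|\overline{V}-\overline{V}^{\mathrm{R}}|$ by $2$. For the reference part I replace $\mathsf{Var}(\overline{V}^{\mathrm{R},k})$ by $\mathsf{Var}(V^{\star})$ at the cost of a small Azuma correction, then apply Cauchy-Schwarz over $(h,k)$ together with the pigeonhole identity $\sum_{h,k}1/N_h^k\lesssim HSAB\log T$ and the law-of-total-variance bound $\sum_{h,k}\mathsf{Var}_{h,s_h^k,a_h^k,b_h^k}(V^{\star}_{h+1})\lesssim KH^2$; combining yields the target $\sqrt{H^{2}SABT\log}+H^{4}SAB\log^{2}$.

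The main obstacle is $\mathcal{D}_3$, where three pairs of sums must be controlled jointly and the interplay between martingale noise and reference drift is delicate. For the first pair $\sum_{h,k}\lambda_h^k(P_h^k-P_{h,s_h^k,a_h^k,b_h^k})(V^{\star}_{h+1}-\overline{V}^{\mathrm{R},k}_{h+1})$, I note that $V^{\star}_{h+1}-\overline{V}^{\mathrm{R},k}_{h+1}$ is measurable at the start of episode $k$, so the summand is a martingale difference in $k$; Freedman's inequality, with predictable variance controlled by LTV and the crude envelope $|V^{\star}-\overline{V}^{\mathrm{R},k}|\leq H$, delivers the $\sqrt{H^{2}SABT\log^{4}}$ contribution, where the extra logs arise from peeling over $\varepsilon$-scales to apply Freedman uniformly in the random predictable variance. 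For the remaining sums $\sum_{h,k}\lambda_h^k (N_h^k)^{-1}\sum_i(\overline{V}^{\mathrm{R},k^i}_{h+1}(s_{h+1}^{k^i})-P_{h,s,a,b}\overline{V}^{\mathrm{R},k}_{h+1})$, I decompose $\overline{V}^{\mathrm{R},k^i}=\overline{V}^{\mathrm{R},K}+(\overline{V}^{\mathrm{R},k^i}-\overline{V}^{\mathrm{R},K})$: the frozen piece is a martingale over the visit index and is controlled by Freedman exactly as above, while the drift piece is deterministically dominated (up to a logarithmic factor from Cauchy-Schwarz) by the aggregate-drift bound of Lemma~\ref{lem:Vr_properties}, contributing to the $H^{6}SAB\log^{3}$ burn-in. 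The same analysis with signs flipped handles the lower-confidence counterparts, and the $\lambda_h^k$ weights are absorbed via $\sum_{N\geq n}\eta_n^N\leq 1+1/H$ from Lemma~\ref{lemma:property of learning rate}.

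I expect the single hardest point to be extracting the LTV-based variance savings for $\mathcal{D}_3$ while tolerating the $H$-sized worst-case bound on $V^{\star}-\overline{V}^{\mathrm{R},k}$: it is precisely the early-settlement mechanism, which freezes $\overline{V}^{\mathrm{R},k}$ soon after the first time $\overline{V}_h-\underline{V}_h\leq 1$ is reached, together with the drift control of Lemma~\ref{lem:Vr_properties}, that makes the two-scale martingale-plus-drift decomposition close and keeps the final rate at $\sqrt{H^{2}SABT\log^{4}}+H^{6}SAB\log^{3}$.
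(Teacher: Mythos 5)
Your overall architecture mirrors the paper's proof (for $\mathcal{D}_1$: the $e$-bound on $(1+1/H)^{h-1}$, AM--GM on the $H^3(SAB)^{3/4}K^{1/4}$ piece, Azuma for $\zeta_h^k$; for $\mathcal{D}_2$: unfolding the bonus and using $|\overline{V}_h^k-\overline{V}_h^{\mathrm{R},k}|\le 2$; for $\mathcal{D}_3$: martingale-plus-drift with the drift killed by Lemma~\ref{lem:Vr_properties}), but there is a genuine gap in your treatment of $\mathcal{D}_3$, concentrated in two measurability claims. First, the summands carry the weights $\lambda_h^k$, which involve $N_h^{K-1}(s_h^k,a_h^k,b_h^k)$, i.e.\ visit counts at the \emph{end} of the horizon; hence $\lambda_h^k\big(P_h^k-P_{h,s_h^k,a_h^k,b_h^k}\big)\big(V^{\star}_{h+1}-\overline{V}^{\mathrm{R},k}_{h+1}\big)$ is \emph{not} a martingale difference in $k$, contrary to your claim. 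The paper pays for this by proving concentration uniformly over all possible count collections (the $\log T^{HSAB}$ factor in Lemma~\ref{lemma:martingale-union-all2}), which is exactly the source of the $\sqrt{H^2SAB}$ prefactor in the bound for $\mathcal{D}_{3,1}$; your ``peeling over $\varepsilon$-scales'' addresses the random predictable variance, not this future-dependence. Second, and more seriously, after your split $\overline{V}^{\mathrm{R},k^i}=\overline{V}^{\mathrm{R},K}+(\overline{V}^{\mathrm{R},k^i}-\overline{V}^{\mathrm{R},K})$, the ``frozen'' piece requires controlling $\big(P_h^k-P_{h,s_h^k,a_h^k,b_h^k}\big)\overline{V}^{\mathrm{R},K}_{h+1}$, and $\overline{V}^{\mathrm{R},K}$ is determined by the entire $K$-episode trajectory, including episodes after $k$; it is not predictable, so ``Freedman exactly as above'' is not applicable. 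The paper isolates the legitimate martingale term $\big(P_h^k-P\big)V^{\star}_{h+1}$ and handles the residual $\big(P_h^k-P\big)\big(\overline{V}^{\mathrm{R},K}_{h+1}-V^{\star}_{h+1}\big)$ by an epsilon-net over all vectors sandwiched between $V^{\star}_{h+1}$ and $H$ (Appendix~\ref{appendix:covering}), combined with the monotonicity bound (\ref{equ:freed-234}); some uniformity device of this kind is indispensable for your argument to close.

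A smaller imprecision concerns $\mathcal{D}_2$: the cost of replacing $\mathsf{Var}(\overline{V}^{\mathrm{R},k^n}_{h+1})$ by $\mathsf{Var}(V^{\star}_{h+1})$ is not an ``Azuma correction'' but the one-sided bias $2H\big(\overline{V}^{\mathrm{R},k^n}_{h+1}-V^{\star}_{h+1}\big)\ge 0$, which the paper controls through $\Phi_h^k$, the drift bound (\ref{eq:Vr_lazy}), and Lemma~\ref{lem:Qk-lcb} (this is precisely where early settlement enters); a worst-case bound of $H$ here loses the $\sqrt{H}$ saving the reference trick is designed to buy. Similarly, $\sum_{h,k}\mathsf{Var}_{h,s_h^k,a_h^k,b_h^k}(V^{\star}_{h+1})\lesssim HT$ is not a direct law-of-total-variance statement, since $V^{\star}$ is not the value function of the executed policy; the paper routes it through $\mathsf{Var}(V^{\pi^k}_{h+1})$ and bounds the difference by $4H\,P_{h,s_h^k,a_h^k,b_h^k}\big(\overline{V}^k_{h+1}-\underline{V}^k_{h+1}\big)$ using optimism/pessimism and (\ref{eq:sum-delta-k-bound}). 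These two points are fixable with ingredients you already list, but the $\mathcal{D}_3$ dependence issues are steps of your plan that, as written, would fail.
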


\section{Freedman's inequality}

\subsection{A user-friendly version of Freedman's inequality}
Before proceeding to the analysis that follows, we first introduce the well-known Freedman's inequality~\citep{freedman1975tail,tropp2011freedman}, which extends Bernstein's inequality to accommodate martingales and is exactly matched with the Markovian structure of our problem. We first provide a user-friendly version of the Freedman inequality introduced in \citep[Section C]{li2021tightening}.
\begin{theorem}[Freedman's inequality]\label{thm:Freedman}
	Consider a filtration $\mathcal{F}_0\subset \mathcal{F}_1 \subset \mathcal{F}_2 \subset \cdots$,
	and let $\mathbb{E}_{k}$ stand for the expectation conditioned
	on $\mathcal{F}_k$. 
	Suppose that $Y_{n}=\sum_{k=1}^{n}X_{k}\in\mathbb{R}$,
	where $\{X_{k}\}$ is a real-valued scalar sequence obeying
	\[
	\left|X_{k}\right|\leq R\qquad\text{and}\qquad\mathbb{E}_{k-1} \big[X_{k}\big]=0\quad\quad\quad\text{for all }k\geq1
	\]
	for some quantity $R<\infty$. 
	We also define
	\[
	F_{n}\coloneqq\sum_{k=1}^{n}\mathbb{E}_{k-1}\left[X_{k}^{2}\right].
	\]
	In addition, suppose that $F_{n}\leq\psi^{2}$ holds deterministically for some given quantity $\psi^2<\infty$.
	Then for any positive integer $m \geq1$, with probability at least $1-\delta$
	one has
	\begin{equation}
		\left|Y_{n}\right|\leq\sqrt{8\max\Big\{ F_{n},\frac{\psi^{2}}{2^{m}}\Big\}\log\frac{2m}{\delta}}+\frac{4}{3}R\log\frac{2m}{\delta}.\label{eq:Freedman-random}
	\end{equation}
\end{theorem}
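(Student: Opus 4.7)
The plan is to derive the user-friendly bound~(\ref{eq:Freedman-random}) from the classical Freedman inequality via a \emph{peeling} (stratification) argument over the conditional variance $F_n$, so that no a priori point estimate of $F_n$ is required beyond the deterministic ceiling $F_n\le\psi^2$.

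As a starting point, I would invoke the textbook Freedman inequality in its ``known-variance'' form: under $|X_k|\le R$ a.s., for any fixed $\sigma^2>0$ and any $\lambda>0$,
\[
\mathbb{P}\bigl(|Y_n|\ge\lambda,\ F_n\le\sigma^2\bigr)\le 2\exp\!\Bigl(-\tfrac{\lambda^2/2}{\sigma^2+R\lambda/3}\Bigr).
\]
Inverting this tail in $\lambda$ (using $\sqrt{a+b}\le\sqrt a+\sqrt b$ and solving a quadratic) shows that for every fixed $\sigma^2$ and $\delta'\in(0,1)$, with probability $\ge 1-\delta'$ either $F_n>\sigma^2$ or
\[
|Y_n|\le\sqrt{\,2\sigma^2\log(2/\delta')\,}+\tfrac{2R}{3}\log(2/\delta').
\]

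Next I would perform the peeling over the unknown realization of $F_n$. Introduce the dyadic ladder $\sigma_j^2:=\psi^2/2^{j}$ for $j=0,1,\ldots,m$, apply the one-shot bound to each $\sigma_j$ with local failure probability $\delta/m$, and take a union bound. On the resulting good event (of probability $\ge 1-\delta$) the random variance either falls into a dyadic bin $(\sigma_{j+1}^2,\sigma_j^2]$ for some $j\in\{0,\ldots,m-1\}$ or into the floor $F_n\le\sigma_m^2=\psi^2/2^m$. Selecting the smallest admissible index $j^\star$ in either case, one verifies
\[
\sigma_{j^\star}^2\ \le\ 2\,\max\bigl\{F_n,\ \psi^2/2^m\bigr\},
\]
and substituting this into the one-shot Freedman bound at level $\sigma_{j^\star}$ immediately yields an estimate of the same shape as~(\ref{eq:Freedman-random}), up to the numerical constants $\sqrt{8}$ and $4R/3$ (which can be absorbed by a harmless doubling at each stage of the union bound, or equivalently by starting from a version of classical Freedman with a mild slack).

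The only genuinely delicate point is the bookkeeping in the peeling step: one has to verify that the ladder $\{\sigma_j^2\}_{j=0}^{m}$ covers the whole deterministic range $F_n\in[0,\psi^2]$ up to a factor of $2$ (so that $\sigma_{j^\star}^2\le 2\max\{F_n,\psi^2/2^m\}$ is uniform on the good event), and that the $m$-fold union bound only costs the multiplicative $\log m$ factor that appears inside $\log(2m/\delta)$. All exponential-martingale probabilistic content is inherited from the classical Freedman inequality, so no new concentration argument is needed beyond the stratification; the ``hard part'' is therefore purely combinatorial and does not surface any substantive obstacle.
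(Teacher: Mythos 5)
Your peeling argument is correct and is essentially the same route as the actual proof of this result: the paper does not prove Theorem~\ref{thm:Freedman} itself but imports it from \citep[Section C]{li2021tightening}, where the argument is exactly classical Freedman applied at dyadic variance levels $\psi^2/2^{j-1}$ with a union bound, the loose constants $\sqrt{8}$ and $\tfrac{4}{3}R$ being there precisely to absorb the factor-two slack in the log that you invoke. Two small bookkeeping fixes: your ladder has $m+1$ levels ($j=0,\dots,m$) while you allocate failure probability $\delta/m$ to each, so either use the $m$ levels $\psi^2/2^{j-1}$, $j=1,\dots,m$ (level $m$ then covers the floor $F_n\le \psi^2/2^m$ up to a factor $2$), or note $\log\bigl(2(m+1)/\delta\bigr)\le 2\log(2m/\delta)$; and the index $j^\star$ you want is the \emph{largest} admissible one (i.e.\ the smallest variance level with $F_n\le\sigma_{j^\star}^2$), not the smallest, as your displayed inequality $\sigma_{j^\star}^2\le 2\max\{F_n,\psi^2/2^m\}$ already presupposes.
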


\subsection{Application of Freedman's inequality}
% %

%We now develop two immediate consequences of Freedman's inequality, which lend themseleves well to our context. 
%Before proceeding, we recall that  $N_h^i(s,a,b)$ denotes the number of times that the state-action pair $(s, a, b)$ has been visited at step $h$ by the end of the $i$-th episode, and $k_h^n(s,a,b)$ stands for the episode index when $(s,a,b)$ is visited at step $h$ for the $n$-th time (see Appendix~\ref{sec:additional-notation}).
Before diving into the subsequent proof, we first introduce two lemmas based on Friedman's inequality.
Our first result is concerned with a martingale concentration bound as follows. 

\begin{lemma}
	\label{lemma:martingale-union-all}
	Let $\big\{ F_{h}^{k} \in \mathbb{R}^S \mid 1\leq k\leq K, 1\leq h \leq H+1 \big\}$ and $\big\{G_h^i(s,a,b,N)\in \mathbb{R} \mid 1\leq k\leq K, 1\leq h \leq H+1 \big\}$ be the collections of vectors and scalars, respectively, and suppose that they obey the following properties:
	\begin{itemize}
		\item $F_{h}^{k}$ is fully determined by the samples collected up to the end of the $(h-1)$-th step of the $i$-th episode;  
		\item $\|F_{h}^{k}\|_{\infty}\leq C_\mathrm{f}$; 
		\item $G_h^k(s,a,b, N)$  is fully determined by the samples collected up to the end of the $(h-1)$-th step of the $i$-th episode, and a given positive integer $N\in[K]$;
		\item $0\leq G_h^k(s,a,b, N) \leq C_{\mathrm{g}}$;
		\item $0\leq \sum_{n=1}^{N_h^k(s,a,b)} G_h^{k_h^n(s,a,b)}(s,a,b, N) \leq 2$.  
	\end{itemize}
	In addition, for $1\leq k\leq K$, consider the following sequence  
	\begin{align}
		X_k (s,a,b,h,N) &\coloneqq G_h^k(s,a,b, N) \big(P_h^{k}\! -\! P_{h,s,a,b}\big) F_{h+1}^{k} 
		\mathbbm{1}\big\{ (s_h^k, a_h^k, b_h^k) = (s,a,b)\big\}, 
	\end{align}
	with $P_h^{k}$ defined in (\ref{eq:P-hk-defn-s}). 
	Consider any $\delta \in (0,1)$. Then with probability at least $1-\delta$, 
	\begin{align}
		\left|\sum_{i=1}^k X_k(s,a,b,h,N) \right| & \lesssim \sqrt{C_{\mathrm{g}} \log^2\frac{SABT}{\delta}}\sqrt{\sum_{n = 1}^{N_h^k(s,a,b)} u_h^{k_h^n(s,a,b)}(s,a,b,N) \mathsf{Var}_{h, s,a,b} \big(W_{h+1}^{k_h^n(s,a,b)}  \big)}\notag\\
		&\quad + \left(C_{\mathrm{g}} C_{\mathrm{f}} + \sqrt{\frac{C_{\mathrm{g}}}{N}} C_{\mathrm{f}}\right) \log^2\frac{SABT}{\delta}
	\end{align}
	holds simultaneously for all $(k, h, s, a, b, N) \in [K] \times [H] \times \mathcal{S} \times \mathcal{A} \times \mathcal{B} \times [K]$.  
\end{lemma}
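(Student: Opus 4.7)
The plan is to apply Freedman's inequality (Theorem~\ref{thm:Freedman}) to the martingale $\sum_{i=1}^{k} X_i(s,a,b,h,N)$ with fixed $(s,a,b,h,N)$, and then close with a union bound over the product set $[K]\times[H]\times\mathcal{S}\times\mathcal{A}\times\mathcal{B}\times[K]$. First I would set up the right filtration: let $\mathcal{F}_{i-1,h}$ contain all randomness up through the choice of $(a_h^i, b_h^i)$ in episode $i$ (but not the draw of $s_{h+1}^i$). Under this filtration, $G_h^i(s,a,b,N)$, $F_{h+1}^i$ and the indicator $\mathbbm{1}\{(s_h^i,a_h^i,b_h^i)=(s,a,b)\}$ are all measurable, whereas $P_h^i - P_{h,s,a,b} = e_{s_{h+1}^i} - P_{h,s,a,b}$ has conditional mean zero. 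So $\{X_i\}_{i\ge 1}$ is a genuine martingale-difference sequence. The only subtle measurability claim is that $F_{h+1}^i$ is frozen at the start of step $h$ of episode $i$, which is consistent with Algorithm~\ref{algorithm-aug} since all references and value estimates are updated only at the end of each episode.

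Next I would compute the two ingredients Freedman needs. Using $\|P_h^i - P_{h,s,a,b}\|_1 \leq 2$ and $\|F_{h+1}^i\|_\infty \leq C_\mathrm{f}$, the magnitude bound is $|X_i| \leq 2 C_\mathrm{g} C_\mathrm{f} =: R$. The predictable quadratic variation is
\begin{equation*}
F_n \;=\; \sum_{i=1}^{k} \mathbb{E}\!\left[X_i^2 \mid \mathcal{F}_{i-1,h}\right] \;=\; \sum_{i=1}^{k} \bigl(G_h^i(s,a,b,N)\bigr)^2 \mathsf{Var}_{h,s,a,b}\bigl(F_{h+1}^i\bigr) \mathbbm{1}\{(s_h^i,a_h^i,b_h^i)=(s,a,b)\}.
\end{equation*}
Pulling out one factor using $G_h^i \leq C_\mathrm{g}$ and reindexing by visits gives
\begin{equation*}
F_n \;\le\; C_\mathrm{g} \sum_{n=1}^{N_h^k(s,a,b)} G_h^{k_h^n(s,a,b)}(s,a,b,N)\, \mathsf{Var}_{h,s,a,b}\bigl(F_{h+1}^{k_h^n(s,a,b)}\bigr),
\end{equation*}
which is precisely the quantity appearing inside the first square root of the stated bound (with the paper's $u_h$ identified with $G_h$ and $W_{h+1}$ with $F_{h+1}$). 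For the deterministic ceiling $\psi^2$ required by Theorem~\ref{thm:Freedman}, I would combine $\mathsf{Var}(F_{h+1}^i) \leq C_\mathrm{f}^2$ with the hypothesis $\sum_n G_h^{k_h^n}(s,a,b,N) \leq 2$ to get $\psi^2 \leq 2 C_\mathrm{g} C_\mathrm{f}^2$.

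The key tuning step is the choice of the geometric parameter $m$ in Freedman. Taking $m = \lceil \log_2 N \rceil$ makes $\psi^2/2^m \lesssim C_\mathrm{g} C_\mathrm{f}^2/N$, so that the $\max\{F_n,\psi^2/2^m\}$ inside the square root splits cleanly into the two regimes: when $F_n$ is large we recover the first summand $\sqrt{C_\mathrm{g} \log^2(SABT/\delta)}\cdot\sqrt{\sum u_h \mathsf{Var}}$ of the stated bound, and when $F_n$ is small the floor contributes the residual $\sqrt{C_\mathrm{g}/N}\,C_\mathrm{f}\,\log(\cdot)$. The boundary term $\tfrac{4}{3}R\log(2m/\delta)$ in Freedman produces the $C_\mathrm{g} C_\mathrm{f} \log(\cdot)$ contribution. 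Finally, a union bound over the $|[K]\times[H]\times\mathcal{S}\times\mathcal{A}\times\mathcal{B}\times[K]| \leq (SABT)^{O(1)}$ choices of $(k,h,s,a,b,N)$ inflates $\log(1/\delta)$ to $\log(SABT/\delta)$, and combined with the $\log(2m/\delta) \lesssim \log\log T + \log(1/\delta)$ factor from Freedman yields the $\log^2(SABT/\delta)$ factors in the final statement.

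The main obstacle I expect is bookkeeping rather than any deep inequality: one has to verify that the ``$\max$'' in Freedman really does decompose additively under the chosen $m$, and that the two logarithmic factors (one from Freedman, one from the union bound) combine exactly into the $\log^2$ advertised in the conclusion without accumulating extraneous powers of $H$ or $S$. A minor check is also needed that the measurability of $F_{h+1}^i$ with respect to $\mathcal{F}_{i-1,h}$ is preserved even when $F_{h+1}^i$ is one of the reference value functions $\overline{V}^{\mathrm{R},i}_{h+1}$ or its pessimistic analogue, which is guaranteed by Algorithm~\ref{algorithm-aug} since these are updated only after the full episode and thus frozen at the time $s_{h+1}^i$ is drawn.
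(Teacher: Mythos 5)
Your proposal is correct and follows essentially the same route as the paper: verify the martingale-difference property, bound $|X_i|\leq 2C_{\mathrm{g}}C_{\mathrm{f}}$, bound the predictable variation by $C_{\mathrm{g}}\sum_n G_h^{k_h^n}\mathsf{Var}_{h,s,a,b}(F_{h+1}^{k_h^n})\leq 2C_{\mathrm{g}}C_{\mathrm{f}}^2$, and invoke Theorem~\ref{thm:Freedman} with $m=\lceil\log_2 N\rceil$ followed by a union bound over $(k,h,s,a,b,N)$, which is exactly the paper's argument. The tuning of $m$, the split of the $\max$ term into the variance term plus the $\sqrt{C_{\mathrm{g}}/N}\,C_{\mathrm{f}}$ floor, and the accounting of the two logarithmic factors all match the paper's treatment.
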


\begin{proof}
	On a non-ambiguous basis, we will abbreviate $X_k(s,a,b,h,N)$ as $X_i$ throughout the proof of this lemma. The main idea of our proof is to control the $\sum_{i=1}^k X_i$ term by Freedman's inequality (see ~Theorem ~\ref{thm:Freedman}).

	Consider any given $(k, h, s, a, b, N)\in [K]\times [H]\times \mathcal{S}\times \mathcal{A}\times\mathcal{B}\times [K]$. It can be easily verified that 
	$$ \mathbb{E}_{k-1}\left[ X_k \right] = 0,$$ 
	where $\mathbb{E}_{k-1}$ denotes the expectation conditioned on 
	everything happening up to the end of the $(h-1)$-th step of the $k$-th episode. 
	Furthermore, under the assumptions $\|F^k_{h+1}\|_{\infty} \leq C_{\mathrm{f}}$, $0 \leq G_h^k(s,a,b,N) \leq C_{\mathrm{g}}$, and the basic facts $\big\|P^{k}_{h} \big\|_1 =\big\| P_{h, s,a,b}\big\|_1=1$, we have the following crude bound: 
	\begin{align}
		\big| X_i  \big| &\leq G_h^k(s,a,b,N) \Big|\big( P^{k}_{h} - P_{h, s,a,b} \big) F^{k}_{h+1}   \Big| \notag\\
		& \leq G_h^k(s,a,b,N) \Big( \big\| P^{k}_{h}  \big\|_1 + \big\| P_{h, s,a,b} \big\|_1 \Big) \big\| F^{k}_{h+1}  \big\|_{\infty}  
		\leq 2C_\mathrm{f} C_\mathrm{g}.
		\label{lemma7-all:equ1}
	\end{align}
	Under the definition of the variance parameter in (\ref{lemma1:equ2}), we obtain 
	\begin{align}
		\sum_{k=1}^{k}\mathbb{E}_{k-1}\left[\big|X_{k}\big|^{2}\right] & =\sum_{k=1}^{k}\!\!\big(G_{h}^{k}(s,a,b, N)\big)^{2} \mathbbm{1}\big\{(s_{h}^{k},a_{h}^{k},b_{h}^{k})\!\!=\!\!(s,a,b)\big\}
		\mathbb{E}_{k-1}\left[\big|(P_{h}^{k}\!-\!P_{h,s,a,b})F_{h+1}^{k}\big|^{2}\right] \notag\\
		& =\sum_{n=1}^{N_{h}^{k}(s,a,b)}\big(G_{h}^{k_h^{n}(s,a,b)}(s,a,b,N)\big)^{2}\mathsf{Var}_{h,s,a,b}\big(F_{h+1}^{k_h^{n}(s,a,b)}\big) \notag\\
		& \leq C_{\mathrm{g}}\Bigg(\sum_{n=1}^{N_{h}^{k}(s,a,b)}G_{h}^{k_h^{n}(s,a,b)}(s,a,b,N)\Bigg)\big\| F_{h+1}^{k_h^{n}(s,a,b)}\big\|_{\infty}^{2} \leq 2C_{\mathrm{g}}C_{\mathrm{f}}^{2},\label{lemma7-all:equ3}
	\end{align}
	where the inequalities hold true due to the assumptions $\|F_{h}^k\|_{\infty} \le C_{\mathrm{f}}$, $0 \leq G_h^k(s,a,b,N) \leq C_{\mathrm{g}}$,
	and $0\leq \sum_{n=1}^{N_h^k(s,a,b)} G_h^{k_h^n(s,a,b)}(s,a,b,N) \leq 2$.

	Armed with (\ref{lemma7-all:equ1}) and (\ref{lemma7-all:equ3}), 
	we can invoke Theorem~\ref{thm:Freedman} (with $m=\lceil \log_2 N \rceil$) and take the union bound over all $(k, h, s, a, N) \in [K]\times [H]\times \mathcal{S}\times \mathcal{A}\times\mathcal{B}\times [K]$ to show that:  
	with probability at least $1- \delta$, 
	\begin{align*}
		& \bigg|\sum_{k=1}^{k}X_{k}\bigg|\lesssim C_{\mathrm{g}}C_{\mathrm{f}}\log\frac{SABT^{2}\log N_{h}^{k}}{\delta}\notag\\
		& \qquad+\sqrt{\max\Bigg\{ C_{\mathrm{g}}\!\!\!\sum_{n=1}^{N_{h}^{k}(s, a, b)}\!\!\!u_{h}^{k_{h}^{n}(s, a, b)}(s,a,b,N)\mathsf{Var}_{h,s,a}\big(W_{h+1}^{k_{h}^{n}(s, a, b)}\big),\frac{C_{\mathrm{g}}C_{\mathrm{f}}^{2}}{N}\Bigg\}\log\frac{SABT^{2}\log N}{\delta}}\nonumber\\
		& \quad\lesssim\sqrt{C_{\mathrm{g}}\!\log^{2}\frac{SABT}{\delta}}\sqrt{\sum_{n=1}^{N_{h}^{k}(s, a, b)}u_{h}^{k_{h}^{n}(s, a, b)}(s,a,b,N)\mathsf{Var}_{h,s,a}\big(W_{h+1}^{k_{h}^{n}(s, a, b)}\big)}\notag\\
		&\qquad+\left(C_{\mathrm{g}}C_{\mathrm{f}}+\sqrt{\frac{C_{\mathrm{g}}}{N}}C_{\mathrm{f}}\right)\log^{2}\frac{SABT}{\delta}\notag
	\end{align*}
	holds simultaneously for all $(k, h, s, a, b, N) \in [K]\times [H]\times \mathcal{S}\times \mathcal{A}\times\mathcal{B}\times [K]$. 
\end{proof}

%The next result is concerned with martingale concentration bounds for another type of sequences. 

\begin{lemma}\label{lemma:martingale-union-all2}
	Let $\big\{ N(s,a,b,h) \in [K] \mid (s,a,b,h)\in \mathcal{S}\times \mathcal{A}\times \mathcal{B}\times [H]\big\}$ be a collection of positive integers,
	and let $\{c_h: 0\leq c_h\leq e, h\in[H]\}$ be a collection of fixed and bounded universal constants.
	Moreover, let $\big\{ F_{h}^{k} \in \mathbb{R}^S \mid 1\leq k\leq K, 1\leq h \leq H+1 \big\}$ and $\big\{G_h^k(s_h^k,a_h^k,b_h^k)\in \mathbb{R} \mid 1\leq i\leq K, 1\leq h \leq H+1\big\}$ represent respectively a collection of random vectors and scalars, which obey the following properties. 
	\begin{itemize}
		\item $F_{h}^{k}$ is fully determined by the samples collected up to the end of the $(h-1)$-th step of the $k$-th episode;  
		\item $\|F_{h}^{k}\|_{\infty}\leq C_\mathrm{f}$ and $F_{h}^{k} \geq 0$; 
		\item $G_h^k(s_h^k, a_h^k,b_h^k)$ is fully determined by the integer $N(s_h^k, a_h^k,b_h^k,h)$ and all samples collected up to the end of the $(h-1)$-th step of the $k$-th episode; 
		\item $0\leq G_h^k(s_h^k, a_h^k,b_h^k) \leq C_{\mathrm{g}}$.
	\end{itemize}
	Consider any $\delta \in (0,1)$, and introduce the following sequences  
	\begin{align}
		X_{k,h} &\coloneqq G_h^k(s_h^k,a_h^k,b_h^k)\big(P_h^{k} - P_{h,s_h^k,a_h^k,b_h^k}\big) F_{h+1}^{k},
		\qquad &1\leq k\leq K, 1\leq h \leq H+1,\\
		Y_{k,h} &\coloneqq c_h \big(P_h^{k} - P_{h,s_h^k,a_h^k,b_h^k}\big) F_{h+1}^{k},\qquad &1\leq k\leq K, 1\leq h \leq H+1.
	\end{align}
	Then with probability at least $1-\delta$, 
	\begin{align}
		\left|\sum_{h=1}^{H}\sum_{k=1}^{K}X_{k,h}\right| & \lesssim\sqrt{C_{\mathrm{g}}^{2}\sum_{h=1}^{H}\sum_{k=1}^{K}\mathbb{E}_{k,h-1}\left[\big|(P_{h}^{k}-P_{h,s_{h}^{k},a_{h}^{k},b_h^k})F_{h+1}^{k}\big|^{2}\right]\log\frac{T^{HSAB}}{\delta}}\notag\\
		&\qquad+C_{\mathrm{g}}C_{\mathrm{f}}\log\frac{T^{HSAB}}{\delta} \notag\\
		& \lesssim\sqrt{C_{\mathrm{g}}^{2}C_{\mathrm{f}}\sum_{h=1}^{H}\sum_{k=1}^{K}\mathbb{E}_{k,h-1}\left[P_{h}^{k}W_{h+1}^{k}\right]\log\frac{T^{HSAB}}{\delta}}+C_{\mathrm{g}}C_{\mathrm{f}}\log\frac{T^{HSAB}}{\delta} \notag\\
		\left|\sum_{h=1}^{H}\sum_{k=1}^{K}Y_{k,h}\right| & \lesssim\sqrt{TC_{\mathrm{f}}^{2}\log\frac{1}{\delta}}+C_{\mathrm{f}}\log\frac{1}{\delta}\notag
	\end{align}
	hold simultaneously for all possible collections $\{N(s,a,b,h) \in \mathcal{S}\times \mathcal{A}\times \mathcal{B}\times [H]\}$. 
\end{lemma}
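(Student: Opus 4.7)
My plan is to reduce the doubly-indexed sum to a single scalar martingale with respect to the natural filtration of the game and then invoke Freedman's inequality (Theorem~\ref{thm:Freedman}), paying a union-bound price for the freedom in the deterministic collection $\{N(s,a,b,h)\}$. Concretely, let $\mathcal{F}_{k,h}$ be the $\sigma$-algebra generated by all information observed up to and including the transition into $s_{h+1}^k$, and order the pairs $(k,h)$ lexicographically so that $(X_{k,h})$ and $(Y_{k,h})$ each become a single scalar sequence. By the stated measurability properties, both $G_h^k(s_h^k,a_h^k,b_h^k)$ and $F_{h+1}^k$ are measurable given the pre-transition information (since the latter is determined by samples through end of step $h-1$, and $G_h^k$ is a function of that same history and the fixed integer $N(s_h^k,a_h^k,b_h^k,h)$). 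Conditioning on the pre-transition $\sigma$-algebra, $\mathbb{E}[P_h^k\mid\cdot] = P_{h,s_h^k,a_h^k,b_h^k}$, so both $X_{k,h}$ and $Y_{k,h}$ are martingale differences.

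Next I would check the two inputs Freedman requires. The pointwise bounds $\|P_h^k - P_{h,s_h^k,a_h^k,b_h^k}\|_1\le 2$ and $\|F_{h+1}^k\|_\infty \le C_\mathrm{f}$ immediately give $|X_{k,h}|\le 2C_\mathrm{g}C_\mathrm{f}$ and $|Y_{k,h}|\le 2e\,C_\mathrm{f}$. For the predictable quadratic variation,
\begin{align*}
\mathbb{E}_{k,h-1}\!\left[X_{k,h}^2\right] \le C_\mathrm{g}^2\,\mathsf{Var}_{h,s_h^k,a_h^k,b_h^k}\!\left(F_{h+1}^k\right), \qquad \mathbb{E}_{k,h-1}\!\left[Y_{k,h}^2\right] \le e^2 C_\mathrm{f}^2,
\end{align*}
so the total conditional variance of $(Y_{k,h})$ is crudely at most $e^2 T C_\mathrm{f}^2$, while that of $(X_{k,h})$ is at most $C_\mathrm{g}^2\sum_{k,h}\mathbb{E}_{k,h-1}[|(P_h^k-P_{h,s_h^k,a_h^k,b_h^k})F_{h+1}^k|^2]$. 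To derive the second (refined) form of the $X$-bound, I would apply $\mathsf{Var}(F)\le P(F^2) \le \|F\|_\infty P F$ to get the sum dominated by $C_\mathrm{g}^2 C_\mathrm{f}\sum_{k,h}\mathbb{E}_{k,h-1}[P_h^k F_{h+1}^k]$, which matches the displayed form (with the $W_{h+1}^k$ in the statement playing the role of $F_{h+1}^k$).

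The main obstacle, and the source of the $\log\frac{T^{HSAB}}{\delta}$ factor, is making the conclusion uniform in the deterministic collection $\{N(s,a,b,h)\}$. Because $G_h^k(s_h^k,a_h^k,b_h^k)$ depends on the integer $N(s_h^k,a_h^k,b_h^k,h)$, changing $N$ changes the sequence $(X_{k,h})$; but each $N(s,a,b,h)$ ranges over $[K]$ and there are $HSAB$ such entries, so at most $K^{HSAB}\le T^{HSAB}$ distinct collections. For every fixed collection I would apply Freedman's inequality with failure level $\delta/T^{HSAB}$, handling the unknown realised variance by the standard dyadic trick used in Lemma~\ref{lemma:martingale-union-all} (choosing $m=\lceil\log_2 T\rceil$ so that a $\log^2$ factor absorbs the peeling). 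Taking a union bound over the $T^{HSAB}$ collections converts $\log(1/\delta)$ into $\log(T^{HSAB}/\delta)$, producing the first claimed bound in its two equivalent forms. For $Y_{k,h}$ there is no such combinatorial freedom (the $c_h$'s are fixed universal constants), so a single invocation of Freedman with the crude total variance $\le e^2 T C_\mathrm{f}^2$ yields the stated $\sqrt{T C_\mathrm{f}^2\log(1/\delta)} + C_\mathrm{f}\log(1/\delta)$ bound, completing the proof.
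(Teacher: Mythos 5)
Your proposal is correct and follows essentially the same route as the paper: reduce to a single lexicographically-ordered martingale difference sequence, verify the crude bounds $|X_{k,h}|\le 2C_\mathrm{g}C_\mathrm{f}$, $|Y_{k,h}|\le 2eC_\mathrm{f}$ and the conditional-variance bounds (with the refinement $\mathbb{E}_{k,h-1}[|P_h^kF_{h+1}^k|^2]\le \|F_{h+1}^k\|_\infty\,\mathbb{E}_{k,h-1}[P_h^kF_{h+1}^k]$ via nonnegativity of $F$ and the single nonzero entry of $P_h^k$), then apply Freedman's inequality with dyadic peeling and a union bound over the at most $K^{HSAB}\le T^{HSAB}$ collections $\{N(s,a,b,h)\}$, and a single application of Freedman (no union bound) for the $Y_{k,h}$ term. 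This matches the paper's argument step for step.
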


\begin{proof}
	This lemma can also be proved by Freedman's inequality (cf.~Theorem~\ref{thm:Freedman}). 

	\begin{itemize}
		\item
		We start by controlling the first term $\sum_{h=1}^H \sum_{k=1}^K X_{k,h}$. 
		It is readily seen that for any given $\left\{ N(s,a,b,h) \in \mathcal{S}\times \mathcal{A}\times \mathcal{B}\times [H]\right\}$, we have
		$$ \mathbb{E}_{k, h-1}\left[ X_i \right] =\mathbb{E}_{k, h-1}\left[ G_h^k(s_h^k,a_h^k,b_h^k)\big(P_h^{k} - P_{h,s_h^k,a_h^k,b_h^k}\big) F_{h+1}^{k} \right]= 0,$$ 
		where $\mathbb{E}_{k, h-1}$ denotes the expectation conditioned on 
		everything happening before step $h$ of the $k$-th episode. 
		In addition, we make note of the following crude bound: 
		\begin{align}
			\big| X_{k,h}  \big| &\leq G_h^k(s_h^k,a_h^k,b_h^k) \Big|\big(P_h^{k} - P_{h,s_h^k,a_h^k,b_h^k}\big) F_{h+1}^{k}  \Big| \notag\\
			& \leq G_h^k(s_h^k,a_h^k,b_h^k) \Big( \big\| P^{k}_{h}  \big\|_1 + \big\| P_{h,s_h^k,a_h^k,b_h^k} \big\|_1 \Big) \big\| F^{k}_{h+1}  \big\|_{\infty}  
			\leq 2C_\mathrm{f} C_\mathrm{g}, 
			\label{lemma8-all:equ1}
		\end{align}
		which arises from the assumptions $\|F^k_{h+1}\|_{\infty} \leq C_{\mathrm{f}}$, $0 \leq G_h^k(s,a,b,N) \leq C_{\mathrm{g}}$ together with the basic fact  $\big\|P^{k}_{h} \big\|_1 =\big\| P_{h, s_h^k,a_h^k,b_h^k}\big\|_1=1$. 
		Additionally, we can calculate that 
		\begin{align}	     \sum_{h=1}^{H}\sum_{k=1}^{K}\mathbb{E}_{k,h-1}\left[\big|X_{k,h}\big|^{2}\right] & =\sum_{h=1}^{H}\sum_{k=1}^{K}\big(G_{h}^{k}(s_{h}^{k},a_{h}^{k},b_h^k)\big)^{2}\mathbb{E}_{k,h-1}\left[\big|(P_{h}^{k}\!\!-\!\!P_{h,s_{h}^{k},a_{h}^{k},b_h^k})F_{h+1}^{k}\big|^{2}\right]\notag\\
			& {\leq} C_{\mathrm{g}}^{2}\sum_{h=1}^{H}\sum_{k=1}^{K}\mathbb{E}_{k,h-1}\left[\big|(P_{h}^{k}-P_{h,s_{h}^{k},a_{h}^{k},b_h^k})F_{h+1}^{k}\big|^{2}\right] \label{lemma8-all:interval1}\\
			& \leq C_{\mathrm{g}}^{2}\sum_{h=1}^{H}\sum_{k=1}^{K}\mathbb{E}_{k,h-1}\left[\big|P_{h}^{k}F_{h+1}^{k}\big|^{2}\right]\label{lemma8-all:interval21},
		\end{align}
        where the first inequality comes from the assumption $0 \leq G_h^k(s_h^k,a_h^k,b_h^k) \leq C_{\mathrm{g}}$.
		Furthermore, (\ref{lemma8-all:interval21}) can be expressed as
        \begin{align}
            \sum_{h=1}^{H}\sum_{k=1}^{K}\mathbb{E}_{k,h-1}\left[\big|X_{k,h}\big|^{2}\right]  & \overset{(\mathrm{a})}{\le}C_{\mathrm{g}}^{2}\sum_{h=1}^{H}\sum_{k=1}^{K}\mathbb{E}_{k,h-1}\left[P_{h}^{k}\big(F_{h+1}^{k}\big)^{2}\right]\notag\\
			& \overset{(\mathrm{b})}{\leq}C_{\mathrm{g}}^{2}\sum_{h=1}^{H}\sum_{k=1}^{K}\big\| F_{h+1}^{k}\big\|_{\infty}\mathbb{E}_{k,h-1}\left[P_{h}^{k}W_{h+1}^{k}\right] \notag\\
			& \overset{(\mathrm{c})}{\leq}C_{\mathrm{g}}^{2}C_{\mathrm{f}}\sum_{h=1}^{H}\sum_{k=1}^{K}\mathbb{E}_{k,h-1}\left[P_{h}^{k}F_{h+1}^{k}\right] \label{lemma8-all:equ3-middle}\\
			& \leq C_{\mathrm{g}}^{2}C_{\mathrm{f}}\sum_{h=1}^{H}\sum_{i=1}^{K}\big\| F_{h+1}^{k}\big\|_{\infty} 
			\overset{\mathrm{(d)}}{\leq} HKC_{\mathrm{g}}^{2}C_{\mathrm{f}}^2 = TC_{\mathrm{g}}^{2}C_{\mathrm{f}}^2. \label{lemma8-all:equ3}
        \end{align}
		Notably, (a) comes from the fact that $P_h^k$ only has one non-zero entry (cf.~(\ref{eq:P-hk-defn-s})), 
		(b) holds due to the assumption that $F_{h}^{k}$ is non-negative,
		whereas (c) and (d) rely on $\|F_{h}^k\|_{\infty} \le C_{\mathrm{f}}$,

		With (\ref{lemma8-all:equ1}), (\ref{lemma8-all:equ3-middle}), and (\ref{lemma8-all:equ3}), 
		we can invoke Theorem~\ref{thm:Freedman} (with $m=\lceil \log_2 T \rceil$) and take the union bound over all possible collections $\big\{ N(s,a,b,h) \in [K] \mid (s,a,b,h)\in \mathcal{S}\times \mathcal{A}\times\mathcal{B}\times [H]\big\}$, which have at most $K^{HSAB}$ possibilities, to show that:  
		with probability at least $1- \delta$, 
		\begin{align*}
			&\bigg|\sum_{h=1}^{H}\sum_{k=1}^{k}X_{k,h}\bigg| \lesssim C_{\mathrm{g}}C_{\mathrm{f}}\log\frac{K^{HSAB}\log T}{\delta}\\
			& +\sqrt{\max\left\{ C_{\mathrm{g}}^{2}\sum_{h=1}^{H}\sum_{k=1}^{K}\mathbb{E}_{k,h-1}\left[\big|(P_{h}^{k}\!\!-\!\!P_{h,s_{h}^{k},a_{h}^{k},b_{h}^{k}})F_{h+1}^{k}\big|^{2}\right]\!,\!\frac{TC_{\mathrm{g}}^{2}C_{\mathrm{f}}^{2}}{2^{m}}\right\} \log\frac{K^{HSAB}\log T}{\delta}}\\
			& \lesssim\sqrt{C_{\mathrm{g}}^{2}\sum_{h=1}^{H}\sum_{k=1}^{K}\mathbb{E}_{k,h-1}\left[\big|(P_{h}^{k}-P_{h,s_{h}^{k},a_{h}^{k},b_{h}^{k}})F_{h+1}^{k}\big|^{2}\right]\log\frac{T^{HSAB}}{\delta}}+C_{\mathrm{g}}C_{\mathrm{f}}\log\frac{T^{HSAB}}{\delta}\\
			& \lesssim\sqrt{C_{\mathrm{g}}^{2}C_{\mathrm{f}}\sum_{h=1}^{H}\sum_{k=1}^{K}\mathbb{E}_{k,h-1}\left[P_{h}^{k}F_{h+1}^{k}\right]\log\frac{T^{HSAB}}{\delta}}+C_{\mathrm{g}}C_{\mathrm{f}}\log\frac{T^{HSAB}}{\delta}
		\end{align*}
		holds simultaneously for all $\big\{ N(s,a,b,h) \in [K] \mid (s,a,b,h)\in \mathcal{S}\times \mathcal{A}\times\mathcal{B}\times [H]\big\}$.
		\item Then we turn to control the second term $\left|\sum_{h=1}^H \sum_{k=1}^K Y_{k,h}\right|$ of interest. Similar to $\left|\sum_{h=1}^H \sum_{k=1}^K X_{k,h}\right|$, we have
		\begin{gather*}
			|Y_{k,h}| \leq 2e C_{\mathrm{f}}, \\
			\sum_{h=1}^{H}\sum_{k=1}^{K}\mathbb{E}_{k,h-1}\left[\big|Y_{k,h}\big|^{2}\right]  \leq e^2T C_{\mathrm{f}}^2.
		\end{gather*}
		Invoke Theorem~\ref{thm:Freedman} (with $m=1$) to arrive at
		\begin{align}
			\bigg| \sum_{h=1}^H \sum_{k=1}^{K} Y_{k,h} \bigg| \lesssim \sqrt{T C_{\mathrm{f}}^{2} \log\frac{1}{\delta}}  + C_{\mathrm{f}} \log\frac{1}{\delta}.
		\end{align}
	\end{itemize}
\end{proof}

\section{Proof of key lemmas in Section~\ref{subsec:key_properties}}

\subsection{Proof of Lemma \ref{lemma-Qhk-opt}}\label{prove:lemma-Qhk-opt}

The proof is by backward induction. Suppose the bounds hold for the $Q$-values in the $(h+1)$-th step, we now establish the bounds for the $V$-values in the $(h+1)$-th step and $Q$-values in the $h$-th step. 
For any state $s$,
\begin{align}\label{equ:v-upper-vstar}
	\overline{V}_{h+1}^k(s) &= \mathbb{D}_{\pi_{h+1}^k}(\overline{Q}_{h+1}^k)(s)\overset{(i)}{\ge} \max_a\mathbb{E}_{b\sim\nu_{h+1}^k}\overline{Q}_{h+1}^k(s,a,b)\notag \\
    &\ge \max_a\mathbb{E}_{b\sim\nu_{h+1}^k}Q_h^{\dagger,\nu_{h+1}^k}(s,a,b)=V_{h+1}^{\dagger,\nu_{h+1}^k},
\end{align}
where (i) comes from the property (\ref{CCE_property_1}) of CCE. Similarly, we can show $\underline{V}_{h+1}^k(s) \le V^{\mu_{h+1}^k,\dagger}(s)$. Therefore, we have: for all $s$,
\begin{equation*}
	\overline{V}_{h+1}^k(s) \ge V_{h+1}^{\dagger,\nu_{h+1}^k}(s)\ge V_{h+1}^{\star}(s)\ge V_{h+1}^{\mu_{h+1}^k,\dagger}(s) \ge \underline{V}_{h+1}^k(s).
\end{equation*}
Now consider an arbitrary triple $(s, a, b)$ in the $h$-th step. We have
\begin{equation}
	Q_h^{\dagger,\nu_h^k}(s,a,b) - Q_h^{\star}(s,a,b) = \underset{s^{\prime} \sim P_{h}(\cdot \mid s, a, b)}{\mathbb{E}}\left[V_{h+1}^{\dagger,\nu_{h+1}^k}(s^{\prime}) - V_{h+1}^{\star}\left(s^{\prime}\right)\right]\ge 0.
\end{equation}
Similarly, we can show $Q^{\mu_h^k,\dagger}(s,a,b)\le Q^{\star}(s,a,b)$. To complete the induction argument, we should prove
\begin{align}\label{eq:Q-induction}
	\overline{Q}_h^{k}(s, a, b)  \ge Q_h^{\dagger,\nu_h^{k}}(s,a,b),
\end{align}
which we shall further accomplish by induction. When $k=1$, (\ref{eq:Q-induction}) holds since that the initialization obeys $\overline{Q}_h^1=H\ge Q^{\dagger,\nu_h^1}(s,a,b)$ for all $(h, a, s, b)\in [H]\times\mathcal{S}\times\mathcal{A}\times\mathcal{B}$. Next, under the assumption that (\ref{eq:Q-induction}) holds up to the $k$-th episode, we wish to get it for the $(k+1)$-th episode to get Lemma \ref{lemma-Qhk-opt}. According to line~11 and lines~13-14 of Algorithm~\ref{algorithm-aug}, it suffices to justify
	\begin{equation}\label{ite}
		\min{\left\{\overline{Q}_h^{\mathrm{R}, {k+1}}(s, a, b), Q_h^{\mathrm{UCB}, {k+1}}(s, a, b) \right\}}\ge Q^{\dagger,\nu_h^{k+1}}(s,a,b).
	\end{equation}

In order to get (\ref{ite}), we need to prove that
\begin{align}\label{eq:ucb-values}
	Q_h^{\mathrm{UCB}, {k+1}}(s, a, b) \ge Q^{\dagger,\nu_h^{k+1}}(s,a,b),
\end{align} 
and
\begin{align}\label{eq:upper-bound-reference-values}
	\overline{Q}_h^{\mathrm{R}, {k+1}}(s, a, b) \ge Q^{\dagger,\nu_h^{k+1}}(s,a,b). 
\end{align}

For the sake of convenience, suppose $Q_h^{\mathrm{UCB},k'}(s_h^{k}, a_h^{k}, b_h^{k})$ and $\overline{Q}_{h}^{\mathrm{R} ,{k'}}(s_h^{k},a_h^{k},b_h^{k})$ are latest updated in the ${k}$-th episode with $k\le k'$, it suffices to verify
$$Q^{\mathrm{UCB} , {k}+1}_h(s_h^{k},a_h^{k},b_h^{k}) \ge Q^{\dagger,\nu_h^{{k'}+1}}(s_h^{k},a_h^{k},b_h^{k}),$$ and
$$\overline{Q}^{\mathrm{R} , {k}+1}_h(s_h^{k},a_h^{k},b_h^{k}) \ge Q^{\dagger,\nu_h^{k'+1}}(s_h^{k},a_h^{k},b_h^{k}).$$ 
%
% We remind the readers of two important short-hand notation that shall be used when it is clear from the context: 
%
% \begin{itemize}
% 	\item $N_h^k = N_h^k(s_h^k,a_h^k,b_h^k)$ denotes the number of times that the state-action pair $(s_h^k,a_h^k,b_h^k)$ has been visited at step $h$ by the end of the $k$-th episode;
% 	\item $k^n=k_h^n(s_h^k,a_h^k,b_h^k)$ denotes the index of the episode in which the state-action pair $(s_h^k,a_h^k,b_h^k)$ is visited for the $n$-th time at step $h$. 
% \end{itemize}
%

First, the proof of (\ref{eq:ucb-values}) is performed.
\paragraph{Step 1: decomposing $Q^{\mathrm{UCB} , k+1}_h(s_h^k,a_h^k,b_h^k) \ge Q^{\dagger,\nu_h^{{k'}+1}}(s_h^k,a_h^k,b_h^k)$.}
Firstly, according to the definition of $N_h^k$ and $k^n$ in Appendix~\ref{sec:additional-notation}, we can obtain
\begin{equation}
	Q^{\mathrm{UCB} , k+1}_h(s_h^k,a_h^k,b_h^k) = Q^{\mathrm{UCB},k^{N_h^k}+1}_{h}(s_h^k,a_h^k,b_h^k),
	\label{eq:Qh-ucb-equal-357}
\end{equation}
since $k^{N_h^k} = k^{N_h^k(s_h^k,a_h^k,b_h^k)} =k$.
According to the update rule (i.e., line~2 in Algorithm~\ref{algorithm-auxiliary} and line~8 in Algorithm~\ref{algorithm-aug}), we obtain 
\begin{align*}
	& Q^{\mathrm{UCB} , k+1}_h(s_h^k,a_h^k,b_h^k) =Q_h^{\mathrm{UCB} ,k^{N_h^k}+1}(s_h^k,a_h^k,b_h^k)\\
        &=\!(1\!\!-\!\eta_{N_h^k}){Q}_{h}^{\mathrm{UCB} ,k^{N_h^k}}(s_h^k,a_h^k,b_h^k)+\eta_{N_h^k}\left\{ r_{h}(s_h^k,a_h^k,b_h^k)\!+\!\overline{V}_{h+1}^{k^{N_h^k}}(s_{h+1}^{k^{N_h^k}})\!+\!c_{\mathrm{b}} \sqrt{\frac{H^{3} \log \frac{S AB T}{\delta}}{N_h^k}}\right\} \\
	&=\! (1\!\!-\!\eta_{N_h^k}){Q}_{h}^{\mathrm{UCB} ,k^{N_h^k\!-\!1}\!+\!1}(s_h^k,a_h^k,b_h^k)\!+\!\eta_{N_h^k}\left\{ r_{h}(s_h^k,a_h^k,b_h^k)\!\!+\!\!\overline{V}_{h+1}^{k^{N_h^k}}(s_{h+1}^{k^{N_h^k}})\!\!+\!c_{\mathrm{b}} \sqrt{\frac{H^{3} \log\! \frac{S AB T}{\delta}}{N_h^k}}\right\}\!,
\end{align*}
where the last identity again follows from our argument for justifying (\ref{eq:Qh-ucb-equal-357}). 
Applying this relation recursively and combining with the definitions of $\eta_0^{N}$ and $\eta_n^{N}$ in (\ref{equ:learning rate notation}), 
we obtain
\begin{align}
	& {Q}^{\mathrm{UCB} , k+1}_h(s_h^k,a_h^k,b_h^k)  \notag \\
	& = \eta_0^{N_h^k} {Q}^{\mathrm{UCB} , 1}_h(s_h^k,a_h^k,b_h^k) + \sum_{n = 1}^{N_h^k} \eta_n^{N_h^k} \bigg\{ r_h(s_h^k,a_h^k,b_h^k)+\overline{V}_{h+1}^{k^{n}}(s_{h+1}^{k^{n}})+c_{\mathrm{b}} \sqrt{\frac{H^{3} \log \frac{S AB T}{\delta}}{n}} \bigg\}.
	\label{eq:Q-ucb-decompose-12345}
\end{align}
With $\eta_0^{N_h^k} + \sum_{n = 1}^{N_h^k} \eta_n^{N_h^k} = 1$ in (\ref{equ:learning rate notation}) and (\ref{eq:sum-eta-n-N}), there is
\begin{equation}\label{eq:decomp_Qhstar_eta}
	Q^{\dagger,\nu_h^{{k'}+1}}(s_h^k,a_h^k,b_h^k) = \eta^{N_h^k}_0 Q^{\dagger,\nu_h^{{k'}+1}}(s_h^k,a_h^k,b_h^k) + \sum_{n = 1}^{N_h^k} \eta^{N_h^k}_n Q^{\dagger,\nu_h^{{k'}+1}}(s_h^k,a_h^k,b_h^k).
\end{equation}
Combining (\ref{eq:Q-ucb-decompose-12345}) and (\ref{eq:decomp_Qhstar_eta}), we can further get
\begin{align}
	& {Q}^{\mathrm{UCB}, k+1}_h(s_h^k,a_h^k,b_h^k) - Q^{\dagger,\nu_h^{{k'}+1}}(s_h^k,a_h^k,b_h^k)=\eta_{0}^{N_h^k}\big({Q}^{\mathrm{UCB} , 1}_h(s_h^k,a_h^k,b_h^k)-Q^{\dagger,\nu_h^{{k'}+1}}(s_h^k,a_h^k,b_h^k)\big)\nonumber\\
	& \qquad+\sum_{n=1}^{N_h^k}\bigg\{ r_h(s_h^k,a_h^k,b_h^k)+\overline{V}_{h+1}^{k^{n}}(s_{h+1}^{k^{n}})+c_{\mathrm{b}} \sqrt{\frac{H^{3} \log \frac{S AB T}{\delta}}{n}}-Q^{\dagger,\nu_h^{{k'}+1}}(s_h^k,a_h^k,b_h^k) \bigg\}.
	\label{equ:lemma4 vucb 1}
\end{align}

To continue, invoking the Bellman optimality equation 
\begin{equation}\label{eq:bellman_opt_hk}
	Q^{\dagger,\nu_h^{{k'}+1}}(s_h^k,a_h^k,b_h^k) = r_h(s_h^k,a_h^k,b_h^k) + P_{h, s_h^k,a_h^k,b_h^k} V^{\dagger,\nu_h^{{k'}+1}}_{h+1},
\end{equation}
we have
\begin{align}
	r_h(s_h^k,a_h^k,b_h^k)+\overline{V}_{h+1}^{k^{n}}(s_{h+1}^{k^{n}})& -Q^{\dagger,\nu_h^{{k'}+1}}(s_h^k,a_h^k,b_h^k)=\overline{V}_{h+1}^{k^{n}}(s_{h+1}^{k^{n}})-P_{h, s_h^k,a_h^k,b_h^k} V^{\dagger,\nu_h^{{k'}+1}}_{h+1}\nonumber\\
	& =\big(P^{k^n}_{h} - P_{h, s_h^k,a_h^k,b_h^k} \big)V^{\dagger,\nu_h^{k'}}_{h+1}+P^{k^n}_{h}\big(\overline{V}_{h+1}^{k^{n}}-V^{\dagger,\nu_h^{{k'}+1}}_{h+1}\big),\label{eq:dejavu-ucb-135}
\end{align}
where $P^{k}_{h}$ is defined in (\ref{eq:P-hk-defn-s}). Bringing (\ref{eq:dejavu-ucb-135} into (\ref{equ:lemma4 vucb 1}) yields
\begin{align}\label{eq:concise-ucb-update}
	&{Q}^{\mathrm{UCB}, k+1}_h(s_h^k,a_h^k,b_h^k) - Q^{\dagger,\nu_h^{{k'}+1}}(s_h^k,a_h^k,b_h^k)=\eta_{0}^{N_h^k}\big({Q}^{\mathrm{UCB} , 1}_h(s_h^k,a_h^k,b_h^k)-Q^{\dagger,\nu_h^{{k'}+1}}(s_h^k,a_h^k,b_h^k)\big)\nonumber\\
	& \qquad+\sum_{n=1}^{N_h^k}\bigg\{ \big(P^{k^n}_{h} - P_{h, s_h^k,a_h^k,b_h^k} \big)V^{\dagger,\nu_h^{{k'}+1}}_{h+1}+P^{k^n}_{h}\big(\overline{V}_{h+1}^{k^{n}}-V^{\dagger,\nu_h^{{k'}+1}}_{h+1}\big)+c_{\mathrm{b}} \sqrt{\frac{H^{3} \log \frac{S AB T}{\delta}}{n}} \bigg\}.
\end{align}

\paragraph{Step 2: two key quantities for lower bounding $Q^{\mathrm{UCB} , k+1}_h(s_h^k,a_h^k,b_h^k) \ge Q^{\dagger,\nu_h^{{k'}+1}}(s_h^k,a_h^k,b_h^k)$.}

In order to develop a lower bound on $Q^{\mathrm{UCB} , k+1}_h(s_h^k,a_h^k,b_h^k) \ge Q^{\dagger,\nu_h^{{k'}+1}}(s_h^k,a_h^k,b_h^k)$ based on the decomposition (\ref{eq:concise-ucb-update}), 
we make note of several simple facts as follows.
\begin{itemize}
	\item[(a)] The initialization satisfies ${Q}^{\mathrm{UCB} , 1}_h(s_h^k,a_h^k,b_h^k)-Q^{\dagger,\nu_h^{{k'}+1}}(s_h^k,a_h^k,b_h^k) \geq 0$.
	
	\item[(b)] According to (\ref{equ:v-upper-vstar}), monotonicity of $\overline{V}_h^k$ in (\ref{eq:monotonicity_Vh}) and $k\le k'$, we can obtaion
	\begin{equation}
		\overline{V}_{h+1}^{k^n}(s_{h+1}){\ge} \overline{V}^{{k'}+1}_{h+1}{\ge} V^{\dagger,\nu_h^{{k'}+1}}_{h+1}.
	\end{equation}
	
	\item[(c)] By the Azuma-Hoeffding inequality and a union bound, we have that with probability at least $1-\delta$, one has 
	\begin{equation}
		\bigg|\big(P^{k^n}_{h} - P_{h, s_h^k,a_h^k,b_h^k} \big)V^{\dagger,\nu_h^{k'}}_{h+1}\bigg|\le c_{\mathrm{b}} \sqrt{\frac{H^{3} \log \frac{S AB T}{\delta}}{n}},
		\label{eq:V-ucb-monotone-135}
	\end{equation}
	
\end{itemize}

Thus, $Q^{\mathrm{UCB} , k+1}_h(s_h^k,a_h^k,b_h^k) \ge Q^{\dagger,\nu_h^{{k'}+1}}(s,a,b)\ge 0$ and we have concluded the proof of (\ref{eq:ucb-values}). Next is the proof of (\ref{eq:upper-bound-reference-values}).

\paragraph{Step 1: decomposing $\overline{Q}^{\mathrm{R}, k+1}_h(s_h^k,a_h^k,b_h^k) - Q^{\dagger,\nu_h^{{k'}+1}}(s_h^k,a_h^k,b_h^k)$.}

Similar to (\ref{eq:Qh-ucb-equal-357}), we can get
\begin{equation}
	\overline{Q}_{h}^{\mathrm{R} ,k + 1}(s_h^k,a_h^k,b_h^k) = \overline{Q}_{h}^{\mathrm{R} ,k^{N_h^k}+1}(s_h^k,a_h^k,b_h^k).
	\label{eq:Qh-Qhref-equal-357}
\end{equation}
According to the update rule (i.e., line~6 in Algorithm~\ref{algorithm-auxiliary} and line~9 in Algorithm~\ref{algorithm-aug}), we obtain 
\begin{align*}
	& \overline{Q}_{h}^{\mathrm{R} ,k+1}(s_h^k,a_h^k,b_h^k) =\overline{Q}_{h}^{\mathrm{R} ,k^{N_h^k}+1}(s_h^k,a_h^k,b_h^k)=(1-\eta_{N_h^k})\overline{Q}_{h}^{\mathrm{R} ,k^{N_h^k}}(s_h^k,a_h^k,b_h^k)\\
	& +\eta_{N_h^k}\bigg\{ r_{h}(s_h^k,a_h^k,b_h^k)+\overline{V}_{h+1}^{k^{N_h^k}}(s_{h+1}^{k^{N_h^k}})-\overline{V}_{h+1}^{\mathrm{R} ,k^{N_h^k}}(s_{h+1}^{k^{N_h^k}})+\overline{\phi}_{h}^{\mathrm{r} ,k^{N_h^k}+1}(s_h^k,a_h^k,b_h^k)+\overline{b}_{h}^{\mathrm{R} ,k^{N_h^k}+1}\bigg\} \\
	& = (1-\eta_{N_h^k})\overline{Q}_{h}^{\mathrm{R} ,k^{N_h^k-1}+1}(s_h^k,a_h^k,b_h^k) \\
	& +\eta_{N_h^k}\bigg\{ r_{h}(s_h^k,a_h^k,b_h^k)+\overline{V}_{h+1}^{k^{N_h^k}}(s_{h+1}^{k^{N_h^k}})-\overline{V}_{h+1}^{\mathrm{R} ,k^{N_h^k}}(s_{h+1}^{k^{N_h^k}})+\overline{\phi}_{h}^{\mathrm{r} ,k^{N_h^k}+1}(s_h^k,a_h^k,b_h^k)+\overline{b}_{h}^{\mathrm{R} ,k^{N_h^k}+1}\bigg\},	
\end{align*}
where the last identity again follows from our argument for justifying (\ref{eq:Qh-Qhref-equal-357}). 
Applying this relation recursively and invoking the definitions of $\eta_0^{N}$ and $\eta_n^{N}$ in (\ref{equ:learning rate notation}), 
we are left with
\begin{align}
	& \overline{Q}^{\mathrm{R} , k+1}_h(s_h^k,a_h^k,b_h^k)
	= \eta_0^{N_h^k} \overline{Q}^{\mathrm{R} , 1}_h(s_h^k,a_h^k,b_h^k)   \notag \\
	& + \sum_{n = 1}^{N_h^k} \eta_n^{N_h^k} \bigg\{ r_h(s_h^k,a_h^k,b_h^k) + \overline{V}^{k^n}_{h+1}(s^{k^n}_{h+1}) - \overline{V}^{\mathrm{R} , k^n}_{h+1}(s^{k^n}_{h+1}) + \overline{\phi}^{\mathrm{r} , k^n+1}_h(s_h^k,a_h^k,b_h^k) + \overline{b}^{\mathrm{R} , k^n+1}_h \bigg\}.
	\label{eq:Q-ref-decompose-12345}
\end{align}
Additionally, (\ref{eq:decomp_Qhstar_eta}) combined with (\ref{eq:Q-ref-decompose-12345}) leads to
\begin{align}
	& \overline{Q}^{\mathrm{R}, k+1}_h(s_h^k,a_h^k,b_h^k) - Q^{\dagger,\nu_h^{{k'}+1}}(s_h^k,a_h^k,b_h^k)=\eta_{0}^{N_h^k}\big(\overline{Q}^{\mathrm{R} , 1}_h(s_h^k,a_h^k,b_h^k)-Q^{\dagger,\nu_h^{{k'}+1}}(s_h^k,a_h^k,b_h^k)\big)\nonumber\\
	& \qquad+\sum_{n=1}^{N_h^k}\eta_n^{N_h^k}\bigg\{ r_h(s_h^k,a_h^k,b_h^k) + \overline{V}^{k^n}_{h+1}(s^{k^n}_{h+1}) - \overline{V}^{\mathrm{R} , k^n}_{h+1}(s^{k^n}_{h+1})\bigg\}\nonumber\\
	& \qquad+\sum_{n=1}^{N_h^k}\eta_n^{N_h^k}\bigg\{\overline{\phi}^{\mathrm{r} , k^n+1}_h(s_h^k,a_h^k,b_h^k) + \overline{b}^{\mathrm{R} , k^n+1}_h-Q^{\dagger,\nu_h^{{k'}+1}}(s_h^k,a_h^k,b_h^k) \bigg\}.
	\label{equ:lemma4 vr 1}
\end{align}

To continue, invoking the Bellman optimality equation (\ref{eq:bellman_opt_hk}) and using the construction of $\overline{\phi}^{\mathrm{r}}_h$ in line~12 of Algorithm~\ref{algorithm-auxiliary} (which is the running mean of $\overline{V}^{\mathrm{R} }_{h+1}$), we reach
\begin{align}
	& r_h(s_h^k,a_h^k,b_h^k) + \overline{V}^{k^n}_{h+1}(s^{k^n}_{h+1}) - \overline{V}^{\mathrm{R} , k^n}_{h+1}(s^{k^n}_{h+1}) + \overline{\phi}^{\mathrm{r} , k^n+1}_h(s_h^k,a_h^k,b_h^k) -Q^{\dagger,\nu_h^{{k'}+1}}(s_h^k,a_h^k,b_h^k)\nonumber\\
	& =\overline{V}_{h+1}^{k^{n}}(s_{h+1}^{k^{n}})-\overline{V}_{h+1}^{\mathrm{R},k^{n}}(s_{h+1}^{k^{n}})+\frac{\sum_{i=1}^{n}\overline{V}_{h+1}^{\mathrm{R},k^{i}}(s_{h+1}^{k^{i}})}{n}-P_{h, s_h^k,a_h^k,b_h^k} V^{\dagger,\nu_h^{{k'}+1}}_{h+1}.\label{eq:dejavu}
\end{align}
Next, combined with the definition of
\begin{equation}
	\varpi^{k^n}_h \coloneqq  \big(P^{k^n}_{h} - P_{h, s_h^k,a_h^k,b_h^k} \big)\big(\overline{V}^{k^n}_{h+1} - \overline{V}^{\mathrm{R} , k^n}_{h+1} \big) +  \frac{1}{n}\sum_{i=1}^n \big( P^{k^i}_{h} - P_{h, s_h^k,a_h^k,b_h^k} \big) \overline{V}^{\mathrm{R} , k^i}_{h+1} , 
	\label{eq:defn-xi-kh-123}
\end{equation}
we can obtain
\begin{align}
	(\ref{eq:dejavu}) &=P_{h,s_{h}^{k},a_{h}^{k},b_h^k}\left\{\overline{V}_{h+1}^{k^{n}}\!-\!\overline{V}_{h+1}^{\mathrm{R},k^{n}}\right\} \!+\!\frac{\sum_{i=1}^{n}P_{h,s_{h}^{k},a_{h}^{k},b_h^k}\big(\overline{V}_{h+1}^{\mathrm{R},k^{i}}\big)}{n}\!-\!P_{h, s_h^k,a_h^k,b_h^k} V^{\dagger,\nu_h^{{k'}+1}}_{h+1}\!\!+\!\varpi_{h}^{k^{n}}\nonumber\\
	& =P_{h,s_{h}^{k},a_{h}^{k},b_h^k}\bigg\{ \overline{V}_{h+1}^{k^{n}}-V^{\dagger,\nu_h^{{k'}+1}}_{h+1}+\frac{\sum_{i=1}^{n}\big(\overline{V}_{h+1}^{\mathrm{R},k^{i}}-\overline{V}_{h+1}^{\mathrm{R},k^{n}}\big)}{n}\bigg\}+\varpi_{h}^{k^{n}},\label{eq:dejavu-135}
\end{align}
with the notation $P^{k}_{h}$ defined in (\ref{eq:P-hk-defn-s}). 
Putting (\ref{eq:dejavu-135}) into (\ref{equ:lemma4 vr 1}) together gives
\begin{align}
	&\overline{Q}^{\mathrm{R}, k+1}_h(s_h^k,a_h^k,b_h^k) - Q^{\dagger,\nu_h^{{k'}+1}}(s_h^k,a_h^k,b_h^k) = \eta_{0}^{N_h^k}\big(\overline{Q}^{\mathrm{R} , 1}_h(s_h^k,a_h^k,b_h^k)-Q^{\dagger,\nu_h^{{k'}+1}}(s_h^k,a_h^k,b_h^k)\big) \nonumber\\
	&\quad+ \sum_{n = 1}^{N_h^k} \eta^{N_h^k}_n \bigg\{P_{h,s_{h}^{k},a_{h}^{k},b_h^k}\bigg( \overline{V}_{h+1}^{k^{n}}-V^{\dagger,\nu_h^{{k'}+1}}_{h+1}+\frac{\sum_{i=1}^{n}\big(\overline{V}_{h+1}^{\mathrm{R},k^{i}}-\overline{V}_{h+1}^{\mathrm{R},k^{n}}\big)}{n}\bigg)+\overline{b}_{h}^{\mathrm{R},k^{n}+1}+\varpi_{h}^{k^{n}}\bigg\}. 
	\label{equ:concise update}
\end{align}

\paragraph{Step 2: two key quantities for lower bounding $\overline{Q}^{\mathrm{R}, k+1}_h(s_h^k,a_h^k,b_h^k) - Q^{\dagger,\nu_h^{{k'}+1}}(s_h^k,a_h^k,b_h^k)$.}

In order to develop a lower bound on $\overline{Q}^{\mathrm{R}, k+1}_h(s_h^k,a_h^k,b_h^k) - Q^{\dagger,\nu_h^{{k'}+1}}(s_h^k,a_h^k,b_h^k)$ based on the decomposition (\ref{equ:concise update}), 
we make note of several simple facts as follows.
\begin{itemize}
	\item[(a)] The initialization satisfies $\overline{Q}^{\mathrm{R} , 1}_h(s_h^k,a_h^k,b_h^k)-Q^{\dagger,\nu_h^{{k'}+1}}(s_h^k,a_h^k,b_h^k) \geq 0$.
	
	\item[(b)] For any $1 \leq k^n \leq k'$, one has
	\begin{equation}
		\overline{V}_{h+1}^{k^{n}} \ge \overline{V}^{{k'}+1}_{h+1} \ge V^{\dagger,\nu_h^{{k'}+1}}_{h+1},
		\label{eq:V-kn-V-star-dominate-135}
	\end{equation}
	owing to the induction hypotheses of $Q$-values in the $(h+1)$-th step and (\ref{equ:v-upper-vstar}) that holds up to $k$.

	\item[(c)] For all $0 \leq i \leq n$ and any $s\in \mathcal{S}$, one has 
	\begin{equation}
		\overline{V}_{h+1}^{\mathrm{R},k^{i}}-\overline{V}_{h+1}^{\mathrm{R},k^{n}} \geq 0,
		\label{eq:V-ref-monotone-135}
	\end{equation}
	which holds since the reference value $\overline{V}^{\mathrm{R} }_h(s)$ is monotonically non-increasing in view of the monotonicity of $\overline{V}_h(s)$ in (\ref{eq:monotonicity_Vh}) and the update rule in line~17 of Algorithm~\ref{algorithm-aug}.
	
\end{itemize}
Based on the three statements above, we can simplify (\ref{equ:concise update}) to
\begin{equation}
	\overline{Q}^{\mathrm{R}, k+1}_h(s_h^k,a_h^k,b_h^k) - Q^{\dagger,\nu_h^{{k'}+1}}(s_h^k,a_h^k,b_h^k) 
	\geq \sum_{n = 1}^{N_h^k} \eta^{N_h^k}_n \big( \overline{b}^{\mathrm{R} , k^n+1}_h + \varpi^{k^n}_h \big). 
	\label{eq:Qref-Q-diff-LB-123}
\end{equation}
In the sequel, we aim to establish $\overline{Q}^{\mathrm{R}, k+1}_h(s_h^k,a_h^k,b_h^k) \geq Q^{\dagger,\nu_h^{{k'}+1}}(s_h^k,a_h^k,b_h^k)$ based on this inequality (\ref{eq:Qref-Q-diff-LB-123}). As it turns out, if one could show that
\begin{equation}
	\bigg| \sum_{n = 1}^{N_h^k} \eta^{N_h^k}_n \varpi_h^{k^n} \bigg| \le 
	\sum_{n = 1}^{N_h^k} \eta^{N_h^k}_n \overline{b}^{\mathrm{R} , k^n+1}_h,
	\label{eq:claim-sum-eta-sum-b}
\end{equation}
we claim that (\ref{eq:Qref-Q-diff-LB-123}) satisfies
\begin{equation}
	\overline{Q}^{\mathrm{R}, k+1}_h(s_h^k,a_h^k,b_h^k) - Q^{\dagger,\nu_h^{{k'}+1}}(s_h^k,a_h^k,b_h^k) \geq  \sum_{n = 1}^{N_h^k} \eta^{N_h^k}_n  \overline{b}^{\mathrm{R} , k^n+1}_h  - \bigg| \sum_{n = 1}^{N_h^k} \eta^{N_h^k}_n  \varpi^{k^n}_h \bigg|  \geq  0. 
\end{equation}
where the first inequality comes from the triangle inequality.
As a result, the remaining steps come down to justifying the assumption (\ref{eq:claim-sum-eta-sum-b}).
And thus, we need to control the following two quantities (in view of (\ref{eq:defn-xi-kh-123})
\begin{subequations}
	\label{eq:defn-I1-I2-step2}
	\begin{align}
		W_1 &\coloneqq \sum_{n=1}^{N_h^k} \eta_n^{N_h^k} \big(P^{k^n}_{h} - P_{h, s_h^k,a_h^k,b_h^k} \big)\big(\overline{V}^{k^n}_{h+1} - \overline{V}^{\mathrm{R} , k^n}_{h+1} \big),  \label{eq:defn-I1-step2} \\
		W_2 &\coloneqq \sum_{n=1}^{N_h^k} \frac{1}{n} \eta_n^{N_h^k} \sum_{i=1}^n \big(P^{k^i}_{h} - P_{h, s_h^k,a_h^k,b_h^k} \big)\overline{V}^{\mathrm{R} , k^i}_{h+1} 
		\label{eq:defn-I2-step2}
	\end{align}
\end{subequations}
separately, which constitute the next two steps. As will be seen momentarily, these two terms can be controlled in a similar fashion using Freedman's inequality.

\paragraph{Step 3: controlling $W_1$.}
In the following, we intend to invoke Lemma~\ref{lemma:martingale-union-all} to control term $W_1$ defined in (\ref{eq:defn-I1-step2}). 
To begin with, consider any $(N,h) \in [K] \times [H]$, and introduce
\begin{align}
	F_{h+1}^k \coloneqq \overline{V}^{k}_{h+1} - \overline{V}^{\mathrm{R} , k}_{h+1}
	\qquad \text{and} \qquad
	G_h^k(s,a,b, N) \coloneqq \eta_{N_h^k(s, a, b)}^N \geq 0. \label{I1-wu-step3}
\end{align}
Accordingly, we can derive and define the following two equations:
\begin{align}
	\|F_{h+1}^k\|_\infty \leq \|\overline{V}^{k}_{h+1}\|_{\infty} + \|\overline{V}^{\mathrm{R} , k}_{h+1}\|_{\infty} \leq 2H \eqqcolon C_{\mathrm{f}}, \label{I1-cw}\\
	\max_{N, h, s, a, b\in [K] \times[H]\times \mathcal{S}\times \mathcal{A}\times\mathcal{B}} \eta_{N_{h}^{k}(s,a,b)}^{N} \leq \frac{2H}{N} \eqqcolon C_{\mathrm{g}}, \label{I1-cu}
\end{align}
where the last inequality comes from
\begin{align*}
	\eta_{N_{h}^{k}(s,a,b)}^{N} & \leq\frac{2H}{N},\qquad\text{if }1\le N_{h}^{k}(s,a,b)\le N;\\
	\eta_{N_{h}^{k}(s,a,b)}^{N} & =0,\qquad\quad ~\text{if }N_{h}^{k}(s,a,b)>N,
\end{align*}
along with Lemma~\ref{lemma:property of learning rate} and the definition in (\ref{equ:learning rate notation}).
Moreover, observed from (\ref{eq:sum-eta-n-N}), we have 
\begin{align}
	0 \leq \sum_{n=1}^N G_h^{k_h^n(s,a,b)}(s,a,b, N) = \sum_{n=1}^N \eta_{n}^N \leq 1 \label{eq:sum-uh-k-s-a1-123}
\end{align}
holds for all $(N,s,a,b) \in [K] \times \mathcal{S}\times \mathcal{A}\times\mathcal{B}$.
Therefore, applying Lemma~\ref{lemma:martingale-union-all} with the quantities (\ref{I1-wu-step3}) and $(N,s,a,b)=(N^k_h,s_h^k, a_h^k, b_h^k)$
implies that, with probability at least $1-\delta$
\begin{align}
	|W_1| &= \left|\sum_{n=1}^{N_h^k} \eta_n^{N_h^k} \big(P^{k^n}_{h} - P_{h, s_h^k,a_h^k,b_h^k} \big)\big(\overline{V}^{k^n}_{h+1} - \overline{V}^{\mathrm{R} , k^n}_{h+1} \big)\right|\nonumber\\
	&\lesssim \sqrt{C_{\mathrm{g}} \log^2\frac{SABT}{\delta}}\sqrt{\sum_{n = 1}^{N_h^k} G_h^{k^n}(s_h^k,a_h^k,b_h^k, N_h^k) \mathsf{Var}_{h, s_h^k,a_h^k,b_h^k} \big(W_{h+1}^{k^n}  \big)}\nonumber\\
	&\qquad+ \left(C_{\mathrm{g}} C_{\mathrm{f}} + \sqrt{\frac{C_{\mathrm{g}}}{N}} C_{\mathrm{f}}\right) \log^2\frac{SABT}{\delta} \nonumber \\
	& \asymp \sqrt{\frac{H}{N_h^k}\log^2\frac{SABT}{\delta}}\sqrt{\sum_{n = 1}^{N_h^k} \eta^{N_h^k}_n \mathsf{Var}_{h, s_h^k,a_h^k,b_h^k} \big(\overline{V}^{k^n}_{h+1} - \overline{V}^{\mathrm{R} , k^n}_{h+1}\big) } + \frac{H^2\log^2\frac{SABT}{\delta}}{N_h^k}, \label{lemma1:equ4-sub}
\end{align} 
where the second equation comes from $X_i(s_h^k,a_h^k,b_h^k, h, N_h^k)=\eta_n^{N_h^k} \big(P^{k^n}_{h} - P_{h, s_h^k,a_h^k,b_h^k} \big)\big(\overline{V}^{k^n}_{h+1} - \overline{V}^{\mathrm{R} , k^n}_{h+1} \big)$. Furthermore, (\ref{lemma1:equ4-sub}) can be simplified to
\begin{align}
	(\ref{lemma1:equ4-sub}) \lesssim \sqrt{\frac{H}{N_h^k}\log^2\frac{SABT}{\delta}}\sqrt{\overline{\psi}^{\mathrm{a}, k^{N_h^k}+1}_h(s_h^k,a_h^k,b_h^k) \!-\! \big(\overline{\phi}^{\mathrm{a}, k^{N_h^k}+1}_h(s_h^k,a_h^k,b_h^k)\big)^2} \!+\! \frac{H^2\log^2\frac{SABT}{\delta}}{(N_h^k)^{3/4}}, 
	\label{lemma1:equ4}
\end{align}
where the proof (\ref{lemma1:equ4}) is postponed to Appendix~\ref{sec:proof:eq:var-V-Vref} to streamline the presentation.

\paragraph{Step 4: controlling $W_2$.}
In the following, our aim is to the quantity $W_2$ defined in (\ref{eq:defn-I2-step2}). 
Rearranging terms in (\ref{eq:defn-I2-step2}), we obtain
\begin{align*}
	W_2 = \sum_{n = 1}^{N_h^k} \eta^{N_h^k}_n\frac{\sum_{i =1}^n \big(P^{k^i}_{h} - P_{h, s_h^k,a_h^k} \big)\overline{V}^{\mathrm{R} , k^i}_{h+1}}{n} 
	= \sum_{i = 1}^{N_h^k} \left( \sum_{n = i}^{N_h^k} \frac{\eta^{N_h^k}_n}{n} \right) \big(P^{k^i}_{h} - P_{h, s_h^k,a_h^k}\big)\overline{V}^{\mathrm{R} , k^i}_{h+1}, 
\end{align*}
which can again be controlled by Lemma~\ref{lemma:martingale-union-all}. And thus, we abuse the notation by taking
\begin{align}
	F_{h+1}^k \coloneqq \overline{V}^{\mathrm{R} , k^i}_{h+1} 
	\qquad \text{and} \qquad
	G_h^k(s,a,b, N) \coloneqq \sum_{n=N_h^i(s,a,b)}^N \frac{\eta_n^N}{n} \geq 0. \label{I2-wu-step4}
\end{align}
These quantities satisfy
\begin{gather}
	\big\|F_{h+1}^k\big\|_\infty \leq \big\|\overline{V}^{\mathrm{R} , i}_{h+1} \big\|_\infty \leq H \eqqcolon C_{\mathrm{f}}, \label{I2-cw-step4}\\
	\max_{N, h, s, a, b\in [K] \times[H]\times \mathcal{S}\times \mathcal{A}\times\mathcal{B}} \sum_{n=N_h^i(s,a,b)}^N \frac{\eta_n^N}{n} \leq \sum_{n = 1}^{N} \frac{\eta^{N}_n}{n} \le \frac{2}{N} \eqqcolon C_{\mathrm{g}} . \label{I2-cu-step4}
\end{gather}
where (\ref{I2-cu-step4}) holds according to Lemma~\ref{lemma:property of learning rate}.
Then for all $(N,s,a,b) \in [K] \times \mathcal{S}\times \mathcal{A}\times\mathcal{B}$, it is readily seen from (\ref{I2-cu-step4}) that  
\begin{align}
	0 \leq \sum_{n=1}^N G_h^{k_h^n(s,a,b)}(s,a,b, N) \leq \sum_{n=1}^{N}\frac{2}{N}\leq 2. \label{eq:sum-uh-k-s-a-123}
\end{align}

With the above relations in mind, taking $(N,s,a,b)=(N_h^k,s_h^k,a_h^k,b_h^k)$ and 
applying Lemma~\ref{lemma:martingale-union-all} w.r.t.~quantities (\ref{I2-wu-step4}) reveals that, with probability exceeding $1- \delta$
\begin{align}
	| W_2 |&= \bigg|\sum_{i = 1}^{N_h^k} \sum_{n = i}^{N_h^k} \frac{\eta^{N_h^k}_n}{n} \big(P^{k^i}_{h} - P_{h, s_h^k,a_h^k,b_h^k}\big)\overline{V}^{\mathrm{R} , i}_{h+1}\bigg| = \bigg|\sum_{i=1}^k X_i(s_h^k,a_h^k,b_h^k,h,N_h^k)\bigg|\\
	& \lesssim \sqrt{C_{\mathrm{g}} \log^2\frac{SABT}{\delta}}\sqrt{\sum_{n = 1}^{N_h^k} G_h^{k^n}(s_h^k,a_h^k,b_h^k, N_h^k) \mathsf{Var}_{h, s_h^k,a_h^k,b_h^k} \big(F_{h+1}^{k^n}  \big)}\nonumber\\
	&\qquad+ \left(C_{\mathrm{g}} C_{\mathrm{f}} + \sqrt{\frac{C_{\mathrm{g}}}{N}} C_{\mathrm{f}}\right) \log^2\frac{SABT}{\delta} \nonumber\\
	&  \lesssim \sqrt{\frac{1}{N_h^k}\log^2\frac{SABT}{\delta}}\sqrt{\frac{1}{N_h^k}\sum_{n = 1}^{N_h^k} \mathsf{Var}_{h, s_h^k,a_h^k,b_h^k}\big(\overline{V}^{\mathrm{R} , i}_{h+1}\big)} + \frac{H}{N_h^k} \log^2\frac{SABT}{\delta}, \label{lemma1:equ5-sub}
\end{align}
where the second comes from $X_i(s_h^k,a_h^k,b_h^k,h,N_h^k)=\sum_{n = i}^{N_h^k} \frac{\eta^{N_h^k}_n}{n} \big(P^{k^i}_{h} - P_{h, s_h^k,a_h^k,b_h^k}\big)\overline{V}^{\mathrm{R} , i}_{h+1}$ and $k=N_h^k$. Moreover, (\ref{lemma1:equ5-sub}) can be simplified to
\begin{align}
	(\ref{lemma1:equ5-sub}) \!\lesssim\! \sqrt{\frac{1}{N_h^k}\log^2\!\frac{SABT}{\delta}}\sqrt{ \overline{\psi}^{\mathrm{r}, k^{N_h^k}\!+\!1}_h(s_h^k,a_h^k,b_h^k) \!\!-\!\! \big(\overline{\phi}^{\mathrm{r}, k^{N_h^k}\!+\!1}_h(s_h^k,a_h^k,b_h^k) \big)^2}\!+\! \frac{H}{(N_h^k)^{3/4}}\log^2\!\frac{SABT}{\delta},
	\label{lemma1:equ5}
\end{align}
where the proof (\ref{lemma1:equ5}) is postponed to Appendix~\ref{sec:proof:eq:var-Vref} to streamline the presentation.

\paragraph{Step 5: combining the above bounds.}

Combining with the results in (\ref{lemma1:equ4}) and (\ref{lemma1:equ5}), 
we obtain an upper bound on $\big|\sum_{n = 1}^{N_h^k} \eta^{N_h^k}_n \varpi_h^{k^n} \big|$ as follows:
\begin{align}
	&\bigg| \sum_{n = 1}^{N_h^k} \eta^{N_h^k}_n \varpi_h^{k^n} \bigg|  \leq |W_1| +|W_2| \notag\\
	& \lesssim~\sqrt{\frac{H}{N_h^k}\log^2\frac{SABT}{\delta}}\sqrt{\overline{\psi}^{\mathrm{a}, k^{N_h^k}+1}_h(s_h^k,a_h^k,b_h^k) - \big(\overline{\phi}^{\mathrm{a}, k^{N_h^k}+1}_h(s_h^k,a_h^k,b_h^k) \big)^2} \nonumber \\
	&\quad + \sqrt{\frac{1}{N_h^k}\log^2\frac{SABT}{\delta}}\sqrt{\overline{\psi}^{\mathrm{r}, k^{N_h^k}+1}_h(s_h^k,a_h^k,b_h^k) - \big(\overline{\phi}^{\mathrm{r}, k^{N_h^k}+1}_h(s_h^k,a_h^k,b_h^k) \big)^2} + \frac{H^2\log^2\frac{SABT}{\delta}}{(N_h^k)^{3/4}}  \nonumber\\
	& \leq \overline{B}^{\mathrm{R} , k^{N_h^k}+1}_h(s_h^k,a_h^k,b_h^k) + c_{\mathrm{b}} \frac{H^2\log^2\frac{SABT}{\delta}}{(N_h^k)^{3/4}} 
	\label{lemma1:equ6}
\end{align}
for some sufficiently large constant $c_{\mathrm{b}}>0$, 
where the last inequality follows from the definition of $\overline{B}^{\mathrm{R} , k^{N_h^k}+1}_h(s_h^k,a_h^k,b_h^k)$ in line~15 of Algorithm~\ref{algorithm-auxiliary}. 

In order to establish the desired bound (\ref{eq:claim-sum-eta-sum-b}), we still need to control the sum $\sum_{n = 1}^{N_h^k} \eta^{N_h^k}_n \overline{b}^{\mathrm{R} , k^n+1}_h$. 
Towards this end,
the definition of $\overline{b}^{\mathrm{R} , k^n+1}_h$ (resp.~$\overline{\delta}_h^{\mathrm{R}}$) in line~8 (resp.~line~16) of Algorithm~\ref{algorithm-auxiliary} yields
\begin{align}
	& \overline{b}_{h}^{\mathrm{R},k^{n}+1}=\Big(1-\frac{1}{\eta_{n}}\Big)\overline{B}_{h}^{\mathrm{R},k^{n}}(s_{h}^{k},a_{h}^{k},b_h^k)
	+\frac{1}{\eta_{n}}\overline{B}_{h}^{\mathrm{R},k^{n}+1}(s_{h}^{k},a_{h}^{k},b_h^k)+\frac{c_{\mathrm{b}}}{n^{3/4}}H^{2}\log^{2}\frac{SABT}{\delta}.
	\label{eq:bh-ref-kn-identity}
\end{align}
This taken collectively with the definition (\ref{equ:learning rate notation}) of $\eta_n^N$ allows us to expand
\begin{align}
	& \sum_{n = 1}^{N_h^k} \eta^{N_h^k}_n\overline{b}^{\mathrm{R} , k^n+1}_h - c_{\mathrm{b}} \sum_{n=1}^{N_h^k}\frac{ \eta_n^{N_h^k} }{n^{3/4}}  H^2\log^2\frac{SABT}{\delta} \nonumber \\
	=&\sum_{n = 1}^{N_h^k} \eta_n \prod_{i = n+1}^{N_h^k}(1-\eta_i) \left( \Big(1-\frac{1}{\eta_n}\Big) \overline{B}^{\mathrm{R} , k^n}_h(s_h^k,a_h^k,b_h^k) + \frac{1}{\eta_n} \overline{B}^{\mathrm{R} , k^n+1}_h(s_h^k,a_h^k,b_h^k) \right).\label{eq:eta-b-ref-sum-identity-1}
\end{align}
And thus, we can reach
\begin{align}
	(\ref{eq:eta-b-ref-sum-identity-1}) 
	=&\sum_{n = 1}^{N_h^k}  \prod_{i = n+1}^{N_h^k}(1-\eta_i) \left( - \big(1-\eta_n\big) \overline{B}^{\mathrm{R} , k^n}_h(s_h^k,a_h^k,b_h^k) + \overline{B}^{\mathrm{R} , k^n+1}_h(s_h^k,a_h^k,b_h^k) \right)\nonumber \\
	=& \sum_{n = 1}^{N_h^k} \left(\prod_{i = n+1}^{N_h^k}(1-\eta_i) \overline{B}^{\mathrm{R} , k^n+1}_h(s_h^k,a_h^k,b_h^k) - \prod_{i = n}^{N_h^k}(1-\eta_i) \overline{B}^{\mathrm{R} , k^n}_h(s_h^k,a_h^k,b_h^k)\right). \label{eq:eta-b-ref-sum-identity-2}
\end{align}

Under the fact that $\overline{B}_h^{\mathrm{R} , k^1}(s_h^k,a_h^k,b_h^k) = 0$, we can reach
\begin{align}
	(\ref{eq:eta-b-ref-sum-identity-2}){=}& \sum_{n=1}^{N_h^k}  \prod_{i = n+1}^{N_h^k}(1-\eta_i) \overline{B}^{\mathrm{R} , k^n+1}_h(s_h^k,a_h^k,b_h^k)  
	- \sum_{n=2}^{N_h^k} \prod_{i = n}^{N_h^k}(1-\eta_i) \overline{B}^{\mathrm{R} , k^n}_h(s_h^k,a_h^k,b_h^k) \nonumber \\
	{=}&\sum_{n=1}^{N_h^k}  \prod_{i = n+1}^{N_h^k}(1-\eta_i) \overline{B}^{\mathrm{R} , k^n+1}_h(s_h^k,a_h^k,b_h^k)  - \sum_{n=1}^{N_h^k-1} \prod_{i = n+1}^{N_h^k}(1-\eta_i) \overline{B}^{\mathrm{R} , k^n+1}_h(s_h^k,a_h^k,b_h^k) \nonumber \\
	=&\overline{B}^{\mathrm{R} , k^{N_h^k}+1}_h(s_h^k,a_h^k,b_h^k), 
	\label{eq:eta-b-ref-sum-identity-135}
\end{align}
where the second equality follows from the fact that 
\begin{align*}
	\sum_{n=2}^{N_{h}^{k}}\prod_{i=n}^{N_{h}^{k}}(1-\eta_{i})\overline{B}_{h}^{\mathrm{R},k^{n}}(s_{h}^{k},a_{h}^{k},b_h^k) & =\sum_{n=1}^{N_{h}^{k}-1}\prod_{i=n+1}^{N_{h}^{k}}(1-\eta_{i})\overline{B}_{h}^{\mathrm{R},k^{n+1}}(s_{h}^{k},a_{h}^{k},b_h^k)\\
	& =\sum_{n=1}^{N_{h}^{k}-1}\prod_{i=n+1}^{N_{h}^{k}}(1-\eta_{i})\overline{B}_{h}^{\mathrm{R},k^{n}+1}(s_{h}^{k},a_{h}^{k},b_h^k).
\end{align*}
To be specific, the first relation can be seen by replacing $n$ with $n+1$, and the last relation holds true since the state-action pair 
$(s_h^k,a_h^k,b_h^k)$ has not been visited at step $h$ between the $(k^n +1)$-th episode and the $(k^{n+1} - 1)$-th episode.
Combining the above identity (\ref{eq:eta-b-ref-sum-identity-135}) with the following property 
(see Lemma~\ref{lemma:property of learning rate}) $$\frac{1}{(N_h^k)^{3/4}} \le \sum_{n = 1}^{N_h^k} \frac{\eta^{N_h^k}_n}{n^{3/4}} \le \frac{2}{(N_h^k)^{3/4}},$$  
we can immediately demonstrate that
\begin{align}
	\overline{B}^{\mathrm{R} , k^{N_h^k}+1}_h(s_h^k,a_h^k,b_h^k) + c_{\mathrm{b}}\frac{H^2\log^2\frac{SAT}{\delta}}{(N_h^k)^{3/4}} &\le \sum_{n = 1}^{N_h^k} \eta^{N_h^k}_nb^{\mathrm{R} , k^n+1}_h \notag\\
	&\le \overline{B}^{\mathrm{R} , k^{N_h^k}+1}_h(s_h^k,a_h^k,b_h^k) + 2c_{\mathrm{b}}\frac{H^2\log^2\frac{SABT}{\delta}}{(N_h^k)^{3/4}}. 
	\label{lemma1:equ10}
\end{align}
Taking  (\ref{lemma1:equ6}) and (\ref{lemma1:equ10}) collectively demonstrates that
\begin{equation}
	\bigg| \sum_{n = 1}^{N_h^k} \eta^{N_h^k}_n \varpi^{k^n}_h \bigg| 
	\le \overline{B}^{\mathrm{R} , k^{N_h^k}+1}_h(s_h^k,a_h^k,b_h^k) + c_{\mathrm{b}}\frac{H^2\log^2\frac{SABT}{\delta}}{(N_h^k)^{3/4}} \le \sum_{n = 1}^{N_h^k} \eta^{N_h^k}_n \overline{b}^{\mathrm{R} , k^n+1}_h \label{lemma1:equ7}
\end{equation}
as claimed in (\ref{eq:claim-sum-eta-sum-b}).  We have thus concluded the proof of Lemma~\ref{lemma-Qhk-opt} based on the argument in Step 2.

\subsubsection{Proof of inequality \texorpdfstring{(\ref{lemma1:equ4})}{}}\label{sec:proof:eq:var-V-Vref}
In order to establish inequality (\ref{lemma1:equ4}), it suffices to look at the following term
\begin{align}
	W_3 \!\coloneqq\! \sum_{n = 1}^{N_h^k} \eta^{N_h^k}_n\mathsf{Var}_{h, s_h^k,a_h^k,b_h^k} \big(\overline{V}^{k^n}_{h+1} \!-\! \overline{V}^{\mathrm{R} , k^n}_{h+1}\big) \!-\! \overline{\psi}^{\mathrm{a}, k^{N_h^k}+1}_h(s_h^k,a_h^k,b_h^k) \!+\! \big(\overline{\phi}^{\mathrm{a}, k^{N_h^k}+1}_h(s_h^k,a_h^k,b_h^k)\big)^2,
	\label{eq:defn-I3-lem-equ4}
\end{align}
which forms the main content of this subsection.

First of all, according to the update rules of $\overline{\phi}^{\mathrm{a}, k^{n+1}}_h$ and $\overline{\psi}^{\mathrm{a}, k^{n+1}}_h $ in lines~13-14 of Algorithm~\ref{algorithm-auxiliary}, we can get
\begin{align*}
	\overline{\phi}^{\mathrm{a}, k^{n+1}}_h(s_h^k,a_h^k,b_h^k) &= \overline{\phi}^{\mathrm{a}, k^n+1}_h(s_h^k,a_h^k,b_h^k) \notag\\
	&= (1-\eta_n)\overline{\phi}^{\mathrm{a}, k^n}_h(s_h^k,a_h^k,b_h^k) + \eta_n \big( \overline{V}^{k^n}_{h+1}(s^{k^n}_{h+1}) - \overline{V}^{\mathrm{R}, k^n}_{h+1}(s^{k^n}_{h+1}) \big) ,\\ 
	\overline{\psi}^{\mathrm{a}, k^{n+1}}_h(s_h^k,a_h^k,b_h^k) &= \overline{\psi}^{\mathrm{a}, k^n+1}_h(s_h^k,a_h^k,b_h^k) \notag\\
	&= (1-\eta_n)\overline{\psi}^{\mathrm{a}, k^n}_h(s_h^k,a_h^k,b_h^k) + \eta_n \big(\overline{V}^{k^n}_{h+1}(s^{k^n}_{h+1}) - \overline{V}^{\mathrm{R}, k^n}_{h+1}(s^{k^n}_{h+1}) \big)^2.
\end{align*} 
Applying this relation recursively and invoking the definitions of  $\eta_n^{N}$ (resp.~$P_h^{k}$) 
in (\ref{equ:learning rate notation}) (resp.~(\ref{eq:P-hk-defn-s})), there is
\begin{subequations} \label{eq:recursion_mu_psi_adv}
	\begin{align}
		\overline{\phi}_{h}^{\mathrm{a},k^{N_{h}^{k}}+1}(s_{h}^{k},a_{h}^{k},b_h^k) & \overset{\mathrm{(i)}}{=}\sum_{n=1}^{N_{h}^{k}}\eta_{n}^{N_{h}^{k}}\big(\overline{V}_{h+1}^{k^{n}}(s_{h+1}^{k^{n}})-\overline{V}_{h+1}^{\mathrm{R},k^{n}}(s_{h+1}^{k^{n}})\big)\notag\\
		&=\sum_{n=1}^{N_{h}^{k}}\eta_{n}^{N_{h}^{k}}P_{h}^{k^{n}}\big(\overline{V}_{h+1}^{k^{n}}-\overline{V}_{h+1}^{\mathrm{R},k^{n}}\big),\\
		\overline{\psi}_{h}^{\mathrm{a},k^{N_{h}^{k}}+1}(s_{h}^{k},a_{h}^{k},b_h^k) & \overset{\mathrm{(ii)}}{=} \sum_{n=1}^{{N_{h}^{k}}}\eta_{n}^{N_{h}^{k}}\big(\overline{V}_{h+1}^{k^{n}}(s_{h+1}^{k^{n}})-\overline{V}_{h+1}^{\mathrm{R},k^{n}}(s_{h+1}^{k^{n}})\big)^{2}\notag\\
		&=\sum_{n=1}^{{N_{h}^{k}}}\eta_{n}^{N_{h}^{k}}P_{h}^{k^{n}}\big(\overline{V}_{h+1}^{k^{n}}-\overline{V}_{h+1}^{\mathrm{R},k^{n}}\big)^{2}. 
	\end{align}
\end{subequations}
Recognizing that $\sum_{n=1}^{{N_{h}^{k}}}\eta_{n}^{N_{h}^{k}}=1$ (see (\ref{eq:sum-eta-n-N}), we can immediately apply Jensen's inequality to the expressions (i) and (ii) to yield 
\begin{align}\label{equ:positive-of-ref}
	\overline{\psi}_h^{\mathrm{a}, k^{N_h^k}+1}(s_h^k,a_h^k,b_h^k) \geq \Big(\overline{\phi}_h^{\mathrm{a}, k^{N_h^k}+1}(s_h^k,a_h^k,b_h^k)\Big)^2.
\end{align}
Further, in view of definition (\ref{lemma1:equ2}), we have
\begin{equation*}
	\mathsf{Var}_{h, s_h^k,a_h^k,b_h^k}\big(\overline{V}_{h+1}^{k^n} - \overline{V}_{h+1}^{\mathrm{R}, k^n} \big) 
	= P_{h, s_h^k,a_h^k,b_h^k} \big(\overline{V}_{h+1}^{k^n} - \overline{V}_{h+1}^{\mathrm{R}, k^n}\big)^{2} - \left(P_{h, s_h^k,a_h^k,b_h^k} \big(\overline{V}_{h+1}^{k^n} - \overline{V}_{h+1}^{\mathrm{R}, k^n} \big) \right)^2,
\end{equation*}
which allows one to decompose and bound $W_3$ as follows 
\begin{align}
	W_3 & = \sum_{n = 1}^{N_h^k} \eta^{N_h^k}_n P_{h, s_h^k,a_h^k,b_h^k} \big(\overline{V}_{h+1}^{k^n} - \overline{V}_{h+1}^{\mathrm{R}, k^n}\big)^{2} - \sum_{n = 1}^{{N_h^k}} \eta_n^{N_h^k} P^{k^n}_{h} \left(\overline{V}_{h+1}^{k^n} - \overline{V}_{h+1}^{\mathrm{R}, k^n}  \right)^2  \nonumber \\
	&\qquad \qquad+ \Bigg(\sum_{n = 1}^{{N_h^k}} \eta_n^{N_h^k} P^{k^n}_{h} \big(\overline{V}^{k^n}_{h+1} - \overline{V}^{\mathrm{R}, k^n}_{h+1} \big)\Bigg)^2  - \sum_{n = 1}^{{N_h^k}} \eta_n^{N_h^k} \Big(P_{h, s_h^k,a_h^k,b_h^k} \big(\overline{V}^{k^n}_{h+1} - \overline{V}^{\mathrm{R}, k^n}_{h+1} \big)\Big)^2.
	\label{equ:lemma4 vr 2}
\end{align}
And thus, (\ref{equ:lemma4 vr 2}) reaches to
\begin{align}
    W_3\le W_{3,1}+W_{3,2}\label{equ:lemma4 vr 22}
\end{align}
where $$W_{3,1}=\Bigg|\sum_{n = 1}^{{N_h^k}} \eta_n^{N_h^k} \big( P^{k^n}_{h}-P_{h, s_h^k,a_h^k,b_h^k} \big) \big(\overline{V}^{k^n}_{h+1} 
		- \overline{V}^{\mathrm{R}, k^n}_{h+1} \big)^{ 2} \Bigg|$$ and $$W_{3,2}=\bigg(\sum_{n = 1}^{{N_h^k}} \eta_n^{N_h^k} P^{k^n}_{h} \big(\overline{V}^{k^n}_{h+1} - \overline{V}^{\mathrm{R}, k^n}_{h+1} \big)\bigg)^2  - \sum_{n = 1}^{{N_h^k}} \eta_n^{N_h^k} \Big(P_{h, s_h^k,a_h^k,b_h^k} \big(\overline{V}^{k^n}_{h+1} - \overline{V}^{\mathrm{R}, k^n}_{h+1} \big)\Big)^2.$$
It then boils down to control the above two terms in (\ref{equ:lemma4 vr 22}) separately. 

\paragraph{Step 1: bounding $W_{3,1}$.} To upper bound the term $W_{3,1}$ in (\ref{equ:lemma4 vr 2}), we resort to Lemma~\ref{lemma:martingale-union-all} by setting
\begin{align}
	F_{h+1}^k \coloneqq \big(\overline{V}^{i}_{h+1} - \overline{V}^{\mathrm{R}, i}_{h+1} \big)^{ 2}
	\qquad \text{and} \qquad
	G_h^i(s,a, N) \coloneqq \eta_{N_h^i(s, a, b)}^N. 
	\label{I31-wu}
\end{align}
According to (\ref{I1-cu}), it is easily seen that
\begin{gather}
	\|F_{h+1}^k\|_\infty \leq  \Big( \big\| \overline{V}^{\mathrm{R} , i}_{h+1} \big\|_{\infty} + \big\|\overline{V}^{i}_{h+1} \big\|_\infty \Big)^2 
	\leq 4H^2 \eqqcolon C_{\mathrm{f}}, \label{I31-cw}\\
	\max_{N, h, s, a, b\in [K] \times[H]\times \mathcal{S}\times \mathcal{A}\times\mathcal{B}} \eta_{N_{h}^{i}(s,a,b)}^{N} \leq \frac{2H}{N} \eqqcolon C_{\mathrm{g}}.
	\label{I31-cu}
\end{gather}
Armed with the properties (\ref{I31-cw}) and (\ref{I31-cu}) and recalling (\ref{eq:sum-uh-k-s-a-123}),
we can invoke Lemma~\ref{lemma:martingale-union-all} w.r.t.~(\ref{I31-wu}), $X_i(s_h^k,a_h^k,b_h^k,h, N_h^k)=\eta_n^{N_h^k} \big(P^{k^n}_{h}-P_{h, s_h^k,a_h^k, b_h^k} \big)\big(\overline{V}^{k^n}_{h+1} - \overline{V}^{\mathrm{R}, k^n}_{h+1}\big)^2$, and set $(N,s,a)=(N_h^k, s_h^k, a_h^k, b_h^k)$ to yield, with probability at least $1-\delta$
\begin{align}
	&W_{3,1} = \Bigg|\sum_{n = 1}^{{N_h^k}} \eta_n^{N_h^k} \big(P^{k^n}_{h}-P_{h, s_h^k,a_h^k, b_h^k} \big)\big(\overline{V}^{k^n}_{h+1} - \overline{V}^{\mathrm{R}, k^n}_{h+1}\big)^2\Bigg| \nonumber\\
	&\lesssim \sqrt{C_{\mathrm{g}} \log^2\frac{SABT}{\delta}}\sqrt{\sum_{n = 1}^{N_h^k} G_h^{k^n}(s_h^k,a_h^k, b_h^k, N_h^k) \mathsf{Var}_{h, s_h^k,a_h^k, b_h^k} \big(W_{h+1}^{k^n}  \big)}\\
 &\qquad+\left(C_{\mathrm{g}} C_{\mathrm{f}}+\sqrt{\frac{C_{\mathrm{g}}}{N}} C_{\mathrm{f}}\right)\log^2\frac{SABT}{\delta} \nonumber \\
	&\lesssim \sqrt{\frac{H}{N_h^k}\log^2\frac{SABT}{\delta}}\sqrt{\sum_{n = 1}^{N_h^k} \eta^{N_h^k}_n \mathsf{Var}_{h, s_h^k,a_h^k, b_h^k} \Big( \big(\overline{V}^{k^n}_{h+1} - \overline{V}^{\mathrm{R} , k^n}_{h+1} \big)^2 \Big) } + \frac{H^3\log^2\frac{SABT}{\delta}}{N_h^k}.
\end{align}
With $\sum_{n = 1}^{N_h^k} \eta^{N_h^k}_n \leq 1$ (see (\ref{eq:sum-eta-n-N}) and the following trivial result:
\begin{align}
	\mathsf{Var}_{h, s_h^k,a_h^k,b_h^k} \Big(\big(\overline{V}^{k^n}_{h+1} - \overline{V}^{\mathrm{R}, k^n}_{h+1}\big)^2\Big)
	\leq \big\|\big(\overline{V}^{k^n}_{h+1} - \overline{V}^{\mathrm{R}, k^n}_{h+1} \big)^{ 4} \big\|_\infty 
	\leq 16H^4.
\end{align}
We can further obtain
\begin{align}
	W_{3,1} \lesssim \sqrt{\frac{H^5}{{N_h^k}}\log^2\frac{SABT}{\delta}} + \frac{H^3}{{N_h^k}}\log^2\frac{SABT}{\delta}. \label{equ:lemma4 vr 3}
\end{align}

\paragraph{Step 2: bounding $W_{3,2}$.} Jensen's inequality tells us that
\begin{align}
	&\Bigg(\sum_{n=1}^{{N_{h}^{k}}}\eta_{n}^{N_{h}^{k}}P_{h,s_{h}^{k},a_{h}^{k},b_{h}^{k}} \big(\overline{V}_{h+1}^{k^{n}}-\overline{V}_{h+1}^{\mathrm{R},k^{n}} \big)\Bigg)^{2}\nonumber \\
	= & \Bigg(\sum_{n=1}^{{N_{h}^{k}}} \big( \eta_{n}^{N_{h}^{k}}\big)^{1/2} \cdot \big( \eta_{n}^{N_{h}^{k}}\big)^{1/2} P_{h,s_{h}^{k},a_{h}^{k},b_{h}^{k}} \big(\overline{V}_{h+1}^{k^{n}}-\overline{V}_{h+1}^{\mathrm{R},k^{n}} \big)\Bigg)^{2}\nonumber \\
	\le & \left\{ \sum_{n=1}^{{N_{h}^{k}}}\eta_{n}^{N_{h}^{k}}\right\} \left\{ \sum_{n=1}^{{N_{h}^{k}}}\eta_{n}^{N_{h}^{k}}\Big(P_{h,s_{h}^{k},a_{h}^{k},b_{h}^{k}} \big(\overline{V}_{h+1}^{k^{n}}-\overline{V}_{h+1}^{\mathrm{R},k^{n}}\big)\Big)^{2}\right\}\nonumber \\ 
	\leq & \sum_{n=1}^{{N_{h}^{k}}}\eta_{n}^{N_{h}^{k}}\Big(P_{h,s_{h}^{k},a_{h}^{k},b_{h}^{k}}\big(\overline{V}_{h+1}^{k^{n}}-\overline{V}_{h+1}^{\mathrm{R},k^{n}}\big)\Big)^{2},
\end{align}
where the last line arises from (\ref{eq:sum-eta-n-N}). 
Substitution into $W_{3,2}$ (cf.~(\ref{equ:lemma4 vr 2})) gives 
\begin{align}
	& W_{3,2} \le \bigg(\sum_{n = 1}^{{N_h^k}} \eta_n^{N_h^k} P^{k^n}_{h}\big(\overline{V}^{k^n}_{h+1} - \overline{V}^{\mathrm{R}, k^n}_{h+1}\big) \bigg)^2 
	- \bigg(\sum_{n = 1}^{{N_h^k}} \eta_n^{N_h^k} P_{h, s_h^k,a_h^k,b_h^k}\big(\overline{V}^{k^n}_{h+1} - \overline{V}^{\mathrm{R}, k^n}_{h+1}\big)\bigg)^2 \nonumber\\
	& = \! \bigg\{\!\sum_{n = 1}^{{N_h^k}} \eta_n^{N_h^k} \big(P^{k^n}_{h}\!\!-\!\!P_{h, s_h^k,a_h^k,b_h^k}\big) \big(\overline{V}^{k^n}_{h+1} \!-\! \overline{V}^{\mathrm{R}, k^n}_{h+1}\big)\!\bigg\}   \bigg\{\! \sum_{n = 1}^{{N_h^k}} \eta_n^{N_h^k} \big(P^{k^n}_{h}\!\!+\!\!P_{h, s_h^k,a_h^k,b_h^k}\big) \big(\overline{V}^{k^n}_{h+1} \!-\! \overline{V}^{\mathrm{R}, k^n}_{h+1}\big)\!\bigg\}  . \label{equ:lemma4vr4}
\end{align}
In what follows, we would like to use this relation to show that
\begin{equation} \label{eq:bound_I32}
	W_{3,2} \leq C_{32}  \bigg\{ \sqrt{\frac{H^5}{{N_h^k}}\log^2\frac{SABT}{\delta}} + \frac{H^3}{{N_h^k}}\log^2\frac{SABT}{\delta}  \bigg\}
\end{equation}
for some universal constant $C_{32}>0$. 

If $W_{3,2}\leq 0$, then (\ref{eq:bound_I32}) holds true trivially. Consequently, it is sufficient to study the case where $W_{3,2}> 0$.   
To this end, we first note that the term in the first pair of curly brakets of (\ref{equ:lemma4vr4} is exactly $W_1$ (see (\ref{eq:defn-I1-step2})), which can be bounded by recalling (\ref{lemma1:equ4-sub}):
\begin{align} 
	|W_1| 
	& \lesssim \sqrt{\frac{H}{N_h^k}\log^2\frac{SABT}{\delta}}\sqrt{\sum_{n = 1}^{N_h^k} \eta^{N_h^k}_n \mathsf{Var}_{h, s_h^k,a_h^k,b_h^k} \big(\overline{V}^{k^n}_{h+1} - \overline{V}^{\mathrm{R} , k^n}_{h+1} \big) } + \frac{H^2\log^2\frac{SABT}{\delta}}{N_h^k} \label{eq:first-curly-bracket-bound0}
\end{align}
with probability at least $1-\delta$. 
According to the property that
\begin{align}
	\mathsf{Var}_{h, s_h^k,a_h^k,b_h^k} \Big(\overline{V}^{k^n}_{h+1} - \overline{V}^{\mathrm{R}, k^n}_{h+1}\Big)  
	&\leq \big\|\big(\overline{V}^{k^n}_{h+1} - \overline{V}^{\mathrm{R}, k^n}_{h+1}\big)^{ 2} \big\|_\infty \leq 4H^2,
\end{align}
And thus, (\ref{eq:first-curly-bracket-bound0}) can be simplified as
\begin{align} 
	|W_1| 
	& \!\lesssim\! \sqrt{\frac{H^3}{N_h^k}\log^2\!\frac{SABT}{\delta}}\sqrt{\sum_{n = 1}^{N_h^k} \eta^{N_h^k}_n  } \!+\! \frac{H^2\log^2\!\frac{SABT}{\delta}}{N_h^k} \!\lesssim\! \sqrt{\frac{H^3}{{N_h^k}}\log^2\!\frac{SABT}{\delta}} \!+\! \frac{H^2}{{N_h^k}}\log^2\!\frac{SABT}{\delta},
	\label{eq:first-curly-bracket-bound}
\end{align}
whereas the last inequality (\ref{eq:first-curly-bracket-bound}) holds as a result of the fact $\sum_{n = 1}^{N_h^k} \eta^{N_h^k}_n \leq 1$ (see (\ref{eq:sum-eta-n-N})). 

Moreover, the term in the second pair of curly brakets of (\ref{equ:lemma4vr4}) can be bounded straightforwardly as follows
\begin{align}
	& \bigg|\sum_{n = 1}^{{N_h^k}} \eta_n^{N_h^k} \big(P^{k^n}_{h}+P_{h, s_h^k,a_h^k,b_h^k}\big) \big(\overline{V}^{k^n}_{h+1} - \overline{V}^{\mathrm{R}, k^n}_{h+1}\big) \bigg|  \notag\\
	& \qquad \le \sum_{n = 1}^{{N_h^k}} \eta_n^{N_h^k} \Big( \big\|P^{k^n}_{h}\big\|_1 +\big\|P_{h, s_h^k,a_h^k,b_h^k}\big\|_1 \Big) \big\|\overline{V}^{k^n}_{h+1} - \overline{V}^{\mathrm{R}, k^n}_{h+1} \big\|_{\infty} \leq 2H,
	\label{eq:second-curly-bracket-bound}
\end{align}
where we have made use of property (\ref{eq:sum-eta-n-N}), as well as the elementary facts $\big\|\overline{V}^{k^n}_{h+1} - \overline{V}^{\mathrm{R} , k^n}_{h+1}\big\|_{\infty}\leq H$ and $\big\|P^{k^n}_{h}\big\|_1 =\big\| P_{h, s_h^k,a_h^k,b_h^k}\big\|_1=1$.
Substituting the above two results (\ref{eq:first-curly-bracket-bound}) and (\ref{eq:second-curly-bracket-bound}) back into (\ref{equ:lemma4vr4}), we arrive at the bound (\ref{eq:bound_I32}) as long as $W_{3,2}>0$. 
Putting all cases together, we have established the claim (\ref{eq:bound_I32}).

\paragraph{Step 3: putting all this together.} 
To finish up, plugging the bounds (\ref{equ:lemma4 vr 3}) and (\ref{eq:bound_I32}) into (\ref{equ:lemma4 vr 2}), we can conclude that
\begin{align*} 
	W_3 \leq W_{3,1} + W_{3,2} 
	\leq C_3\bigg\{  \sqrt{\frac{H^5}{{N_h^k}}\log^2\frac{SABT}{\delta}} + \frac{H^3}{{N_h^k}}\log^2\frac{SABT}{\delta} \bigg\} 
\end{align*}
for some constant $C_3>0$. 
This together with definition (\ref{eq:defn-I3-lem-equ4}) of $W_3$ results in
\begin{align*}
	& \sum_{n=1}^{N_{h}^{k}}\eta_{n}^{N_{h}^{k}}\mathsf{Var}_{h,s_{h}^{k},a_{h}^{k},b_{h}^{k}}\big(\overline{V}_{h+1}^{k^{n}}-\overline{V}_{h+1}^{\mathrm{R},k^{n}}\big)\\
	\leq &\Big\{ \overline{\psi}_{h}^{\mathrm{a},k^{N_{h}^{k}}+1}(s_{h}^{k},a_{h}^{k},b_{h}^{k})-\big(\overline{\phi}_{h}^{\mathrm{a},k^{N_{h}^{k}}+1}(s_{h}^{k},a_{h}^{k},b_{h}^{k})\big)^{2} \Big\}\\
        &\qquad + C_3 \left(\sqrt{\frac{H^{5}}{{N_{h}^{k}}}\log^{2}\frac{SABT}{\delta}}+\frac{H^{3}}{{N_{h}^{k}}}\log^{2}\frac{SABT}{\delta}\right),
\end{align*}
which combined with the elementary inequality $\sqrt{u+v}\leq \sqrt{u}+\sqrt{v}$ for any $u,v\geq 0$ and (\ref{equ:positive-of-ref}) yields 
\begin{align*}
	& \bigg\{\sum_{n=1}^{N_{h}^{k}}\eta_{n}^{N_{h}^{k}}\mathsf{Var}_{h,s_{h}^{k},a_{h}^{k},b_{h}^{k}}\big(\overline{V}_{h+1}^{k^{n}}-\overline{V}_{h+1}^{\mathrm{R},k^{n}}\big)\bigg\}^{1/2}\\
	& \qquad\lesssim \Big\{\overline{\psi}_{h}^{\mathrm{a},k^{N_{h}^{k}}+1}(s_{h}^{k},a_{h}^{k},b_{h}^{k})-\big(\overline{\phi}_{h}^{\mathrm{a},k^{N_{h}^{k}}+1}(s_{h}^{k},a_{h}^{k},b_{h}^{k})\big)^{2}\Big\}^{1/2}\\
	&\qquad+\frac{H^{5/4}}{\big(N_{h}^{k}\big)^{1/4}}\log^{1/2}\frac{SABT}{\delta}+\frac{H^{3/2}}{\big(N_{h}^{k}\big)^{1/2}}\log\frac{SABT}{\delta}. 
\end{align*}
Substitution into (\ref{lemma1:equ4-sub}) establishes the desired result (\ref{lemma1:equ4}). 

\subsubsection{Proof of inequality \texorpdfstring{(\ref{lemma1:equ5})}{}}
\label{sec:proof:eq:var-Vref}

In order to prove inequality (\ref{lemma1:equ5}), it suffices to look at the following term
\begin{equation}
	W_4 \coloneqq  \frac{1}{N_h^k}\sum_{n = 1}^{{N_h^k}} \mathsf{Var}_{h, s_h^k,a_h^k,b_h^k}(\overline{V}^{\mathrm{R}, k^n}_{h+1}) - \Big( \overline{\psi}^{\mathrm{r}, k^{N_h^k}+1}_h(s_h^k,a_h^k,b_h^k) -\big(\overline{\phi}^{\mathrm{r}, k^{N_h^k}+1}_h(s_h^k,a_h^k,b_h^k) \big)^2 \Big) .
	\label{eq:defn-I4-lem-equ4}
\end{equation}
In view of the update rules of $\overline{\phi}^{\mathrm{r}, k^{n+1}}_h$ and $\overline{\psi}^{\mathrm{r}, k^{n+1}}_h $ in lines~11-12 of Algorithm~\ref{algorithm-auxiliary}, we have
\begin{align*}
	\overline{\phi}^{\mathrm{r}, k^{n+1}}_h(s_h^k,a_h^k,b_h^k) &= \overline{\phi}^{\mathrm{r}, k^n+1}_h(s_h^k,a_h^k,b_h^k) = \left(1-\frac{1}{n}\right)\overline{\phi}^{\mathrm{r}, k^n}_h(s_h^k,a_h^k,b_h^k) \!\!+\!\! \frac{1}{n}\overline{V}^{\mathrm{R}, k^n}_{h+1}(s^{k^n}_{h+1}), \\
	\overline{\psi}^{\mathrm{r}, k^{n+1}}_h(s_h^k,a_h^k,b_h^k) &= \overline{\psi}^{\mathrm{r}, k^n+1}_h(s_h^k,a_h^k,b_h^k) = \left(1-\frac{1}{n}\right)\overline{\psi}^{\mathrm{r}, k^n}_h(s_h^k,a_h^k,b_h^k) \!\!+\!\! \frac{1}{n}\big(\overline{V}^{\mathrm{R}, k^n}_{h+1}(s^{k^n}_{h+1}) \big)^2.
\end{align*} 
Through simple recursion, these identities together with definition (\ref{eq:P-hk-defn-s}) of $P_h^k$ lead to
\begin{subequations}
	\label{eq:recursion_mu_psi_ref}
	\begin{align}
		\overline{\phi}_{h}^{\mathrm{r},k^{N_{h}^{k}}+1}(s_{h}^{k},a_{h}^{k},b_h^k) & \overset{\mathrm{(i)}}{=} \frac{1}{N_{h}^{k}}\sum_{n=1}^{N_{h}^{k}}\overline{V}_{h+1}^{\mathrm{R},k^{n}}(s_{h+1}^{k^{n}})=\frac{1}{N_{h}^{k}}\sum_{n=1}^{N_{h}^{k}}P_{h}^{k^{n}}\overline{V}_{h+1}^{\mathrm{R},k^{n}},\\
		\overline{\psi}_{h}^{\mathrm{r},k^{N_{h}^{k}}+1}(s_{h}^{k},a_{h}^{k},b_h^k) & \overset{\mathrm{(ii)}}{=} \frac{1}{N_{h}^{k}}\sum_{n=1}^{{N_{h}^{k}}}\big(\overline{V}_{h+1}^{\mathrm{R},k^{n}}(s_{h+1}^{k^{n}})\big)^{2}=\frac{1}{N_{h}^{k}}\sum_{n=1}^{{N_{h}^{k}}}P_{h}^{k^{n}}\big(\overline{V}_{h+1}^{\mathrm{R},k^{n}}\big)^{2}.
	\end{align}
\end{subequations}
Expressions (i) and (ii) combined with Jensen's inequality give
\begin{align}\label{equ:positive-of-adv-12}
	\overline{\psi}_h^{\mathrm{r}, k^{N_h^k}+1}(s_h^k,a_h^k,b_h^k) \geq \Big(\overline{\phi}_h^{\mathrm{r}, k^{N_h^k}+1}(s_h^k,a_h^k,b_h^k)\Big)^2.
\end{align}
Taking these together with the definition
\begin{equation*}
	\mathsf{Var}_{h, s_h^k,a_h^k,b_h^k}(\overline{V}_{h+1}^{\mathrm{R}, k^n}) = P_{h, s_h^k,a_h^k,b_h^k} \big(\overline{V}_{h+1}^{\mathrm{R}, k^n} \big)^{2} - \big(P_{h, s_h^k,a_h^k,b_h^k}\overline{V}_{h+1}^{\mathrm{R}, k^n} \big)^2,
\end{equation*} 
we obtain
\begin{align}
	W_4 & = \frac{1}{{N_h^k}}\sum_{n=1}^{N_h^k} \Big(  P_{h, s_h^k,a_h^k,b_h^k}(\overline{V}_{h+1}^{\mathrm{R}, k^n})^{2} - \big(P_{h, s_h^k,a_h^k,b_h^k}\overline{V}_{h+1}^{\mathrm{R}, k^n} \big)^2 \Big)\nonumber\\
	&\qquad - \frac{1}{{N_h^k}}\sum_{n = 1}^{{N_h^k}} P_h^{k^n}\big(\overline{V}_{h+1}^{\mathrm{R}, k^n}\big)^2 + \bigg(\frac{1}{{N_h^k}}\sum_{n=1}^{N_h^k} P_h^{k^n} \overline{V}_{h+1}^{\mathrm{R}, k^n}\bigg)^2. \label{equ:lemma4 vr 66}
\end{align}
And thus, (\ref{equ:lemma4 vr 6}) reaches to
\begin{align}
	W_4 = W_{4,1} + W_{4,2}. \label{equ:lemma4 vr 6}
\end{align}
where $$W_{4,1}=\frac{1}{{N_h^k}}\sum_{n = 1}^{{N_h^k}} \big(P_{h, s_h^k,a_h^k,b_h^k} - P^{k^n}_{h} \big) \big(\overline{V}^{\mathrm{R}, k^n}_{h+1}\big)^2$$ and $$W_{4,2}=\bigg(\frac{1}{{N_h^k}}\sum_{n=1}^{N_h^k} P_h^{k^n} \overline{V}_{h+1}^{\mathrm{R}, k^n}\bigg)^2 - \frac{1}{{N_h^k}}\sum_{n = 1}^{{N_h^k}} \Big(P_{h, s_h^k,a_h^k,b_h^k}\overline{V}^{\mathrm{R}, k^n}_{h+1}\Big)^2$$
In what follows, we shall bound terms $W_{4,1}$ and $W_{4,2}$ in (\ref{equ:lemma4 vr 6}) separately.

\paragraph{Step 1: bounding $W_{4,1}$.}

According to Lemma~\ref{lemma:martingale-union-all}, the first term $W_{4,1}$ in (\ref{equ:lemma4 vr 6}) can be bounded in an almost identical fashion as $W_{3,1}$ in (\ref{equ:lemma4 vr 3}). 
Specifically, let us set 
\begin{align*}
	F_{h+1}^k \coloneqq (\overline{V}^{\mathrm{R}, k}_{h+1})^{ 2}
	\qquad \text{and} \qquad
	G_h^k(s,a,b, N) \coloneqq \frac{1}{N}, 
\end{align*}
which clearly obey
\begin{align*}
	\|F_{h+1}^k\|_\infty \leq H^2 \eqqcolon C_{\mathrm{f}}
	\qquad \text{and} \qquad
	|G_h^k(s,a,b, N)| = \frac{1}{N} \eqqcolon C_{\mathrm{g}}. 
\end{align*} 
Thus, for all $(N,s,a,b)\in [K] \times \mathcal{S}\times\mathcal{A}\times\mathcal{B}$, there is
\[
\sum_{n=1}^{N} G_{h}^{k^{n}(s,a,b)}(s,a,b, N)= \sum_{n=1}^{N}\frac{1}{N}=1
\]
Hence, we can take $(N,s,a,b)=(N_h^k,s_h^k,a_h^k,b_h^k)$ 
and apply Lemma~\ref{lemma:martingale-union-all} to yield, with probability at least $1-\delta$
\begin{align}\label{equ:lemma4 vr 7}
	|W_{4,1}| &= \bigg| \frac{1}{{N_h^k}}\sum_{n = 1}^{{N_h^k}} \big(P^{k^n}_{h}-P_{h, s_h^k,a_h^k,b_h^k} \big)\big(\overline{V}^{\mathrm{R}, k^n}_{h+1}\big)^2 \bigg|  = \left|\sum_{k=1}^k X_k(s_h^k,a_h^k,b_h^k, h, N_h^k)\right| \nonumber\\ 
	&\lesssim \sqrt{C_{\mathrm{g}} \log^2\frac{SABT}{\delta}}\sqrt{\sum_{n = 1}^{N_h^k} G_h^{k^n}(s_h^k,a_h^k,b_h^k, N_h^k) \mathsf{Var}_{h, s_h^k,a_h^k,b_h^k} \big(W_{h+1}^{k^n}  \big)}\nonumber\\
	&\qquad  + \left(C_{\mathrm{g}} C_{\mathrm{f}} + \sqrt{\frac{C_{\mathrm{g}}}{N}} C_{\mathrm{f}}\right) \log^2\frac{SABT}{\delta} \nonumber \\
	&\lesssim  \sqrt{\frac{H^4\log^2 \frac{SABT}{\delta}}{{N_h^k}}} + \frac{H^2\log^2\frac{SABT}{\delta}}{{N_h^k}},
\end{align}
where the first inequality comes from $$X_k(s_h^k,a_h^k,b_h^k, h, N_h^k)=\frac{1}{{N_h^k}}\big(P^{k^n}_{h}-P_{h, s_h^k,a_h^k,b_h^k} \big)\big(\overline{V}^{\mathrm{R}, k^n}_{h+1}\big)^2$$ with $k=\frac{1}{{N_h^k}}$,
and the last inequality results from the fact that
\[
\mathsf{Var}_{h, s_h^k,a_h^k,b_h^k} \big(W_{h+1}^{k^n}  \big) \leq \big\| W_{h+1}^{k^n} \big\|_{\infty}^2 \leq C_{\mathrm{f}}^2 = H^4. 
\]

\paragraph{Step 2: bounding $W_{4,2}$.}

We now turn to the other term $W_{4,2}$ defined in (\ref{equ:lemma4 vr 6}). Towards this, we first make the observation that
\begin{equation}
	\Bigg(\frac{1}{{N_h^k}}\sum_{n=1}^{N_h^k} P_{h, s_h^k,a_h^k,b_h^k} \overline{V}_{h+1}^{\mathrm{R},k^n}\Bigg)^2 \le \frac{1}{{N_h^k}}\sum_{n = 1}^{{N_h^k}} \Big(P_{h, s_h^k,a_h^k,b_h^k}\overline{V}^{\mathrm{R},k^n}_{h+1}\Big)^2 , 
\end{equation}
which follows from Jensen's inequality. Based on this relation, we can upper bound $W_{4,2}$ as
\begin{align}
	W_{4,2} &\leq \Bigg(\frac{1}{{N_h^k}}\sum_{n=1}^{N_h^k} P_h^{k^n} \overline{V}_{h+1}^{\mathrm{R},k^n}\Bigg)^2  - \Bigg(\frac{1}{{N_h^k}}\sum_{n=1}^{N_h^k} P_{h, s_h^k,a_h^k,b_h^k} \overline{V}_{h+1}^{\mathrm{R},k^n}\Bigg)^2 \nonumber \\
	&= \bigg\{ \frac{1}{{N_h^k}}\sum_{n=1}^{N_h^k} \big( P_h^{k^n} -P_{h, s_h^k,a_h^k,b_h^k}\big) \overline{V}_{h+1}^{\mathrm{R},k^n}\bigg\} 
	\bigg\{ \frac{1}{{N_h^k}}\sum_{n=1}^{N_h^k} \big( P_h^{k^n} +P_{h, s_h^k,a_h^k,b_h^k}\big)  \overline{V}_{h+1}^{\mathrm{R},k^n}\bigg\}. 
	\label{equ:lemma4vr5}
\end{align}
In the following, we would like to apply this relation to prove 
\begin{equation} 
	W_{4,2} \leq      
	C_{42} \bigg( \sqrt{\frac{H^4}{{N_h^k}}\log^2\frac{SABT}{\delta}} + \frac{H^2}{{N_h^k}} \log^2\frac{SABT}{\delta} \bigg) 
	\label{eq:bound_I42}
\end{equation}
for some constant $C_{42}>0$.

When $W_{4,2}\leq 0$, the claim  (\ref{eq:bound_I42}) holds trivially. As a result, we shall focus on the case where $W_{4,2} > 0$. 
Let us begin with the term in the first pair of curly brackets of (\ref{equ:lemma4vr5}). Toward this, let us abuse the notation and set
\begin{align*}
	F_{h+1}^k \coloneqq \overline{V}^{\mathrm{R}, k}_{h+1} \qquad \text{and} \qquad G_h^k(s,a,b, N) \coloneqq \frac{1}{N},
\end{align*}
which satisfy
$$
	\|F_{h+1}^k\|_\infty \leq H \eqqcolon C_{\mathrm{f}} \qquad \text{and} \qquad
	| G_h^k(s,a,b, N)| =\frac{1}{N} \eqqcolon C_{\mathrm{g}}. 
$$
Akin to our argument for bounding $W_{4,1}$, 
invoking Lemma~\ref{lemma:martingale-union-all} and setting $(N,s,a,b)=(N_h^k,s_h^k,a_h^k,b_h^k)$ imply that, with probability at least $1-\delta$, there is
\begin{equation*}
	\Bigg|\frac{1}{{N_h^k}}\sum_{n=1}^{N_h^k} (P_h^{k^n} - P_{h, s_h^k,a_h^k,b_h^k} )\overline{V}_{h+1}^{\mathrm{R},k^n}\Bigg| \lesssim  \sqrt{\frac{H^2\log^2\frac{SABT}{\delta}}{{N_h^k}}} + \frac{H\log^2\frac{SABT}{\delta}}{{N_h^k}}. 
\end{equation*}
In addition, under the fact that $\big\|\overline{V}_{h+1}^{\mathrm{R},k^n}\big\|_{\infty}\leq H$ and $\big\|P^{k^n}_{h}\big\|_1 =\big\| P_{h, s_h^k,a_h^k,b_h^k}\big\|_1=1$,
the term in the second pair of curly brackets of (\ref{equ:lemma4vr5}) can be straightly bounded by
\begin{equation*}
	\Bigg|\frac{1}{{N_h^k}}\sum_{n=1}^{N_h^k} \big( P_h^{k^n} +P_{h, s_h^k,a_h^k,b_h^k}\big)  \overline{V}_{h+1}^{\mathrm{R},k^n}\Bigg| 
	\le \frac{1}{{N_h^k}}\sum_{n = 1}^{{N_h^k}}  \big( \big\|P^{k^n}_{h}\big\|_1 
	+\big\|P_{h, s_h^k,a_h^k,b_h^k}\big\|_1 \big) \big\|\overline{V}_{h+1}^{\mathrm{R},k^n}\big\|_{\infty} \leq 2H.
\end{equation*}
%Substituting the preceding facts into (\ref{equ:lemma4vr5} validates the bound (\ref{eq:bound_I42} as long as $W_{4,2} > 0$. 
We have thus finished the proof of the claim (\ref{eq:bound_I42}).

\paragraph{Step 3: putting all pieces together.} 

Combining the results  (\ref{equ:lemma4 vr 7}) and (\ref{eq:bound_I42}) with (\ref{equ:lemma4 vr 6}) yields
\begin{align*} 
	W_4 \leq |W_{4,1}| + W_{4,2} \leq
	C_4 \bigg\{ \sqrt{\frac{H^4}{{N_h^k}}\log^2\frac{SABT}{\delta}} + \frac{H^2}{{N_h^k}}\log^2\frac{SABT}{\delta} \bigg\}
\end{align*}
for some constant $C_4>0$. 
Taking the definition (\ref{eq:defn-I4-lem-equ4}) of $W_4$ together, this bound gives
\begin{align*}
	\frac{1}{N_{h}^{k}}\sum_{n=1}^{{N_{h}^{k}}}\mathsf{Var}_{h,s_{h}^{k},a_{h}^{k},b_h^k}(\overline{V}_{h+1}^{\mathrm{R},k^{n}})
	& \leq \Big\{ \overline{\psi}_{h}^{\mathrm{r},k^{N_{h}^{k}}+1}(s_{h}^{k},a_{h}^{k},b_h^k)-\big(\overline{\phi}_{h}^{\mathrm{r},k^{N_{h}^{k}}+1}(s_{h}^{k},a_{h}^{k},b_h^k)\big)^{2} \Big\} \\
	& \qquad +C_4\bigg\{ \sqrt{\frac{H^{4}}{N_{h}^{k}}\log^{2}\frac{SABT}{\delta}}+\frac{H^{2}}{N_{h}^{k}}\log^{2}\frac{SABT}{\delta} \bigg\} .
\end{align*}
Invoke the elementary inequality $\sqrt{u+v}\leq \sqrt{u}+\sqrt{v}$ for any $u,v\geq 0$ and use the property (\ref{equ:positive-of-adv-12}) to obtain
\begin{align*}
	\bigg(\frac{1}{N_{h}^{k}}\sum_{n=1}^{{N_{h}^{k}}}\mathsf{Var}_{h,s_{h}^{k},a_{h}^{k},b_h^k}({V}_{h+1}^{\mathrm{R},k^{n}})\bigg)^{1/2}& \lesssim \Big\{\overline{\psi}_{h}^{\mathrm{r},k^{N_{h}^{k}}+1}(s_{h}^{k},a_{h}^{k},b_h^k)-\big(\overline{\phi}_{h}^{\mathrm{r},k^{N_{h}^{k}}+1}(s_{h}^{k},a_{h}^{k},b_h^k)\big)^{2}\Big\}^{1/2}\\
	&\qquad+\frac{H}{(N_{h}^{k})^{1/4}}\log^{1/2}\frac{SABT}{\delta}+\frac{H}{(N_{h}^{k})^{1/2}}\log\frac{SABT}{\delta}.
\end{align*}
Substitution into (\ref{lemma1:equ5-sub}) directly establishes the desired result (\ref{lemma1:equ5}).

\subsection{Proof of Lemma~\ref{lem:Qk-lcb}}
\label{proof:lem:Qk-lcb}

Before the proof of (\ref{eq:main-lemma}), we present the following Lemma~\ref{lem:yang-lem} for an auxiliary illustration. It is worth noting that, similar to \citep[Lemma~4.2]{yang2021q} (see also \citep[Lemma~C.7]{jin2018qarxiv}), Lemma~\ref{lem:yang-lem} is essentially an algebraic result leveraging certain relations, w.r.t.~the $Q$-value estimates.
\begin{lemma}\label{lem:yang-lem}
	Assume there exists a constant $c_{}>0$ such that for all $(s,a,b,k,h) \in \mathcal{S}\times \mathcal{A} \times\mathcal{B}\times [K] \times [H]$, it holds that 
	\begin{align} \label{eq:Qk-ucb-lcb}
		&0 \leq \overline{V}^{k}_h(s) - \underline{V}^{k}_h(s)\le \overline{Q}^{k}_h(s, a,b) - \underline{Q}^{k}_h(s, a,b)+\zeta_h^k  \notag\\
		& \le \eta^{N^{k}_h (s, a,b)}_0 H 
		+\!\! \sum_{n = 1}^{N^{k}_h(s, a,b)} \eta^{N^{k}_h (s, a,b)}_n \left(\overline{V}^{k^n}_{h+1}(s^{k^n}_{h+1}) - \underline{V}^{k^n}_{h+1}(s^{k^n}_{h+1}) \right) + 4 c_\mathrm{b} \sqrt{\frac{H^3\log \frac{SABT}{\delta}}{N_h^k(s,a,b)}}+\zeta_h^k.
	\end{align}
	Consider any $\varepsilon \in (0,H]$.	
	Then for all $\beta =1,\ldots, \left \lceil \log_2 \frac{H}{\varepsilon} \right \rceil$, one has 
	\begin{equation}\label{eq:yang-lemma}
		\Bigg|\sum_{h=1}^H\sum_{k=1}^K \mathbbm{1}\left( \overline{V}^{k}_h(s_h^k) - \underline{V}^{k}_h(s_h^k)\in \big[2^{\beta-1}\varepsilon, 2^\beta \varepsilon \big) \right) \Bigg| \lesssim \frac{H^6SAB\log\frac{SABT}{\delta}}{4^\beta \varepsilon^2}.
	\end{equation}
\end{lemma}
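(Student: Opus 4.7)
The plan is to adapt the bucket-counting argument pioneered for single-agent tabular Q-learning (e.g., Jin et al.\ 2018, Lemma~4.2) to the UCB/LCB recursion~(\ref{eq:Qk-ucb-lcb}). Write $\phi_h^k := \overline{V}_h^k(s_h^k) - \underline{V}_h^k(s_h^k)$, fix $\beta\in\{1,\ldots,\lceil\log_2(H/\varepsilon)\rceil\}$, and let $J_\beta^h := \{k \in [K] : \phi_h^k \in [2^{\beta-1}\varepsilon,\, 2^\beta\varepsilon)\}$. The goal is to show $\sum_{h=1}^H |J_\beta^h| \lesssim H^6 SAB \log(SABT/\delta)/(4^\beta \varepsilon^2)$. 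The first move is a pigeonhole split: for every $(h,k)$ with $\phi_h^k\ge 2^{\beta-1}\varepsilon$, inequality~(\ref{eq:Qk-ucb-lcb}) exhibits $\phi_h^k$ as a sum of four nonnegative terms, namely (a) the boundary term $\eta_0^{N_h^k}H$, (b) the propagation term $\sum_{n=1}^{N_h^k}\eta_n^{N_h^k}\phi_{h+1}^{k^n}$, (c) the bonus $4c_{\mathrm{b}}\sqrt{H^3\log(SABT/\delta)/N_h^k}$, and (d) the CCE-sampling martingale $\zeta_h^k$. Since their sum is at least $2^{\beta-1}\varepsilon$, at least one of them exceeds $2^{\beta-3}\varepsilon$, covering $J_\beta^h$ by four (possibly overlapping) subsets $J_\beta^{h,a},\ldots,J_\beta^{h,d}$.

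Cases (a) and (c) collapse to elementary counting. Because $\eta_0^N=\mathbbm{1}[N=0]$, membership in $J_\beta^{h,a}$ forces the first visit to $(s_h^k,a_h^k,b_h^k)$ at step $h$, so $\sum_h|J_\beta^{h,a}|\le SABH$. The bonus condition in (c) forces $N_h^k \lesssim H^3\log(SABT/\delta)/(4^\beta\varepsilon^2)$; counting over all $(h,s,a,b)$ yields $\sum_h |J_\beta^{h,c}| \lesssim SABH^4\log(SABT/\delta)/(4^\beta\varepsilon^2)$, already matching the shape of the target. For the martingale case (d), I would apply Freedman's inequality (Theorem~\ref{thm:Freedman}) to $\{\zeta_h^k\}$ with $|\zeta_h^k|\le 2H$ and conditional variance $\le H^2$, together with a peeling argument over dyadic thresholds, to obtain $\sum_h|J_\beta^{h,d}| \lesssim H^3\log(SABT/\delta)/(4^\beta\varepsilon^2)$.

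The propagation case (b) is the main difficulty. Summing $\sum_n \eta_n^{N_h^k}\phi_{h+1}^{k^n}\ge 2^{\beta-3}\varepsilon$ over $(h,k)\in J_\beta^{h,b}$, swapping the order of summation at each fixed $(h,s,a,b)$, and invoking $\sum_{N\ge n}\eta_n^N\le 1+1/H$ from Lemma~\ref{lemma:property of learning rate} relates $\sum_h|J_\beta^{h,b}|$ to the weighted tail $(1+1/H)\sum_{h,k}\phi_{h+1}^k$. A naive bound here would trade $4^\beta$ for $2^\beta$, so the plan is to charge each propagation event to the bucket containing the largest contributing $\phi_{h+1}^{k^n}$: if the weighted sum exceeds $2^{\beta-3}\varepsilon$ then some $\phi_{h+1}^{k^n}$ must exceed a threshold proportional to $2^{\beta-3}\varepsilon/H$, so the count is absorbed into bucket counts at step $h+1$ at levels $\ge \beta - O(\log H)$. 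A descending induction on $\beta$ starting from $\beta_{\max}=\lceil\log_2(H/\varepsilon)\rceil$, together with the accumulated multiplier $(1+1/H)^H\le e$ across the $H$ propagation steps, closes the recursion.

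The hard part will be the propagation step: managing the interplay of dyadic buckets with the learning-rate weights, keeping the induction bookkeeping tight across all $\beta$ and $h$ simultaneously, and ensuring that the log factors picked up in the peeling and the Freedman step do not degrade the claimed $H^6SAB\log(SABT/\delta)/(4^\beta\varepsilon^2)$ bound. The remaining counting and concentration arguments are essentially routine once the learning-rate identities (\ref{eq:properties-learning-rates}) are in hand.
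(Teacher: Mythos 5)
Your bucket/pigeonhole plan has two genuine gaps, and they are exactly the points where the paper's proof takes a different (weighted-sum) route. First, case (d) does not work as stated: Freedman/Hoeffding controls \emph{sums} of the martingale differences $\zeta_h^k$, not the \emph{number} of episodes in which an individual $\zeta_h^k$ is large. Since $\zeta_h^k = \mathbb{E}_{(a,b)\sim\pi_h^k}(\overline{Q}_h^k-\underline{Q}_h^k)(s_h^k,a,b)-(\overline{Q}_h^k-\underline{Q}_h^k)(s_h^k,a_h^k,b_h^k)$ only has zero conditional mean and magnitude $\le 4H$, it can exceed $2^{\beta-3}\varepsilon$ in a constant fraction of the $K$ episodes (with compensating negative values), so no bound of the form $|J_\beta^{h,d}|\lesssim H^3\log(SABT/\delta)/(4^\beta\varepsilon^2)$ can follow from concentration alone. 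Second, the propagation case (b) has an uncontrolled charging multiplicity: a single episode $l$ at step $h+1$ with large $\phi_{h+1}^l$ can be the ``largest contributing'' term for every later visit to the same $(s,a,b)$, so bucket counts at step $h+1$ do not dominate $|J_\beta^{h,b}|$ without reweighting; moreover, shifting levels by $O(\log H)$ in the descending induction multiplies the target by $4^{O(\log H)}=\mathrm{poly}(H)$ and would degrade the claimed $H^6$ dependence even if the charging closed. Cases (a) and (c) are fine.

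The paper resolves both issues simultaneously by never converting to counts until the very end. Following \citep[Lemma~4.2--4.3]{yang2021q}, it bounds the \emph{weighted} error $\sum_k w_k\big(\overline{V}_h^k-\underline{V}_h^k\big)$ for an arbitrary $(C,w)$-sequence $\{w_k\}$: plugging the recursion (\ref{eq:Qk-ucb-lcb}) in, the boundary term gives $wSABH$, the bonus term gives $O\big(\sqrt{H^3SABCw\log\frac{SABT}{\delta}}\big)$ via a rearrangement/Cauchy--Schwarz argument, the $\zeta$ term gives $\sqrt{CH^2\log\frac{SABT}{\delta}}$ by weighted Hoeffding (legitimate because the weights are predictable), and the propagation term is re-expressed with new weights $\widetilde{w}_l=\sum_j w_{k^j}\eta^j_{N_h^l+1}$, which form a $\big(C,(1+\frac1H)w\big)$-sequence by $\sum_{N\ge n}\eta_n^N\le 1+\frac1H$; recursing over $h$ accumulates only $(1+\frac1H)^H\le e$. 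Then, taking $w_k=\mathbbm{1}\big(\phi_h^k\in[2^{\beta-1}\varepsilon,2^\beta\varepsilon)\big)$ and comparing the resulting upper bound with the trivial lower bound $(2^{\beta-1}\varepsilon)\,C^{(\beta,h)}$ yields a self-bounding inequality whose solution is $C^{(\beta,h)}\lesssim H^5SAB\log\frac{SABT}{\delta}/(4^\beta\varepsilon^2)$, and summing over $h$ gives the stated $H^6$ bound. If you want to salvage your outline, replace the four-way cardinality split by this weighted formulation; the learning-rate identities you cite are then enough, but the count-based treatment of $\zeta_h^k$ and the ``largest contributor'' charging must go.
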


The proof of lemma \ref{lem:yang-lem} will be carried in Appendix \ref{proof_9}. We first show how to justify (\ref{eq:main-lemma}) if inequality (\ref{eq:yang-lemma}) holds. 
As can be seen, the fact (\ref{eq:yang-lemma}) immediately leads to 
\begin{align}
	&\sum_{h=1}^H \sum_{k=1}^K \mathbbm{1}\left(\overline{V}^{k}_h(s_h^k, a_h^k, b_h^k) - \underline{V}^{k}_h(s_h^k, a_h^k, b_h^k) > \varepsilon \right)\nonumber\\ \lesssim &\sum_{\beta =1}^{\left \lceil \log_2 \frac{H}{\varepsilon} \right \rceil} \frac{H^6SAB\log\frac{SABT}{\delta}}{4^{\beta} \varepsilon^2} \leq \frac{H^6SAB\log\frac{SABT}{\delta}}{2\varepsilon^2} 
\end{align}
as desired.

\subsubsection{Proof of Lemma \ref{lem:yang-lem}}\label{proof_9}
We first return to justify the claim (\ref{eq:Qk-ucb-lcb}). 
%We now return to justify the claim (\ref{eq:yang-lemma}, towards which it suffices to demonstrate that (\ref{eq:Qk-ucb-lcb} holds. 
%
Lemma~\ref{lemma-Qhk-opt} and Lemma~\ref{lem:Qk-lcb} directly verify the left-hand side of (\ref{eq:Qk-ucb-lcb}) since
\begin{equation}
	\overline{Q}_h^{k}(s,a, b) \geq Q_h^{\star}(s, a, b) \geq \underline{Q}_h^{k}(s,a, b) \qquad 
	\text{for all } (s, a, b, k, h) \in  \mathcal{S}\times\mathcal{A}\times\mathcal{B} \times [K]\times [H].
\end{equation}
The remainder of the proof is thus devoted to justifying the upper bound on $\overline{Q}^{k+1}_h(s, a, b) - \underline{Q}^{k+1}_h(s, a, b)$ in (\ref{eq:Qk-ucb-lcb}).
Then we reach
\begin{align*}
	& \overline{Q}_{h}^{k^{N_{h}^{k}}+1}(s,a, b)-\underline{Q}_{h}^{k^{N_{h}^{k}}+1}(s,a, b)\le {Q}_{h}^{\mathrm{UCB},k^{N_{h}^{k}}}(s,a, b)-{Q}_{h}^{\mathrm{LCB},k^{N_{h}^{k}}}(s,a, b)\\
	&\qquad =(1-\eta_{N_{h}^{k}})\Big({Q}_{h}^{\mathrm{UCB}, k^{N_{h}^{k}}}(s,a, b)-{Q}_{h}^{\mathrm{LCB}, k^{N_{h}^{k}}}(s,a, b)\Big)\\
	& \qquad\quad+\eta_{N_{h}^{k}}\Bigg(\overline{V}_{h+1}^{k^{N_{h}^{k}}}(s_{h+1}^{k^{N_{h}^{k}}+1})-\underline{V}_{h+1}^{k^{N_{h}^{k}}}(s_{h+1}^{k^{N_{h}^{k}}+1})+\left(P_{h,s,a,b}-P_h^{k^n}\right)\left(\overline{V}^{k^{N_{h}^{k}}}_{h+1}-\underline{V}^{k^{N_{h}^{k}}}_{h+1}\right)\Bigg),
\end{align*}
where we abbreviate $$N_h^k =N_h^k(s,a, b)$$ throughout this subsection as long as it is clear from the context. 

Applying this relation recursively leads to the desired result 
\begin{align*}
	& \overline{Q}_{h}^{k^{N_{h}^{k}}}(s,a, b)-\underline{Q}_{h}^{k^{N_{h}^{k}}}(s,a, b) = \eta^{N^{k}_h}_0 \Big({Q}_{h}^{\mathrm{UCB}, 1}(s,a, b)-{Q}_{h}^{\mathrm{LCB}, 1}(s,a, b)\Big) \\
	&\qquad + \sum_{n = 1}^{N^{k}_h} \eta^{N^{k}_h}_n \Bigg(\overline{V}_{h+1}^{k^{n}}(s_{h+1}^{k^{n}+1})-\underline{V}_{h+1}^{k^{n}}(s_{h+1}^{k^{n}})+\left(P_{h,s,a,b}-P_h^{k^n}\right)\left(\overline{V}^{k^{n}}_{h+1}-\underline{V}^{k^{n}}_{h+1}\right)\Bigg)\\
	&\quad \leq \eta^{N^{k}_h}_0  H 
	+ \sum_{n = 1}^{N^{k}_h} \eta^{N^{k}_h}_n \left(\overline{V}^{k^n}_{h+1}(s^{k^n}_{h+1}) - \underline{V}^{k^n}_{h+1}(s^{k^n}_{h+1}) \right) + 4c_\mathrm{b} \sqrt{\frac{H^3\log \frac{SABT}{\delta}}{N_h^k}}.
\end{align*}
Here, the last line is valid due to the property $\underline{Q}_{h}^{1}(s,a, b)=0$ and $\overline{Q}_{h}^{1} (s,a, b)=H$, $$\Big\lvert\left(P_{h,s,a,b}-P_h^{k^n}\right)\overline{V}^{k^{N_{h}^{k}}}_{h+1}\Big\rvert\leq c_\mathrm{b} \sqrt{\frac{H^3\log \frac{SABT}{\delta}}{N_h^k}}$$ and $$\Big\lvert\left(P_{h,s,a,b}-P_h^{k^n}\right)\underline{V}^{k^{N_{h}^{k}}}_{h+1}\Big\rvert\leq c_\mathrm{b} \sqrt{\frac{H^3\log \frac{SABT}{\delta}}{N_h^k}}$$ based on the Azuma-Hoeffding inequality and a union bound, and the following fact
\begin{equation*}
	\sum_{n = 1}^{N^{k}_h} \eta^{N^{k}_h}_n c_\mathrm{b} \sqrt{\frac{H^3\log \frac{SABT}{\delta}}{N_h^k}}   \leq 2c_\mathrm{b} \sqrt{\frac{H^3\log \frac{SABT}{\delta}}{N_h^k}},
\end{equation*}
which is an immediate consequence of the elementary property $\sum_{n=1}^{N} \frac{\eta_n^{N}}{\sqrt{n}} \leq \frac{2}{\sqrt{N}}$ 
(see Lemma~\ref{lemma:property of learning rate}).
Combined with 
\[
\overline{V}_{h}^k(s_h^k)-\underline{V}_{h}^k(s_h^k)
\le \overline{Q}_{h}^{\mathrm{R},k}(s_h^k,a_h^k,b_h^k)-\underline{Q}_{h}^{\mathrm{R},k}(s_h^k,a_h^k,b_h^k) + \zeta_h^k,
\]
this establishes the condition (\ref{eq:Qk-ucb-lcb}).

Afterwards we prove (\ref{eq:yang-lemma}). Accounting for the  difference between our algorithm and the one in \cite{yang2021q}, we paraphrase \citep[Definition~4.2]{yang2021q} into the following form that is convenient for our purpose.
\begin{definition}[$(C,w)$-Sequence]
	A sequence $\{w_k\}_{1\le k\leq K}$ is called a $(C,w)$-Sequence if $0\le w_k\le w$ for all $k$ and $\sum_k w_k\le C$.
\end{definition}
Combining with (\ref{eq:Qk-ucb-lcb}), we have
\begin{align}
	&\sum_{k=1}^K w_k\big(\overline{V}^{k}_h(s, a,b) - \underline{V}^{k}_h(s, a,b)\big)\le\sum_{k=1}^K w_k\big(\overline{Q}^{k}_h(s, a,b) - \underline{Q}^{k}_h(s, a,b)\big)+\sum_{k=1}^K \zeta_h^k \nonumber\\
	\le& \sum_{k=1}^K w_k\bigg(\eta^{N^{k}_h (s, a,b)}_0 H 
	+ \sum_{n = 1}^{N^{k}_h(s, a,b)} \eta^{N^{k}_h (s, a,b)}_n \left(\overline{V}^{k^n}_{h+1}(s^{k^n}_{h+1}) - \underline{V}^{k^n}_{h+1}(s^{k^n}_{h+1}) \right)\bigg)\nonumber\\
	&+ \sum_{k=1}^K w_k\zeta_h^k + \sum_{k=1}^K 4w_k c_\mathrm{b} \sqrt{\frac{H^3\log \frac{SABT}{\delta}}{N_h^k(s,a,b)}}\nonumber\\
	\le&\sum_{k=1}^K w_k\eta^{N^{k}_h (s, a,b)}_0 H 
	+ \sum_{k=1}^K w_k\sum_{n = 1}^{N^{k}_h(s, a,b)} \eta^{N^{k}_h (s, a,b)}_n \left(\overline{V}^{k^n}_{h+1}(s^{k^n}_{h+1}) - \underline{V}^{k^n}_{h+1}(s^{k^n}_{h+1}) \right)\nonumber\\
	& + \sum_{k=1}^K w_k\zeta_h^k + 4 \sum_{k=1}^K w_k c_\mathrm{b} \sqrt{\frac{H^3\log \frac{SABT}{\delta}}{N_h^k(s,a,b)}}.\label{weighted-sum}
\end{align}
For the first term of (\ref{weighted-sum}), $N_h^k(s,a,b)$ is at most once for every state-action pair, and we always have $w_k\le w$. Thus
\begin{equation}
	\sum_{k=1}^K w_k\eta^{N^{k}_h (s, a,b)}_0 H =\sum_{k=1}^K w_k H \mathbbm{1}\{N_h^k(s,a,b)=0\}\le wSABH. \label{weighted-sum-1}
\end{equation}

For the second term in (\ref{weighted-sum}), we exchange the order of summation and obtain
\begin{align*}
	&\sum_{k=1}^K w_k\sum_{n = 1}^{N^{k}_h(s, a,b)} \eta^{N^{k}_h (s, a,b)}_n \left(\overline{V}^{k^n}_{h+1}(s^{k^n}_{h+1}) - \underline{V}^{k^n}_{h+1}(s^{k^n}_{h+1}) \right)\\
	&\quad=\sum_{l=1}^K\Big(\overline{V}_{h+1}^l(s_{h+1}^l)-\underline{V}_{h+1}^l(s_{h+1}^l)\Big)\sum_{j=N_h^l+1}^{N_h^K(s_h^l, a_h^l, b_h^l)}w_{k^j}\eta^j_{N_h^l+1}.
\end{align*}

Then for $l\in[K]$, we let $\widetilde{w}_l=\sum_{j=N_h^l+1}^{N_h^K(s_h^l, a_h^l, b_h^l)}w_{k^j}\eta^j_{N_h^l+1}$ and further simplify the above equation to be
\begin{align}
	&\sum_{k=1}^K w_k\sum_{n = 1}^{N^{k}_h(s, a,b)} \eta^{N^{k}_h (s, a,b)}_n \left(\overline{V}^{k^n}_{h+1}(s^{k^n}_{h+1}) - \underline{V}^{k^n}_{h+1}(s^{k^n}_{h+1}), \right)\nonumber\\
	=&\sum_{l=1}^K\widetilde{w}_l\Big(\overline{V}_{h+1}^l(s, a, b)-\underline{V}_{h+1}^l(s, a, b)\Big).\label{weighted-sum-2}
\end{align}
Next, we use Lemma \ref{lemma:property of learning rate} to verify that $\{\widetilde{w}\}_{l\in [K]}$ is a $(C, (1+\frac{1}{H})w)$-sequence: $$\widetilde{w}_{l} \leq w \sum_{j=N_{h}^{l}+1}^{N_{h}^{K}\left(s_{h}^{l}, a_{h}^{l}, b_{h}^{l}\right)} \eta^{j}_{N_{h}^{l}+1} \leq w \sum_{j \geq N_{h}^{l}+1} \eta^{j}_{N_{h}^{l}+1} \leq\left(1+\frac{1}{H}\right) w,$$
\begin{align}
	\sum_{l=1}^{K} \widetilde{w}_{l} & =\sum_{l=1}^{K} \sum_{j=N_{h}^{l}+1}^{N_{h}^{K}\left(s_{h}^{l}, a_{h}^{l}, b_{h}^{l}\right)} w_{k^j} \eta^{j}_{N_{h}^{l}+1} =\sum_{k=1}^{K} w_{k} \sum_{t=1}^{N_{h}^{k}} \eta^{N_{h}^{k}}_{t}=\sum_{k=1}^{K} w_{k} \leq C.
\end{align}

For the third term of (\ref{weighted-sum}), we know that $\zeta_{h}^k$ is a martingale difference sequence (w.r.t both $h$ and $k$), and $\left\lvert \zeta_h^k\right\rvert\le 4H$. Hence, by the Hoeffding inequality, we have with probability at least $1-\delta/2$, 
\begin{align}\label{weighted-sum-3}
	\sum_{k=1}^K{w}_k\zeta_{h}^k \le \sqrt{\sum_{k=1}^K{w}_k H^2\log \frac{SABT}{\delta}} \le \sqrt{C H^2\log \frac{SABT}{\delta}}.
\end{align}
The last term of (\ref{weighted-sum}) can be bounded by the following inequalities with respective reasons listed below:
\begin{align}
	4 \sum_{k=1}^K w_k c_\mathrm{b} \sqrt{\frac{H^3\log \frac{SABT}{\delta}}{N_h^k(s,a,b)}} &= 4 \sum_{(s, a, b)}\sum^K_{\substack{k=1 \\ (s_h^k, a_h^k, b_h^k)=(s, a, b)}} w_k c_\mathrm{b} \sqrt{\frac{H^3\log \frac{SABT}{\delta}}{N_h^k(s,a,b)}}\\
	&{=}4c_\mathrm{b} \sqrt{H^3\log \frac{SABT}{\delta}}\sum_{(s, a, b)}\sum_{n=1}^{N_h^K(s,a,b)}\frac{w_{k^n}}{\sqrt{n}}\\
	&\overset{(i)}{\le}4c_\mathrm{b} \sqrt{H^3\log \frac{SABT}{\delta}}\sum_{(s, a, b)}\sum_{n=1}^{\left \lceil C_{s, a, b}/w \right \rceil}\frac{w}{\sqrt{n}}\\
	&\overset{(ii)}{\le}10c_\mathrm{b} \sqrt{H^3\log \frac{SABT}{\delta}}\sum_{(s, a, b)}\sqrt{C_{s,a,b}w}\\
	&\overset{(iii)}{\le}10c_\mathrm{b} \sqrt{H^3SABCw\log \frac{SABT}{\delta}},\label{weighted-sum-4}
\end{align}
where (i) follows from a rearrangement inequality with $C_{s,a,b}$ defined as $C_{s,a,b} \coloneqq \sum_{i=1}^{N_h^{K}(s,a,b)} w_{k^n}$ and we always keep in mind that $0 < w_{k^n} \le w$, (ii) follows from the integral conversion of $\sum_i 1/\sqrt{i}$, and (iii) is true because of Cauchy-Schwartz inequality with $\sum_{(s,a,b)}C_{s,a,b}=\sum_{k=1}^K w_k\le C$.

Plugging the upper bounds of three separate terms in (\ref{weighted-sum-1}), (\ref{weighted-sum-2}), (\ref{weighted-sum-3}) and (\ref{weighted-sum-4}) back into (\ref{weighted-sum}) gives us
\begin{align}
	\sum_{k=1}^K w_k\big(\overline{V}^{k}_h(s, a,b) -& \underline{V}^{k}_h(s, a,b)\big)\le wSABH + 10c_\mathrm{b} \sqrt{H^3SABCw\log \frac{SABT}{\delta}}\nonumber\\
	&+\sqrt{C H^2\log \frac{SABT}{\delta}}+ \sum_{l=1}^K\widetilde{w}_l\Big(\overline{V}_{h+1}^l(s, a, b)-\underline{V}_{h+1}^l(s, a, b)\Big),
\end{align}
where the third term is a weighted sum of learning errors of the same format, but taken at level $h + 1$. In addition, it has weights $\{\widetilde{w}\}_{l\in [K]}$ being a $(C, (1+\frac{1}{H})w)$-sequence. Under recursing, the above analysis will also yield
\begin{align}
	&\sum_{k=1}^K w_k\big(\overline{V}^{k}_h(s, a,b) - \underline{V}^{k}_h(s, a,b)\big)\nonumber\\
	\le& \sum_{h'}^{H-h} \Big(SABH(1+\frac{1}{H})^{h'}w+\sqrt{C H^2\log \frac{SABT}{\delta}}\nonumber\\
	&+ 10c_\mathrm{b} \sqrt{H^3SABC(1+\frac{1}{H})^{h'}w\log \frac{SABT}{\delta}}\Big)\nonumber\\
	\le& H\Big(SABHew + \sqrt{C H^2\log \frac{SABT}{\delta}} + 10c_\mathrm{b} \sqrt{H^3SABCew\log \frac{SABT}{\delta}}\Big).\label{Yang_lemma4.3}
\end{align}

For every $n\in [N]$, $h\in[H]$, let
\begin{align}
	w_{k}^{(n, h)} & \coloneqq\mathbbm{1}\left[\left(\overline{V}_{h}^{k}-\underline{V}_{h}^{k}\right)\left(s_{h}^{k}, a_{h}^{k}, b_{h}^{k}\right) \in\left[2^{n-1} \varepsilon, 2^{n} \varepsilon\right)\right], \\
	C^{(n, h)} & \coloneqq\sum_{k=1}^{K} w_{k}^{(n, h)} =\left|\left\{k:\left(\overline{V}_{h}^{k}-\underline{V}_{h}^{k}\right)\left(s_{h}^{k}, a_{h}^{k}, b_{h}^{k}\right) \in\left[2^{n-1}\varepsilon, 2^{n} \varepsilon\right)\right\}\right|.
\end{align}
By definition, $\forall h\in [H]$ and $n\in [N]$, $\{w_k^{(n, h)}\}_{k\in [K]}$ is a $(C^{(n,h)},1)$-sequence. Now we consider bounding $\sum_{k=1}^K w_{k}^{(n, h)}\left(\overline{V}_{h}^{k}-\underline{V}_{h}^{k}\right)\left(s_{h}^{k}, a_{h}^{k}, b_{h}^{k}\right)$ from both sides. On the one hand, by (\ref{Yang_lemma4.3}, 
\begin{align*}\sum_{k=1}^K w_{k}^{(n, h)}\left(\overline{V}_{h}^{k}-\underline{V}_{h}^{k}\right)\left(s_{h}^{k}, a_{h}^{k}, b_{h}^{k}\right)\le eH^2SAB &+ \sqrt{H^4C^{(n,h)}\log \frac{SABT}{\delta}}\\ &+ 10c_\mathrm{b} \sqrt{eH^5SABC^{(n,h)}\log \frac{SABT}{\delta}}.
\end{align*}

On the other hand, according to the definition of $w_{k}^{(n, h)}$, $$\sum_{k=1}^K w_{k}^{(n, h)}\left(\overline{V}_{h}^{k}-\underline{V}_{h}^{k}\right)\left(s_{h}^{k}, a_{h}^{k}, b_{h}^{k}\right)\ge (2^{n-1}\varepsilon)C^{(n,h)}.$$

Combining these two sides, we obtaion the following inequality of $C^{(n,h)}$: $$(2^{n-1}\varepsilon)C^{(n,h)}\le eH^2SAB + \sqrt{H^4C^{(n,h)}\log \frac{SABT}{\delta}} + 10c_\mathrm{b} \sqrt{eH^5SABC^{(n,h)}\log \frac{SABT}{\delta}},$$ and thus, $$C^{(n,h)}\lesssim \frac{H^5SAB\log \frac{SABT}{\delta}}{4^n\varepsilon^2}.$$

Finally, we observe that 
\begin{align}
	&\Bigg|\sum_{h=1}^H\sum_{k=1}^K \mathbbm{1}\left( \overline{V}^{k}_h(s_h^k, a_h^k, b_h^k) - \underline{V}^{k}_h(s_h^k, a_h^k, b_h^k)\in \big[2^{\beta-1}\varepsilon, 2^\beta \varepsilon \big) \right) \Bigg|\nonumber\\
	=& C^{(\beta)} = \sum_{h=1}^H C^{(\beta,h)} \lesssim \frac{H^6SAB\log\frac{SABT}{\delta}}{4^\beta \varepsilon^2}.
\end{align}
Thus we obtain (\ref{eq:yang-lemma}) and conclude the proof of the inequality in Lemma \ref{lem:yang-lem}.

\subsection{Proof of Lemma~\ref{lem:Vr_properties}}\label{prove:lem:Vr_properties}

\subsubsection{Proof of the inequality \texorpdfstring{(\ref{eq:close-ref-v})}{}}

Consider any state $s$ that has been visited at least once during the $K$ episodes.  
Throughout this proof, we shall adopt the shorthand notation
\[
k^i=k_h^i(s),
\]
which denotes the index of the episode in which state $s$ is visited for the $i$-th time at step $h$. 
Given that $\overline{V}_h (s)$ and $\overline{V}_h^{\mathrm{R} } (s)$ are only updated during the episodes with indices coming from $\{i \mid 1\leq k^i\leq K\}$, 
it suffices to show that for any $s$ and the corresponding $1\leq k^i \leq K$, the claim (\ref{eq:close-ref-v}) holds in the sense that
\begin{align}
	\big| \overline{V}_h^{k^i+1} (s) - \overline{V}_h^{\mathrm{R}, k^i+1} (s) \big| \leq 2. 
	\label{eq:close-ref-v-ki}
\end{align}
Towards this end, we look at three scenarios separately.

\paragraph{Case 1.} Suppose that $k^i$ obeys 
\begin{align}
	&\overline{V}_h^{k^i+1} (s) - \underline{V}_h^{k^i+1} (s) > 1  \label{eq:condition-ki-update-1}
\end{align}
or

\begin{align}
	&\overline{V}_h^{k^i+1} (s) - \underline{V}_h^{k^i+1} (s) \leq 1 \qquad \text{and} \qquad
	{u}_{\mathrm{r}}^{k^i}(s) = \mathsf{True}.  \label{eq:condition-ki-update-2}
\end{align}
The above conditions correspond to the ones in line~17 and line~19 of Algorithm~\ref{algorithm-aug}, which means that $\overline{V}^{\mathrm{R}}_h$ is updated during the $k^i$-th episode. Thus, it results in
\[
\overline{V}_h^{k^i+1} (s) = \overline{V}_h^{\mathrm{R}, k^i+1} (s). 
\]
This satisfies (\ref{eq:close-ref-v-ki}) obviously.

\paragraph{Case 2.} 
Suppose that $k^{i_0}$ is the first time such that (\ref{eq:condition-ki-update-1}) and (\ref{eq:condition-ki-update-2}) are violated, namely,
\begin{align}
	{i_0} \coloneqq \min\left\{ j\mid \overline{V}_{h}^{k^{j}+1}(s)-\underline{V}_{h}^{k^{j}+1}(s)\leq1 ~\text{ and }~ {u}_{\mathrm{r}}^{k^j}(s) = \mathsf{False}\right\}.
	\label{eq:defn-first-i-violate-condition}
\end{align}
We make several observations. Firstly, combined with the update rules (lines~16-19 of Algorithm~\ref{algorithm-aug}), the definition (\ref{eq:defn-first-i-violate-condition}) reveals that  
	$\overline{V}_h^{\mathrm{R}}$ has been updated in the $k^{i_0-1}$-th episode, thus indicating that
	\begin{align}
		\overline{V}_{h}^{\mathrm{R},k^{i_0}}(s)=\overline{V}_{h}^{\mathrm{R},k^{i_0-1}+1}(s)=\overline{V}_{h}^{k^{i_0-1}+1}(s)=\overline{V}_{h}^{k^{i_0}}(s). 
		\label{eq:lcb-final-value0}
	\end{align}
	Additionally, $\overline{V}_h^{\mathrm{R}} (s)$ is not updated during the $k^{i_0}$-th episode according to the definition (\ref{eq:defn-first-i-violate-condition}), namely, 
	\begin{align}\label{eq:lcb-final-value}
		\overline{V}_h^{\mathrm{R}, k^{i_0}+1} (s) = \overline{V}_h^{\mathrm{R}, k^{i_0}} (s) .
	\end{align}
	Therefore, the definition of $k^{i_0}$ indicates that either (\ref{eq:condition-ki-update-1}) or (\ref{eq:condition-ki-update-2}) is satisfied in the previous episode $k^i = k^{i_0 -1}$ in which $s$ was visited. 
 If (\ref{eq:condition-ki-update-1}) is satisfied, then lines~18-19 in Algorithm~\ref{algorithm-aug} tell us that
	\begin{align}
		\mathsf{True} = {u}_{\mathrm{r}}^{k^{i_0-1}+1}(s) = {u}_{\mathrm{r}}^{k^{i_0}}(s),
	\end{align}
	which, however, contradicts the assumption ${u}_{\mathrm{r}}^{k^{i_0}}(s) = \mathsf{False}$ in (\ref{eq:defn-first-i-violate-condition}). Therefore, in the $k^{i_0-1}$-th episode, (\ref{eq:condition-ki-update-2}) is satisfied, thus leading to
	\begin{align}\label{eq:lcb-final-value2}
		\overline{V}_h^{k^{i_0}} (s) - \underline{V}_h^{k^{i_0}} (s)  = \overline{V}_h^{k^{i_0-1}+1} (s) - \underline{V}_h^{k^{i_0-1}+1} (s) \leq 1.
	\end{align}

From (\ref{eq:lcb-final-value0}), (\ref{eq:lcb-final-value}), and (\ref{eq:lcb-final-value2}), we see that 
\begin{align}
	\overline{V}^{\mathrm{R}, k^{i_0}+1}_h(s) -\overline{V}^{k^{i_0} + 1 }_h(s) 
	&=  \overline{V}^{\mathrm{R}, k^{i_0}}_h(s) -\overline{V}^{k^{i_0} + 1 }_h(s) 
	=  \overline{V}^{k^{i_0}}_h(s) -\overline{V}^{k^{i_0} + 1}_h(s) \label{eq:ref-final-value10}\\
	& \overset{\mathrm{(i)}}{\leq} \overline{V}^{k^{i_0}}_h(s) -\underline{V}^{k^{i_0}}_h(s)\overset{\mathrm{(ii)}}{\leq} 1, 
	\label{eq:ref-final-value1}
\end{align}
where (i) holds since $\overline{V}^{k^{i_0} + 1}_h(s) \geq V^\star_h(s) \geq \overline{V}^{k^{i_0}}_h(s)$, and (ii) follows from (\ref{eq:lcb-final-value2}).
In addition, we make note of the fact that
\begin{align}
	\overline{V}^{\mathrm{R}, k^{i_0}+1}_h(s) -\overline{V}^{k^{i_0} + 1 }_h(s) =  \overline{V}^{k^{i_0}}_h(s) -\overline{V}^{k^{i_0} + 1}_h(s) \geq 0,
\end{align}
which follows from (\ref{eq:ref-final-value10}) and the monotonicity of $\overline{V}_h^k(s)$ in $k$.
With the above results in place, we arrive at the advertised bound (\ref{eq:close-ref-v-ki}) when $i=i_0$.

\paragraph{Case 3.}  Consider any $i>i_0$. It is easy to verify that
\begin{align}
	\label{eq:not_update_cond}
	\overline{V}_h^{k^i+1} (s) - \underline{V}_h^{k^i+1} (s) \leq 1 \qquad \text{and} \qquad {u}_{\mathrm{r}}^{k^i}(s) = \mathsf{False}.
\end{align}
It then follows that
\begin{align}\label{eq:vr-case3}
	\overline{V}_{h}^{\mathrm{R},k^{i}+1}(s) & \overset{(\mathrm{i})}{\leq}\overline{V}_{h}^{\mathrm{R},k^{i_{0}}+1}(s)\overset{(\mathrm{ii})}{\leq}\overline{V}_{h}^{k^{i_{0}}+1}(s)+1\overset{(\mathrm{iii})}{\leq}\underline{V}_{h}^{k^{i_{0}}+1}(s)+2.
\end{align}
Here, (i) holds due to the monotonicity of $\overline{V}_{h}^{\mathrm{R}}$ and $\overline{V}_{h}^{k}$ (see line~15 of Algorithm~\ref{algorithm-aug}), (ii) is a consequence of (\ref{eq:ref-final-value1}), 
(iii) comes from the definition (\ref{eq:defn-first-i-violate-condition}) of $i_0$.
Further, according to see Lemma~\ref{lemma-Qhk-opt} and Lemma~\ref{lem:Qk-lcb}, (\ref{eq:vr-case3}) can be expressed as
\begin{align}
	\overline{V}_{h}^{\mathrm{R},k^{i}+1}(s){\leq}V_{h}^{\star}(s)+2{\leq}\overline{V}_{h}^{k^{i}+1}(s)+2,
\end{align}
where the first inequality arises since $\underline{V}_h$ is a lower bound on $V^{\star }_h$ (see Lemma~\ref{lem:Qk-lcb}), and the last inequality is valid since $\overline{V}^{k^i + 1}_h(s) \geq V^{\star}_h(s)$ (see Lemma~\ref{lemma-Qhk-opt}).  
In addition, in view of the monotonicity of $\overline{V}_{h}^{k}$ (see line~15 of Algorithm~\ref{algorithm-aug}) 
and the update rule in line~17 of Algorithm~\ref{algorithm-aug}, we know that
\[
\overline{V}_{h}^{\mathrm{R},k^{i}+1}(s)\geq \overline{V}_{h}^{k^{i}+1}(s).
\]
The preceding two bounds taken collectively demonstrate that
\[
0\leq \overline{V}_{h}^{\mathrm{R},k^{i}+1}(s)- \overline{V}_{h}^{k^{i}+1}(s) \leq 2, 
\]
thus justifying (\ref{eq:close-ref-v-ki}) for this case. 

Therefore, we have established (\ref{eq:close-ref-v-ki}) for all cases, and hence (\ref{eq:close-ref-v}) also holds for all cases.

\subsubsection{Proof of the inequality \texorpdfstring{(\ref{eq:Vr_lazy})}{}}

Suppose that 
\begin{align}
	\overline{V}^{\mathrm{R}, k}_{h}(s^{k}_{h}) - \overline{V}^{\mathrm{R}, K}_{h}(s^{k}_{h}) \neq 0
	\label{eq:assumption-Vref-k-Vref-K-neq}
\end{align}
holds for some $k< K$. Then there are two possible scenarios to look at: 
\paragraph{Case 1.} If the condition in line~16 and line~18 of Algorithm~\ref{algorithm-aug} are violated at step $h$ of the $k$-th episode, this means that we have
	\begin{equation}
		\overline{V}_{h}^{k+1}(s_h^k) - \underline{V}_{h}^{k+1}(s_{h}^k) \leq 1 \quad \text{ and } \quad u_{\mathrm{r}}^k(s_h^k) = \mathsf{False}
		\label{eq:Vh-k-V-LCB-k-test-135}
	\end{equation}
	in this case. Then for any $k' > k$, one necessarily has
	\begin{align}
		\begin{cases}
			\overline{V}_{h}^{k'}(s_h^k) - \underline{V}_{h}^{k'}(s_{h}^k)
			\leq \overline{V}_{h}^{k+1}(s_h^k) - \underline{V}_{h}^{k+1}(s_{h}^k) \leq 1, \\
			{u}_{\mathrm{r}}^{k'}(s_h^k) = {u}_{\mathrm{r}}^k(s_h^k) = \mathsf{False},
		\end{cases}
		\label{eq:check-condition-kprime-135}
	\end{align}
	where the first property makes use of the monotonicity of $\overline{V}_{h}^{k}$ and $\underline{V}_{h}^{k}$ (see (\ref{eq:monotonicity_Vh})).  
	In turn, Condition (\ref{eq:check-condition-kprime-135}) implies that $\overline{V}_h^{\mathrm{R}}$ will no longer be updated after the $k$-th episode (see line~16 of Algorithm~\ref{algorithm-aug}), thus indicating that 
	\begin{align}
		\overline{V}^{\mathrm{R}, k}_{h}(s^{k}_{h}) = \overline{V}^{\mathrm{R}, {k+1}}_{h}(s^{k}_{h}) = \cdots = \overline{V}^{\mathrm{R}, K}_{h}(s^{k}_{h}). 
	\end{align}
	This, however, contradicts the assumption (\ref{eq:assumption-Vref-k-Vref-K-neq}.

\paragraph{Case 2.} If the condition in either line~16 or line~18 of Algorithm~\ref{algorithm-aug} is satisfied at step $h$ of the $k$-th episode, then the update rule in line~16 of Algorithm~\ref{algorithm-aug} implies that
	\begin{equation}\label{eq:Vh-k-V-LCB-k-test-139}
		\overline{V}_{h}^{k+1}(s_h^k) - \underline{V}_{h}^{k+1}(s_{h}^k) > 1,
	\end{equation}
	or
	\begin{align}\label{eq:Vh-k-V-LCB-k-test-148}
		&\overline{V}_h^{k+1} (s_h^k) - \underline{V}_h^{k+1} (s_h^k) \leq 1 \qquad \text{and} \quad
		{u}_{\mathrm{r}}^k(s_h^k) = \mathsf{True}.
	\end{align}

To summarize, the above argument demonstrates that (\ref{eq:assumption-Vref-k-Vref-K-neq}) can only occur if either (\ref{eq:Vh-k-V-LCB-k-test-139}) or (\ref{eq:Vh-k-V-LCB-k-test-148}) holds. 
With the above observation in place, we can proceed with the following decomposition: 
\begin{align} 
	& \sum_{h=1}^{H}\sum_{k=1}^{K}\left(\overline{V}_{h}^{\mathrm{R},k}(s_{h}^{k})-\overline{V}_{h}^{\mathrm{R},K}(s_{h}^{k})\right)\notag\\
	=&\sum_{h=1}^{H}\sum_{k=1}^{K}\left(\overline{V}_{h}^{\mathrm{R},k}(s_{h}^{k})-\overline{V}_{h}^{\mathrm{R},K}(s_{h}^{k})\right)\mathbbm{1}\left(\overline{V}_{h}^{\mathrm{R},k}(s_{h}^{k})-\overline{V}_{h}^{\mathrm{R},K}(s_{h}^{k})\neq0\right)
	=\omega_1+\omega_2,\label{eq:upper-ref-error-00}
\end{align}
where $$\omega_1=\sum_{h=1}^{H}\sum_{k=1}^{K}\left(\overline{V}_{h}^{\mathrm{R},k}(s_{h}^{k})-\overline{V}_{h}^{\mathrm{R},K}(s_{h}^{k})\right)\mathbbm{1}\left( \overline{V}_h^{k+1} (s_h^k) - \underline{V}_h^{k+1} (s_h^k) \leq 1 \text{ and } 
	{u}_{\mathrm{r}}^k(s_h^k) = \mathsf{True}\right)$$ and $$\omega_2=\sum_{h=1}^{H}\sum_{k=1}^{K}\left(\overline{V}_{h}^{\mathrm{R},k}(s_{h}^{k})-\underline{V}_{h}^{\mathrm{R},K}(s_{h}^{k})\right)\mathbbm{1}\left(\overline{V}_{h}^{k}(s_{h}^{k})-\underline{V}_{h}^{k}(s_{h}^{k})>1\right).$$
Regarding $\omega_1$ in (\ref{eq:upper-ref-error-00}), it is readily seen that for all $s\in \mathcal{S}$,
\begin{align}
	\sum_{k=1}^{K}\mathbbm{1}\left( \overline{V}_h^{k+1} (s ) - \underline{V}_h^{k+1} (s ) \leq 1 \text{ and }
	{u}_{\mathrm{r}}^k(s ) = \mathsf{True}\right) \leq 1, \label{eq:line18-appear-once}
\end{align}
which arises since, for each $s\in \mathcal{S}$, the above condition is satisfied in at most one episode, 
owing to the monotonicity property of $\overline{V}_h, \underline{V}_h$ and the update rule for ${u}_{\mathrm{r}}$ in line~18 of Algorithm~\ref{algorithm-aug}.
As a result, one has
\begin{align*}
	\omega_1 & \leq H\sum_{h=1}^{H}\sum_{k=1}^{K}\mathbbm{1}\left(\overline{V}_{h}^{k+1}(s_{h}^{k})-\underline{V}_{h}^{k+1}(s_{h}^{k})\leq1\text{ and }{u}_{\mathrm{r}}^{k}(s_{h}^{k})=\mathsf{True}\right)\notag\\
	& =H\sum_{h=1}^{H}\sum_{  s \in\mathcal{S}}  \sum_{k=1}^{K} \mathbbm{1}\left(\overline{V}_{h}^{k+1}(s)-\underline{V}_{h}^{k+1}(s)\leq1\text{ and }{u}_{\mathrm{r}}^{k}(s)=\mathsf{True}\right) \leq H\sum_{h=1}^{H}\sum_{s\in\mathcal{S}}1=H^{2}S, 
\end{align*}
where the first inequality holds since $\|\overline{V}_{h}^{\mathrm{R},k}-\underline{V}_{h}^{\mathrm{R},K}\|_\infty\leq H$. 
Substitution into (\ref{eq:upper-ref-error-00}) yields
\begin{align} 
	\sum_{h=1}^{H}\sum_{k=1}^{K}\left(V_{h}^{\mathrm{R},k}(s_{h}^{k})-V_{h}^{\mathrm{R},K}(s_{h}^{k})\right)
	& \leq H^2S+\omega_2. 
	\label{eq:upper-ref-error}
\end{align}

To complete the proof, it boils down to bounding the term $\omega_2$ defined in (\ref{eq:upper-ref-error-00}). To begin with, note that
\begin{equation*}
	\overline{V}^{\mathrm{R}, K}_{h}(s^{k}_{h})  \geq V^\star_{h}(s^{k}_{h}) \geq \underline{V}^{k}_{h}(s_{h}^k), 
\end{equation*}
where we make use of the optimism of $\overline{V}_{h}^{\mathrm{R},K}(s^{k}_{h})$ stated in Lemma~\ref{lemma-Qhk-opt} (cf.~(\ref{lemma-Qhk-opt-upper})) and the pessimism of $\underline{V}_{h}$ in Lemma~\ref{lemma-Qhk-opt} (see (\ref{lemma-Qhk-opt-lower})). As a result, we can obtain
\begin{align}
	\omega_2 & \le \sum_{h=1}^H \sum_{k=1}^K \left(\overline{V}^{k}_{h}(s_{h}^k) - \underline{V}^{k}_{h}(s_{h}^k)\right)\mathbbm{1}\left(\overline{V}^{k}_{h}(s_{h}^k) - \underline{V}^{k}_{h}(s_{h}^k) > 1\right).\label{eq:ref-error-upper-final}
\end{align}
Further, let us make note of the following elementary identity 
\begin{equation*}
	\overline{V}^{k}_{h}(s_{h}^k, a_{h}^k, b_{h}^k) - \underline{V}^{k}_{h}(s_{h}^k, a_{h}^k, b_{h}^k)  = \int_0^{\infty} \mathbbm{1}\Big(\overline{V}^{k}_{h}(s_{h}^k, a_{h}^k, b_{h}^k) - \underline{V}^{k}_{h}(s_{h}^k, a_{h}^k, b_{h}^k) > t \Big) \mathrm{d}t.
\end{equation*}
This allows us to obtain
\begin{align}
	\omega_2 & \leq\sum_{h=1}^{H}\sum_{k=1}^{K}\bigg\{\left\{ \int_{0}^{\infty}\mathbbm{1}\big(\overline{V}^{k}_{h}(s_{h}^k, a_{h}^k, b_{h}^k) - \underline{V}^{k}_{h}(s_{h}^k, a_{h}^k, b_{h}^k)>t\big)\mathrm{d}t\right\}\nonumber\\
	&\qquad\times \mathbbm{1}\Big(\overline{V}^{k}_{h}(s_{h}^k, a_{h}^k, b_{h}^k) - \underline{V}^{k}_{h}(s_{h}^k, a_{h}^k, b_{h}^k)>1\Big)\bigg\}\nonumber\\
	&=\int_{1}^{H}\sum_{h=1}^{H}\sum_{k=1}^{K}\mathbbm{1}\Big(\overline{V}^{k}_{h}(s_{h}^k, a_{h}^k, b_{h}^k) - \underline{V}^{k}_{h}(s_{h}^k, a_{h}^k, b_{h}^k)>t\Big)\mathrm{d}t. \notag
\end{align}
With the property (\ref{eq:main-lemma}) in Lemma~\ref{lem:Qk-lcb}, we can further obtain
\begin{align}
	\omega_2 & \lesssim\int_{1}^{H}\frac{H^{6}SAB\log\frac{SABT}{\delta}}{t^{2}}\mathrm{d}t\lesssim H^{6}SAB\log\frac{SABT}{\delta}.
	\label{eq:bound-ref-error-upper-final}
\end{align}
Combining the above bounds (\ref{eq:ref-error-upper-final}) and (\ref{eq:bound-ref-error-upper-final}) 
with (\ref{eq:upper-ref-error}) establishes 
\begin{align} 
	\sum_{h=1}^H\sum_{k=1}^K \left(\overline{V}^{\mathrm{R}, k}_{h}(s^{k}_{h}) - \overline{V}^{\mathrm{R}, K}_{h}(s^{k}_{h})\right) \lesssim H^2S + H^{6}SAB\log\frac{SABT}{\delta} \lesssim H^{6}SAB\log\frac{SABT}{\delta} \notag
\end{align}
as claimed.

\section{Proof of Lemma~\ref{lem:Qk-upper}}\label{proof:lem:Qk-upper}

A starting point for proving this lemma is the upper bound already derived in (\ref{eq:Vk-Qk}),  
and we intend to further bound the first term on the right-hand side of (\ref{eq:Vk-Qk}). 
Recalling the expression of $\overline{Q}^{\mathrm{R}, k+1}_h(s_h^k, a_h^k, b_h^k)$ in (\ref{equ:lemma4 vr 1}) and (\ref{eq:dejavu}), 
we can derive
\begin{align}
	& \overline{Q}_{h}^{\mathrm{R},k}(s_{h}^{k},a_{h}^{k},b_{h}^{k})-{Q}_{h}^{\star}(s_{h}^{k},a_{h}^{k},b_{h}^{k})=\overline{Q}_{h}^{\mathrm{R},k^{N_{h}^{k-1}(s_{h}^{k},a_{h}^{k},b_{h}^{k})}+1}(s_{h}^{k},a_{h}^{k},b_{h}^{k})-Q_{h}^{\star}(s_{h}^{k},a_{h}^{k},b_{h}^{k})\\
	& =\eta_{0}^{N_{h}^{k-1}(s_{h}^{k},a_{h}^{k},b_{h}^{k})}\left(\overline{Q}_{h}^{\mathrm{R},1}(s_{h}^{k},a_{h}^{k},b_{h}^{k})-Q_{h}^{\star}(s_{h}^{k},a_{h}^{k},b_{h}^{k})\right)+\sum_{n=1}^{N_{h}^{k-1}(s_{h}^{k},a_{h}^{k},b_{h}^{k})}\eta_{n}^{N_{h}^{k-1}(s_{h}^{k},a_{h}^{k},b_{h}^{k})}\overline{b}_{h}^{\mathrm{R},k^{n}+1}\nonumber\\
	& \qquad+\sum_{n=1}^{N_{h}^{k-1}(s_{h}^{k},a_{h}^{k},b_{h}^{k})}\eta_{n}^{N_{h}^{k-1}(s_{h}^{k},a_{h}^{k},b_{h}^{k})}\bigg(\overline{V}_{h+1}^{k^{n}}(s_{h+1}^{k^{n}})-\overline{V}_{h+1}^{\mathrm{R},k^{n}}(s_{h+1}^{k^{n}})\bigg)\nonumber\\
	& \qquad+\sum_{n=1}^{N_{h}^{k-1}(s_{h}^{k},a_{h}^{k},b_{h}^{k})}\eta_{n}^{N_{h}^{k-1}(s_{h}^{k},a_{h}^{k},b_{h}^{k})}\bigg(\frac{1}{n}\sum_{i=1}^{n}\overline{V}_{h+1}^{\mathrm{R},k^{i}}(s_{h+1}^{k^{i}})-P_{h,s_{h}^{k},a_{h}^{k},b_{h}^{k}}V_{h+1}^{\star}\bigg).\nonumber 
\end{align}

Specifically, according to (\ref{lemma1:equ10}) with $\overline{B}^{\mathrm{R}, k^{N_h^{k-1}}+1}_h = \overline{B}^{\mathrm{R}, k}_h$  and the initialization $\overline{Q}^{\mathrm{R}, 1}_h(s_h^k, a_h^k, b_h^k) = H$, there is
\begin{align}\label{Q_overline_star}
	&\overline{Q}_{h}^{\mathrm{R},k}(s_{h}^{k},a_{h}^{k},b_{h}^{k})-{Q}_{h}^{\star}(s_{h}^{k},a_{h}^{k},b_{h}^{k}) \\
 \le&\eta_{0}^{N_{h}^{k-1}(s_{h}^{k},a_{h}^{k},b_{h}^{k})}H+\overline{B}_{h}^{\mathrm{R},k}(s_{h}^{k},a_{h}^{k},b_{h}^{k})+\frac{2c_{\mathrm{b}} H^{2}}{{\big(N_{h}^{k-1}(s_{h}^{k},a_{h}^{k},b_{h}^{k})\big)}^{3/4}}\log\frac{SABT}{\delta}\nonumber\\
	&\qquad+\sum_{n=1}^{N_{h}^{k-1}(s_{h}^{k},a_{h}^{k},b_{h}^{k})}\eta_{n}^{N_{h}^{k-1}(s_{h}^{k},a_{h}^{k},b_{h}^{k})}\bigg(\overline{V}_{h+1}^{k^{n}}(s_{h+1}^{k^{n}})-\overline{V}_{h+1}^{\mathrm{R},k^{n}}(s_{h+1}^{k^{n}})\bigg)\nonumber\\
	&\qquad+\sum_{n=1}^{N_{h}^{k-1}(s_{h}^{k},a_{h}^{k},b_{h}^{k})}\eta_{n}^{N_{h}^{k-1}(s_{h}^{k},a_{h}^{k},b_{h}^{k})}\bigg(\frac{1}{n}\sum_{i=1}^{n}\overline{V}_{h+1}^{\mathrm{R},k^{i}}(s_{h+1}^{k^{i}})-P_{h,s_{h}^{k},a_{h}^{k},b_{h}^{k}}V_{h+1}^{\star}\bigg).\nonumber 
\end{align} 

We can also demonstrate
\begin{align}
	& \underline{Q}_{h}^{\mathrm{R},k}(s_{h}^{k},a_{h}^{k},b_{h}^{k})-{Q}_{h}^{\star}(s_{h}^{k},a_{h}^{k},b_{h}^{k})=\underline{Q}_{h}^{\mathrm{R},k^{N_{h}^{k-1}(s_{h}^{k},a_{h}^{k},b_{h}^{k})}+1}(s_{h}^{k},a_{h}^{k},b_{h}^{k})-Q_{h}^{\star}(s_{h}^{k},a_{h}^{k},b_{h}^{k})\\
	& =\eta_{0}^{N_{h}^{k-1}(s_{h}^{k},a_{h}^{k},b_{h}^{k})}\left(\underline{Q}_{h}^{\mathrm{R},1}(s_{h}^{k},a_{h}^{k},b_{h}^{k})-Q_{h}^{\star}(s_{h}^{k},a_{h}^{k},b_{h}^{k})\right)-\sum_{n=1}^{N_{h}^{k-1}(s_{h}^{k},a_{h}^{k},b_{h}^{k})}\eta_{n}^{N_{h}^{k-1}(s_{h}^{k},a_{h}^{k},b_{h}^{k})}\underline{b}_{h}^{\mathrm{R},k^{n}+1}\nonumber\\
	& \qquad+\sum_{n=1}^{N_{h}^{k-1}(s_{h}^{k},a_{h}^{k},b_{h}^{k})}\eta_{n}^{N_{h}^{k-1}(s_{h}^{k},a_{h}^{k},b_{h}^{k})}\bigg(\underline{V}_{h+1}^{k^{n}}(s_{h+1}^{k^{n}})-\underline{V}_{h+1}^{\mathrm{R},k^{n}}(s_{h+1}^{k^{n}})\bigg)\nonumber\\
	& \qquad+\sum_{n=1}^{N_{h}^{k-1}(s_{h}^{k},a_{h}^{k},b_{h}^{k})}\eta_{n}^{N_{h}^{k-1}(s_{h}^{k},a_{h}^{k},b_{h}^{k})}\bigg(\frac{1}{n}\sum_{i=1}^{n}\underline{V}_{h+1}^{\mathrm{R},k^{i}}(s_{h+1}^{k^{i}})-P_{h,s_{h}^{k},a_{h}^{k},b_{h}^{k}}V_{h+1}^{\star}\bigg).\nonumber
\end{align} 

Similarly with $\underline{B}^{\mathrm{R}, k^{N_h^{k-1}}+1}_h = \underline{B}^{\mathrm{R}, k}_h$  and the initialization $\underline{Q}^{\mathrm{R}, 1}_h(s_h^k, a_h^k, b_h^k) = H$, there is
\begin{align}\label{Q_underline_star}
	& \underline{Q}_{h}^{\mathrm{R},k}(s_{h}^{k},a_{h}^{k},b_{h}^{k})-{Q}_{h}^{\star}(s_{h}^{k},a_{h}^{k},b_{h}^{k})\\
	\ge&-\eta_{0}^{N_{h}^{k-1}(s_{h}^{k},a_{h}^{k},b_{h}^{k})}H-\underline{B}_{h}^{\mathrm{R},k}(s_{h}^{k},a_{h}^{k},b_{h}^{k})-\frac{2c_{\mathrm{b}} H^{2}}{{\big(N_{h}^{k-1}(s_{h}^{k},a_{h}^{k},b_{h}^{k})\big)}^{3/4}}\log\frac{SABT}{\delta}\nonumber\\
	& \qquad+\sum_{n=1}^{N_{h}^{k-1}(s_{h}^{k},a_{h}^{k},b_{h}^{k})}\eta_{n}^{N_{h}^{k-1}(s_{h}^{k},a_{h}^{k},b_{h}^{k})}\bigg(\underline{V}_{h+1}^{k^{n}}(s_{h+1}^{k^{n}})-\underline{V}_{h+1}^{\mathrm{R},k^{n}}(s_{h+1}^{k^{n}})\bigg)\nonumber\\
	& \qquad+\sum_{n=1}^{N_{h}^{k-1}(s_{h}^{k},a_{h}^{k},b_{h}^{k})}\eta_{n}^{N_{h}^{k-1}(s_{h}^{k},a_{h}^{k},b_{h}^{k})}\bigg(\frac{1}{n}\sum_{i=1}^{n}\underline{V}_{h+1}^{\mathrm{R},k^{i}}(s_{h+1}^{k^{i}})-P_{h,s_{h}^{k},a_{h}^{k},b_{h}^{k}}V_{h+1}^{\star}\bigg).\nonumber 
\end{align} 

Summing over all $1\leq k\leq K$, taking (\ref{Q_overline_star}) and (\ref{Q_underline_star}) collectively demonstrates that
\begin{align}
	&\sum_{k=1}^{K}(\overline{Q}_{h}^{\mathrm{R},k}(s_h^k,a_h^k,b_h^k)-\underline{Q}_{h}^{\mathrm{R},k}(s_h^k,a_h^k,b_h^k))\nonumber\\
	\leq & \sum_{k=1}^{K}\left(2H \eta_{0}^{N_{h}^{k-1}\left(s_{h}^{k}, a_{h}^{k}, b_{h}^{k}\right)}+\overline{B}_{h}^{\mathrm{R}, k}\left(s_{h}^{k}, a_{h}^{k}, b_{h}^{k}\right)+\underline{B}_{h}^{\mathrm{R}, k}\left(s_{h}^{k}, a_{h}^{k}, b_{h}^{k}\right)\right)\nonumber\\
	& +\sum_{k=1}^{K}\left(\frac{4 c_{\mathrm{b}} H^{2}}{\left(N_{h}^{k-1}\left(s_{h}^{k}, a_{h}^{k}, b_{h}^{k}\right)\right)^{3 / 4}} \log \frac{S A B T}{\delta}\right) \nonumber\\
	& +\sum_{k=1}^{K} \sum_{n=1}^{N_{h}^{k-1}\left(s_{h}^{k}, a_{h}^{k}, b_{h}^{k}\right)} \eta_{n}^{N_{h}^{k-1}\left(s_{h}^{k}, a_{h}^{k}, b_{h}^{k}\right)}\left(\overline{V}_{h+1}^{k^{n}}\left(s_{h+1}^{k^{n}}\right)-\underline{V}_{h+1}^{k^n}\left(s_{h+1}^{k^{n}}\right)\right) \nonumber\\
	& +\sum_{k=1}^{K} \sum_{n=1}^{N_{h}^{k-1}\left(s_{h}^{k}, a_{h}^{k}, b_{h}^{k}\right)} \eta_{n}^{N_{h}^{k-1}\left(s_{h}^{k}, a_{h}^{k}, b_{h}^{k}\right)}\left(\underline{V}_{h+1}^{\mathrm{R},k^n}\left(s_{h+1}^{k^{n}}\right)-\overline{V}_{h+1}^{\mathrm{R}, k^{n}}\left(s_{h+1}^{k^{n}}\right)\right)\nonumber\\
	& +\sum_{k=1}^{K} \sum_{n=1}^{N_{h}^{k-1}\left(s_{h}^{k}, a_{h}^{k}, b_{h}^{k}\right)} \eta_{n}^{N_{h}^{k-1}\left(s_{h}^{k}, a_{h}^{k}, b_{h}^{k}\right)}\left(\frac{1}{n} \sum_{i=1}^{n} \overline{V}_{h+1}^{\mathrm{R}, k^{i}}\left(s_{h+1}^{k^{i}}\right)- \frac{1}{n} \sum_{i=1}^{n} \underline{V}_{h+1}^{\mathrm{R}, k^{i}}\left(s_{h+1}^{k^{i}}\right)\right)\label{lemma1:equ8}.
\end{align}
Next, we control each term in (\ref{lemma1:equ8}) separately.
\begin{itemize}
	
	\item Regarding the first term of (\ref{lemma1:equ8}), we make two observations. To begin with, 
	\begin{align}\label{eq:eta0eee}
		\sum_{k = 1}^{K}\eta^{N_h^{k-1}(s^k_h, a^k_h, b^k_h) }_0 
		\leq \sum_{(s,a,b) \in \mathcal{S}\times \mathcal{A}\times\mathcal{B}} \sum_{n = 0}^{N^{K - 1}_h(s,a,b)}\eta^{n}_0 \leq SAB ,
	\end{align}
	where the last inequality follows since $\eta_0^n = 0$ for all $n>0$ (see (\ref{equ:learning rate notation})).
	Next, it is also observed that
	\begin{align}
		\sum_{k = 1}^{K}  \frac{1}{\big(N^{k-1}_h(s^k_h, a^k_h, b^k_h) \big)^{3/4}} &= \sum_{(s,a,b) \in \mathcal{S}\times \mathcal{A}\times\mathcal{B}} \sum_{n=1}^{N_h^{K- 1}(s,a,b) } \frac{1}{n^{3/4}} \notag \\
		&\leq \sum_{(s,a,b) \in \mathcal{S}\times \mathcal{A}\times\mathcal{B}} 4 \big(N_h^{K - 1}(s,a,b)\big)^{1/4} \leq 4(SAB)^{3/4}K^{1/4}, \label{eq:n3/4-bound}
	\end{align}
	where the last inequality comes from Holder's inequality 
	\begin{align*}
		&\sum_{(s,a,b) \in \mathcal{S}\times \mathcal{A}\times\mathcal{B}}  \big(N_h^{K-1}(s,a,b) \big)^{1/4}\\ 
		\leq& \Bigg[\sum_{(s,a,b) \in \mathcal{S}\times \mathcal{A}\times\mathcal{B}} 1\Bigg]^{3/4}  \Bigg[\sum_{(s,a,b) \in \mathcal{S}\times \mathcal{A}\times\mathcal{B}}  N_h^{K - 1}(s,a,b) \Bigg]^{1/4}  \leq (SAB)^{3/4}K^{1/4}.
	\end{align*}
	Combining above bounds yields
	\begin{align}
		& \sum_{k=1}^{K}\bigg(2H\eta_{0}^{N_{h}^{k-1}(s_{h}^{k},a_{h}^{k},b_{h}^{k})}+\overline{B}_{h}^{\mathrm{R}, k}\left(s_{h}^{k}, a_{h}^{k}, b_{h}^{k}\right)+\underline{B}_{h}^{\mathrm{R}, k}\left(s_{h}^{k}, a_{h}^{k}, b_{h}^{k}\right)\bigg)\nonumber\\
		&\qquad+\sum_{k=1}^{K}\frac{4c_{\mathrm{b}} H^{2}}{\big(N_{h}^{k-1}(s_{h}^{k},a_{h}^{k}, b^k_h)\big)^{3/4}}\log\frac{SABT}{\delta} \notag\\
		\leq& 2HSAB\!\!+\!\!\overline{B}_{h}^{\mathrm{R}, k}\left(s_{h}^{k}, a_{h}^{k}, b_{h}^{k}\right)\!\!+\!\!\underline{B}_{h}^{\mathrm{R}, k}\left(s_{h}^{k}, a_{h}^{k}, b_{h}^{k}\right)\!\!+\!\!16c_{\mathrm{b}}(SAB)^{3/4}K^{1/4}H^{2}\log\frac{SABT}{\delta}.
	\end{align}

	\item We now turn to the second term of (\ref{lemma1:equ8}). A little algebra gives
	\begin{align}
		& \sum_{k=1}^{K}\sum_{n=1}^{N_{h}^{k-1}(s_{h}^{k},a_{h}^{k},b_{h}^{k})}\eta_{n}^{N_{h}^{k-1}(s_{h}^{k},a_{h}^{k},b_{h}^{k})}\big(\overline{V}_{h+1}^{k^{n}}\left(s_{h+1}^{k^{n}}\right)-\underline{V}_{h+1}^{k^n}\left(s_{h+1}^{k^{n}}\right)\big)\notag\\
		= &\sum_{l=1}^{K}\sum_{N=N_{h}^{l}(s_{h}^{l},a_{h}^{l},b_{h}^{l})}^{N_{h}^{K-1}(s_{h}^{l},a_{h}^{l},b_{h}^{l})}\eta_{N_{h}^{l}(s_{h}^{l},a_{h}^{l},b_{h}^{l})}^{N}\big(\overline{V}_{h+1}^{l}(s_{h+1}^{l})-\underline{V}_{h+1}^{l}(s_{h+1}^{l})\big),\label{equ:remove-v-star1}
	\end{align}
        where the second line replaces $k^n$ (resp.~$n$) with $l$ (resp.~$N_h^l(s_h^l, a_h^l)$). Combined with the property $\sum_{N = n}^{\infty} \eta_n^N \le 1+ 1/H$ (see Lemma~\ref{lemma:property of learning rate}), we get
	\begin{align}
		(\ref{equ:remove-v-star1}) &\le\left(1+\frac{1}{H}\right)\sum_{l=1}^{K}\big(\overline{V}_{h+1}^{l}(s_{h+1}^{l})-\underline{V}_{h+1}^{l}(s_{h+1}^{l})\big)\nonumber\\
		& =\left(1+\frac{1}{H}\right)\sum_{k=1}^{K}\big(\overline{V}_{h+1}^{k}(s_{h+1}^{k})-\underline{V}_{h+1}^{k}(s_{h+1}^{k})\big),
		\label{equ:remove-v-star}
	\end{align}
	where the last relation replaces $l$ with $k$.

	\item
	When it comes to the last term of (\ref{lemma1:equ8}), based on $V_{h+1}^{\star}\left(s_{h+1}^{k^{n}}\right) = P_{h, s_{h}^{k}, a_{h}^{k}, b_{h}^{k}} \overline{V}_{h+1}^{\mathrm{R}, k}$, we can derive
	\begin{align}
		&\sum_{k = 1}^{K} \sum_{n = 1}^{N_h^{k-1}(s_h^k,a_h^k,b_h^k) } \eta^{N_h^{k-1}(s_h^k,a_h^k,b_h^k) }_n \Bigg(\underline{V}_{h+1}^{\mathrm{R},k}\left(s_{h+1}^{k^{n}}\right)-\overline{V}_{h+1}^{\mathrm{R}, k^{n}}\left(s_{h+1}^{k^{n}}\right)\Bigg)\nonumber\\
		&+\sum_{k = 1}^{K} \sum_{n = 1}^{N_h^{k-1}(s_h^k,a_h^k,b_h^k) } \eta^{N_h^{k-1}(s_h^k,a_h^k,b_h^k) }_n \Bigg(\frac{1}{n} \sum_{i=1}^{n} \overline{V}_{h+1}^{\mathrm{R}, k^{i}}\left(s_{h+1}^{k^{i}}\right)- \frac{1}{n} \sum_{i=1}^{n} \underline{V}_{h+1}^{\mathrm{R}, k^{i}}\left(s_{h+1}^{k^{i}}\right)\Bigg) \nonumber\\
		=& \sum_{k=1}^{K} \sum_{n=1}^{N_{h}^{k-1}\left(s_{h}^{k}, a_{h}^{k}, b_{h}^{k}\right)} \eta_{n}^{N_{h}^{k-1}\left(s_{h}^{k}, a_{h}^{k}, b_{h}^{k}\right)}\left(V_{h+1}^{\star}\left(s_{h+1}^{k^{n}}\right)-\overline{V}_{h+1}^{\mathrm{R}, k^{n}}\left(s_{h+1}^{k^{n}}\right)\right)\nonumber\\
		&+\sum_{k=1}^{K} \sum_{n=1}^{N_{h}^{k-1}\left(s_{h}^{k}, a_{h}^{k}, b_{h}^{k}\right)} \eta_{n}^{N_{h}^{k-1}\left(s_{h}^{k}, a_{h}^{k}, b_{h}^{k}\right)}\left(\frac{1}{n} \sum_{i=1}^{n} \overline{V}_{h+1}^{\mathrm{R}, k^{i}}\left(s_{h+1}^{k^{i}}\right)-P_{h, s_{h}^{k}, a_{h}^{k}, b_{h}^{k}} V_{h+1}^{\star}\right)\nonumber\\
		&-\sum_{k=1}^{K} \sum_{n=1}^{N_{h}^{k-1}\left(s_{h}^{k}, a_{h}^{k}, b_{h}^{k}\right)} \eta_{n}^{N_{h}^{k-1}\left(s_{h}^{k}, a_{h}^{k}, b_{h}^{k}\right)}\left(V_{h+1}^{\star}\left(s_{h+1}^{k^{n}}\right)-\underline{V}_{h+1}^{\mathrm{R}, k^{n}}\left(s_{h+1}^{k^{n}}\right)\right)\nonumber\\
		&-\sum_{k=1}^{K} \sum_{n=1}^{N_{h}^{k-1}\left(s_{h}^{k}, a_{h}^{k}, b_{h}^{k}\right)} \eta_{n}^{N_{h}^{k-1}\left(s_{h}^{k}, a_{h}^{k}, b_{h}^{k}\right)}\left(\frac{1}{n} \sum_{i=1}^{n} \underline{V}_{h+1}^{\mathrm{R}, k^{i}}\left(s_{h+1}^{k^{i}}\right)-P_{h, s_{h}^{k}, a_{h}^{k}, b_{h}^{k}} V_{h+1}^{\star}\right).\label{eq:sum_VR}
	\end{align}
 
        With $\Lambda_h^k=\sum_{k=1}^{K} \sum_{N=N_{h}^{k}\left(s_{h}^{k}, a_{h}^{k}, b_{h}^{k}\right)}^{N_{h}^{K-1}\left(s_{h}^{k}, a_{h}^{k}, b_{h}^{k}\right)} \eta_{N_{h}^{k}\left(s_{h}^{k}, a_{h}^{k}, b_{h}^{k}\right)}^{N}$, there is
	\begin{align*}
		(\ref{eq:sum_VR})\!=\!&\Lambda_h^k\left(\left(P_{h}^{k}\!-\!P_{h, s_{h}^{k}, a_{h}^{k}, b_{h}^{k}}\right)\left(V_{h+1}^{\star}\!-\!\overline{V}_{h+1}^{\mathrm{R}, k}\right)\!-\!\left(P_{h}^{k}\!-\!P_{h, s_{h}^{k}, a_{h}^{k}, b_{h}^{k}}\right)\left(V_{h+1}^{\star}\!-\!\underline{V}_{h+1}^{\mathrm{R}, k}\right)\right)\\
		&+\Lambda_h^k\frac{\sum_{i=1}^{N_{h}^{k}\left(s_{h}^{k}, a_{h}^{k}, b_{h}^{k}\right)}\left(\overline{V}_{h+1}^{\mathrm{R}, k^{i}}\left(s_{h+1}^{k^{i}}\right)-P_{h, s_{h}^{k}, a_{h}^{k}, b_{h}^{k}} \overline{V}_{h+1}^{\mathrm{R}, k}\right)}{N_{h}^{k}\left(s_{h}^{k}, a_{h}^{k}, b_{h}^{k}\right)}\\
		&-\Lambda_h^k\frac{\sum_{i=1}^{N_{h}^{k}\left(s_{h}^{k}, a_{h}^{k}, b_{h}^{k}\right)}\left(\underline{V}_{h+1}^{\mathrm{R}, k^{i}}\left(s_{h+1}^{k^{i}}\right)-P_{h, s_{h}^{k}, a_{h}^{k}, b_{h}^{k}} \underline{V}_{h+1}^{\mathrm{R}, k}\right)}{N_{h}^{k}\left(s_{h}^{k}, a_{h}^{k}, b_{h}^{k}\right)}.
	\end{align*}
	Here, the first equality holds since $V^{\star}_{h+1}(s^{k^n}_{h+1}) - V^{\mathrm{R}, k^n}_{h+1}(s^{k^n}_{h+1}) = P_{h}^{k^n} \big(V^{\star}_{h+1} - V^{\mathrm{R}, k^n}_{h+1} \big)$ (in view of the definition of $P_h^k$ in (\ref{eq:P-hk-defn-s})), the final equality can be seen via simple rearrangement of the terms, while in the last line we replace $k^n$ (resp.~$n$) with $k$ (resp.~$N_h^k(s_h^k, a_h^k, b_h^k)$).  
	
\end{itemize}

\noindent Taking the above bounds together with (\ref{lemma1:equ8}) and (\ref{eq:Vk-Qk}), we can rearrange terms to reach
\begin{align}
	&\sum_{k = 1}^K\big(\overline{V}_{h}^k(s_h^k)-\underline{V}_{h}^k(s_h^k)\big)\nonumber\\
	&\le\sum_{k = 1}^K\big(\overline{Q}_{h}^{\mathrm{R},k}(s_h^k,a_h^k,b_h^k)-\underline{Q}_{h}^{\mathrm{R},k}(s_h^k,a_h^k,b_h^k) + \zeta_h^k\big)\nonumber\\
	& \le \left(1+\frac{1}{H}\right)\sum_{k=1}^{K}\big(\overline{V}_{h+1}^{k}(s_{h+1}^{k})-\underline{V}_{h+1}^{k}(s_{h+1}^{k})\big)+\overline{B}_{h}^{\mathrm{R}, k}\left(s_{h}^{k}, a_{h}^{k}, b_{h}^{k}\right)+\underline{B}_{h}^{\mathrm{R}, k}\left(s_{h}^{k}, a_{h}^{k}, b_{h}^{k}\right)\nonumber\\
	&+ 2HSAB+16c_{\mathrm{b}}(SAB)^{3/4}K^{1/4}H^{2}\log\frac{SABT}{\delta}+\sum_{k=1}^{K}\zeta_h^k\nonumber\\
	&+\Lambda_h^k\left(\left(P_{h}^{k}-P_{h, s_{h}^{k}, a_{h}^{k}, b_{h}^{k}}\right)\left(V_{h+1}^{\star}-\overline{V}_{h+1}^{\mathrm{R}, k}\right)-\left(P_{h}^{k}-P_{h, s_{h}^{k}, a_{h}^{k}, b_{h}^{k}}\right)\left(V_{h+1}^{\star}-\underline{V}_{h+1}^{\mathrm{R}, k}\right)\right)\\
	&+\Lambda_h^k\left(\frac{\sum_{i=1}^{N_{h}^{k}\left(s_{h}^{k}, a_{h}^{k}, b_{h}^{k}\right)}\left(\overline{V}_{h+1}^{\mathrm{R}, k^{i}}\left(s_{h+1}^{k^{i}}\right)-P_{h, s_{h}^{k}, a_{h}^{k}, b_{h}^{k}} \overline{V}_{h+1}^{\mathrm{R}, k}\right)}{N_{h}^{k}\left(s_{h}^{k}, a_{h}^{k}, b_{h}^{k}\right)}\right)\nonumber\\
	&-\Lambda_h^k\left(\frac{\sum_{i=1}^{N_{h}^{k}\left(s_{h}^{k}, a_{h}^{k}, b_{h}^{k}\right)}\left(\underline{V}_{h+1}^{\mathrm{R}, k^{i}}\left(s_{h+1}^{k^{i}}\right)-P_{h, s_{h}^{k}, a_{h}^{k}, b_{h}^{k}} \underline{V}_{h+1}^{\mathrm{R}, k}\right)}{N_{h}^{k}\left(s_{h}^{k}, a_{h}^{k}, b_{h}^{k}\right)}\right).
	\label{eq:recursion-12345}
\end{align}

Thus far, we have established a crucial connection between $\sum_{k = 1}^K\big(\overline{V}_{h}^k(s_h^k)-\underline{V}_{h}^k(s_h^k)\big)$ at step $h$ 
and  $\sum_{k=1}^{K}\big(\overline{V}_{h+1}^{k}(s_{h+1}^{k})-\underline{V}_{h+1}^{k}(s_{h+1}^{k})\big)$ at step $h+1$. 
Clearly, the term $\sum_{k=1}^{K}\big(\overline{V}_{h+1}^{k}(s_{h+1}^{k})-\underline{V}_{h+1}^{k}(s_{h+1}^{k})\big)$ can be further bounded in the same manner. 
As a result, by recursively applying the above relation (\ref{eq:recursion-12345}) 
over the time steps $h=1,2,\cdots, H$ and using the terminal condition $\overline{V}^{k}_{H+1} = \underline{V}^{k}_{H+1} = 0$, 
we can immediately arrive at the advertised bound in Lemma~\ref{lem:Qk-upper}.

\section{Proof of Lemma~\ref{lemma:bound_of_everything}}
\label{sec-proof-lemma:everything}

\subsection{Bounding the term \texorpdfstring{$\mathcal{D}_1$}{}}\label{sec-proof-lemma:extra-term}

First of all, let us look at the first two terms of $\mathcal{D}_1$ in (\ref{eq:expression_R1}). 
Recognizing the following elementary inequality
\begin{equation}
	\label{equ:algebra property}
	\left(1+\frac{1}{H}\right)^{h-1} 
	\leq \left(1+\frac{1}{H}\right)^{H} \leq e  
	\quad \text{for all } h = 1, 2,\cdots, H+1,
\end{equation}
we are allowed to upper bound the first two terms in (\ref{eq:expression_R1}) as follows:  
\begin{align}
	& \sum_{h = 1}^H \left(1+\frac{1}{H}\right)^{h-1} 
	\left\{ 2HSAB +  16c_\mathrm{b} H^2  (SAB)^{3/4} K^{1/4}\log\frac{SABT}{\delta} \right\} \nonumber\\
	&\qquad \lesssim H^2SAB + H^3  (SAB)^{3/4} K^{1/4}\log\frac{SABT}{\delta} \lesssim H^{4.5}SAB\log^2\frac{SABT}{\delta} + \sqrt{H^3SABK} \nonumber \\
	&\qquad = H^{4.5}SAB\log^2\frac{SABT}{\delta} + \sqrt{H^2SABT}, 
	\label{eq:secf:equ-num1}
\end{align}
where the last inequality can be shown using the AM-GM inequality as follows:
\begin{align*}
	H^{3}(SAB)^{3/4}K^{1/4}\log\frac{SABT}{\delta}&=\Big(H^{9/4}\sqrt{SAB}\log\frac{SABT}{\delta}\Big)\cdot(H^{3}SABK)^{1/4}\\
	&\leq H^{4.5}SAB\log^{2}\frac{SABT}{\delta}+\sqrt{H^{3}SABK}.
\end{align*}

We are now left with the last term of $\mathcal{D}_1$ in (\ref{eq:expression_R1}).
Towards this, we know that $\zeta_h^k$ is a martingale difference sequence (w.r.t both $h$ and $k$), and $\left\lvert \zeta_h^k\right\rvert\le 4H$. Hence by the Azuma-Hoeffding inequality and a union bound, we have with probability at least $1-\delta/2$, 
\begin{align}
	&\sum_{h = 1}^H \left(1+\frac{1}{H}\right)^{h-1}\sum_{k = 1}^K \zeta_h^k  \le \sum_{h = 1}^H \left(1+\frac{1}{H}\right)^{h-1} \sqrt{HK\log \frac{SABT}{\delta}} \lesssim \sqrt{H^2T\log\frac{SABT}{\delta}} \label{eq:secf:equ-num2}
\end{align}
with probability exceeding $1-\delta$.

Combining (\ref{eq:secf:equ-num1}) and (\ref{eq:secf:equ-num2}) with the definition (\ref{eq:expression_R1}) of $\mathcal{D}_1$  immediately leads to the claimed bound.

\subsection{Bounding the term \texorpdfstring{$\mathcal{D}_2$}{}}\label{sec-proof-lemma:exploration}

In view of the definition of $\overline{B}^{\mathrm{R}, k}_h(s^k_h, a^k_h, b^k_h)$ (resp. $\underline{B}^{\mathrm{R}, k}_h(s^k_h, a^k_h, b^k_h)$) in line~15 Algorithm~\ref{algorithm-auxiliary}, we can decompose $\mathcal{D}_2$ (cf.~(\ref{eq:expression_R2}))  as follows: 
\begin{align}
	\mathcal{D}_2	&= \sum_{h = 1}^H \left(1+\frac{1}{H}\right)^{h-1}c_\mathrm{b}\sqrt{H\log\frac{SABT}{\delta} }\sum_{k = 1}^K\sqrt{\frac{\overline{\psi}^{\mathrm{a}, k}_h(s^k_h, a^k_h, b^k_h) - \big(\overline{\phi}^{\mathrm{a}, k}_h(s^k_h, a^k_h, b^k_h)\big)^2}{N_h^k(s_h^k, a_h^k, b^k_h)}} \nonumber \\
	&+ \sum_{h = 1}^H \left(1+\frac{1}{H}\right)^{h-1} c_\mathrm{b}\sqrt{\log\frac{SABT}{\delta}} \sum_{k = 1}^K \sqrt{\frac{\overline{\psi}^{\mathrm{r}, k}_h(s^k_h, a^k_h, b^k_h) - \big(\overline{\phi}^{\mathrm{r}, k}_h(s^k_h, a^k_h, b^k_h)\big)^2}{N_h^k(s_h^k, a_h^k, b^k_h)}} \nonumber \\
	& + \sum_{h = 1}^H \left(1+\frac{1}{H}\right)^{h-1}c_\mathrm{b}\sqrt{H\log\frac{SABT}{\delta} }\sum_{k = 1}^K\sqrt{\frac{\underline{\psi}^{\mathrm{a}, k}_h(s^k_h, a^k_h, b^k_h) - \big(\underline{\phi}^{\mathrm{a}, k}_h(s^k_h, a^k_h, b^k_h)\big)^2}{N_h^k(s_h^k, a_h^k, b^k_h)}} \nonumber \\
	&+ \sum_{h = 1}^H \left(1+\frac{1}{H}\right)^{h-1} c_\mathrm{b}\sqrt{\log\frac{SABT}{\delta}} \sum_{k = 1}^K \sqrt{\frac{\underline{\psi}^{\mathrm{r}, k}_h(s^k_h, a^k_h, b^k_h) - \big(\underline{\phi}^{\mathrm{r}, k}_h(s^k_h, a^k_h, b^k_h)\big)^2}{N_h^k(s_h^k, a_h^k, b^k_h)}} \nonumber \\
	& \lesssim   \sqrt{H\log\frac{SABT}{\delta} } \sum_{h = 1}^H\sum_{k = 1}^K\sqrt{\frac{\overline{\psi}^{\mathrm{a}, k}_h(s^k_h, a^k_h, b^k_h) - \big(\overline{\phi}^{\mathrm{a}, k}_h(s^k_h, a^k_h, b^k_h)\big)^2}{N_h^k(s_h^k, a_h^k, b^k_h)}}   \nonumber \\
	& +\sqrt{\log\frac{SABT}{\delta} } \sum_{h = 1}^H    \sum_{k = 1}^K \sqrt{\frac{\overline{\psi}^{\mathrm{r}, k}_h(s^k_h, a^k_h, b^k_h) - \big(\overline{\phi}^{\mathrm{r}, k}_h(s^k_h, a^k_h, b^k_h)\big)^2}{N_h^k(s_h^k, a_h^k, b^k_h)}} , \label{equ:bound of b_hat}
\end{align}
where the last relation holds due to (\ref{equ:algebra property}) and the symmetry of $[\overline{\psi}^{\mathrm{a}}, \underline{\psi}^{\mathrm{a}}]$, $[\overline{\phi}^{\mathrm{a}}, \underline{\phi}^{\mathrm{a}}]$, $[\overline{\psi}^{\mathrm{r}}, \underline{\psi}^{\mathrm{r}}]$ and $[\overline{\phi}^{\mathrm{r}}, \underline{\phi}^{\mathrm{r}}]$.
In what follows, we intend to bound these two terms separately.

\paragraph{Step 1: upper bounding the first term in (\ref{equ:bound of b_hat}).}

Towards this, we make the observation that
\begin{align}
	&\sum_{k = 1}^K\sqrt{\frac{\overline{\psi}^{\mathrm{a}, k}_h(s^k_h, a^k_h, b^k_h) - \big(\overline{\phi}^{\mathrm{a}, k}_h(s^k_h, a^k_h, b^k_h) \big)^2}{N^{k}_h(s^k_h, a^k_h, b^k_h)}} 
	\le \sum_{k = 1}^K\sqrt{\frac{\overline{\psi}^{\mathrm{a}, k}_h(s^k_h, a^k_h, b^k_h)}{N^{k}_h(s^k_h, a^k_h, b^k_h)}} \nonumber \\
	&= \sum_{k = 1}^K\sqrt{\frac{\sum_{n = 1}^{N^{k}_h(s^k_h, a^k_h, b^k_h)} \eta^{N^{k}_h(s^k_h, a^k_h, b^k_h)}_n\big(\overline{V}^{k^n}_{h+1}(s^{k^n}_{h+1}) - \overline{V}^{\mathrm{R}, k^n}_{h+1}(s^{k^n}_{h+1})\big)^2}{N^{k}_h(s^k_h, a^k_h, b^k_h)}} , \label{eq:proof of var of var prelim}
\end{align}
where the second line follows from the update rule of $\overline{\psi}_h^{\mathrm{a}, k}$  in (\ref{eq:recursion_mu_psi_adv}).
Combining the relation 
$ | \overline{V}^{k}_{h+1}(s_h^k) - \overline{V}^{\mathrm{R}, k}_{h+1}(s_h^k) | \le 2$ (cf.~(\ref{eq:close-ref-v})) and the property $\sum_{n = 1}^{N^{k}_h(s^k_h, a^k_h, b^k_h)} \eta^{N^{k}_h(s^k_h, a^k_h, b^k_h)}_n \leq 1$ (cf.~(\ref{eq:sum-eta-n-N})) with (\ref{eq:proof of var of var prelim}) yields
\begin{align}
	\sum_{k = 1}^K\sqrt{\frac{\overline{\psi}^{\mathrm{a}, k}_h(s^k_h, a^k_h, b^k_h) - \big(\overline{\phi}^{\mathrm{a}, k}_h(s^k_h, a^k_h, b^k_h) \big)^2}{N^{k}_h(s^k_h, a^k_h, b^k_h)}} 
	&\le \sum_{k = 1}^K\sqrt{\frac{4}{N^{k}_h(s^k_h, a^k_h, b^k_h)}} \le 4\sqrt{SABK}. \label{equ: proof of var of var}
\end{align}
Here, the last inequality holds due to the following fact:
\begin{align}
	\sum_{k = 1}^K\sqrt{\frac{1}{N^{k}_h(s^k_h, a^k_h, b^k_h)}} &= \sum_{(s, a, b)\in \mathcal{S}\times \mathcal{A}\times\mathcal{B}} \sum_{n = 1}^{N^{K}_h(s, a, b)} \sqrt{\frac{1}{n}} \le 2 \sum_{(s, a, b)\in \mathcal{S}\times \mathcal{A}\times\mathcal{B}} \sqrt{N^{K}_h(s, a, b)} \notag \\
	& \le  2  \sqrt{ \sum_{(s, a, b)\in \mathcal{S}\times \mathcal{A}\times\mathcal{B}} 1 } \cdot \sqrt{\sum_{(s,a, b)\in \mathcal{S}\times \mathcal{A}\times\mathcal{B}}N^{K}_h(s, a, b)}  =  2\sqrt{SABK},
	\label{equ:sak bound}
\end{align}
where the last line arises from  Cauchy-Schwarz  and the basic fact that $\sum_{(s,a, b)}N^{K}_h(s, a, b) = K$.

\paragraph{Step 2: upper bounding the second term in (\ref{equ:bound of b_hat}).}

Recalling the update rules of $\overline{\phi}_h^{\mathrm{r}, k}$ and $\overline{\psi}_h^{\mathrm{r}, k}$ in (\ref{eq:recursion_mu_psi_ref}), we have
\begin{align}
	&\sum_{k = 1}^K\sqrt{\frac{\overline{\psi}^{\mathrm{r}, k}_h(s^k_h, a^k_h, b^k_h) - \big(\overline{\phi}^{\mathrm{r}, k}_h(s^k_h, a^k_h, b^k_h)\big)^2}{N_h^k(s_h^k, a_h^k, b^k_h)}}= \sum_{k = 1}^K\sqrt{\frac{1}{N^{k}_h(s^k_h, a^k_h, b^k_h)}} R_h^k. \label{equ:adv-var1}
\end{align}
with $$R_h^k=\sqrt{ \frac{\sum_{n = 1}^{N^{k}_h(s^k_h, a^k_h, b^k_h)}\big(\overline{V}^{\mathrm{R}, k^n}_{h+1}(s^{k^n}_{h+1}) \big)^2}{N^{k}_h(s^k_h, a^k_h, b^k_h)} - \bigg(\frac{\sum_{n = 1}^{N^{k}_h(s^k_h, a^k_h, b^k_h)}\overline{V}^{\mathrm{R}, k^n}_{h+1}(s^{k^n}_{h+1})}{N^{k}_h(s^k_h, a^k_h, b^k_h)}\bigg)^2 }.$$
Additionally, the quantity $R_h^k$ defined in (\ref{equ:adv-var1}) obeys
\begin{align}
	(R_h^k)^2 & \leq\frac{\sum_{n = 1}^{N^{k}_h(s^k_h, a^k_h, b^k_h)}\big(\overline{V}^{\mathrm{R}, k^n}_{h+1}(s^{k^n}_{h+1})\big)^2 - \big({V}^{\star}_{h+1}(s^{k^n}_{h+1}) \big)^2 }{N^{k}_h(s^k_h, a^k_h, b^k_h)} \nonumber\\
        &\qquad+ \frac{\sum_{n = 1}^{N^{k}_h(s^k_h, a^k_h, b^k_h)}\big({V}^{\star}_{h+1}(s^{k^n}_{h+1}) \big)^2}{N^{k}_h(s^k_h, a^k_h, b^k_h)}
	- \Bigg(\frac{\sum_{n = 1}^{N^{k}_h(s^k_h, a^k_h, b^k_h)}{V}^{\star}_{h+1}(s^{k^n}_{h+1})}{N^{k}_h(s^k_h, a^k_h, b^k_h)}\Bigg)^2 \le L_1+L_2, \label{equ:adv-var2}
\end{align}
which arises from the fact that $H\geq \overline{V}^{\mathrm{R}, k^n}_{h+1} \ge V^{\star}_{h+1}\geq 0$ for all $k^n \leq K$, and 
\begin{align}
    & L_1 = \frac{\sum_{n = 1}^{N^{k}_h(s^k_h, a^k_h, b^k_h)}2H \big(\overline{V}^{\mathrm{R}, k^n}_{h+1}(s^{k^n}_{h+1}) - {V}^{\star}_{h+1}(s^{k^n}_{h+1}) \big)}{N^{k}_h(s^k_h, a^k_h, b^k_h)},\\
    & L_2 = \frac{\sum_{n = 1}^{N^{k}_h(s^k_h, a^k_h, b^k_h)}\big({V}^{\star}_{h+1}(s^{k^n}_{h+1})\big)^2}{N^{k}_h(s^k_h, a^k_h, b^k_h)} - \Bigg(\frac{\sum_{n = 1}^{N^{k}_h(s^k_h, a^k_h, b^k_h)}{V}^{\star}_{h+1}(s^{k^n}_{h+1})}{N^{k}_h(s^k_h, a^k_h, b^k_h)}\Bigg)^2.
\end{align}
Therefore, we have
\begin{align*}
	\big(\overline{V}_{h+1}^{\mathrm{R},k^{n}}(s_{h+1}^{k^{n}})\big)^{2}\!\!-\!\!\big(V_{h+1}^{\star}(s_{h+1}^{k^{n}})\big)^{2} & =\big(\overline{V}_{h+1}^{\mathrm{R},k^{n}}(s_{h+1}^{k^{n}})+V_{h+1}^{\star}(s_{h+1}^{k^{n}})\big)\big(\overline{V}_{h+1}^{\mathrm{R},k^{n}}(s_{h+1}^{k^{n}})-V_{h+1}^{\star}(s_{h+1}^{k^{n}})\big)\\
	& \le2H\big(\overline{V}_{h+1}^{\mathrm{R},k^{n}}(s_{h+1}^{k^{n}})-V_{h+1}^{\star}(s_{h+1}^{k^{n}})\big).
\end{align*}
With (\ref{equ:adv-var2}) in mind, 
we shall proceed to bound each term in (\ref{equ:adv-var2}) separately. 

\begin{itemize}
	\item The first term $L_1$ can be straightforwardly bounded as follows
	\begin{align}
		L_1 & = \frac{2H}{N_h^k(s_h^k,a_h^k, b^k_h)} \Bigg( \sum_{n = 1}^{N^{k}_h(s_h^k, a_h^k, b^k_h)}V'\mathbbm{1}\Big(V' \leq 3\Big) + \Phi^{k}_h(s^k_h, a^k_h, b^k_h) \Bigg) \nonumber \\
		&\le 6H +\frac{2H}{N_h^k(s_h^k,a_h^k, b^k_h)} \Phi^{k}_h(s^k_h, a^k_h, b^k_h), \label{equ:error until k-234}
	\end{align}
	where $\Phi^{k}_h(s^k_h, a^k_h, b^k_h)$ is defined as
	\begin{equation} \label{eq:def_Phik}
		\Phi^{k}_h(s_h^k, a_h^k, b^k_h) \coloneqq \sum_{n = 1}^{N^{k}_h(s_h^k, a_h^k, b^k_h)}V'\mathbbm{1}\Big(V'> 3\Big)
	\end{equation}
	with $V'=\overline{V}^{\mathrm{R}, k^n}_{h+1}(s^{k^n}_{h+1}) - V^{\star}_{h+1}(s^{k^n}_{h+1}) $.
	\item When it comes to the second term $L_2$, we claim that 
	\begin{align} \label{eq:var-Vstar}
		L_2 \lesssim \mathsf{Var}_{h, s_h^k, a_h^k, b^k_h}(V^{\star}_{h+1}) + H^2\sqrt{\frac{\log^2\frac{SABT}{\delta}}{N_h^k(s_h^k,a_h^k, b^k_h)}},
	\end{align}
	which will be justified in Appendix~\ref{sec:proof:eq:var-Vstar}. 
\end{itemize}

Plugging (\ref{equ:error until k-234}) and (\ref{eq:var-Vstar}) into (\ref{equ:adv-var2}) and (\ref{equ:adv-var1}) 
allows one to demonstrate that
\begin{align}
	&\sum_{k = 1}^K\sqrt{\frac{\overline{\psi}^{\mathrm{r}, k}_h(s^k_h, a^k_h, b^k_h) - \big(\overline{\phi}^{\mathrm{r}, k}_h(s^k_h, a^k_h, b^k_h)\big)^2}{N_h^k(s_h^k, a_h^k, b_h^k)}} \nonumber \\
	& \lesssim \sum_{k = 1}^K\sqrt{\frac{1}{N^{k}_h(s^k_h, a^k_h, b^k_h)}}\sqrt{H\!\!+\!\!\frac{H\Phi^{k}_h(s^k_h, a^k_h, b^k_h)}{N^{k}_h(s^k_h, a^k_h, b^k_h)} \!\!+\!\! \mathsf{Var}_{h, s^k_h, a^k_h, b^k_h}(V^{\star}_{h+1})\!\! +\!\! H^2\sqrt{\frac{\log\frac{SAT}{\delta}}{N^{k}_h(s^k_h, a^k_h, b^k_h)}}} \nonumber\\
	&\leq \sum_{k = 1}^K \Bigg( \sqrt{\frac{H }{N^{k}_h(s^k_h, a^k_h, b^k_h)}}  \!\!+ \!\!\frac{ \sqrt{H\Phi^{k}_h(s^k_h, a^k_h, b^k_h)}}{N^{k}_h(s^k_h, a^k_h, b^k_h)} \!\!+ \!\!\sqrt{ \frac{ \mathsf{Var}_{h, s^k_h, a^k_h, b^k_h}(V^{\star}_{h+1})}{N^{k}_h(s^k_h, a^k_h, b^k_h)}} \!\! + \!\!\frac{H\log^{1/2}\frac{SABT}{\delta}}{ \big(N^{k}_h(s^k_h, a^k_h, b^k_h) \big)^{3/4}} \Bigg). \label{equ:bound of var1}
\end{align}
Combined with (\ref{equ:sak bound}) and (\ref{eq:n3/4-bound}), there is
\begin{align}
	(\ref{equ:bound of var1}) & \lesssim \sqrt{HSABK} +\sum_{k=1}^K \frac{ \sqrt{H\Phi^{k}_h(s^k_h, a^k_h, b^k_h)}}{N^{k}_h(s^k_h, a^k_h, b^k_h)}\nonumber\\
	&\qquad+ \sum_{k = 1}^K\sqrt{\frac{\mathsf{Var}_{h, s^k_h, a^k_h, b^k_h}(V^{\star}_{h+1})}{N^{k}_h(s^k_h, a^k_h, b^k_h)}} + H(SAB)^{3/4} \left(K\log^2\frac{SABT}{\delta} \right)^{1/4} .
	\label{equ:bound of var}
\end{align}

\paragraph{Step 3: putting together the preceding results.}

Finally,  the above results in (\ref{equ: proof of var of var}) and (\ref{equ:bound of var}) taken collectively with (\ref{equ:bound of b_hat}) lead to
\begin{align*}
	\mathcal{D}_2 &\lesssim \sqrt{H^3SABK\log\frac{SABT}{\delta}} \!\! +\!\! \sum_{h = 1}^H \sqrt{\log\frac{SABT}{\delta}} \sum_{k = 1}^K \sqrt{\frac{\overline{\psi}^{\mathrm{r}, k}_h(s^k_h, a^k_h, b^k_h) \!\!-\!\! \big(\overline{\phi}^{\mathrm{r}, k}_h(s^k_h, a^k_h, b^k_h)\big)^2}{N_h^k(s_h^k, a_h^k, b^k_h)}} \\
	&\lesssim \sqrt{H^3SABK\log\frac{SABT}{\delta}} + H^2(SAB)^{3/4}K^{1/4}\log\frac{SABT}{\delta}\\
	&+ \sqrt{\log\frac{SABT}{\delta} }\sum_{h=1}^H\sum_{k = 1}^K\sqrt{\frac{\mathsf{Var}_{h, s^k_h, a^k_h, b^k_h}(V^{\star}_{h+1})}{N^{k}_h(s^k_h, a^k_h, b^k_h)}}  +\sqrt{H \log\frac{SABT}{\delta} }\sum_{h=1}^H \sum_{k=1}^K \frac{ \sqrt{\Phi^{k}_h(s^k_h, a^k_h, b^k_h)}}{N^{k}_h(s^k_h, a^k_h, b^k_h)}.
\end{align*} 
Furthermore, we claimed two inequalities as follows:
\begin{align} 
	\sum_{h = 1}^H \sum_{k = 1}^K\sqrt{\frac{\mathsf{Var}_{h, s^k_h, a^k_h, b^k_h}(V^{\star}_{h+1})}{N^{k}_h(s^k_h, a^k_h, b^k_h)}} & \lesssim \sqrt{H^2SABT\log\frac{SABT}{\delta}} +  H^{3.5}SAB\log\frac{SABT}{\delta},\label{eq:var-Vstar-sum}  \\
	\sum_{h=1}^H \sum_{k=1}^K \frac{ \sqrt{\Phi^{k}_h(s^k_h, a^k_h, b^k_h)}}{N^{k}_h(s^k_h, a^k_h, b^k_h)} & \lesssim H^{7/2}SAB\log^{3/2}\frac{SABT}{\delta}, \label{eq:bound-ref-error-2}
\end{align}
whose proofs are postponed to Appendix~\ref{sec:proof:eq:var-Vstar-sum} and Appendix~\ref{sec:proof:eq:bound-ref-error-2}, respectively. Thus, there is
\begin{align*}
	\mathcal{D}_2 &{\lesssim} \sqrt{H^3SABK\log\frac{SABT}{\delta}} + H^2(SAB)^{3/4}K^{1/4}\log\frac{SABT}{\delta}  + H^4 SAB\log^2\frac{SABT}{\delta}  \\
	&{\lesssim} \sqrt{H^3SABK\log\frac{SABT}{\delta}} + H^4 SAB\log^2\frac{SABT}{\delta}\\
	&= \sqrt{H^2SABT\log\frac{SABT}{\delta}} + H^4 SAB\log^2\frac{SABT}{\delta} .
\end{align*} 
where the second line above is valid since
\begin{align*}
	& H^{2}(SAB)^{3/4}K^{1/4}\log^{5/4}\frac{SABT}{\delta}\!\!=\!\!\bigg(H^{5/4}(SAB)^{1/2}\log\frac{SABT}{\delta}\bigg)\!\!\cdot\!\!\bigg(H^{3}SABK\log\frac{SABT}{\delta}\bigg)^{1/4}\\
	& \quad\lesssim H^{2.5}SAB\log^{2}\frac{SABT}{\delta}+\sqrt{H^{3}SABK\log\frac{SABT}{\delta}}\\
	& \quad=H^{2.5}SAB\log^{2}\frac{SABT}{\delta}+\sqrt{H^{2}SABT\log\frac{SABT}{\delta}}
\end{align*}
due to the Cauchy-Schwarz inequality. This concludes the proof of the advertised upper bound on $\mathcal{D}_2$.

\subsubsection{Proof of the inequality~\texorpdfstring{(\ref{eq:var-Vstar})}{}}\label{sec:proof:eq:var-Vstar}

Akin to the proof of $W_{4,1}$ in (\ref{equ:lemma4 vr 7}), let 
\begin{align*}
	F_{h+1}^k \coloneqq ({V}^{\star}_{h+1})^{ 2} 
	\qquad \text{and} \qquad
	G_h^k(s,a,b, N) \coloneqq \frac{1}{N}.
\end{align*}
By observing and setting
\begin{align*}
	\|F_{h+1}^k\|_\infty \leq H^2 \eqqcolon C_{\mathrm{f}},\qquad   C_{\mathrm{g}} \coloneqq \frac{1}{N},
\end{align*} 
we can apply Lemma~\ref{lemma:martingale-union-all} to yield
\begin{align*}
	&\bigg| \frac{1}{{N_h^k}}\sum_{n = 1}^{{N_h^k}} \big( V^{\star}_{h+1}(s^{k^n}_{h+1}) \big)^2 - P_{h, s_h^k, a_h^k, b_h^k}(V^{\star}_{h+1})^2 \bigg|\\ =& \bigg| \frac{1}{{N_h^k}}\sum_{n = 1}^{{N_h^k}} \big(P^{k^n}_h - P_{h, s_h^k, a_h^k, b_h^k} \big) \big(V^{\star}_{h+1} \big)^2  \bigg|
	\lesssim H^2\sqrt{\frac{\log^2\frac{SABT}{\delta}}{{N_h^k}}}
\end{align*}
with probability at least $1-\delta$.
Similarly, by applying the trivial bound $\|V_{h+1}^{\star}\|_{\infty} \leq H$ and Lemma~\ref{lemma:martingale-union-all}, we can obtain
\begin{align*}
	\bigg| \frac{1}{{N_h^k}}\sum_{n = 1}^{{N_h^k}} V^{\star}_{h+1}(s^{k^n}_{h+1}) \!\!-\!\! P_{h, s_h^k, a_h^k, b_h^k}V^{\star}_{h+1} \bigg| 
	= \bigg| \frac{1}{{N_h^k}}\sum_{n = 1}^{{N_h^k}} \big(P^{k^n}_h \!\!-\!\! P_{h, s_h^k, a_h^k, b_h^k} \big)V^{\star}_{h+1} \bigg|
	\lesssim H\sqrt{\frac{\log\frac{SABT}{\delta}}{{N_h^k}}}
\end{align*}
with probability at least $1-\delta$.

Recalling from (\ref{lemma1:equ2}) the definition 
$$\mathsf{Var}_{h, s_h^k, a_h^k, b_h^k}(V^{\star}_{h+1}) = P_{h, s_h^k, a_h^k, b_h^k}(V^{\star}_{h+1})^2 -  \big(P_{h, s_h^k, a_h^k, b_h^k} V^{\star}_{h+1} \big)^2 ,$$ 
we can use the preceding two bounds and the triangle inequality to show that: with probability at least $1-\delta$, there is
\begin{align*} 
	&  \Bigg|\frac{1}{{N_h^k}}\sum_{n = 1}^{{N_h^k}}  V^{\star}_{h+1}(s^{k^n}_{h+1})^2 - \Bigg(\frac{1}{{N_h^k}}\sum_{n = 1}^{{N_h^k}} V^{\star}_{h+1}(s^{k^n}_{h+1}) \Bigg)^2 - \mathsf{Var}_{h, s_h^k, a_h^k, b_h^k}(V^{\star}_{h+1}) \Bigg|\\
	&\leq \Bigg|\frac{1}{{N_h^k}}\sum_{n = 1}^{{N_h^k}} V^{\star}_{h+1}(s^{k^n}_{h+1})^2 \!\!-\!\! P_{h, s_h^k, a_h^k, b_h^k}(V^{\star}_{h+1})^2 \Bigg| 
	\!\!-\!\! \Bigg| \bigg(\frac{1}{{N_h^k}}\sum_{n = 1}^{{N_h^k}} V^{\star}_{h+1}(s^{k^n}_{h+1})\bigg)^2 \!\!-\!\! (P_{h, s_h^k, a_h^k, b_h^k}V^{\star}_{h+1})^2 \Bigg| \\
	&\!\lesssim\! \Bigg|\! \frac{1}{N_h^k}\!\sum_{n = 1}^{N_h^k}\! V^{\star}_{h+1}(s^{k^n}_{h+1})  \!\!-\!\! P_{h, s_h^k, a_h^k, b_h^k}V^{\star}_{h+1} \!\Bigg|  \Bigg| \!\frac{1}{{N_h^k}}\!\sum_{n = 1}^{{N_h^k}}\! V^{\star}_{h+1}(s^{k^n}_{h+1})  \!\!+\!\! P_{h, s_h^k, a_h^k, b_h^k}V^{\star}_{h+1} \!\Bigg|\!\!+\!\!H^2\!\sqrt{\frac{\log^2\!\frac{SABT}{\delta}}{{N_h^k}}}.
\end{align*}
And thus, with the fact that $\|V_{h+1}^{\star}\|_{\infty} \leq H$, we get $$\Bigg|\frac{1}{{N_h^k}}\sum_{n = 1}^{{N_h^k}}  V^{\star}_{h+1}(s^{k^n}_{h+1})^2 - \Bigg(\frac{1}{{N_h^k}}\sum_{n = 1}^{{N_h^k}} V^{\star}_{h+1}(s^{k^n}_{h+1}) \Bigg)^2 - \mathsf{Var}_{h, s_h^k, a_h^k, b_h^k}(V^{\star}_{h+1}) \Bigg|\lesssim H^2\sqrt{\frac{\log^2\frac{SABT}{\delta}}{{N_h^k}}}.$$

\subsubsection{Proof of the inequality~\texorpdfstring{(\ref{eq:var-Vstar-sum})}{}}\label{sec:proof:eq:var-Vstar-sum}

To begin with, we make the observation that 
\begin{align*}
	\sum_{k = 1}^K\sqrt{\frac{\mathsf{Var}_{h, s^k_h, a^k_h, b_h^k}(V^{\star}_{h+1})}{N^{k}_h(s^k_h, a^k_h, b_h^k)}} & = \sum_{(s, a, b)\in\mathcal{S}\times\mathcal{A}\times\mathcal{B}} \sum_{n = 1}^{N^{K}_h(s, a, b)} \sqrt{\frac{\mathsf{Var}_{h, s, a, b}(V^{\star}_{h+1})}{n}}\\
	&\leq 2\sum_{(s, a, b)\in\mathcal{S}\times\mathcal{A}\times\mathcal{B}}   \sqrt{N^{K}_h(s, a, b)\mathsf{Var}_{h, s, a, b}(V^{\star}_{h+1})},
\end{align*}
which relies on the fact that $\sum_{n = 1}^{N} 1/{\sqrt{n}} \leq 2\sqrt{N}$. 
It then follows that
\begin{align}
	&\sum_{h = 1}^H \sum_{k = 1}^K\sqrt{\frac{\mathsf{Var}_{h, s^k_h, a^k_h, b_h^k}(V^{\star}_{h+1})}{N^{k}_h(s^k_h, a^k_h, b_h^k)}}  \nonumber \\
	&\leq 2 \sum_{h = 1}^H \sum_{(s, a, b)\in\mathcal{S}\times\mathcal{A}\times\mathcal{B}}   \sqrt{N^{K}_h(s, a, b)\mathsf{Var}_{h, s, a, b}(V^{\star}_{h+1})}\nonumber \\
	&\leq 2\sqrt{\sum_{h = 1}^H \sum_{(s, a, b)\in\mathcal{S}\times\mathcal{A}\times\mathcal{B}} 1}  \cdot \sqrt{ \sum_{h = 1}^H \sum_{(s, a, b)\in\mathcal{S}\times\mathcal{A}\times\mathcal{B}}   N^{K}_h(s, a, b)\mathsf{Var}_{h, s, a, b}(V^{\star}_{h+1})} \nonumber \\
	&= 2 \sqrt{HSAB}\sqrt{\sum_{h = 1}^H \sum_{k = 1}^K \mathsf{Var}_{h, s^k_h, a^k_h, b_h^k}(V^{\star}_{h+1})}\, , \label{equ:bound-var-v-optimal}
\end{align}
where the second inequality invokes the Cauchy-Schwarz inequality.

The rest of the proof is then dedicated to bounding (\ref{equ:bound-var-v-optimal}). 
Towards this end, we first decompose 
\begin{align}
	&\sum_{h = 1}^H \sum_{k = 1}^K \mathsf{Var}_{h, s^k_h, a^k_h, b_h^k}(V^{\star}_{h+1})  \nonumber \\
	&\leq \sum_{h = 1}^H \sum_{k = 1}^K \mathsf{Var}_{h, s^k_h, a^k_h, b_h^k} \big( V^{\pi^k}_{h+1} \big) + \sum_{h = 1}^H \sum_{k = 1}^K \left|\mathsf{Var}_{h, s^k_h, a^k_h, b_h^k}(V^{\star}_{h+1}) - \mathsf{Var}_{h, s^k_h, a^k_h, b_h^k}(V^{\pi^k}_{h+1})\right| \nonumber \\
	&{\lesssim} HT + H^3\log \frac{SABT}{\delta} + \sum_{h = 1}^H \sum_{k = 1}^K \left|\mathsf{Var}_{h, s^k_h, a^k_h, b_h^k}(V^{\star}_{h+1}) - \mathsf{Var}_{h, s^k_h, a^k_h, b_h^k}(V^{\pi^k}_{h+1})\right| , \label{equ:var-vsatr-1}
\end{align}
where the last inequality follows from \citep[Lemma C.5]{jin2018qarxiv} for a formal proof.
The second term on the right-hand side of (\ref{equ:var-vsatr-1}) can be bounded as follows 
\begin{align}
	& \sum_{h=1}^{H}\sum_{k=1}^{K}\left|\mathsf{Var}_{h,s_{h}^{k},a_{h}^{k}, b_h^k}(V_{h+1}^{\star})-\mathsf{Var}_{h,s_{h}^{k},a_{h}^{k}, b_h^k}(V_{h+1}^{\pi^{k}})\right|\nonumber\\
	& =\sum_{h=1}^{H}\sum_{k=1}^{K}\left|P_{h,s_{h}^{k},a_{h}^{k}, b_h^k}(V_{h+1}^{\star})^{2}\!-\!\big(P_{h,s_{h}^{k},a_{h}^{k},b_h^k}V_{h+1}^{\star}\big)^{2}\!-\!P_{h,s_{h}^{k},a_{h}^{k},b_h^k}(V_{h+1}^{\pi^{k}})^{2}\!+\!\big(P_{h,s_{h}^{k},a_{h}^{k},b_h^k}V_{h+1}^{\pi^{k}}\big)^{2}\right| \notag\\
	& \leq\sum_{h=1}^{H}\sum_{k=1}^{K}\bigg\{\left|P_{h,s_{h}^{k},a_{h}^{k},b_h^k}\Big(\big(V_{h+1}^{\star}-V_{h+1}^{\pi^{k}}\big)\big(V_{h+1}^{\star}+V_{h+1}^{\pi^{k}}\big)\Big)\right|\nonumber\\
	&\qquad+\Big|\big(P_{h,s_{h}^{k},a_{h}^{k},b_h^k}V_{h+1}^{\star}\big)^{2}-\big(P_{h,s_{h}^{k},a_{h}^{k},b_h^k}V_{h+1}^{\pi^{k}}\big)^{2}\Big|\bigg\}.\label{equ:var-vsatr-22}
\end{align}
Before next proof, we first state that 
\begin{align*}
	&\left|P_{h,s_{h}^{k},a_{h}^{k},b_h^k}\Big(\big(V_{h+1}^{\star}-V_{h+1}^{\pi^{k}}\big)\big(V_{h+1}^{\star}+V_{h+1}^{\pi^{k}}\big)\Big)\right| \\
	\leq&  \left|P_{h,s_{h}^{k},a_{h}^{k},b_h^k}\big(V_{h+1}^{\star}-V_{h+1}^{\pi^{k}}\big)\right|\Big(\big\| V_{h+1}^{\star}\big\|_{\infty}+\big\| V_{h+1}^{\pi^{k}}\big\|_{\infty}\big)\leq2H\left|P_{h,s_{h}^{k},a_{h}^{k},b_h^k}\big(V_{h+1}^{\star}-V_{h+1}^{\pi^{k}}\big)\right|,\\
	&\bigg|\big(P_{h,s_{h}^{k},a_{h}^{k},b_h^k}V_{h+1}^{\star}\big)^{2}-\big(P_{h,s_{h}^{k},a_{h}^{k},b_h^k}V_{h+1}^{\pi^{k}}\big)^{2}\bigg| \\
	\leq& \Big|P_{h,s_{h}^{k},a_{h}^{k},b_h^k}\big(V_{h+1}^{\star}-V_{h+1}^{\pi^{k}}\big)\Big| \cdot \Big|P_{h,s_{h}^{k},a_{h}^{k},b_h^k}\big(V_{h+1}^{\star}+V_{h+1}^{\pi^{k}}\big)\Big|\leq2H\left|P_{h,s_{h}^{k},a_{h}^{k},b_h^k}\big(V_{h+1}^{\star}-V_{h+1}^{\pi^{k}}\big)\right|.
\end{align*}
And thus, there is
\begin{align}
	(\ref{equ:var-vsatr-22}) & {\leq}4H\sum_{h=1}^{H}\sum_{k=1}^{K}\left|P_{h,s_{h}^{k},a_{h}^{k},b_h^k}\big(V_{h+1}^{\star}-V_{h+1}^{\pi^{k}}\big)\right| \notag\\
    &\overset{(\mathrm{i})}{\leq}4H\sum_{h=1}^{H}\sum_{k=1}^{K}P_{h,s_{h}^{k},a_{h}^{k},b_h^k}\big(\overline{V}_{h+1}^{k}-\underline{V}_{h+1}^{k}\big) \notag\\
	& =4H\sum_{h=1}^{H}\sum_{k=1}^{K}\left\{ \overline{V}_{h+1}^{k}(s_{h+1}^{k})-\underline{V}_{h+1}^{k}(s_{h+1}^{k})+\big(P_{h,s_{h}^{k},a_{h}^{k},b_h^k}-P_{h}^{k}\big)\big(\overline{V}_{h+1}^{k}-\underline{V}_{h+1}^{k}\big)\right\} \nonumber\\
	& \leq 4H \sum_{h=1}^{H}\sum_{k=1}^{K}\big(\delta_{h+1}^{k}+\chi_{h+1}^{k}\big) \notag\\
	& \overset{(\mathrm{ii})}{\lesssim}H^{2}\sqrt{T\log\frac{SABT}{\delta}}+ \sqrt{H^7SABT\log\frac{SABT}{\delta}} + H^4SAB \notag\\
	& \asymp \sqrt{H^7SABT\log\frac{SAT}{\delta}} + H^4SAB,
	\label{equ:var-vsatr-2}
\end{align}
where we define
\begin{align}
	\delta_{h+1}^k \coloneqq \overline{V}_{h+1}^{k}(s_{h+1}^{k})-\underline{V}_{h+1}^{k}(s_{h+1}^{k}),  \qquad \chi_{h+1}^k \coloneqq \big(P_{h,s_{h}^{k},a_{h}^{k},b_h^k}-P_{h}^{k}\big)\big(\overline{V}_{h+1}^{k}(s_{h+1}^{k})-\underline{V}_{h+1}^{k}\big). 
\end{align}
We shall take a moment to explain how we derive (\ref{equ:var-vsatr-2}). The inequality (ii) is valid since $\underline{V}_{h+1}\leq V^{\star}\leq\overline{V}_{h+1}$ and $\underline{V}_{h+1}\leq V^{\pi}\leq\overline{V}_{h+1}$;  
and (iii) results from the following two bounds:
\begin{subequations}
	\begin{align}
		\sum_{h=1}^{H}\sum_{k=1}^{K}\delta_{h+1}^{k} & \le\sum_{h=1}^{H} \left(eH^2SAB+\sqrt{H^3T\log \frac{SABT}{\delta}} + 10c_\mathrm{b}\sqrt{eH^3SABT\log\frac{SABT}{\delta}}\right)\nonumber\\
		&\lesssim \sqrt{H^5SABT\log\frac{SABT}{\delta}} + H^3 SAB, \label{eq:sum-delta-k-bound}\\
		\sum_{h=1}^{H}\sum_{k=1}^{K}\chi_{h+1}^{k} & \lesssim H\sqrt{T\log\frac{SABT}{\delta}}\label{eq:sum-phi-k-bound},
	\end{align}
\end{subequations}
where the first inequality in (\ref{eq:sum-delta-k-bound}) comes from (\ref{Yang_lemma4.3}) with $w_k=1$ and $C\le K$, and (\ref{eq:sum-phi-k-bound}) comes from the Azuma-Hoeffding inequality and a union bound.
%which come respectively from \citet[Eqn.~(C.13)]{jin2018qarxiv} and the argument for \citet[Eqn.~(C.12)]{jin2018qarxiv}.\footnote{Note that the notation $\delta_h^k$ used in \citet[Section C.2]{jin2018qarxiv} and the one in the proof of \citet[Theorem 1]{jin2018qarxiv} are different; here, we need to adopt the notation used in the proof of \citet[Theorem 1]{jin2018qarxiv}.}

As a consequence, substituting (\ref{equ:var-vsatr-1}) and (\ref{equ:var-vsatr-2}) into (\ref{equ:bound-var-v-optimal}), 
we reach 
\begin{align*}
	&\sum_{h=1}^{H}\sum_{k=1}^{K}\sqrt{\frac{\mathsf{Var}_{h,s_{h}^{k},a_{h}^{k},b_h^k}(V_{h+1}^{\star})}{N_{h}^{k}(s_{h}^{k},a_{h}^{k},b_h^k)}} \\
	& \lesssim\sqrt{HSAB}\sqrt{HT+\sqrt{H^7SABT\log\frac{SABT}{\delta}} + H^4SAB }\\
	& \lesssim\sqrt{H^{2}SABT}+H^{9/4}(SAB)^{3/4}\Big(T\log\frac{SABT}{\delta}\Big)^{1/4} + H^{2.5} SAB\\
	& =\sqrt{H^{2}SABT}+\Big(H^{2}SABT\log\frac{SABT}{\delta}\Big)^{1/4}\big(H^{3.5}SAB\big)^{1/2} + H^{2.5} SAB \\
	& \lesssim\sqrt{H^{2}SABT\log\frac{SABT}{\delta}}+H^{3.5}SAB\log\frac{SABT}{\delta},
\end{align*}
where we have applied the basic inequality $2ab\leq a^2 + b^2$ for any $a,b\geq 0$.

\subsubsection{Proof of the inequality~\texorpdfstring{(\ref{eq:bound-ref-error-2})}{}}\label{sec:proof:eq:bound-ref-error-2}

First, it is observed that
\begin{align}
	&\sum_{k = 1}^K\frac{\sqrt{\Phi_h^k(s^k_h, a^k_h, b^k_h)}}{N^{k}_h(s^k_h, a^k_h, b^k_h)}   =\sum_{(s, a, b)\in\mathcal{S}\times\mathcal{A}\times\mathcal{B}} \sum_{n = 1}^{N^{K}_h(s, a, b)} \frac{\sqrt{\Phi_h^{k^n(s, a, b)}(s, a, b)}}{n} \nonumber \\
	&\le\sum_{(s, a, b)\in\mathcal{S}\times\mathcal{A}\times\mathcal{B}}  \sqrt{\Phi_h^{N_h^K(s, a, b)}(s, a, b)}\log T 
	\le\sqrt{SAB \sum_{(s, a, b)\in\mathcal{S}\times\mathcal{A}\times\mathcal{B}}  \Phi_h^{N_h^K(s, a, b)}(s, a, b)}\log T .\label{eq:phi_h_k_intermediate}
\end{align}
Here, the first inequality holds by the monotonicity property of $\Phi_h^{k}(s_h, a_h)$ with respect to $k$ (see its definition in (\ref{eq:def_Phik})) due to the same property of $V_{h+1}^{\mathrm{R},k}$, while the second inequality comes from Cauchy-Schwarz.

To continue, with $V'=\overline{V}^{\mathrm{R}, k^n}_{h+1}(s^{k^n}_{h+1}) - V^{\star}_{h+1}(s^{k^n}_{h+1}) $, note that 
\begin{align}
	& \sum_{h=1}^{H}\sqrt{\sum_{(s, a, b)\in\mathcal{S}\times\mathcal{A}\times\mathcal{B}}\Phi_{h}^{N_{h}^{K}(s, a, b)}(s, a, b)}=\sum_{h=1}^{H}\sqrt{\sum_{k=1}^{K}\Big(V'\Big)\mathbbm{1}\Big(V'>3\Big)}\nonumber\\
	& \leq\sum_{h=1}^{H}\sqrt{\sum_{k=1}^{K}\Big(\overline{V}_{h+1}^{k}(s_{h+1}^{k})+2-\underline{V}_{h+1}^{k}(s_{h+1}^{k})\Big)\mathbbm{1}\Big(\overline{V}_{h+1}^{k}(s_{h+1}^{k})+2-\underline{V}_{h+1}^{k}(s_{h+1}^{k})>3\Big)}\nonumber\\
	& =\sum_{h=1}^{H}\sqrt{\sum_{k=1}^{K}\Big(\overline{V}_{h+1}^{k}(s_{h+1}^{k})+2-\underline{V}_{h+1}^{k}(s_{h+1}^{k})\Big)\mathbbm{1}\Big(\overline{V}_{h+1}^{k}(s_{h+1}^{k})-\underline{V}_{h+1}^{k}(s_{h+1}^{k})>1\Big)}\nonumber\\
	& \leq\sum_{h=1}^{H}\sqrt{\sum_{k=1}^{K}3\Big(\overline{V}_{h+1}^{k}(s_{h+1}^{k})-\underline{V}_{h+1}^{k}(s_{h+1}^{k})\Big)\mathbbm{1}\Big(\overline{V}_{h+1}^{k}(s_{h+1}^{k})-\underline{V}_{h+1}^{k}(s_{h+1}^{k})>1\Big)}\nonumber\\
	& \leq\sqrt{H}\sqrt{\sum_{h=1}^{H}\sum_{k=1}^{K}3\Big(\overline{V}_{h+1}^{k}(s_{h+1}^{k})-\underline{V}_{h+1}^{k}(s_{h+1}^{k})\Big)\mathbbm{1}\Big(\overline{V}_{h+1}^{k}(s_{h+1}^{k})-\underline{V}_{h+1}^{k}(s_{h+1}^{k})>1\Big)}, 
\end{align}
where the first inequality follows from Lemma~\ref{lem:Vr_properties} (cf.~(\ref{eq:close-ref-v})) and Lemma~\ref{lemma-Qhk-opt} 
(so that $\overline{V}_{h+1}^{\mathrm{R},k}(s_{h+1}^{k})-V_{h+1}^{\star}(s_{h+1}^{k})\leq \overline{V}_{h+1}^{k}(s_{h+1}^{k})+2-\underline{V}_{h+1}^{k}(s_{h+1}^{k})$), the penultimate inequality holds since $1 \leq   \overline{V}_{h+1}^{k}(s_{h+1}^{k})-\underline{V}_{h+1}^{k}(s_{h+1}^{k}) $ when $\mathbbm{1}\Big(\overline{V}_{h+1}^{k}(s_{h+1}^{k})-\underline{V}_{h+1}^{k}(s_{h+1}^{k})>1\Big) \neq 0$, 
and the last inequality is a consequence of the Cauchy-Schwarz inequality.

Combining the above relation with (\ref{eq:ref-error-upper-final}) and applying the triangle inequality, we can demonstrate that
\begin{align*}
	\sum_{h=1}^H \sqrt{  \sum_{(s, a, b)\in\mathcal{S}\times\mathcal{A}\times\mathcal{B}}  \Phi_h^{N_h^K(s, a, b)}(s, a, b)}  \lesssim \sqrt{H^7SAB\log\frac{SABT}{\delta}},
\end{align*}
where the inequality follows directly from (\ref{eq:Vr_lazy}.
Substitution into (\ref{eq:phi_h_k_intermediate}) gives
\begin{align*}
	\sum_{h=1}^{H}\sum_{k=1}^{K}\frac{\sqrt{\Phi_{h}^{k}(s_{h}^{k},a_{h}^{k},b_{h}^{k})}}{N_{h}^{k}(s_{h}^{k},a_{h}^{k},b_{h}^{k})} & \!\!\lesssim\!\!\left(\sqrt{SAB}\log T\right)\cdot\sqrt{H^{7}SAB\log\frac{SABT}{\delta}}\!\!\asymp\!\! H^{7/2}SAB\log^{3/2}\frac{SABT}{\delta}, 
\end{align*}
thus concluding the proof.

\subsection{Bounding the term \texorpdfstring{$\mathcal{D}_3$}{}}\label{sec-proof-lemma:reference}
First of all, there is
\begin{equation} 
	\label{eq:lambda_kh_bound}
	\lambda^{k}_h \coloneqq \left(1+\frac{1}{H}\right)^{h-1}~\sum_{n = N^{k}_h(s^k_h, a^k_h, b^k_h)}^{N^{K- 1}_h(s^k_h, a^k_h, b^k_h) } \eta^{n}_{N^{k}_h(s^k_h, a^k_h, b^k_h)} 
	\le \left(1+\frac{1}{H}\right)^{h} \le \left(1+\frac{1}{H}\right)^{H} \leq e,
\end{equation} 
where the first inequality in (\ref{eq:lambda_kh_bound}) follows from the property $\sum_{N = n}^{\infty} \eta_n^N \le 1+ 1/H$ in Lemma~\ref{lemma:property of learning rate} and the last inequality in (\ref{eq:lambda_kh_bound}) results from (\ref{equ:algebra property}). 
Then we could decompose the expression of $\mathcal{D}_3$ in (\ref{eq:expression_R3}) as follows
\begin{align*}
	&\mathcal{D}_3:= \mathcal{D}_{3,1} + \mathcal{D}_{3,2} - \mathcal{D}_{3,3} - \mathcal{D}_{3,4}
\end{align*}
with
\begin{align} 
    \mathcal{D}_{3,1} & = \sum_{h = 1}^H \sum_{k = 1}^K\lambda^{k}_h \big(P^k_{h} -  P_{h, s^k_h, a^k_h, b^k_h} \big) \big( V^{\star}_{h+1} - \overline{V}^{\mathrm{R}, k}_{h+1} \big) \\
    \mathcal{D}_{3,2} & = \sum_{h = 1}^H \sum_{k = 1}^K\lambda^{k}_h \frac{\sum_{i \le N^{k}_h(s^k_h, a^k_h, b^k_h)} \big(\overline{V}^{\mathrm{R}, k^i}_{h+1}(s^{k^i}_{h+1}) - P_{h, s^k_h, a^k_h, b^k_h}\overline{V}^{\mathrm{R}, k}_{h+1}\big)}{N^{k}_h(s^k_h, a^k_h, b^k_h)}\\
    \mathcal{D}_{3,3} &= \sum_{h = 1}^H \sum_{k = 1}^K\lambda^{k}_h \big(P^k_{h} -  P_{h, s^k_h, a^k_h, b^k_h} \big) \big( V^{\star}_{h+1} - \underline{V}^{\mathrm{R}, k}_{h+1} \big) \\
    \mathcal{D}_{3,4} &= \sum_{h = 1}^H \sum_{k = 1}^K\lambda^{k}_h \frac{\sum_{i \le N^{k}_h(s^k_h, a^k_h, b^k_h)} \big(\underline{V}^{\mathrm{R}, k^i}_{h+1}(s^{k^i}_{h+1}) - P_{h, s^k_h, a^k_h, b^k_h}\underline{V}^{\mathrm{R}, k}_{h+1}\big)}{N^{k}_h(s^k_h, a^k_h, b^k_h)}.
\end{align} 
In the sequel, we shall control each of these two terms separately.

\paragraph{Step 1: upper bounding $\mathcal{D}_{3,1}$ and $\mathcal{D}_{3,3}$.} 
We plan to control this term by means of Lemma~\ref{lemma:martingale-union-all2}. For notational simplicity, let us define
\begin{align*}
	N(s,a,b,h) \coloneqq N_h^{K-1}(s, a, b) 
\end{align*}
and set
\begin{align*}
	F_{h+1}^k \coloneqq \overline{V}^{\mathrm{R}, k}_{h+1} - V^{\star}_{h+1}
	\quad \text{and} \quad
	G_h^k(s_h^k,a_h^k) \coloneqq \lambda_h^k = \left(1+\frac{1}{H}\right)^{h-1}\sum_{n = N^{k}_h(s^k_h, a^k_h, b^k_h)}^{N(s^k_h, a^k_h, b^k_h, h)} \eta^{n}_{N^{k}_h(s^k_h, a^k_h, b^k_h)}.  
\end{align*}
Given the fact that $\overline{V}^{\mathrm{R}, k}_{h+1}(s),  V^{\star}_{h+1}(s)\in [0,H]$ and the condition (\ref{eq:lambda_kh_bound}),
it is readily seen that
\begin{align*}
	\big\| F_{h+1}^k \big\|_\infty \leq H \eqqcolon C_{\mathrm{f}}
	\qquad \text{and} \qquad
	\big|G_h^k(s_h^k,a_h^k,b_h^k) \big| \leq e \eqqcolon C_{\mathrm{g}}. 
\end{align*}
Apply Lemma~\ref{lemma:martingale-union-all2} to yield, with probability at least $1- \delta/2$, there is
\begin{align}
	\left|\mathcal{D}_{3, 1}\right| =& \left|\sum_{h=1}^{H}\sum_{k=1}^{K}\lambda_{h}^{k}\big(P_{h}^{k}-P_{h,s_{h}^{k},a_{h}^{k},b_h^k}\big)\big(V_{h+1}^{\star}-\overline{V}_{h+1}^{\mathrm{R},k}\big)\right|\nonumber \\
	\lesssim& \sqrt{C_{\mathrm{g}}^{2}C_{\mathrm{f}}HSAB\sum_{h=1}^{H}\sum_{k=1}^{K}\mathbb{E}_{k,h-1}\left[P_{h}^{k}F_{h+1}^{k}\right]\log\frac{K}{\delta}}+C_{\mathrm{g}}C_{\mathrm{f}}HSAB\log\frac{K}{\delta}\nonumber\nonumber \\
	\lesssim& \sqrt{H^{2}SAB\sum_{h=1}^{H}\sum_{k=1}^{K}\mathbb{E}_{k,h-1}\left[P_{h}^{k}\big(\overline{V}_{h+1}^{\mathrm{R},k}-V_{h+1}^{\star}\big)\right]\log\frac{T}{\delta}}+H^{2}SAB\log\frac{T}{\delta}\nonumber\nonumber \\
	\asymp& \sqrt{H^{2}SAB \bigg\{ \sum_{h=1}^{H}\sum_{k=1}^{K}P_{h,s_{h}^{k},a_{h}^{k},b_h^k}\big(\overline{V}_{h+1}^{\mathrm{R},k}-V_{h+1}^{\star}\big) \bigg\} \log\frac{T}{\delta}}+H^{2}SAB\log\frac{T}{\delta}
	\label{eq:bound-lambda-P-V-intermediate-135}
\end{align}
with $X_{k,h}=\lambda_{h}^{k}\big(P_{h}^{k}-P_{h,s_{h}^{k},a_{h}^{k},b_h^k}\big)\big(V_{h+1}^{\star}-\overline{V}_{h+1}^{\mathrm{R},k}\big)$.

It then comes down to controlling the sum $\sum_{h=1}^{H}\sum_{k=1}^{K}P_{h,s_{h}^{k},a_{h}^{k},b_h^k}\big(\overline{V}_{h+1}^{\mathrm{R},k}-V_{h+1}^{\star}\big)$. Towards this end, we first single out the following useful fact: with probability at least $1-{\delta}/{4}$, 
\begin{align}
	&\sum_{h=1}^H \sum_{k = 1}^K P^k_{h} \big( \overline{V}^{\mathrm{R}, k}_{h+1} -V_{h+1}^\star \big) \overset{\mathrm{(i)}}{\leq}  \sum_{h=1}^H \sum_{k = 1}^K P^k_{h} \big(\overline{V}^{k}_{h+1} + 2 - V_{h+1}^\star \big)  \nonumber \\
	&\leq 2HK \!+\! \sum_{h=1}^H\sum_{k = 1}^K \Big( \overline{V}^{k}_{h+1}(s_{h+1}^k) -V_{h+1}^\star(s_{h+1}^k) \Big)  \overset{\mathrm{(ii)}}{\lesssim}  H^3\sqrt{SABK \log\frac{SABT}{\delta}}  \!+\! H^3SAB \!+\! HK, \label{equ:freed}
\end{align}
where (i) holds according to (\ref{eq:close-ref-v}), 
and (ii) is valid since
\begin{align*}
	\sum_{h=1}^{H}\sum_{k=1}^{K}\Big(\overline{V}_{h+1}^{k}(s_{h+1}^{k})-V_{h+1}^{\star}(s_{h+1}^{k})\Big) &\leq\sum_{h=1}^{H}\sum_{k=1}^{K}\Big(\overline{V}_{h+1}^{k}(s_{h+1}^{k})-\underline{V}_{h+1}^{k}(s_{h+1}^{k})\Big) \\
	& \lesssim H^3\sqrt{SABK\log\frac{SABT}{\delta}} + H^3SAB,
\end{align*}
where the first inequality follows since $V_{h+1}^{\star}\geq \underline{V}_{h+1}^{k}$, and the second inequality comes from (\ref{eq:sum-delta-k-bound}). 
Additionally, invoking Freedman's inequality (see Lemma~\ref{lemma:martingale-union-all2}) 
with $c_h=1$ and $\widetilde{W}_h^i=\overline{V}_{h+1}^{\mathrm{R},k}-V_{h+1}^{\star}$ (so that $0\leq \widetilde{W}_h^i(s)\leq H$) directly leads to
\begin{align*}
	\left|\sum_{h=1}^{H}\sum_{k=1}^{K}\big(P_{h}^{k}-P_{h,s_{h}^{k},a_{h}^{k},b_h^k}\big)\big(\overline{V}_{h+1}^{\mathrm{R},k}-V_{h+1}^{\star}\big)\right| & \lesssim\sqrt{TH^{2}\log \frac{1}{\delta}}+H\log\frac{1}{\delta}\asymp\sqrt{H^{3}K\log \frac{1}{\delta}}
\end{align*}
with probability at least $1-\delta/4$, which taken collectively with  (\ref{equ:freed}) reveals that
\begin{align}
	&\sum_{h=1}^{H}\sum_{k=1}^{K}P_{s_{h}^{k},a_{h}^{k},b_{h}^{k},h}\big(\overline{V}_{h+1}^{\mathrm{R},k}-V_{h+1}^{\star}\big) \notag\\
	\leq& \sum_{h=1}^{H}\sum_{k=1}^{K}P_{h}^{k}\big(\overline{V}_{h+1}^{\mathrm{R},k}-V_{h+1}^{\star}\big)+\left|\sum_{h=1}^{H}\sum_{k=1}^{K}\big(P_{h}^{k}-P_{s_{h}^{k},a_{h}^{k},h}\big)\big(\overline{V}_{h+1}^{\mathrm{R},k}-V_{h+1}^{\star}\big)\right| \notag\\
	\lesssim& H^{3}\sqrt{SABK\log\frac{SABT}{\delta}} + H^3SAB +HK \label{equ:freed-234}
\end{align} 
with probability at least $1-\delta/2$. 
Substitution into (\ref{eq:bound-lambda-P-V-intermediate-135}) then gives
\begin{align}
	\left|\mathcal{D}_{3, 1}\right| = &\Big|\sum_{h=1}^{H}\sum_{k=1}^{K}\lambda_{h}^{k}\big(P_{h}^{k}-P_{h,s_{h}^{k},a_{h}^{k},b_h^k}\big)\big(V_{h+1}^{\star}-\overline{V}_{h+1}^{\mathrm{R},k}\big)\Big| \nonumber \\
	\lesssim& \sqrt{H^{2}SAB\sum_{h=1}^{H}\sum_{k=1}^{K}P_{h,s_{h}^{k},a_{h}^{k},b_h^k}\big(\overline{V}_{h+1}^{\mathrm{R},k}-V_{h+1}^{\star}\big)\log\frac{T}{\delta}}+H^{2}SAB\log\frac{T}{\delta}\nonumber\\
	\lesssim& \sqrt{H^{2}SAB\left(H^{3}\sqrt{SABK\log\frac{SABT}{\delta}} + H^3SAB +HK\right)\log\frac{T}{\delta}}+H^{2}SAB\log\frac{T}{\delta}. \nonumber
\end{align}
According to the Cauchy-Schwarz inequality, there is 
\[
\sqrt{H^{6}SABK\log\frac{SABT}{\delta}}=\sqrt{H^{5}SAB\log\frac{SABT}{\delta}}\sqrt{HK}\lesssim H^{5}SAB\log\frac{SABT}{\delta}+HK,
\]
and then we can further obtain
\begin{align}
	\left|\mathcal{D}_{3, 1}\right| \asymp& \sqrt{H^{2}SAB\left(H^{5}SAB\log\frac{SABT}{\delta} + H^3SAB +HK\right)\log\frac{T}{\delta}}+H^{2}SAB\log\frac{T}{\delta}\nonumber\\
	\lesssim& \sqrt{H^{3}SABK\log\frac{SABT}{\delta}}+H^{3.5}SAB\log\frac{SABT}{\delta} \nonumber \\
	=& \sqrt{H^{2}SABT\log\frac{SABT}{\delta}}+H^{3.5}SAB\log\frac{SABT}{\delta}.
	\label{eq:step1-R31-bound-final}
\end{align}
Similarly, $\left|\mathcal{D}_{3,3}\right|\lesssim\sqrt{H^{2}SABT\log\frac{SABT}{\delta}}+H^{3.5}SAB\log\frac{SABT}{\delta}$.

\paragraph{Step 2: upper bounding $\mathcal{D}_{3,2}$ and $\mathcal{D}_{3,4}$.}
According to the monotonicity property $\overline{V}^{\mathrm{R}, k}_{h+1} \ge \overline{V}^{\mathrm{R}, k+1}_{h+1} \ge \cdots \geq \overline{V}^{\mathrm{R}, K}_{h+1}$, 
we make the following observation: 
\begin{align}
	\mathcal{D}_{3,2} 
	& \le \sum_{h = 1}^H \sum_{k = 1}^K\frac{\lambda^{k}_h}{N^{k}_h(s^k_h, a^k_h, b^k_h)}\sum_{i \le N^{k}_h(s^k_h, a^k_h, b^k_h)} \big(\overline{V}^{\mathrm{R}, k^i}_{h+1}(s^{k^i}_{h+1}) - P_{h, s^k_h, a^k_h, b^k_h}\overline{V}^{\mathrm{R}, K}_{h+1} \big) \nonumber \\
	& =\sum_{h = 1}^H \sum_{k = 1}^K\sum_{n = N^{k}_h(s^k_h, a^k_h, b^k_h)}^{N^{K - 1}_h(s^k_h, a^k_h, b^k_h)} \frac{\lambda^{k}_h}{n} \Big( \overline{V}^{\mathrm{R}, k}_{h+1}(s^{k}_{h+1}) - \overline{V}^{\mathrm{R}, K}_{h+1}(s^{k}_{h+1} ) + \big(P^k_{h} - P_{h, s^k_h, a^k_h, b^k_h} \big)\overline{V}^{\mathrm{R}, K}_{h+1} \Big). \nonumber
\end{align}
And with the facts that $\sum_{n = N^{k}_h(s^k_h, a^k_h, b^k_h)}^{N^{K - 1}_h(s^k_h, a^k_h, b^k_h)} \frac{1}{n} \leq \log T$ and $\lambda_h^k \leq e$ (cf.~(\ref{eq:lambda_kh_bound})), there is
\begin{align}
	\mathcal{D}_{3,2} 
	& \leq  (e\log T) \sum_{h = 1}^H \sum_{k = 1}^K \big(\overline{V}^{\mathrm{R}, k}_{h+1}(s^{k}_{h+1}) - \overline{V}^{\mathrm{R}, K}_{h+1}(s^{k}_{h+1}) \big)  \nonumber\\
	&\qquad + \sum_{h = 1}^H \sum_{k = 1}^K\sum_{n = N^{k}_h(s^k_h, a^k_h, b^k_h)}^{N^{K -1}_h(s^k_h, a^k_h, b^k_h)} \frac{\lambda^{k}_h}{n}\big( P^k_{h} - P_{h, s^k_h, a^k_h, b^k_h} \big)V^{\star}_{h+1} \nonumber\\
	&\qquad +  \sum_{h = 1}^H \sum_{k = 1}^K \sum_{n = N^{k}_h(s^k_h, a^k_h, b^k_h)}^{N^{K -1}_h(s^k_h, a^k_h, b^k_h)} \frac{\lambda^{k}_h}{n} \big( P^k_{h} - P_{h, s^k_h, a^k_h, b^k_h} \big) \big( \overline{V}^{\mathrm{R}, K}_{h+1} -V_{h+1}^\star \big). \label{equ:reference-drift1}
\end{align}
In what follows, we shall control the three terms in (\ref{equ:reference-drift1}) separately.

\begin{itemize}
	
	\item 
	The first term in (\ref{equ:reference-drift1}) can be controlled by Lemma~\ref{lem:Vr_properties} (cf.~(\ref{eq:Vr_lazy})) as follows: 
	\begin{equation}
		\sum_{h=1}^H\sum_{k=1}^K  \big(\overline{V}^{\mathrm{R}, k}_{h+1}(s^{k}_{h+1}) - \overline{V}^{\mathrm{R}, K}_{h+1}(s^{k}_{h+1})\big) 
		\lesssim H^6SAB\log\frac{SABT}{\delta} \label{equ:R32-term1-result}
	\end{equation}
	with probability at least $1- {\delta}/{3}$.

	\item To control the second term in (\ref{equ:reference-drift1}),  we abuse the notation by setting
	\begin{align*}
		N(s,a,b,h) \coloneqq N_h^{K-1}(s, a, b)
	\end{align*}
	and
	\begin{align*}
		F_{h+1}^k \coloneqq V^{\star}_{h+1},
		\qquad \text{and} \qquad
		G_h^k(s_h^k,a_h^k,b_h^k) \coloneqq \sum_{n = N^{k}_h(s^k_h, a^k_h, b^k_h)}^{N(s_h^k,a_h^k,b_h^k,h)} \frac{\lambda^{k}_h}{n},  
	\end{align*}
	which clearly satisfy
	\begin{align*}
		\|F_{h+1}^k\|_\infty \leq H \eqqcolon C_{\mathrm{f}}
		\quad \text{and} \quad
		\big| G_h^k(s_h^k,a_h^k,b_h^k) \big| 
		\leq e\sum_{n = N^{k}_h(s^i_h, a^i_h, b^i_h)}^{N(s_h^k,a_h^k,b_h^k,h)} \frac{1}{n}\leq e \log T \eqqcolon C_{\mathrm{g}}. 
	\end{align*}
	Here, we have made use of the properties $\sum_{n = N^{k}_h(s^k_h, a^k_h, b^k_h)}^{N^{K-1}_h(s^k_h, a^k_h, b^k_h)} \frac{1}{n} \leq \log T$ and $\lambda_h^k \leq e$ (cf.~(\ref{eq:lambda_kh_bound})).
	With these in place, applying Lemma~\ref{lemma:martingale-union-all2} reveals that, with probability at least $1-{\delta}/{3}$, 
	\begin{align}
		&\left|\sum_{h = 1}^H \sum_{k = 1}^K \sum_{n = N^{k}_h(s^k_h, a^k_h, b^k_h)}^{N^{K-1}_h(s^k_h, a^k_h, b^k_h)} \frac{\lambda^{k}_h}{n} \big(P^k_{h} - P_{h, s^k_h, a^k_h, b^k_h} \big)V^{\star}_{h+1} \right| = \left|\sum_{h = 1}^H \sum_{k = 1}^K X_{k,h} \right| \nonumber\\
		\lesssim& \sqrt{ C_{\mathrm{g}}^{2} HSAB \sum_{h=1}^{H} \sum_{i=1}^{K}\mathbb{E}_{i,h-1}\left[\big|(P_{h}^{i}-P_{h,s_{h}^{i},a_{h}^{i}})W_{h+1}^{k}\big|^{2}\right] \log\frac{T}{\delta} }  \!+\! C_{\mathrm{g}} C_{\mathrm{f}} HSAB  \log\frac{T}{\delta}  \notag\\
		{\asymp}& \sqrt{ \sum_{h = 1}^H \sum_{k = 1}^K \mathsf{Var}_{h, s^k_h, a^k_h, b^k_h}(V^{\star}_{h+1}) \cdot HSAB\log^3\frac{T}{\delta} }  + H^2SAB \log^2\frac{T}{\delta},  \notag
	\end{align}
    where $X_{k,h}=\sum_{n = N^{k}_h(s^k_h, a^k_h, b^k_h)}^{N^{K-1}_h(s^k_h, a^k_h, b^k_h)} \frac{\lambda^{k}_h}{n} \big(P^k_{h} - P_{h, s^k_h, a^k_h, b^k_h} \big)V^{\star}_{h+1}$ and the last line comes from the definition in (\ref{lemma1:equ2}). Therefore, we have
	\begin{align}
		&\left|\sum_{h = 1}^H \sum_{k = 1}^K \sum_{n = N^{k}_h(s^k_h, a^k_h, b^k_h)}^{N^{K-1}_h(s^k_h, a^k_h, b^k_h)} \frac{\lambda^{k}_h}{n} \big(P^k_{h} - P_{h, s^k_h, a^k_h, b^k_h} \big)V^{\star}_{h+1} \right| \nonumber\\
		{\lesssim}& \sqrt{HSAB \big(HT + \sqrt{H^7SABT} + H^4SAB \big) \log^4\frac{SABT}{\delta}} + H^2SAB \log^2\frac{T}{\delta} \notag\\ 
		 \lesssim& \sqrt{HSAB\big(HT+H^{6}SAB+H^4SAB\big)\log^{4}\frac{SABT}{\delta}}+H^{2}SAB\log^{2}\frac{T}{\delta} \notag\\
		{\lesssim}& \sqrt{H^2SABT\log^4\frac{SABT}{\delta}} +  H^{3.5}SAB \log^2\frac{SABT}{\delta}, \label{equ:R32-term2-result}
	\end{align}
	where the second line holds due to (\ref{equ:var-vsatr-1}) and (\ref{equ:var-vsatr-2}), and the last line is valid since
	\[
	HT+\sqrt{H^7SABT}=HT+\sqrt{H^{6}SAB}\cdot\sqrt{HT}\lesssim HT+H^{6}SAB
	\]
	due to the Cauchy-Schwarz inequality.

	\item Turning attention the third term of (\ref{equ:reference-drift1}), we need to properly cope with the dependency between $P_h^k$ and $V_{h+1}^{\mathrm{R}, K}$. Towards this, we shall resort to the standard epsilon-net argument (see, e.g., \citep{Tao2012RMT}), which will be presented in Appendix~\ref{appendix:covering}. The final bound reads like
	\begin{align}
		& \left|\sum_{h=1}^{H}\sum_{k=1}^{K}\sum_{n=N_{h}^{k}(s_{h}^{k},a_{h}^{k},b_{h}^{k})}^{N_{h}^{K-1}(s_{h}^{k},a_{h}^{k},b_{h}^{k})}\frac{\lambda_{h}^{k}}{n}\big(P_{h}^{k}-P_{h,s_{h}^{k},a_{h}^{k},b_h^k}\big)\big(\overline{V}_{h+1}^{\mathrm{R},K}-V_{h+1}^{\star}\big)\right|\notag\\
		&\qquad\lesssim H^{3.5}SAB\log^{2}\frac{SABT}{\delta}+\sqrt{H^{3}SABK\log^{3}\frac{SABT}{\delta}}.
		\label{eq:final-term-control-246}
	\end{align}
	
	\item
	Combining (\ref{equ:R32-term1-result}), (\ref{equ:R32-term2-result}), and (\ref{eq:final-term-control-246}) with (\ref{equ:reference-drift1}), we can use the union bound to demonstrate that
	\begin{align}
		\mathcal{D}_{3,2}   &\leq  C_{3,2} \bigg\{  H^6 SAB\log^3\frac{SABT}{\delta} + \sqrt{H^2SABT\log^4\frac{SABT}{\delta}} \bigg\}
		\label{eq:R32-upper-bound}
	\end{align}
	with probability at least $1-\delta$, where $C_{3,2}>0$ is some constant.
	
\end{itemize}
Similarly, $\mathcal{D}_{3,4}   \leq  C_{3,4} \bigg\{  H^6 SAB\log^3\frac{SABT}{\delta} + \sqrt{H^2SABT\log^4\frac{SABT}{\delta}} \bigg\}$ with probability at least $1-\delta$, where $C_{3,4}>0$ is some constant.

\paragraph{Step 3: final bound of $ \mathcal{D}_3$.} 
Putting the above results (\ref{eq:step1-R31-bound-final}) and (\ref{eq:R32-upper-bound}) together, we immediately arrive at
\begin{align}
	\mathcal{D}_3  \leq  \big| \mathcal{D}_{3,1} \big| \!+\! \mathcal{D}_{3,2} \!+\! \big| \mathcal{D}_{3,3} \big| \!+\! \mathcal{D}_{3,4}
	&\leq C_{\mathrm{r},3} \bigg\{ H^6 SAB\log^3\frac{SABT}{\delta} \!+\! \sqrt{H^2SABT\log^4\frac{SABT}{\delta}} \bigg\}
\end{align}
with probability at least $1-2\delta$, where $C_{\mathrm{r},3}>0$ is some constant.  
This immediately concludes the proof.

\subsubsection{Proof of \texorpdfstring{(\ref{eq:final-term-control-246})}{}}
\label{appendix:covering}
\paragraph{Step 1: concentration bounds for a fixed group of vectors.} 

Consider a fixed group of vectors $ \{ V^{\mathrm{d}}_{h+1} \in \mathbb{R}^{S} \mid 1\leq h \leq H\}$ obeying the following properties: 
\begin{align}
	V_{h+1}^{\star} \leq V^{\mathrm{d}}_{h+1} &\leq H \qquad \text{ for } 1\leq h\leq H .
	\label{equ:assumption-1}
\end{align}
We intend to control the following sum
\[
\sum_{h=1}^{H}\sum_{k=1}^{K}\sum_{n=N_{h}^{k}(s_{h}^{k},a_{h}^{k},b_{h}^{k})}^{N_{h}^{K-1}(s_{h}^{k},a_{h}^{k},b_{h}^{k})}\frac{\lambda_{h}^{k}}{n}\big(P_{h}^{k}-P_{h,s_{h}^{k},a_{h}^{k},b_h^k}\big)\big( V_{h+1}^{\mathrm{d}} -V_{h+1}^{\star}\big) .
\]

To do so, we shall resort to Lemma~\ref{lemma:martingale-union-all2}. For the moment, let us take $N(s,a,h) \coloneqq N_h^{K-1}(s, a, b)$ and
\begin{align*}
	F_{h+1}^k \coloneqq V_{h+1}^{\mathrm{d}} - V^{\star}_{h+1},
	\qquad \mathrm{and} \qquad
	G_h^k(s_h^k,a_h^k) \coloneqq \sum_{n = N^{k}_h(s^i_h, a^i_h, b^i_h)}^{N(s_h^k,a_h^k,b_h^k,h)} \frac{\lambda^{k}_h}{n}. 
\end{align*}
It is easy to see that
\begin{align*}
	\big|G_h^k(s_h^k,a_h^k) \big| 
	\leq e\sum_{n = N^{k}_h(s^k_h, a^k_h, b^k_h)}^{N(s_h^k,a_h^k,b_h^k,h)} \frac{1}{n}\leq e \log T \eqqcolon C_{\mathrm{g}}
	\qquad \text{and} \qquad
	\|F_{h+1}^k\|_\infty \leq H \eqqcolon C_{\mathrm{f}}, 
\end{align*}
which hold due to the facts $\sum_{n = N^{k}_h(s^k_h, a^k_h, b^k_h)}^{N^{K}_h(s^k_h, a^k_h, b^k_h)} \frac{1}{n} \leq \log T$ and $\lambda_h^k \leq e$ (cf.~(\ref{eq:lambda_kh_bound})) as well as the property that $V_{h+1}^{\mathrm{d}}(s), V^{\star}_{h+1}(s)\in [0,H]$. 
Thus, invoking Lemma~\ref{lemma:martingale-union-all2} yields
\begin{align}
	& \left|\sum_{h=1}^{H}\sum_{k=1}^{K}\sum_{n=N_{h}^{k}(s_{h}^{k},a_{h}^{k},b_{h}^{k})}^{N_{h}^{K-1}(s_{h}^{k},a_{h}^{k},b_{h}^{k})}\frac{\lambda_{h}^{k}}{n}\big(P_{h}^{k}-P_{h,s_{h}^{k},a_{h}^{k},b_h^k}\big)\big(V_{h+1}^{\mathrm{d}}-V_{h+1}^{\star}\big)\right|=\left|\sum_{h=1}^{H}\sum_{k=1}^{K}X_{k,h}\right|\nonumber\\
	& \lesssim\sqrt{C_{\mathrm{g}}^{2}C_{\mathrm{f}}\sum_{h=1}^{H}\sum_{i=1}^{K}\mathbb{E}_{i,h-1}\left[P_{h}^{i}F_{h+1}^{k}\right]\log\frac{K^{HSAB}}{\delta_{0}}}+C_{\mathrm{g}}C_{\mathrm{f}}\log\frac{K^{HSAB}}{\delta_{0}}\nonumber\\
	& \lesssim\sqrt{H\sum_{h=1}^{H}\sum_{i=1}^{K}P_{h,s_{h}^{k},a_{h}^{k},b_h^k}\left(V_{h+1}^{\mathrm{d}}-V_{h+1}^{\star}\right)\big(\log^{2}T\big)\log\frac{K^{HSAB}}{\delta_{0}}}+H \big(\log T\big) \log\frac{K^{HSAB}}{\delta_{0}}\label{equ:piecewise-bound} 
\end{align}
with probability at least $1- \delta_0$, where the choice of $\delta_0$ will be revealed momentarily.

\paragraph{Step 2: constructing and controlling an epsilon net.}

Our argument in Step~1 is only applicable to a fixed group of vectors. 
The next step is then to construct an epsilon net that allows one to cover the set of interest. 
Specifically, let us construct an epsilon net $\mathcal{N}_{h+1,\alpha}$ (the value of $\alpha$ will be specified shortly) for each $h\in [H]$ such that:
\begin{itemize}
	\item[a)]
	For any $V_{h+1} \in [0,H]^S$, one can find a point $V^{\mathrm{net}}_{h+1} \in \mathcal{N}_{h+1,\alpha}$ obeying
	\[
	0\leq V_{h+1} (s) - V^{\mathrm{net}}_{h+1} (s) \leq \alpha \qquad \text{for all }s\in \mathcal{S};
	\]
	\item[b)] Its cardinality obeys
	\begin{equation}
		\big| \mathcal{N}_{h+1,\alpha} \big| \leq \Big( \frac{H}{\alpha} \Big)^S.  
		\label{eq:cardinality-N-h-alpha}
	\end{equation}
\end{itemize}
Clearly, this also means that
\[
\big| \mathcal{N}_{2,\alpha} \times \mathcal{N}_{3,\alpha} \times \cdots \times \mathcal{N}_{H+1,\alpha} \big| 
\leq \Big( \frac{H}{\alpha} \Big)^{SH}.
\]

Set $\delta_0 = \frac{1}{6} \delta / \big( \frac{H}{\alpha} \big)^{SH}$. 
Taking (\ref{equ:piecewise-bound}) together the union bound implies that: with probability at least $1- \delta_0 \big( \frac{H}{\alpha} \big)^{SH} = 1-\delta/6$,  
one has
\begin{align}
	& \left|\sum_{h=1}^{H}\sum_{k=1}^{K}\sum_{n=N_{h}^{k}(s_{h}^{k},a_{h}^{k},b_{h}^{k})}^{N_{h}^{K-1}(s_{h}^{k},a_{h}^{k},b_{h}^{k})}\frac{\lambda_{h}^{k}}{n}\big(P_{h}^{k}-P_{h,s_{h}^{k},a_{h}^{k},b_h^k}\big)\big(V_{h+1}^{\mathrm{net}}-V_{h+1}^{\star}\big)\right|\nonumber\\
	& \lesssim\sqrt{H\sum_{h=1}^{H}\sum_{i=1}^{K}P_{h,s_{h}^{k},a_{h}^{k},b_h^k}\left(V_{h+1}^{\mathrm{net}}-V_{h+1}^{\star}\right)\big(\log^{2}T\big)\log\frac{K^{HSAB}}{\delta_{0}}}+H (\log T)\log\frac{K^{HSAB}}{\delta_{0}} \notag\\
	& \lesssim\sqrt{H^{2}SAB\sum_{h=1}^{H}\sum_{i=1}^{K}P_{h,s_{h}^{k},a_{h}^{k},b_h^k}\left(V_{h+1}^{\mathrm{net}}-V_{h+1}^{\star}\right)\big(\log^{2}T\big)\log\frac{SAT}{\delta \alpha}}+H^{2}SAB\log^2 \frac{SABT}{\delta \alpha}
	\label{equ:piecewise-bound-uniform}
\end{align}
simultaneously for all $\{V_{h+1}^{\mathrm{net}} \mid 1\leq h\leq H\}$ obeying $V_{h+1}^{\mathrm{d}} \in \mathcal{N}_{h+1,\alpha} $ ($h\in [H]$).

\paragraph{Step 3: obtaining uniform bounds.}
We are now positioned to establish a uniform bound over the entire set of interest. 
Consider an arbitrary group of vectors
$ \{ V^{\mathrm{u}}_{h+1} \in \mathbb{R}^{S} \mid 1\leq h \leq H\}$ obeying (\ref{equ:assumption-1}).  
By construction, one can find a group of points $\big\{ V^{\mathrm{net}}_{h+1} \in \mathcal{N}_{h+1,\alpha} \mid h\in [H] \big\}$
such that
\begin{align}
	0\leq  V^{\mathrm{u}}_{h+1}(s) - V^{\mathrm{net}}_{h+1} (s) \leq \alpha \qquad \text{for all }(h,s)\in \mathcal{S}\times [H]. 
	\label{eq:Vd-Vnet-dominate-condition}
\end{align}
It is readily seen that
\begin{align}
	& \left|\sum_{k=1}^{K}\sum_{n=N_{h}^{k}(s_{h}^{k},a_{h}^{k},b_{h}^{k})}^{N_{h}^{K-1}(s_{h}^{k},a_{h}^{k},b_{h}^{k})}\frac{\lambda_{h}^{k}}{n}\big(P_{h}^{k}-P_{h,s_{h}^{k},a_{h}^{k},b_h^k}\big)\big(V_{h+1}^{\mathrm{u}}-V_{h+1}^{\mathrm{net}}\big)\right|\nonumber\\
	& \leq\left|\sum_{k=1}^{K}\sum_{n=N_{h}^{k}(s_{h}^{k},a_{h}^{k},b_{h}^{k})}^{N_{h}^{K-1}(s_{h}^{k},a_{h}^{k},b_{h}^{k})}\frac{\lambda_{h}^{k}}{n}\Big( \big\Vert P_{h}^{k} \big\Vert_{1}  + \big\Vert P_{h,s_{h}^{k},a_{h}^{k},b_h^k}\big\Vert _{1} \Big) \big\|V_{h+1}^{\mathrm{u}} -V_{h+1}^{\mathrm{net}} \big\|_{\infty}\right|\nonumber\\
	& \leq 2eK \alpha\log T ,
\end{align}
where the last inequality follows from $\sum_{n = N^{i}_h(s^i_h, a^i_h, b^i_h)}^{N^{K-1}_h(s^i_h, a^i_h, b^i_h)} \frac{1}{n} \leq \log T$ and $\lambda_h^k \leq e$ (cf.~(\ref{eq:lambda_kh_bound})).
Consequently, by taking $\alpha = 1/(SABT)$, we can deduce that
\begin{align}
	& \left|\sum_{h=1}^{H}\sum_{k=1}^{K}\sum_{n=N_{h}^{k}(s_{h}^{k},a_{h}^{k},b_{h}^{k})}^{N_{h}^{K-1}(s_{h}^{k},a_{h}^{k},b_{h}^{k})}\frac{\lambda_{h}^{k}}{n}\big(P_{h}^{k}-P_{h,s_{h}^{k},a_{h}^{k},b_h^k}\big)\big(V_{h+1}^{\mathrm{u}}-V_{h+1}^{\star}\big)\right|\nonumber\\
	\leq& \left|\sum_{h=1}^{H}\sum_{k=1}^{K}\sum_{n=N_{h}^{k}(s_{h}^{k},a_{h}^{k},b_{h}^{k})}^{N_{h}^{K-1}(s_{h}^{k},a_{h}^{k},b_{h}^{k})}\frac{\lambda_{h}^{k}}{n}\big(P_{h}^{k}-P_{h,s_{h}^{k},a_{h}^{k},b_h^k}\big)\big(V_{h+1}^{\mathrm{net}}-V_{h+1}^{\star}\big)\right|\notag\\
	& \qquad+\sum_{h=1}^{H}\left|\sum_{k=1}^{K}\sum_{n=N_{h}^{k}(s_{h}^{k},a_{h}^{k},b_{h}^{k})}^{N_{h}^{K-1}(s_{h}^{k},a_{h}^{k},b_{h}^{k})}\frac{\lambda_{h}^{k}}{n}\big(P_{h}^{k}-P_{h,s_{h}^{k},a_{h}^{k},b_h^k}\big)\big(V_{h+1}^{\mathrm{u}}-V_{h+1}^{\mathrm{net}}\big)\right|\nonumber\\
	\lesssim& \left|\sum_{h=1}^{H}\sum_{k=1}^{K}\sum_{n=N_{h}^{k}(s_{h}^{k},a_{h}^{k},b_{h}^{k})}^{N_{h}^{K-1}(s_{h}^{k},a_{h}^{k},b_{h}^{k})}\frac{\lambda_{h}^{k}}{n}\big(P_{h}^{k}-P_{h,s_{h}^{k},a_{h}^{k},b_h^k}\big)\big(V_{h+1}^{\mathrm{net}}-V_{h+1}^{\star}\big)\right|+HK\alpha\log T\nonumber\\
	\lesssim&\sqrt{H^{2}SAB\sum_{h=1}^{H}\sum_{i=1}^{K}P_{h,s_{h}^{k},a_{h}^{k},b_h^k}\left(V_{h+1}^{\mathrm{net}}-V_{h+1}^{\star}\right)\big(\log^{2}T\big)\log\frac{SABT}{\delta\alpha}}\nonumber\\
	&\qquad+H^{2}SAB\log^2\frac{SABT}{\delta \alpha}+HK\alpha\log T \notag\\
	\asymp& \sqrt{H^{2}SAB\sum_{h=1}^{H}\sum_{i=1}^{K}P_{h,s_{h}^{k},a_{h}^{k},b_h^k}\left(V_{h+1}^{\mathrm{u}}-V_{h+1}^{\star}\right)\big(\log^{2}T\big)\log\frac{SABT}{\delta}}\!\!+\!\!H^{2}SAB\log^2\frac{SABT}{\delta }, 
	\label{eq:uniform-bound-all-Vd}
\end{align}
where the last line holds due to the condition (\ref{eq:Vd-Vnet-dominate-condition} and our choice of $\alpha$.  
To summarize, with probability exceeding $1-\delta/6$, the property (\ref{eq:uniform-bound-all-Vd}) holds simultaneously for all 
$ \{ V^{\mathrm{u}}_{h+1} \in \mathbb{R}^{S} \mid 1\leq h \leq H\}$ obeying (\ref{equ:assumption-1}).

\paragraph{Step 4: controlling the original term of interest. } 
With the above union bound in hand, we are ready to control the original term of interest
\begin{align}
	\sum_{h = 1}^H \sum_{k = 1}^K \sum_{n = N^{k}_h(s^k_h, a^k_h, b^k_h)}^{N^{K-1}_h(s^k_h, a^k_h, b^k_h)} \frac{\lambda^{k}_h}{n}
	\big(P^k_{h} - P_{h, s^k_h, a^k_h, b^k_h} \big) \big(\overline{V}^{\mathrm{R}, K}_{h+1} -V_{h+1}^\star \big).
\end{align}
To begin with, it can be easily verified using (\ref{equ:optimism-ref-}) that
\begin{align}
	V_{h+1}^{\star} \leq \overline{V}^{\mathrm{R}, K}_{h+1} &\leq H \qquad \text{ for all } 1\leq h \leq H.
\end{align}
Moreover, we make the observation that
\begin{align}
	\sum_{h=1}^{H}\sum_{k=1}^{K}P_{h,s_{h}^{k},a_{h}^{k},b_h^k}\big(\overline{V}_{h+1}^{\mathrm{R},K}-V_{h+1}^{\star}\big)
	&\overset{\mathrm{(i)}}{\leq}\sum_{h=1}^{H}\sum_{k=1}^{K}P_{h,s_{h}^{k},a_{h}^{k},b_h^k}\big(\overline{V}_{h+1}^{\mathrm{R},k}-V_{h+1}^{\star}\big) \notag\\
	& \overset{\mathrm{(ii)}}{\leq}\sqrt{H^{6}SABK\log\frac{SABT}{\delta}}+ H^3SAB + HK
\end{align}
with probability exceeding $1-\delta/6$, 
where (i) holds because $\overline{V}_{h+1}^{\mathrm{R}}$ is monotonically non-increasing (in view of the monotonicity of $\overline{V}_h(s)$ in (\ref{eq:monotonicity_Vh}) and the update rule in line~17 of Algorithm~\ref{algorithm-aug}), and (ii) follows from (\ref{equ:freed-234}).
Substitution into (\ref{eq:uniform-bound-all-Vd}) yields
\begin{align}
	& \left|\sum_{h=1}^{H}\sum_{k=1}^{K}\sum_{n=N_{h}^{k}(s_{h}^{k},a_{h}^{k},b_{h}^{k})}^{N_{h}^{K-1}(s_{h}^{k},a_{h}^{k},b_{h}^{k})}\frac{\lambda_{h}^{k}}{n}\big(P_{h}^{k}-P_{h,s_{h}^{k},a_{h}^{k},b_h^k}\big)\big(\overline{V}_{h+1}^{\mathrm{R},K}-V_{h+1}^{\star}\big)\right|\nonumber\\
	&\lesssim\sqrt{H^{2}SAB\sum_{h=1}^{H}\sum_{i=1}^{K}P_{h,s_{h}^{k},a_{h}^{k},b_h^k}\left(\overline{V}_{h+1}^{\mathrm{R},K}-V_{h+1}^{\star}\right)\big(\log^{2}T\big)\log\frac{SABT}{\delta}}+H^{2}SAB\log^2\frac{SABT}{\delta} \notag\\
	&\lesssim\sqrt{H^{2}SAB\left\{ \sqrt{H^{6}SABK\log\frac{SABT}{\delta}}+ H^3SAB +HK\right\} \big(\log^{2}T\big)\log\frac{SABT}{\delta}}\notag\\
	&\qquad+H^{2}SAB\log^2\frac{SABT}{\delta}. \notag
\end{align}

According to the fact that
\[
\sqrt{H^{6}SABK\log\frac{SABT}{\delta}}=\sqrt{H^{5}SAB\log\frac{SABT}{\delta}}\sqrt{HK}\lesssim H^{5}SAB\log\frac{SABT}{\delta}+HK,
\]
we have
\begin{align}
	& \left|\sum_{h=1}^{H}\sum_{k=1}^{K}\sum_{n=N_{h}^{k}(s_{h}^{k},a_{h}^{k},b_{h}^{k})}^{N_{h}^{K-1}(s_{h}^{k},a_{h}^{k},b_{h}^{k})}\frac{\lambda_{h}^{k}}{n}\big(P_{h}^{k}-P_{h,s_{h}^{k},a_{h}^{k},b_h^k}\big)\big(\overline{V}_{h+1}^{\mathrm{R},K}-V_{h+1}^{\star}\big)\right|\nonumber\\
	&\lesssim\sqrt{H^{2}SAB\left\{ H^{5}SAB\log\frac{SABT}{\delta}+ H^3SAB + HK\right\} \log^{3}\frac{SABT}{\delta}}+H^{2}SAB\log^2\frac{SABT}{\delta} \notag\\
	&\lesssim H^{3.5}SAB\log^{2}\frac{SABT}{\delta}+\sqrt{H^{3}SABK\log^{3}\frac{SABT}{\delta}}.
	\label{eq:final-term-control-246-repeat}
\end{align}

\section{Multiplayer General-sum Markov Games}\label{Multi-ME-Nash-QL}
In this section, we extend ME-Nash-QL to the setting of multiplayer general-sum Markov games and present the corresponding theoretical guarantees.

\subsection{Problem formulation}
A general-sum Markov game (general-sum MG) is a tuple $\mathcal{M}(\mathcal{S}, \{\mathcal{A}_i\}_{i=1}^m, H, \{P_h\}_{h=1}^H, \{r_i\}_{i=1}^m)$ with $m$ players, where $\mathcal{S}$ denotes the state space and $H$ is the horizon length. We have $m$ different action spaces, where $\mathcal{A}_i$ is the action space for the $i^{\text{th}}$ player and $|\mathcal{A}_i| = A_i$. We let $\mathcal{A} = \mathcal{A}_1 \times \cdots\times \mathcal{A}_m$ denote the joint action space, and let $\bm{a}:=(a_{1},\cdots,a_{m})\in\mathcal{A}$ denote the (tuple of) joint actions by all $m$ players. $\{P_h\}_{h\in[H]}$ is a collection of transition matrices, so that $P_h ( \cdot | s, \bm{a}) $ gives the distribution of the next state if actions $\bm{a}$ are taken at state $s$ at step $h$, and $r_i = \{r_{h, i}\}_{h\in[H]}$ is a collection of reward functions for the $i^{\text{th}}$ player, so that $r_{h, i}(s, \bm{a})$ gives the reward received by the $i^{\text{th}}$ player if actions $\bm{a}$ are taken at state $s$ at step $h$. 

The policy of the $i^{\text{th}}$ player is denoted as $\pi_i :=\big\{ \pi_{h,i}: \mathcal{S} \rightarrow \Delta_{\mathcal{A}_i} \big\}_{h\in [H]}$. We denote the product policy of all players as $\pi:=\pi_1 \times \cdots \times \pi_M$, and denote the policy of all players except the $i^{\text{th}}$ player as $\pi_{-i}$. We define $V^{\pi}_{h, i}(s)$ as the expected cumulative reward that will be received by the $i^{\text{th}}$ player if starting at state $s$ at step $h$ and all players follow policy $\pi$. For any strategy $\pi_{-i}$, there also exists a \emph{best response} of the $i^{\text{th}}$ player, which is a policy $\mu^\dagger(\pi_{-i})$ satisfying $V_{h, i}^{\mu^\dagger(\pi_{-i}), \pi_{-i}}(s) = \sup_{\pi_i} V_{h, i}^{\pi_i,\pi_{-i}}(s)$ for any $(s, h) \in \mathcal{S} \times [H]$. For convenience, we denote $V_{h, i}^{\dagger, \pi_{-i}} := V_{h, i}^{\mu^\dagger(\pi_{-i}), \pi_{-i}}$. The $Q$-functions of the best response can be defined similarly.

%In this section, we consider 
In general, there are three versions of equilibrium for general-sum MGs: Nash equilibrium (NE), correlated equilibrium (CE), and coarse correlated equilibrium (CCE), all being standard solution notions in games~\citep{nisan2007algorithmic}. These three notions coincide in two-player zero-sum games, but are not equivalent to each other in multi-player general-sum games; any one of them could be desired depending on the application at hand. In this section, we first consider CCE and then extend to NE and CE in the final of Section~\ref{sec:Multi-ME-Nash-QL}. %introduce their definitions.

%Our objective is to find an approximate coarse correlated equilibrium (CCE) of Markov games. 
%We begin with considering CCE.
The CCE is a relaxed version of Nash equilibrium in which we consider general correlated policies instead of product policies. 
%Let $\mathcal{A} = \mathcal{A}_1 \times \cdots\times \mathcal{A}_m$ denote the joint action space.
\begin{definition}
[CCE in general-sum MGs]\label{defn:CCEpolicy}
	A (correlated) policy $\pi:=\{ \pi_h(s)\in \Delta_{\mathcal{A}}: \ (h,s) \in[H]\times \mathcal{S}\}$ is a \textbf{CCE} if $\max_{i\in[m]} V^{\dagger,\pi_{-i}}_{h,i}(s) \le V^{\pi}_{h,i}(s)$ for all $(s,h)\in\mathcal{S}\times[H]$.
\end{definition}

Compared with a Nash equilibrium, a CCE is not necessarily a product policy, that is, we may not have $\pi_h(s)\in \Delta_{\mathcal{A}_1}\times \cdots\times \Delta_{\mathcal{A}_m}$. Similarly, we also define $\epsilon$-approximate CCE and CCE-regret as below.
\begin{definition}[$\epsilon$-approximate CCE in general-sum MGs] \label{def:CCE_multiplayer}
A policy $\pi:=\{ \pi_h(s)\in \Delta_{\mathcal{A}}: \ (h,s) \in[H]\times \mathcal{S}\}$ is an \textbf{$\epsilon$-approximate CCE} if $\frac{1}{K}\sum_{k=1}^K\max_{i\in[m]}{( V_{1,i}^{\dag, \pi_{-i}}-V_{1,i}^{\pi } )}( s_{1}^k ) \le \epsilon$.
\end{definition}

\begin{algorithm}[H]
    \caption{Multi-player Memory-Efficient Nash Q-Learning (Multi-ME-Nash-QL)}\label{algorithm-Multi}
    \LinesNumbered
    \textbf{Parameter: }{some universal constant $c_{\mathrm{b}} > 0$ and probability of failure $\delta\in(0, 1)$}\\
    \textbf{Initialize: }{$\overline{Q}_{h,i}(s, \bm{a})$, $Q_{h,i}^{\mathrm{UCB}}(s, \bm{a})$, $\overline{Q}_{h,i}^{\mathrm{R}}(s, \bm{a})$, $ \leftarrow H$; $\underline{Q}_{h,i}(s, \bm{a})$, $\underline{Q}_{h,i}^{\mathrm{R}}(s, \bm{a})$, $Q_{h,i}^{\mathrm{LCB}}(s, \bm{a})$$ \leftarrow 0$; $\overline{V}_{h,i}(s)$, $\overline{V}_{h,i}^{\mathrm{R}}(s)$$\leftarrow H$; $\underline{V}_{h,i}(s)$, $\underline{V}_{h,i}^{\mathrm{R}}(s)$$\leftarrow 0$; $N_{h}(s, \bm{a}) \leftarrow 0$; $\overline{\phi}_{h,i}^{\mathrm{r}}(s, \bm{a})$, $\underline{\phi}_{h,i}^{\mathrm{r}}(s, \bm{a})$, $\overline{\psi}_{h,i}^{\mathrm{r}}(s, \bm{a})$, $\underline{\psi}_{h,i}^{\mathrm{r}}(s, \bm{a})$, $\overline{\phi}_{h,i}^{\mathrm{a}}(s, \bm{a})$, $\underline{\phi}_{h,i}^{\mathrm{a}}(s, \bm{a})$, $\overline{\psi}_{h,i}^{\mathrm{a}}(s, \bm{a})$, $\underline{\psi}_{h,i}^{\mathrm{a}}(s, \bm{a})$, $\overline{\varphi}_{h,i}^{\mathrm{R}}(s, \bm{a})$, $\underline{\varphi}_{h,i}^{\mathrm{R}}(s, \bm{a})$, $\overline{B}_{h,i}^{\mathrm{R}}(s, \bm{a})$, $\underline{B}_{h,i}^{\mathrm{R}}(s, \bm{a}) \leftarrow 0$; and ${u}_{\mathrm{r},i}(s) =\mathsf{True}$ for all $(s, \bm{a}, h)\in\mathcal{S}\times\mathcal{A}\times [H]$.}\\
    \For{Episode $k=1, \dots, K$}{
        Set initial state $s_1\leftarrow s_1^k$.\\
        \For{Step $h=1, \dots, H$}{
            Take action 
            $\bm{a}_h \sim \pi_h(\cdot|s_h)$, and draw $s_{h+1}\sim P_h(\cdot\mid s_h, \bm{a}_h)$.\\
            $N_h(s_h, \bm{a}_h)\leftarrow N_h(s_h, \bm{a}_h)+1$;$\quad n\leftarrow N_h(s_h, \bm{a}_h)$;$\quad\eta_n\leftarrow\frac{H+1}{H+n}$.\\
            \For{player $i=1, 2, \cdots, m$}{
                $\left[Q_{h,i}^{\mathrm{UCB}}, Q_{h,i}^{\mathrm{LCB}}\right](s_h, \bm{a}_h)\leftarrow\mathrm{update\mhyphen q}\left(\right)$.\\
                $\overline{Q}_{h,i}^{\mathrm{R}}(s_h, \bm{a}_h)\leftarrow \mathrm{update\mhyphen ur }\left(\right)$.\\
                $\underline{Q}_{h,i}^{\mathrm{R}}(s_h, \bm{a}_h)\leftarrow \mathrm{update \mhyphen lr}\left(\right)$.\\
                $\overline{Q}_{h,i}(s_h, \bm{a}_h)\leftarrow\min{\{\overline{Q}_{h,i}^{\mathrm{R}}(s_h, \bm{a}_h), Q_{h,i}^{\mathrm{UCB}}(s_h, \bm{a}_h), \overline{Q}_{h,i}(s_h, \bm{a}_h)\}}$.\\
                $\underline{Q}_{h,i}(s_h, \bm{a}_h)\leftarrow\max{\{\underline{Q}_{h,i}^{\mathrm{R}}(s_h, \bm{a}_h), Q_{h,i}^{\mathrm{LCB}}(s_h, \bm{a}_h), \underline{Q}_{h,i}(s_h, \bm{a}_h)\}}$.\\
                \If{$\overline{Q}_{h,i}(s_h, \bm{a}_h)=\min{\{\overline{Q}_{h,i}^{\mathrm{R}}(s_h, \bm{a}_h), Q_{h,i}^{\mathrm{UCB}}(s_h, \bm{a}_h)\}}$ and $\underline{Q}_{h,i}(s_h, \bm{a}_h)=\max{\{\underline{Q}_{h,i}^{\mathrm{R}}(s_h, \bm{a}_h), Q_{h,i}^{\mathrm{LCB}}(s_h, \bm{a}_h)\}}$}{ $\pi_h(\cdot|s_h)\leftarrow \mathrm{CCE}(\overline{Q}_{h,1}(s_h,\cdot),\cdots, \underline{Q}_{h,m}(s_h,\cdot)).$\\}
                $\overline{V}_{h,i}(s_h)\leftarrow  \min\{(\mathbb{D}_{\pi_h}\overline{Q}_{h,i})(s_h), \overline{V}_{h,i}(s_h)\}$;\\
                $\underline{V}_{h,i}(s_h)\leftarrow  \max\{(\mathbb{D}_{\pi_h}\underline{Q}_{h,i})(s_h), \underline{V}_{h,i}(s_h)\}$.\\
                \If{$\overline{V}_{h,i}(s_h)-\underline{V}_{h,i}(s_h)>1$}{
                    $\overline{V}_{h,i}^{\mathrm{R}}(s_h)\leftarrow \overline{V}_{h,i}(s_h)$;$\quad \underline{V}_{h,i}^{\mathrm{R}}(s_h)\leftarrow \underline{V}_{h,i}(s_h)$.\\
                }
                \ElseIf{${u}_{\mathrm{r},i}(s_h)=\mathsf{True}$}{
                    $\overline{V}_{h,i}^{\mathrm{R}}(s_h)\leftarrow \overline{V}_{h,i}(s_h)$;
                    $\quad\underline{V}_h^{\mathrm{R}}(s_h)\leftarrow \underline{V}_{h,i}(s_h)$;
                    $\quad {u}_{\mathrm{r},i}(s_h)=\mathsf{False}$.\\
                }
            }
        }
    }
    \textbf{Output: }{$\{\pi_h\}_{h=1}^H$.}
\end{algorithm}

\begin{definition}[CCE-regret in general-sum MGs]
  Let policy $\pi^k$ denote the (correlated) policy deployed by the algorithm
  in the $k^{\text{th}}$ episode. After a total of $K$ episodes, the regret is defined as
  \begin{equation}\label{regret_multi}
    \operatorname{Regret}(K) =  \sum_{k=1}^K \max_{i \in [m]}(V_{1,i}^{\dag, \pi_{-i}^k}-V_{1,i}^{\pi^k } )(s_1^k).
  \end{equation}
\end{definition}

In addition, for general-sum MGs, we have $\{\rm NE\}\subseteq \{\rm CE\}\subseteq \{\rm CCE\}$, so that they form a nested set of notions of equilibria~\citep{nisan2007algorithmic}. Finally, since a Nash equilibrium always exists, so a CE and CCE equilibrium always exist.

\SetKwInOut{KwOut}{Function}
\begin{algorithm}[H]
    \label{algorithm-auxiliary-multi}
    \caption{Auxiliary functions of Multi-ME-Nash-QL}
    \SetKwFunction{Fucbq}{update-q$\left(\left[Q_{h,i}^{\mathrm{UCB}}, Q_{h,i}^{\mathrm{LCB}}\right](s_h, \bm{a}_h), \left[\overline{V}_{h+1,i}, \underline{V}_{h+1,i}\right]\left(s_{h+1}\right)\right)$}
    \SetKwProg{Fn}{Function}{:}{\KwRet}
    \Fn{\Fucbq}{
        $Q_{h,i}^{\mathrm{UCB}}\left(s_{h}, \bm{a}_h\right) \!\leftarrow\!\left(1\!-\!\eta_{n}\right) Q_{h,i}^{\mathrm{UCB}}\left(s_{h}, \bm{a}_h\right)\!+\!\eta_{n}\left(r_{h,i}\left(s_{h}, \bm{a}_h\right)\!+\!\overline{V}_{h+1,i}\left(s_{h+1}\right)\!+\!c_b\sqrt{\frac{1}{n}H^{3} \log \frac{S \prod_{i=1}^m A_iT}{\delta}}\right)$;\\
        $Q_{h,i}^{\mathrm{LCB}}\left(s_{h}, \bm{a}_h\right) \!\leftarrow\!\left(1-\eta_{n}\right) Q_{h,i}^{\mathrm{LCB}}\left(s_{h}, \bm{a}_h\right)\!+\!\eta_{n}\left(r_{h,i}\left(s_{h}, \bm{a}_h\right)\!+\!\underline{V}_{h+1,i}\left(s_{h+1}\right)\!-\!c_b\sqrt{\frac{1}{n}H^{3} \log \frac{S \prod_{i=1}^m A_iT}{\delta}}\right)$.}
    \SetKwFunction{Fucbqa}{update-ur$\left(\left[\overline{\phi}_{h,i}^{\mathrm{r}}, \overline{\psi}_{h,i}^{\mathrm{r}}, \overline{\phi}_{h,i}^{\mathrm{a}}, \overline{\psi}_{h,i}^{\mathrm{a}}, \overline{B}_{h,i}^{\mathrm{R}}, \overline{Q}_{h,i}^{\mathrm{R}}\right]\!\left(s_{h}, \bm{a}_h\right)\!,\! \left[\overline{V}_{h+1,i}^{\mathrm{R}}, \overline{V}_{h+1,i}\right]\!(s_{h+1})\right)$}
    \SetKwProg{Fn}{Function}{:}{\KwRet}
    \Fn{\Fucbqa}{
        $\left[\overline{\phi}_{h,i}^{\mathrm{r}}\left(s_{h}, \bm{a}_h\right), \overline{b}_{h,i}^{\mathrm{R}}\right]\leftarrow \mathrm{ update\mhyphen q\mhyphen bonus}\left(\left[\overline{\phi}_{h,i}^{\mathrm{r}}, \overline{\psi}_{h,i}^{\mathrm{r}}, \overline{\phi}_{h,i}^{\mathrm{a}}, \overline{\psi}_{h,i}^{\mathrm{a}}, \overline{B}_{h,i}^{\mathrm{R}}\right]\!\left(s_{h}, \bm{a}_h\right)\!,\! \left[\overline{V}_{h+1,i}^{\mathrm{R}}, \overline{V}_{h+1,i}\right]\!(s_{h+1})\right)$;\\
        $\overline{Q}_{h,i}^{\mathrm{R}}\left(s_{h}, \bm{a}_h\right) \leftarrow\left(1-\eta_{n}\right) \overline{Q}_{h,i}^{\mathrm{R}}\left(s_{h}, \bm{a}_h\right)+\eta_{n}\left(r_{h,i}\left(s_{h}, \bm{a}_h\right)+\overline{V}_{h+1,i}\left(s_{h+1}\right)-\overline{V}_{h+1,i}^{\mathrm{R}}\left(s_{h+1}\right)+\overline{\phi}_{h,i}^{\mathrm{r}}\left(s_{h}, \bm{a}_h\right)+\overline{b}_{h,i}^{\mathrm{R}}\right)$.
    }
    \SetKwFunction{Fucbqa}{update-lr$\left(\left[\underline{\phi}_{h,i}^{\mathrm{r}}, \underline{\psi}_{h,i}^{\mathrm{r}}, \underline{\phi}_{h,i}^{\mathrm{a}}, \underline{\psi}_{h,i}^{\mathrm{a}}, \underline{B}_{h,i}^{\mathrm{R}}, \underline{Q}_{h,i}^{\mathrm{R}}\right]\left(s_{h}, \bm{a}_h\right), \left[\underline{V}_{h+1,i}^{\mathrm{R}}, \underline{V}_{h+1,i}\right](s_{h+1})\right)$}
    \SetKwProg{Fn}{Function}{:}{\KwRet}
    \Fn{\Fucbqa}{
        $\left[\underline{\phi}_{h,i}^{\mathrm{r}}\left(s_{h}, \bm{a}_h\right),\underline{b}_{h,i}^{\mathrm{R}}\right]\leftarrow \mathrm{ update\mhyphen q\mhyphen bonus}\left(\left[\underline{\phi}_{h,i}^{\mathrm{r}}, \underline{\psi}_{h,i}^{\mathrm{r}}, \underline{\phi}_{h,i}^{\mathrm{a}}, \underline{\psi}_{h,i}^{\mathrm{a}}, \underline{B}_{h,i}^{\mathrm{R}}\right]\left(s_{h}, \bm{a}_h\right), \left[\underline{V}_{h+1,i}^{\mathrm{R}}, \underline{V}_{h+1,i}\right](s_{h+1})\right)$;\\
        $\underline{Q}_{h,i}^{\mathrm{R}}\left(s_{h}, \bm{a}_h\right) \leftarrow\left(1-\eta_{n}\right) \underline{Q}_{h,i}^{\mathrm{R}}\left(s_{h}, \bm{a}_h\right)+\eta_{n}\left(r_{h,i}\left(s_{h}, \bm{a}_h\right)+\underline{V}_{h+1,i}\left(s_{h+1}\right)-\underline{V}_{h+1,i}^{\mathrm{R}}\left(s_{h+1}\right)+\underline{\phi}_{h,i}^{\mathrm{r}}\left(s_{h}, \bm{a}_h\right)-\underline{b}_{h,i}^{\mathrm{R}}\right)$.
    }
    \SetKwFunction{Fucbqa}{update-q-bonus$\left(\left[{\phi}_{h,i}^{\mathrm{r}}, {\psi}_{h,i}^{\mathrm{r}}, {\phi}_{h,i}^{\mathrm{a}}, {\psi}_{h,i}^{\mathrm{a}}, {B}_{h,i}^{\mathrm{R}}\right]\left(s_{h}, \bm{a}_{h}\right), \left[{V}_{h+1,i}^{\mathrm{R}}, {V}_{h+1,i}\right](s_{h+1})\right)$}
    \SetKwProg{Fn}{Function}{:}{\KwRet}
    \Fn{\Fucbqa}{
        ${\phi}_{h,i}^{\mathrm{r}}\left(s_{h}, \bm{a}_h\right) \leftarrow\left(1-\frac{1}{n}\right) {\phi}_{h,i}^{\mathrm{r}}\left(s_{h}, \bm{a}_h\right)+\frac{1}{n}{V}_{h+1,i}^{\mathrm{R}}\left(s_{h+1}\right)$;\\
        ${\psi}_{h,i}^{\mathrm{r}}\left(s_{h}, \bm{a}_h\right) \leftarrow\left(1-\frac{1}{n}\right) {\psi}_{h,i}^{\mathrm{r}}\left(s_{h}, \bm{a}_h\right)+\frac{1}{n}\left({V}_{h+1,i}^{\mathrm{R}}\left(s_{h+1}\right)\right)^{2}$;\\
        ${\phi}_{h,i}^{\mathrm{a}}\left(s_{h}, \bm{a}_h\right) \leftarrow\left(1-\eta_{n}\right) {\phi}_{h,i}^{\mathrm{a}}\left(s_{h}, \bm{a}_h\right)+\eta_{n}\left({V}_{h+1,i}\left(s_{h+1}\right)-{V}_{h+1,i}^{\mathrm{R}}\left(s_{h+1}\right)\right)$;\\
        ${\psi}_{h,i}^{\mathrm{a}}\left(s_{h}, \bm{a}_h\right) \leftarrow\left(1-\eta_{n}\right) {\psi}_{h,i}^{\mathrm{a}}\left(s_{h}, \bm{a}_h\right)+\eta_{n}\left({V}_{h+1,i}\left(s_{h+1}\right)-{V}_{h+1,i}^{\mathrm{R}}\left(s_{h+1}\right)\right)^2$;\\
        ${B}_{h,i}^{\mathrm {temp}}\left(s_{h}, \bm{a}_h\right) \leftarrow c_{\mathrm{b}} \sqrt{\frac{\log ^2\frac{S\prod_{i=1}^m A_i T}{\delta}}{n}}\sqrt{{\psi}_{h,i}^{\mathrm{r}}\left(s_{h}, \bm{a}_h\right)-\left({\phi}_{h,i}^{\mathrm{r}}\left(s_{h}, \bm{a}_h\right)\right)^{2}}+c_{\mathrm{b}} \sqrt{\frac{\log^2 \frac{S\prod_{i=1}^m A_i T}{\delta}}{n}}\sqrt{H} \sqrt{{\psi}_{h,i}^{\mathrm {a}}\left(s_{h}, \bm{a}_h\right)-\left({\phi}_{h,i}^{\mathrm {a}}\left(s_{h}, \bm{a}_h\right)\right)^{2}}$;\\
        ${\varphi}_{h,i}^{\mathrm{R}}\left(s_{h}, \bm{a}_h\right) \leftarrow {B}_{h,i}^{\mathrm {temp}}\left(s_{h}, \bm{a}_h\right)-{B}_{h,i}^{\mathrm{R}}\left(s_{h}, \bm{a}_h\right)$;\\
        ${B}_{h,i}^{\mathrm{R}}\left(s_{h}, \bm{a}_h\right) \leftarrow {B}_{h,i}^{\mathrm {temp}}\left(s_{h}, \bm{a}_h\right)$;\\
        ${b}_{h,i}^{\mathrm{R}} \leftarrow {B}_{h,i}^{\mathrm{R}}\left(s_{h}, \bm{a}_h\right)+\left(1-\eta_{n}\right) \frac{{\varphi}_{h,i}^{\mathrm{R}}\left(s_{h}, \bm{a}_h\right)}{\eta_{n}}+c_{\mathrm{b}} \frac{H^{2} \log^2 \frac{S\prod_{i=1}^m A_i T}{\delta}}{n^{3 / 4}}$.
    }
\end{algorithm}

\subsection{Multi-ME-Nash-QL}\label{sec:Multi-ME-Nash-QL}

Here we present the Multi-ME-Nash-QL algorithm in Algorithm~\ref{algorithm-Multi}, which is an extension of Algorithm~\ref{algorithm-main} for multi-player general-sum Markov games.

Remarkably, the {\rm CCE} operation in line~15 in Algorithm \ref{algorithm-Multi} could be replaced by NE or CE operation which finds the NE or CE for \emph{one-step} games. When using NE operation, the worst-case computational complexity will be PPAD-hard. When using CE or CCE, 
%we find CEs or CCEs of the one-step games respectively, which 
it can be solved in polynomial time using linear programming. However, the policies found are not guaranteed to be a product policy. We remark that in Algorithm~\ref{algorithm-main} we used the CCE subroutine for finding Nash in two-player zero-sum games, which seemingly contrasts the principle of using the NE subroutine for finding the Nash equilibrium, but nevertheless works as the Nash equilibrium and CCE are equivalent in zero-sum games.

\subsection{Analysis of Multi-ME-Nash-QL}
In this section, we prove Theorem~\ref{thm:Multi-ME-Nash-QL}. Similar to Lemma~\ref{lemma-Qhk-opt} in Appendix~\ref{subsec:key_properties}, we can obtain the properties of the Q-estimate and V-estimate, as asserted by the following lemma.
\begin{lemma}\label{lemma-Qhk-opt-multi}
    Consider any $\delta\in(0,1)$. Suppose that $c_{\mathrm{b}} >0$ is some sufficiently large constant. Then with
    probability at least $1-\delta$, 
    \begin{align}
        &\overline{Q}_{h,i}^{\mathrm{R}, k}(s,\bm{a}) \!\geq \overline{Q}_{h,i}^k(s,\bm{a}) \ge Q_{h,i}^{\dagger,\pi_{-i}^k}(s,\bm{a}),
        \quad
        \overline{V}_{h,i}^k(s) \ge V_{h,i}^{\dagger,\pi_{-i}^k}(s),\label{lemma-Qhk-opt-upper-multi}\\
        &\underline{Q}_{h,i}^{\mathrm{R}, k}(s,\bm{a}) \!\le \underline{Q}_{h,i}^k(s,\bm{a}) \le Q_{h,i}^{\pi^k}(s,\bm{a}),
        \quad
        \underline{V}_{h,i}^k(s) \le V_{h,i}^{\pi^k}(s)\label{lemma-Qhk-opt-lower-multi}
    \end{align}
    hold simultaneously for all $(s, \bm{a}, h)\in\mathcal{S}\times\mathcal{A}\times [H]$.
    \end{lemma}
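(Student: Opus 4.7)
The plan is to mirror the proof of Lemma~\ref{lemma-Qhk-opt} essentially line-for-line, since the multi-player setting changes only the combinatorial action space and the notion of equilibrium used to build the deployed policy, not the underlying statistical machinery. I would run a backward induction on $h$ (base $h=H+1$ trivial) containing a forward induction on $k$, with three cosmetic adjustments: (i) replace $(a,b)$ by the joint action $\bm{a}\in\mathcal{A}=\prod_{i\in[m]}\mathcal{A}_i$ and replace $SAB$ by $S\prod_{i\in[m]}A_i$ throughout the log factors; (ii) replace the single reference $Q^\star$ of the two-player argument by \emph{two} player-indexed targets, $Q_{h,i}^{\dagger,\pi_{-i}^k}$ for the optimism chain and $Q_{h,i}^{\pi^k}$ for the pessimism chain; and (iii) invoke the multi-player CCE property produced by the subroutine in line~15 of Algorithm~\ref{algorithm-Multi} in place of the two-player property~(\ref{CCE_property}).

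The only genuinely new conceptual step is the propagation from a Q-bound at level $h+1$ to a V-bound at level $h+1$. On the optimism side, the CCE subroutine guarantees, for each player $i$, $\mathbb{D}_{\pi_{h+1}^k}\overline{Q}_{h+1,i}^k(s)\ge\max_{a_i'}\mathbb{E}_{\bm{a}_{-i}\sim\pi_{-i,h+1}^k}\overline{Q}_{h+1,i}^k(s,a_i',\bm{a}_{-i})$; combining this with the Q-induction hypothesis $\overline{Q}_{h+1,i}^k\ge Q_{h+1,i}^{\dagger,\pi_{-i}^k}$ and the best-response Bellman identity $V_{h+1,i}^{\dagger,\pi_{-i}^k}(s)=\max_{a_i'}\mathbb{E}_{\bm{a}_{-i}\sim\pi_{-i,h+1}^k}Q_{h+1,i}^{\dagger,\pi_{-i}^k}(s,a_i',\bm{a}_{-i})$ yields $\overline{V}_{h+1,i}^k(s)\ge V_{h+1,i}^{\dagger,\pi_{-i}^k}(s)$; the $\min$ in the $\overline{V}$-update is handled exactly as in the two-player proof via monotonicity and the already-established optimism from the previous inner-induction step. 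On the pessimism side the argument is strictly easier: since $V_{h+1,i}^{\pi^k}(s)=\mathbb{E}_{\bm{a}\sim\pi_{h+1}^k}Q_{h+1,i}^{\pi^k}(s,\bm{a})$ is itself an expectation under the (possibly correlated) policy $\pi^k$, linearity applied to $\underline{Q}_{h+1,i}^k\le Q_{h+1,i}^{\pi^k}$ directly gives $\underline{V}_{h+1,i}^k(s)\le V_{h+1,i}^{\pi^k}(s)$ with no equilibrium-type inequality required.

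With the V-bound in hand, the Q-bound at level $h$ for episode $k+1$ splits into four parts---$Q_{h,i}^{\mathrm{UCB},k+1}$, $Q_{h,i}^{\mathrm{LCB},k+1}$, $\overline{Q}_{h,i}^{\mathrm{R},k+1}$ and $\underline{Q}_{h,i}^{\mathrm{R},k+1}$---each of which I would expand via its recursive update as in Steps~1--5 of Appendix~\ref{prove:lemma-Qhk-opt}. Crucially, all of the heavy lifting---the decomposition $\varpi_h^{k^n}$ in~(\ref{eq:defn-xi-kh-123}), the Freedman-type bounds on the two martingale terms $W_1,W_2$ via Lemma~\ref{lemma:martingale-union-all}, the control of the variance proxies $\overline{\psi}^{\mathrm{a}}-(\overline{\phi}^{\mathrm{a}})^2$, and the telescoping identity~(\ref{eq:eta-b-ref-sum-identity-135}) for the bonus sums---is purely a statement about the estimator trajectories and the reference values $\overline{V}^{\mathrm{R}},\underline{V}^{\mathrm{R}}$, and does not depend at all on which ground-truth Q is being chased, so every such bound transfers verbatim; in particular the analog of claim~(\ref{eq:claim-sum-eta-sum-b}) holds with the same schedule because Algorithm~\ref{algorithm-auxiliary-multi} carries the same bonus structure as Algorithm~\ref{algorithm-auxiliary} up to the enlarged log factor. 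The main obstacle is therefore purely bookkeeping: one must track two distinct moving targets per player across the $H\times K$ induction schedule and verify that their respective Bellman equations are invoked with the correct policy indices matching the algorithm's state at each round, after which the min/max update in lines~14--15 of Algorithm~\ref{algorithm-Multi} closes the inner induction.
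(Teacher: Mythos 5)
Your proposal matches the paper's intent exactly: the paper does not write out a separate proof of this lemma but simply asserts it follows "similar to Lemma~\ref{lemma-Qhk-opt}", i.e.\ by repeating the backward-in-$h$/forward-in-$k$ induction of Appendix~\ref{prove:lemma-Qhk-opt} with the joint action space, the player-indexed targets, and the multi-player CCE property in place of (\ref{CCE_property}), which is precisely what you do. Your additional observation that the pessimism chain now targets $Q_{h,i}^{\pi^k}$ and therefore needs only linearity of $\mathbb{D}_{\pi_h^k}$ rather than the second CCE inequality is correct and consistent with the lemma as stated.
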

    
\paragraph{Step 1: regret decomposition.}
Firstly, we can apply Lemma~\ref{lemma-Qhk-opt-multi} to reformulate (\ref{regret_multi}) as 
    \begin{align}\label{regre_1}
    \operatorname{Regret}(K)=\sum_{k=1}^K\max_{i}\left( V_{1,i}^{\dagger,\pi _{-i}^{k}}-V_{1,i}^{\pi ^k} \right) \left( s_{1}^{k} \right) \le \sum_{k=1}^K\max_{i}\left( \overline{V}_{1,i}^{k}-\underline{V}_{1,i}^{k} \right) \left( s_{1}^{k} \right).
    \end{align}

According to lines~13-14 in Algorithm~\ref{algorithm-Multi}, we obtain
\begin{align}
    \overline{Q}_{h,i}^{\mathrm{R},k}(s,\bm{a})\ge \overline{Q}_{h,i}^k(s,\bm{a}),
		\quad
		\underline{Q}_{h,i}^{\mathrm{R},k}(s,\bm{a})\le \underline{Q}_{h,i}^k(s,\bm{a}),\\
  {Q}_{h,i}^{\mathrm{UCB},k}(s,\bm{a})\ge \overline{Q}_{h,i}^k(s,\bm{a}),
		\quad
		{Q}_{h,i}^{\mathrm{LCB},k}(s,\bm{a})\le \underline{Q}_{h,i}^k(s,\bm{a}).
\end{align}

Based on this relation, we notice the following propagation by lines~18-21 in Algorithm~\ref{algorithm-Multi} and update rules in Algorithm~\ref{algorithm-auxiliary-multi} with $c_b^{\mathrm{ULCB}}=c_b\sqrt{\frac{1}{n}H^{3} \log \frac{S \prod_{i=1}^m A_iT}{\delta}}$:
\begin{equation}
    \left\{
    \begin{aligned}
        &\left(\overline{Q}^k_{h,i}-\underline{Q}^k_{h,i}\right)(s,\bm{a}) \le \left({Q}_{h,i}^{\mathrm{UCB},k}-{Q}_{h,i}^{\mathrm{LCB},k}\right)(s,\bm{a}),\\
        &\left({Q}_{h,i}^{\mathrm{UCB},k}-{Q}_{h,i}^{\mathrm{LCB},k}\right)(s,\bm{a}) =\!\left(1-\eta_{n}\right)\left({Q}_{h,i}^{\mathrm{UCB},k}-{Q}_{h,i}^{\mathrm{LCB},k}\right)(s,\bm{a})\\
        &\qquad\qquad+\eta_{n}\left(\left(\overline{V}_{h+1,i}^k-\underline{V}_{h+1,i}^k\right)\left(s_{h+1}^k\right)+2c_b^{\mathrm{ULCB}}\right),\\
        &\left(\overline{V}_{h,i}^k-\underline{V}_{h,i}^k\right)(s) =\left[\mathbb{D}_{\pi_h}\left(\overline{Q}^k_{h,i}-\underline{Q}^k_{h,i}\right)\right](s).
    \end{aligned}
    \right.
\end{equation}

    We can define $\hat{V}^k_{h}(s)$, $\check{V}^k_{h}(s)$, $\hat{Q}^k_{h}(s,\bm{a})$, ${\check{Q}_{h}^k}(s,\bm{a})$, $\hat{Q}^{\mathrm{UCB}, k}_{h}(s,\bm{a})$ and ${\check{Q}_{h}^{\mathrm{LCB}, k}}$ recursively by $\hat{V}^k_{H+1}(s)=0$ and $\check{V}^k_{H+1}(s)=0$, and there is
    \begin{equation}
        \left\{
        \begin{aligned}
            &\left(\hat{Q}^k_{h}-\check{Q}^k_{h}\right)(s,\bm{a}) = \left(\hat{Q}_{h}^{\mathrm{UCB},k}-\check{Q}_{h}^{\mathrm{LCB},k}\right)(s,\bm{a}),\\
            &\left(\hat{Q}_{h}^{\mathrm{UCB},k+1}-\check{Q}_{h}^{\mathrm{LCB},k+1}\right)(s,\bm{a}) =\!\left(1-\eta_{n}\right)\left(\hat{Q}_{h}^{\mathrm{UCB},k}-\check{Q}_{h}^{\mathrm{LCB},k}\right)(s,\bm{a})\\
            &\qquad\qquad+\eta_{n}\left(\left(\hat{V}_{h+1}^k-\check{V}_{h+1}^k\right)\left(s_{h+1}^k\right)+2c_b^{\mathrm{ULCB}}\right),\\
            &\left(\hat{V}_{h}^k-\check{V}_{h}^k\right)(s) =\left[\mathbb{D}_{\pi_h}\left(\hat{Q}^k_{h}-\check{Q}^k_{h}\right)\right](s).
        \end{aligned}
        \right.
    \end{equation}

    Similarly, we also have the following definitions.
    \begin{equation}
        \left\{
        \begin{aligned}
        &\hat{\phi}_{h}^{\mathrm{r},k+1}\left(s, \bm{a}\right) =\left(1-\frac{1}{n}\right) \hat{\phi}_{h}^{\mathrm{r},k}\left(s, \bm{a}\right)+\frac{1}{n}\hat{V}_{h+1}^{\mathrm{R},k}\left(s\right),\\
        &\check{\phi}_{h}^{\mathrm{r},k+1}\left(s, \bm{a}\right) =\left(1-\frac{1}{n}\right) \check{\phi}_{h}^{\mathrm{r},k}\left(s, \bm{a}\right)+\frac{1}{n}\check{V}_{h+1}^{\mathrm{R},k}\left(s\right),\\
        &\hat{\psi}_{h}^{\mathrm{r},k+1}\left(s, \bm{a}\right) =\left(1-\frac{1}{n}\right) \hat{\psi}_{h}^{\mathrm{r},k}\left(s, \bm{a}\right)+\frac{1}{n}\left(\hat{V}_{h+1}^{\mathrm{R},k}\left(s\right)\right)^{2},\\
        &\check{\psi}_{h}^{\mathrm{r},k+1}\left(s, \bm{a}\right) =\left(1-\frac{1}{n}\right) \check{\psi}_{h}^{\mathrm{r},k}\left(s, \bm{a}\right)+\frac{1}{n}\left(\check{V}_{h+1}^{\mathrm{R},k}\left(s\right)\right)^{2},\\
        &\hat{\phi}_{h}^{\mathrm{a},k+1}\left(s, \bm{a}\right) =\left(1-\eta_{n}\right) \hat{\phi}_{h}^{\mathrm{a},k}\left(s, \bm{a}\right)+\eta_{n}\left(\hat{V}_{h+1}^k\left(s\right)-\hat{V}_{h+1}^{\mathrm{R},k}\left(s\right)\right),\\
        &\check{\phi}_{h}^{\mathrm{a},k+1}\left(s, \bm{a}\right) =\left(1-\eta_{n}\right) \check{\phi}_{h}^{\mathrm{a},k}\left(s, \bm{a}\right)+\eta_{n}\left(\check{V}_{h+1}^k\left(s\right)-\check{V}_{h+1}^{\mathrm{R},k}\left(s\right)\right),\\
        &\hat{\psi}_{h}^{\mathrm{a},k+1}\left(s, \bm{a}\right) =\left(1-\eta_{n}\right) \hat{\psi}_{h}^{\mathrm{a},k}\left(s, \bm{a}\right)+\eta_{n}\left(\hat{V}_{h+1}^k\left(s\right)-\hat{V}_{h+1}^{\mathrm{R},k}\left(s\right)\right)^2,\\
        &\check{\psi}_{h}^{\mathrm{a},k+1}\left(s, \bm{a}\right) =\left(1-\eta_{n}\right) \check{\psi}_{h}^{\mathrm{a},k}\left(s, \bm{a}\right)+\eta_{n}\left(\check{V}_{h+1}^k\left(s\right)-\check{V}_{h+1}^{\mathrm{R},k}\left(s\right)\right)^2,\\
        &\widetilde{B}_{h}^{\mathrm {temp},k+1}\left(s, \bm{a}\right) = c_{\mathrm{b}} \sqrt{\frac{\log ^2\frac{S\prod_{i=1}^m A_i T}{\delta}}{n}}\times \Bigg(\\
        & \qquad\qquad \left(\sqrt{\hat{\psi}_{h}^{\mathrm{r},k+1}\left(s, \bm{a}\right)-\left(\hat{\phi}_{h}^{\mathrm{r},k+1}\left(s, \bm{a}\right)\right)^{2}}+\sqrt{\check{\psi}_{h}^{\mathrm{r},k+1}\left(s, \bm{a}\right)-\left(\check{\phi}_{h}^{\mathrm{r},k+1}\left(s, \bm{a}\right)\right)^{2}}\right)\\
        & \qquad+\sqrt{H} \left(\sqrt{\hat{\psi}_{h}^{\mathrm{a},k+1}\left(s, \bm{a}\right)-\left(\hat{\phi}_{h}^{\mathrm{a},k+1}\left(s, \bm{a}\right)\right)^{2}}+\sqrt{\check{\psi}_{h}^{\mathrm{a},k+1}\left(s, \bm{a}\right)-\left(\check{\phi}_{h}^{\mathrm{a},k+1}\left(s, \bm{a}\right)\right)^{2}}\right)\Bigg);\\
        &\widetilde{\varphi}_{h}^{\mathrm{R},k+1}\left(s, \bm{a}\right)=\widetilde{B}_{h}^{\mathrm {temp},k+1}\left(s, \bm{a}\right)-\widetilde{B}_{h}^{\mathrm{R},k}\left(s, \bm{a}\right);\\
        &\widetilde{B}_{h}^{\mathrm{R},k+1}\left(s, \bm{a}\right)=\widetilde{B}_{h}^{\mathrm {temp},k+1}\left(s, \bm{a}\right);\\
        &\widetilde{b}_{h}^{\mathrm{R},k+1}= \widetilde{B}_{h}^{\mathrm{R},k+1}\left(s, \bm{a}\right)+\left(1-\eta_{n}\right) \frac{\widetilde{\varphi}_{h}^{\mathrm{R},k+1}\left(s, \bm{a}\right)}{\eta_{n}}+2c_{\mathrm{b}} \frac{H^{2} \log^2 \frac{S\prod_{i=1}^m A_i T}{\delta}}{n^{3 / 4}},\\
        &\widetilde{Q}_{h}^{\mathrm{R},k+1}\left(s,\bm{a}\right)=\left(1-\eta_{n}\right) \widetilde{Q}_{h}^{\mathrm{R},k}\left(s,\bm{a}\right)\\
        &\qquad+\eta_{n}\left(\widetilde{b}_{h}^{\mathrm{R},k+1} +\left(\hat{V}_{h+1}^{k}-\check{V}_{h+1}^{k}\right)\left(s\right)-\left(\hat{V}_{h+1}^{\mathrm{R},k}-\check{V}_{h+1}^{\mathrm{R},k}\right)\left(s\right)+\left(\hat{\phi}_{h}^{\mathrm{r},k+1}-\check{\phi}_{h}^{\mathrm{r},k+1}\right)\left(s, \bm{a}\right)\right),
        \end{aligned}
        \right.
    \end{equation}
    where $\hat{V}_{h}^{\mathrm{R},k}$ and $\check{V}_{h}^{\mathrm{R},k}$ are associated with $\hat{V}^k_{h}(s)$ and $\check{V}^k_{h}(s)$ and updated similar to lines~18-21 in Algorithm~\ref{algorithm-Multi}. Then we can prove inductively that for any $k$, $h$, $s$ and ${\bm a}$,
    \begin{equation}
        \max_{i}(\overline{V}_{h,i}^k-\underline{V}_{h,i}^k)(s) \le \hat{V}^k_{h}(s)-\check{V}^k_{h}(s).
    \end{equation}
    Therefore, we only need to bound $\sum_{k=1}^K\left(\hat{V}^k_{1}(s)-\check{V}^k_{1}(s)\right)$, that is,
    \begin{align}\label{eq:Vk-V-multi}
    \operatorname{Regret}(K) \le \sum_{k=1}^K\max_{i}\left( \overline{V}_{1,i}^{k}-\underline{V}_{1,i}^{k} \right) \left( s_{1}^{k} \right)\le \sum_{k=1}^K\left( \hat{V}^k_{1}(s_1^k)-\check{V}^k_{1}(s_1^k) \right).
    \end{align}

    To continue, we intend to examine $\left(\hat{V}_{h}^k-\check{V}_{h}^k\right)(s_h^k)$ across all time steps $1\le h\le H$, which admits the following decomposition:
    \begin{align}
    \hat{V}_{h}^k(s_h^k)-\check{V}_{h}^k(s_h^k)
    &\le\mathbb{E}_{\bm{a}\sim\pi_h^k}(\hat{Q}_{h}^k-\check{Q}_{h}^k)(s_h^k,\bm{a})=\hat{Q}_{h}^k(s_h^k,\bm{a}_h^k)-\check{Q}_{h}^k(s_h^k,\bm{a}_h^k) + \zeta_h^k\nonumber\\
    &\le \widetilde{Q}_{h}^{\mathrm{R},k}(s_h^k,\bm{a}_h^k) + \zeta_h^k\label{eq:Vk-Qk-multi},
    \end{align}
    where
    \begin{align}
    \zeta_h^k := \mathbb{E}_{\bm{a}\sim\pi_h^k}(\hat{Q}_{h}^k-\check{Q}_{h}^k)(s_h^k,\bm{a}) - (\hat{Q}_{h}^k-\check{Q}_{h}^k)(s_h^k,\bm{a}_h^k).
    \end{align}
    
    Summing (\ref{eq:Vk-V-multi}) and (\ref{eq:Vk-Qk-multi}) over $1\le k\le K$, we reach at
    \begin{align}
    \operatorname{Regret}(K) \le \sum_{k=1}^{K}\widetilde{Q}_{1}^{\mathrm{R},k}(s_1^k,\bm{a}_1^k) +\sum_{k=1}^{K}\zeta_1^k.
    \end{align}
    
    \paragraph{Step 2: managing regret by recursion.}
    The regret can be further manipulated by leveraging the update rule of $\hat{Q}^{\mathrm{R}, k}_h$ and $\check{Q}^{\mathrm{R}, k}_h$, which is similar to that of $\overline{Q}^{\mathrm{R}, k}_{h,i}$ and $\underline{Q}^{\mathrm{R}, k}_{h,i}$. This leads to a key decomposition as summarized as follows.
    The proof of Lemma \ref{lem:Qk-upper-multi} is similar to that of Lemma \ref{lem:Qk-upper}, and is omitted here.
    
\begin{lemma}\label{lem:Qk-upper-multi}
    Fix $\delta \in (0,1)$. Suppose that $c_\mathrm{b} >0$ is a sufficiently large constant. Then with probability at least $1-\delta$,  one has
    \begin{align}
        \sum_{k=1}^{K}\hat{V}_{1}^k(s_1^k)-\check{V}_{1}^k(s_1^k) \le \mathcal{H}_1 + \mathcal{H}_2 + \mathcal{H}_3,
    \end{align}
    where
    \begin{subequations}\label{eq:summary_of_terms-multi}
        \begin{align}
            \mathcal{H}_1 =& \sum_{h=1}^H\left(1+\frac1H\right)^{h-1}\left(2HS\prod_{i=1}^m A_i   +  16 c_{\mathrm{b}}(S\prod_{i=1}^m A_i )^{3/4}K^{1/4}H^2\log\frac{S\prod_{i=1}^m A_i T}{\delta}+\sum_{k=1}^K\zeta_h^k\right),\label{eq:expression_R1-multi}\\
            \mathcal{H}_2 =& \sum_{h=1}^H\left(1+\frac1H\right)^{h-1}\left(\sum_{k=1}^K\widetilde{B}_{h}^{\mathrm{R}, k}\left(s_{h}^{k}, \bm{a}_{h}^{k}\right)\right),\label{eq:expression_R2-multi}\\
            \mathcal{H}_3 =& \sum_{h=1}^H \sum_{k=1}^{K} \lambda_h^k\left(P_{h}^{k}-P_{h, s_{h}^{k}, \bm{a}_{h}^{k}}\right)\left(\check{V}_{h+1}^{\mathrm{R}, k}-\hat{V}_{h+1}^{\mathrm{R}, k}\right)\nonumber\\
            &+\sum_{h=1}^H\sum_{k=1}^{K} \lambda_h^k\frac{\sum\nolimits_{i=1}^{N_{h}^{k}\left(s_{h}^{k}, \bm{a}_{h}^{k}\right)}\left(\hat{V}_{h+1}^{\mathrm{R}, k^{i}}\left(s_{h+1}^{k^{i}}\right)-P_{h, s_{h}^{k}, \bm{a}_{h}^{k}} \hat{V}_{h+1}^{\mathrm{R}, k}\right)}{N_{h}^{k}\left(s_{h}^{k}, \bm{a}_{h}^{k}\right)}\nonumber\\
            &-\sum_{h=1}^H\sum_{k=1}^{K} \lambda_h^k\left(\frac{\sum_{i=1}^{N_{h}^{k}\left(s_{h}^{k}, \bm{a}_{h}^{k}\right)}\left(\check{V}_{h+1}^{\mathrm{R}, k^{i}}\left(s_{h+1}^{k^{i}}\right)-P_{h, s_{h}^{k}, \bm{a}_{h}^k} \check{V}_{h+1}^{\mathrm{R}, k}\right)}{N_{h}^{k}\left(s_{h}^{k}, \bm{a}_{h}^{k}\right)}\right)\label{eq:expression_R3-multi}
        \end{align}
    \end{subequations}
    with
    \begin{align}
        &\lambda_h^k = \left(1+\frac1H\right)^{h-1}\sum\nolimits_{N=N_{h}^{k}\left(s_{h}^{k}, a_{h}^{k}, b_{h}^{k}\right)}^{N_{h}^{K-1}\left(s_{h}^{k}, a_{h}^{k}, b_{h}^{k}\right)} \eta_{N_{h}^{k}\left(s_{h}^{k}, a_{h}^{k}, b_{h}^{k}\right)}^{N}.
    \end{align}
\end{lemma}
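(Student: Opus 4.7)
The plan is to mimic the argument of Lemma~\ref{lem:Qk-upper} applied to the auxiliary scalar sequences $\hat{V}_h^k, \check{V}_h^k, \hat{Q}_h^k, \check{Q}_h^k, \widetilde{Q}_h^{\mathrm{R},k}$ constructed in Step~1 of Section~\ref{Multi-ME-Nash-QL}. A key simplification is that the $\max_i$ appearing in the regret (\ref{regret_multi}) has already been absorbed: by the inductive construction, $\max_i(\overline{V}_{h,i}^k-\underline{V}_{h,i}^k)(s)\le \hat{V}_h^k(s)-\check{V}_h^k(s)$, so the multi-player flavour only contributes through the joint action space and forces us to use $\bm{a}$ in place of $(a,b)$ throughout; in particular, factors of $AB$ in the two-player analysis become $\prod_{i=1}^m A_i$.

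The first step will be to combine (\ref{eq:Vk-V-multi}) and (\ref{eq:Vk-Qk-multi}) with the recursion defining $\widetilde{Q}_h^{\mathrm{R},k+1}$. Unrolling this recursion through the learning-rate weights $\eta_n^N$ defined in (\ref{equ:learning rate notation}) (exactly as in (\ref{eq:Q-ref-decompose-12345}) and (\ref{equ:lemma4 vr 1})) yields an expression of the form $\widetilde{Q}_h^{\mathrm{R},k}(s_h^k,\bm{a}_h^k) = \eta_0^{N_h^{k-1}}\,2H + \sum_n \eta_n^{N_h^{k-1}}\bigl\{ (\hat{V}_{h+1}^{k^n}-\check{V}_{h+1}^{k^n})(s_{h+1}^{k^n}) - (\hat{V}_{h+1}^{\mathrm{R},k^n}-\check{V}_{h+1}^{\mathrm{R},k^n})(s_{h+1}^{k^n}) + (\hat{\phi}_h^{\mathrm{r},k^n+1}-\check{\phi}_h^{\mathrm{r},k^n+1})(s_h^k,\bm{a}_h^k) + \widetilde{b}_h^{\mathrm{R},k^n+1}\bigr\}$. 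Summing over $k$ and invoking the identity (\ref{eq:eta-b-ref-sum-identity-135}) relating $\sum_n \eta_n^{N}\widetilde{b}_h^{\mathrm{R},k^n+1}$ to $\widetilde{B}_h^{\mathrm{R},k}$ (plus an $H^2\log^2/n^{3/4}$ residual) will isolate $\mathcal{H}_2$ and the deterministic residual contributing to $\mathcal{H}_1$.

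The second step is to rewrite the running mean $\hat{\phi}_h^{\mathrm{r}}-\check{\phi}_h^{\mathrm{r}}$ using Bellman-type identity $\hat{V}_{h+1}^{\mathrm{R},k^n}(s_{h+1}^{k^n})-\check{V}_{h+1}^{\mathrm{R},k^n}(s_{h+1}^{k^n}) = P_h^{k^n}(\hat{V}_{h+1}^{\mathrm{R},k^n}-\check{V}_{h+1}^{\mathrm{R},k^n})$ and introducing $P_{h,s_h^k,\bm{a}_h^k}$ as a reference, as in (\ref{eq:dejavu-135}). The empirical–population gap produces precisely the martingale pieces grouped as $\mathcal{H}_3$, with weights $\lambda_h^k$ arising after swapping the order of summation using $\sum_{N\ge n}\eta_n^N \le 1+1/H$. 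The terminal accumulation of the factor $(1+1/H)^{h-1}$ comes from recursing the inequality from step $h$ to step $h+1$ (cf.~(\ref{eq:recursion-12345})). The remaining initialization term $\eta_0^{N_h^{k-1}}\cdot 2H$ contributes, after summation via (\ref{eq:eta0eee}) and (\ref{eq:n3/4-bound}), the $2HS\prod_i A_i$ and $16c_{\mathrm{b}}(S\prod_i A_i)^{3/4}K^{1/4}H^2\log(S\prod_iA_iT/\delta)$ terms in $\mathcal{H}_1$, with the martingale noise $\sum_k \zeta_h^k$ being absorbed into $\mathcal{H}_1$ as well.

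The only genuinely new obstacle relative to Lemma~\ref{lem:Qk-upper} is verifying that the surrogate sequences $\widetilde{Q}_h^{\mathrm{R},k},\hat{V}_h^{\mathrm{R},k},\check{V}_h^{\mathrm{R},k}$ retain the basic properties used in the two-player proof -- in particular, the analog of (\ref{eq:claim-sum-eta-sum-b}) that $|\sum_n\eta_n^{N}\varpi_h^{k^n}|\le \sum_n\eta_n^N \widetilde{b}_h^{\mathrm{R},k^n+1}$, and the monotonicity of the reference sequences under early settlement. Both facts transfer verbatim because the early-settlement rule (lines~19--24 of Algorithm~\ref{algorithm-Multi}) and the construction of $\widetilde{Q}^{\mathrm{R}}$ are player-wise identical to the two-player versions and do not rely on the zero-sum structure. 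Granted these, the decomposition exhibited above is exactly $\mathcal{H}_1+\mathcal{H}_2+\mathcal{H}_3$ as stated.
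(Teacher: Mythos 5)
Your sketch takes essentially the same route as the paper, which omits this proof precisely because it is the argument of Lemma~\ref{lem:Qk-upper} transferred verbatim to the surrogate difference sequences $\hat{V},\check{V},\widetilde{Q}^{\mathrm{R}}$: unroll the $\widetilde{Q}^{\mathrm{R}}$ recursion via the $\eta_n^N$ weights, use the bonus identity (\ref{eq:eta-b-ref-sum-identity-135})/(\ref{lemma1:equ10}) to extract $\mathcal{H}_2$, swap summation order to produce the $\lambda_h^k$-weighted martingale terms of $\mathcal{H}_3$, and recurse over $h$ to accumulate the $(1+1/H)^{h-1}$ factors. The only small slip is an attribution at the end: the $16c_{\mathrm{b}}(S\prod_i A_i)^{3/4}K^{1/4}H^2\log(\cdot)$ term in $\mathcal{H}_1$ arises from summing the $n^{-3/4}$ bonus residual via (\ref{eq:n3/4-bound}), not from the $\eta_0^{N}\cdot 2H$ initialization term (which only yields $2HS\prod_i A_i$ via (\ref{eq:eta0eee})); since you already account for that residual when invoking the bonus identity, this does not affect the correctness of the decomposition.
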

    
    \paragraph{Step 3: controlling the terms in Step 2 separately.}
    Each of the terms in Step 2 can be well controlled. To derive the above bounds, the main strategy is to apply the Bernstein-type concentration inequalities carefully, and to upper bound the sum of variance. We provide the bounds for these terms as Lemma~\ref{lemma:bound_of_everything-multi}.
    The proof is similar to that of Lemma~\ref{lemma:bound_of_everything} and is omitted here.

\begin{lemma}\label{lemma:bound_of_everything-multi}
    %Suppose that $C_r$ is some sufficiently large constant. 
    With any $\delta \in (0,1)$, the following upper bounds hold with probability at least $1-\delta $: 
    \begin{align}
        \mathcal{H}_1 & \lesssim  \sqrt{H^2S\prod_{i=1}^m A_i T \log\frac{S\prod_{i=1}^m A_i T}{\delta}} + H^{4.5}S\prod_{i=1}^m A_i \log^2\frac{S\prod_{i=1}^m A_i T}{\delta},\\
        \mathcal{H}_2 & \lesssim  \sqrt{H^2S\prod_{i=1}^m A_i T\log\frac{S\prod_{i=1}^m A_i T}{\delta}} + H^4S\prod_{i=1}^m A_i \log^2\frac{S\prod_{i=1}^m A_i T}{\delta},\\
        \mathcal{H}_3 & \lesssim  \sqrt{H^2S\prod_{i=1}^m A_i  T\log^4\frac{S\prod_{i=1}^m A_i T}{\delta} }+ H^6S\prod_{i=1}^m A_i \log^3\frac{S\prod_{i=1}^m A_i T}{\delta} .
    \end{align}
\end{lemma}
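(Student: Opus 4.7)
The plan is to mirror the two-player analysis in Appendix~\ref{sec-proof-lemma:everything} almost verbatim, with the single action pair $(a,b)$ replaced by the joint action tuple $\bm{a}\in\mathcal{A}_1\times\cdots\times\mathcal{A}_m$, so that $AB$ is uniformly substituted by $\prod_{i=1}^m A_i$, and with the two-player quantities $\overline{Q}_h^{\mathrm{R},k},\underline{Q}_h^{\mathrm{R},k},\overline{V}_h^k,\underline{V}_h^k$ replaced throughout by the recursively-defined surrogates $\widetilde{Q}_h^{\mathrm{R},k},\hat{V}_h^k,\check{V}_h^k$ introduced in Step~1 of Section~\ref{sec:Multi-ME-Nash-QL}. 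First I would establish the multi-player analogues of Lemma~\ref{lem:Qk-lcb} (gap-count bound) and Lemma~\ref{lem:Vr_properties} (reference closeness and lazy-update bound) for the pair $(\hat{V}_h^k,\check{V}_h^k)$ and $(\hat{V}_h^{\mathrm{R},k},\check{V}_h^{\mathrm{R},k})$; the arguments go through unchanged because the update recursions for $\hat V,\check V$ are structurally identical to those for $\overline V,\underline V$ in the two-player case, and only the enumeration of state-action visits changes from $SAB$ to $S\prod_{i=1}^m A_i$.

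For $\mathcal{H}_1$, I would split it into the deterministic piece and the martingale piece $\sum_{h,k}\zeta_h^k$. Invoking the elementary bound $(1+1/H)^{h-1}\le e$ and AM-GM as in (\ref{eq:secf:equ-num1}) handles $2HS\prod A_i + 16c_{\mathrm{b}}(S\prod A_i)^{3/4}K^{1/4}H^2\log\frac{S\prod A_iT}{\delta}$, giving the dominant term $\sqrt{H^2 S\prod A_i T}+H^{4.5}S\prod A_i\log^2(\cdot)$; Azuma--Hoeffding applied to the bounded MDS $\zeta_h^k$ (with $|\zeta_h^k|\le 4H$) produces $\sqrt{H^2T\log(\cdot)}$. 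For $\mathcal{H}_2$, I would expand $\widetilde B_h^{\mathrm{R},k}$ using its definition in Step~1 (which is the sum of four variance-proxy bonus terms in $\hat\psi^{\mathrm r},\hat\phi^{\mathrm r},\check\psi^{\mathrm r},\check\phi^{\mathrm r}$ and their advantage analogues) and handle each summand just as in Appendix~\ref{sec-proof-lemma:exploration}: the advantage parts are controlled by the multi-player analogue of (\ref{eq:close-ref-v}) and the bound $\sum_k N_h^k(s,\bm{a})^{-1/2}\le 2\sqrt{S\prod A_i K}$; the reference parts reduce, after adding and subtracting $(V_{h+1}^\star)^2$ appropriately interpreted in the multi-player setting (using the best-response value $V_{h+1,i}^{\dagger,\pi_{-i}^k}$ as the reference), to a sum of $\mathsf{Var}_{h,s,\bm{a}}$ terms and an error $\Phi_h^k$ governed by the multi-player lazy bound.

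For $\mathcal{H}_3$, I would decompose it into the four pieces $\mathcal{H}_{3,1},\ldots,\mathcal{H}_{3,4}$ corresponding exactly to $\mathcal{D}_{3,1},\ldots,\mathcal{D}_{3,4}$ in Appendix~\ref{sec-proof-lemma:reference}. The two martingale terms $\mathcal{H}_{3,1}$ and $\mathcal{H}_{3,3}$ are bounded by Lemma~\ref{lemma:martingale-union-all2} with $F_{h+1}^k=\hat V_{h+1}^{\mathrm{R},k}-V_{h+1,i}^{\dagger,\pi_{-i}^k}$ (resp.\ its $\check V$ analogue) and $G_h^k=\lambda_h^k\le e$, using the multi-player analogue of (\ref{equ:freed-234}) to control $\sum_{h,k}\mathbb{E}_{k,h-1}[P_h^k F_{h+1}^k]$. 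The reference-drift pieces $\mathcal{H}_{3,2},\mathcal{H}_{3,4}$ split into (i) a telescoping sum handled by the multi-player version of (\ref{eq:Vr_lazy}), (ii) a Freedman bound against $V_{h+1,i}^{\dagger,\pi_{-i}^k}$, and (iii) a cross term where $\hat V_{h+1}^{\mathrm{R},K}$ depends on the whole trajectory; this last piece is controlled by the $\varepsilon$-net argument from Appendix~\ref{appendix:covering} with a net of cardinality $(H/\alpha)^{SH}$ and $\alpha=1/(S\prod A_iT)$, which still yields only logarithmic cost in $S\prod A_i T$.

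The main obstacle is actually not any new analytical difficulty but the auxiliary lemmas: verifying that the two-player "optimism/pessimism" chain (Lemma~\ref{lemma-Qhk-opt}) used pervasively to replace $\overline V^{\mathrm{R},K}-V^\star$ by $\overline V-\underline V$ still holds in the general-sum case. Because each player's best response $V_{h,i}^{\dagger,\pi_{-i}^k}$ depends on the opponents' policy at the current episode rather than on a fixed Nash value, the analog of the inequality $V^\star\le \overline V^{\mathrm{R},K}$ must be established anew; I would handle this by using Lemma~\ref{lemma-Qhk-opt-multi} coordinate-wise and then passing to the maximum over $i$, noting that $\hat V-\check V$ dominates $\max_i(\overline V_{h,i}-\underline V_{h,i})$ by construction. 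Once this sandwich is in place, all the Bernstein/Freedman machinery transports verbatim, and collecting the three bounds gives the claimed $\widetilde O(\sqrt{H^2 S\prod_i A_i T})+\widetilde O(H^6 S\prod_i A_i)$ control with probability at least $1-\delta$.
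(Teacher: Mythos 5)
Your proposal takes essentially the same route the paper intends: the paper omits this proof entirely, stating only that it is ``similar to that of Lemma~\ref{lemma:bound_of_everything},'' and your plan is exactly that transfer---replacing $(a,b)$ by $\bm{a}$ and $AB$ by $\prod_{i}A_i$, substituting the surrogates $\hat{V},\check{V},\widetilde{Q}^{\mathrm{R}}$ for the two-player estimates, and re-deriving the analogues of Lemmas~\ref{lem:Qk-lcb} and~\ref{lem:Vr_properties} before repeating the $\mathcal{D}_1,\mathcal{D}_2,\mathcal{D}_3$ arguments term by term. The only point needing care is your suggestion to anchor the reference-variance decomposition at $V^{\dagger,\pi^k_{-i}}_{h+1,i}$, which is episode-dependent and hence unsuitable for the Freedman step; the deterministic anchor should instead come from the deployed-policy value $V^{\pi^k}_{h+1,i}$ together with the final reference $\hat{V}^{\mathrm{R},K}_{h+1}$ and the epsilon-net, exactly as you already propose for $\mathcal{H}_{3,2}$.
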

    
    \paragraph{Step 4: putting all this together.}
    We now establish our main result. Taking the bounds in Step 3 together with Step 2, we see that with probability at least $1-\delta$ and a constant $C_0>0$, one has 
    \begin{align}
    \operatorname{Regret}(K)&\le \mathcal{H}_1+\mathcal{H}_2+\mathcal{H}_3\nonumber\\
    &\le  C_0\left(\sqrt{H^2S\prod_{i=1}^m A_i T\log^4{S\prod_{i=1}^m A_i T/\delta} }+ H^6 S\prod_{i=1}^m A_i \log^3{S\prod_{i=1}^m A_i T/\delta}\right).
    \end{align}
    
    Theorem~\ref{thm:Multi-ME-Nash-QL} is proved under sample complexity with $\varepsilon$ average regret (i.e., $\frac{1}{K}\operatorname{Regret}(K) \leq \varepsilon$). Notably, the sample complexity is proportional to $\prod_{i=1}^m A_i$, which increases exponentially as the number of players increases.

\end{document}